\DeclareMathOperator{\tr}{trace}
\definecolor{cared}{rgb}{1.0, 0.03, 0.0}
\newcommand{\trace}{\text{trace}}
\newcommand{\diag}{\text{diag}}
\newcommand{\Rb}{\mathbb{R}}
\newcommand{\Eb}{\mathbb{E}}
\newcommand{\Pb}{\mathbb{P}}
\newcommand{\lv}{\mathbbm{1}}
\DeclareMathOperator{\rank}{rank}
\DeclareMathOperator{\adj}{adj}
\newcommand{\myparagraph}[1]{{\noindent \bf #1}\hspace{0.0in}}
\newcommand{\Ac}{\mathcal{A}}
\newcommand{\Tc}{\mathcal{T}}
\newcommand{\Nc}{\mathcal{N}}
\newcommand{\Sc}{\mathcal{S}}
\newcommand{\Cc}{\mathcal{C}}
\newcommand{\Xc}{\mathcal{X}}
\newcommand{\Rc}{\mathcal{R}}
\newcommand{\Fc}{\mathcal{F}}
\newcommand{\Bc}{\mathcal{B}}
\newcommand{\Ec}{\mathcal{E}}
\newcommand{\Hc}{\mathcal{H}}
\newcommand{\Mc}{\mathcal{M}}
\newcommand{\Gc}{\mathcal{G}}
\newcommand{\Rd}{{\mathbb{R}^d}}
\newcommand{\TT}{\intercal}
\DeclareMathOperator{\Det}{Det}
\DeclareMathOperator{\range}{range}
\DeclareMathOperator{\Span}{span}
\newcommand{\alg}{\texttt{Fed-PE} }
\newcommand{\bca}{\texttt{BCA} }
\newtheorem{theorem}{Theorem}
\newtheorem{lemma}{Lemma}
\newtheorem{Proposition}{Proposition}
\newtheorem{assump}{Assumption}
\newtheorem{remark}{Remark}
\newtheorem{definition}{Definition}
\newtheorem{corollary}{Corollary}
\newenvironment{proof}[1][Proof]{{\it #1. } }{\ \rule{0.5em}{0.5em}}
\title{Federated Linear Contextual Bandits}
\author{
   Ruiquan Huang \\
   The Pennsylvania State University \\
   \texttt{rzh5514@psu.edu} \\
   \And
 Weiqiang Wu\\
  Facebook\\
  \texttt{weiqiang.wwu@gmail.com} \\
   \And
   Jing Yang \\
   The Pennsylvania State University \\
   \texttt{yangjing@psu.edu} \\
   \And
  Cong Shen \\
  University of Virginia \\
   \texttt{cong@virginia.edu} \\
}
\begin{document}

\maketitle

\begin{abstract}
This paper presents a novel federated linear contextual bandits model, where individual clients face different $K$-armed stochastic bandits coupled through common global parameters. By leveraging the geometric structure of the linear rewards, a collaborative algorithm called \alg is proposed to cope with the heterogeneity across clients without exchanging local feature vectors or raw data. \alg relies on a novel multi-client G-optimal design, and achieves near-optimal regrets for both disjoint and shared parameter cases with logarithmic communication costs. In addition, a new concept called collinearly-dependent policies is introduced, based on which a tight minimax regret lower bound for the disjoint parameter case is derived. Experiments demonstrate the effectiveness of the proposed algorithms on both synthetic and real-world datasets.

\vspace{-0.08in}
\end{abstract}

\section{Introduction}

Federated learning (FL) \citep{mcmahan2017communicationefficient} is an emerging distributed machine learning (ML) paradigm where massive number of clients collaboratively learn a shared prediction model while keeping all the training data on local devices. Compared with standard centralized machine learning, FL has the following characteristics~\citep{kairouz2021advances}:

\vspace{-0.03in}
\begin{itemize}[leftmargin=*]\itemsep=0pt
\setlength{\leftmargin}{-0.5in}
    \item \textbf{Heterogeneous local datasets.} The local datasets, which are often generated at edge devices, are likely drawn from non-independent and identically distributed ({non-}IID) distributions.
\item \textbf{Communication efficiency.} The communication cost scales with the number of clients, which is one of the primary bottlenecks of FL. It is critical to minimize the communication cost while maintaining the learning accuracy.
\item \textbf{Privacy.} FL protects local data privacy by only sharing model updates instead of the raw data.
\end{itemize}
\vspace{-0.03in}

While the main focus of the state-of-the-art FL is on the supervised learning setting, recently, a few researchers begin to extend FL to the multi-armed bandits (MAB) framework~\citep{lai1985asymptotically,auer2002finite,bubeck2012regret,agrawal2012analysis,agrawal2013further}. In the canonical setting of MAB, a player chooses to play one arm from a set of arms at each time slot. An arm, if played, will offer a reward that is drawn from its distribution which is unknown to the player. With all previous observations, the player needs to decide which arm to pull each time in order to maximize the cumulative reward. MAB thus represents an online learning model that naturally captures the intrinsic exploration-exploitation tradeoff in many sequential decision-making problems.

Extending FL to the MAB framework is naturally motivated by a corpus of applications, such as recommender systems, clinical trials, and cognitive radio. In those applications, the sequential decision making involves multiple clients and is distributed by nature. 
While classical MAB models assume immediate access to the sequentially generated data at the learning agent, under the new realm of FL, local datasets can be stored and analyzed at the clients, thus reducing the communication load and potentially protecting the data privacy. 

Despite the potential benefits of FL, the sequential decision making and bandit feedback bring new challenges to the design of FL algorithms in the MAB setting. Different from the supervised learning setting where static datasets are collected beforehand, under the MAB setting, data is generated sequentially as decisions are made, actions are taken, and observations are collected. In order to maximize the cumulative reward and minimize the corresponding learning regret, it thus requires sophisticated coordination of the actions of the clients.
The heterogeneous reward distributions across clients make the coordination process even more convoluted and challenging. Besides, the data privacy and communication efficiency requirements result in significant challenges for efficient information exchange and aggregation between local clients and the central server.

In this work, we attempt to address those challenges in a federated linear contextual bandits framework. This particular problem is motivated by the following exemplary applications.

\begin{itemize}[leftmargin=*]\itemsep=0pt
\setlength{\leftmargin}{-0.5in}
    \item \textbf{Personalized content recommendation.} For content (arm) recommendation in web-services, user engagement (reward) depends on the profile of a user (context). The central server may deploy a recommender system on each user' local device (client) in order to personalize recommendations without knowing the personal profile or behavior of the user.

    \item \textbf{Personalized online education.} In order to maximize students performances (reward) in online learning, the education platform (central server) needs to personalize teaching methods (arms) based on the characteristics of individual students (context). With the online learning software installed at local devices (client), it is desirable to personalize the learning experiences without allowing the platform to access students' characteristics or scores.
\end{itemize}
In those examples, the reward of pulling the same arm at different clients follows different distributions dependent on the context as in contextual bandits~\citep{Auer:2003:UCB,Langford:2008}. We note that conventional contextual bandits is defined with respect to a single player, where the time-varying context can be interpreted as different incoming user profiles. In contrast, we consider a multi-client model, where each client is associated with a fixed user profile. The variation of contexts is captured over clients as opposed to over time.  Although the set of clients remains fixed through the learning process, the reward of pulling the same arm still varies across clients.
Such a model naturally takes data heterogeneity into consideration. Besides, we adopt a linear reward model, which has been widely studied in contextual bandits~\citep{Li:2010:LinUCB,Agrawal:2013:linear}.
 
\myparagraph{Main contributions.}  
Our main contributions are summarized as follows. 

First, we propose a new federated linear contextual bandits model that takes the diverse user preferences and data heterogeneity into consideration. Such a model naturally bridges local stochastic bandits with linear contextual bandits, and is well poised to capture the tradeoffs between communication efficiency and learning performances in the federated bandits setting. 

Second, we design a novel algorithm named \alg and further develop its variants to solve the federated linear contextual bandits problem. Under \texttt{Fed-PE}, clients only upload their local estimates of the global parameters without sharing their local feature vectors or raw observations. It not only keeps the personal information private, but also reduces the upload cost. We explicitly show that \alg and its variants achieve {near-optimal} regret performances for both disjoint and shared parameter cases with logarithmic communication costs.

Third, we generalize the G-optimal design from the single-player setting \citep{LS19bandit-book} to the multi-client setting. We develop a block coordinate ascent algorithm to solve the generalized G-optimal design efficiently with convergence guarantees. Such a multi-client G-optimal design plays a vital role in \texttt{Fed-PE}, and may find broad applications in related multi-agent setups.

Finally, we introduce a novel concept called \emph{collinearly-dependent policy} and show that the celebrated LinUCB type of policies~\citep{Li:2010:LinUCB}, Thompson sampling based policies with Gaussian priors~\citep{Agrawal:2013:linear}, and least squared estimation based policies, such as \texttt{Fed-PE}, are all in this category. By utilizing the property of  collinearly-dependent policies, we are able to characterize a tight minimax regret lower bound in the disjoint parameter setting. We believe that this concept may be of independent interest for the study of bandits with linear rewards.

\myparagraph{Notations.} Throughout this paper, we use  $\|x\|_V$ to denote $\sqrt{x^\TT V x}$. The \textit{range} of a matrix $A$, denoted by $\range(A)$, is the subspace spanned by the column vectors of $A$. We use $A^\dagger$ and {$\Det(A)$} to denote the pseudo-inverse and pseudo-determinant of square matrix $A$, respectively. The specific definitions can be found in Appendix~\ref{appx:prelim}. 

\section{Related Works}
\myparagraph{Collaborative and distributed bandits.}
Our model is closely related to the collaborative and distributed bandits when action collision is not considered. \citet{landgren2016distributed,landgren2018social} and \citet{martinez2019decentralized} study distributed bandits in which multiple agents face the same MAB instance, and the agents collaboratively share their estimates over a fixed communication graph in order to design consensus-based distributed estimation algorithms to estimate the mean of rewards at each arm. \citet{szorenyi13} considers a similar setup where in each round an agent is able to communicate with a few random peers.  \citet{pmlr-v48-korda16-distributed-cluster} considers the case where clients in different unknown clusters face independent bandit problems, and every agent can communicate with only one other agent per round. The communication and coordination among the clients in those works are fundamentally different from our work. 

\citet{wang2019distributed} investigates communication-efficient distributed linear bandits, where the agents can communicate with a server by sending and receiving packets. It proposes two algorithms, namely, DELB and DisLinUCB, for fixed and time-varying action sets, respectively. The fixed action set setting is similar to our setup, except that it assumes that all agents face the same bandits model, which does not take data heterogeneity into consideration.

\myparagraph{Federated bandits.} A few recent works have touched upon the concept of federated bandits. With heterogeneous reward distributions at local clients, \cite{shi2021aaai} and \cite{shi2021aistats} investigate efficient client-server communication and coordination protocols for federated MAB without and with personalization, respectively. \cite{agarwal2020federated} studies regression-based contextual bandits as an example of the federated residual learning framework, where the reward of a client depends on both a global model and a local model. \citet{li2020federated2} and \cite{Zhu_2021} focus on differential privacy based local data privacy protection in federated bandits. While the linear contextual bandit model considered in \citet{dubey2020differentiallyprivate} is similar to this work, it focuses on federated differential privacy and proposes a LinUCB-based FedUCB algorithm, which incurs a higher regret compared with our result for the shared parameter case. A regret and communication cost comparison between \alg and other baseline algorithms is provided in Table~\ref{table:compare}. 

\begin{table}
\small
  \caption{Performance comparison}
  \label{table:compare}
  \centering
  \resizebox{\textwidth}{!}{
  \begin{tabular}{cccc} 
    \Xhline{3\arrayrulewidth}
Model & Algorithm   & Regret & Communication cost \\
    
    \Xhline{2\arrayrulewidth}

Linear &  DELB   & $O(d\sqrt{MT\log(T)})$ & $O((Md+d\log\log d)\log T)$      \\
    \midrule
\multirow{3}{*}{\begin{tabular}{c} Linear contextual\\ (shared parameter) \end{tabular} }   &  FedUCB\tablefootnote{Results adapted from \citet{dubey2020differentiallyprivate} by letting the privacy budget $1/\epsilon$ go to 0.} &$O\left(\sqrt{dMT}\log T \right)$ & $O(Md^2\log T)$  \\
& { \alg (this work) } & $O(\sqrt{dMT\log(KMT)})$ & $O(M(d^2 + dK)\log T)$\\
 & Lower bound & $\Omega(\sqrt{dMT})$ & N/A\\
    \midrule
\multirow{3}{*}{\begin{tabular}{c} Linear contextual\\ (disjoint parameter) \end{tabular} }   & Centralized\tablefootnote{Results adapted from the single-player linear contextual bandits studied in \cite{dimakopoulou2018estimation} by assuming instantaneous information exchange and sequential decision-making at the central server. }    & $O(\sqrt{dKMT\log^3(KMT)})$ & $O(M{d^2}K T)$\\
 &\alg   (this work)    & $O(\sqrt{dKMT\log(KMT)})$ & $O(Md^2K\log T)$\\
 & Lower bound (this work) & $\Omega(\sqrt{dKMT})$ & N/A\\
    
    \Xhline{3\arrayrulewidth}
  \end{tabular}
  }
  
  \begin{center}
  \scriptsize
  $M$: number of clients; $K$: number of arms; $T$: time horizon; $d$: ambient dimension of the feature vectors.
  \end{center}
\end{table}

\section{Problem Formulation}\label{sec:formulation}

\myparagraph{Clients and local bandits model.} We consider a federated linear contextual bandits setting where there are $M$ clients pulling the same set of $K$ items (arms) denoted as $[K] := \{1,2,\ldots,K\}$. At each time $t$, each client $i\in[M]$ pulls an arm $a_{i,t} \in [K]$ based on locally available information. The incurred reward $y_{i,t}$ is given by
$ y_{i,t} = r_{i,a_{i,t}} + \eta_{i,t}$,
where $\eta_{i,t}$ is a random noise, and $r_{i,a_{i,t}}$ is the unknown expected reward by pulling arm $a_{i,t}$. 
We note that without additional assumptions or interaction among the clients, each local model is a standard single-player stochastic MAB, where classic algorithms such as UCB~\citep{Auer:2010} and Thompson sampling~\citep{agrawal2012analysis} are known to achieve order-optimal regret.

\myparagraph{Linear reward structure with global parameters.} In order to capture the inherent correlation between rewards of pulling the same arm by different clients, we assume $r_{i,a}$  has a linear structure, i.e., 
  $  r_{i,a}=x_{i,a}^\TT\theta_{a},$ 
where $x_{i,a}\in \Rb^d$ is the feature vector associated with client $i$ and arm $a$, and $\theta_{a} \in \Rd$ is a fixed but unknown {parameter} vector for each $a \in [K]$.
Here we use $x^\TT$ to denote the transpose of vector $x$. The same arm $a$ may have different reward distributions for different clients, due to potentially varying $x_{i,a}$ across clients. Such a linear model naturally captures the heterogeneous data distributions at the clients, yet admits possible collaborations among clients due to the common parameters $\{\theta_a\}_{a\in[K]}$. When $\theta_a$ varies for different arm $a$, it is called the {\it disjoint parameter} case; when $\theta_a$ is known to be a constant across the arms, it is the {\it shared parameter} case. We investigate both cases in Sections~\ref{sec:disjoint} and \ref{sec:shared}, respectively. 

\myparagraph{Communication model.} We assume there exists a central server in the system, and similar to FL, the clients can communicate with the server periodically with zero latency. Specifically, the clients can send ``local model updates'' to the central server, which then aggregates and broadcasts the updated ``global model'' to the clients. (We will specify these components later.) Note that just as in FL, communication is one of the major bottlenecks and the algorithm has to be conscious about its usage. Similar to \cite{wang2019distributed}, we define the communication cost of an algorithm as the {number of scalars (integers or real numbers) communicated between server and clients}. We also make the assumption that clients and server are fully synchronized \citep{mcmahan2017communicationefficient}.

\myparagraph{Data privacy concerns.} Similar to \cite{dubey2020differentiallyprivate}, our contextual bandit problem involves two sets of information that are desirable to be kept private to client $i$: the feature vectors $\{x_{i,a}\}_{a\in[K]}$ and the observed rewards $\{y_{i,t}\}_{t\in[T]}$. Different from the differential privacy mechanism adopted in \cite{dubey2020differentiallyprivate}, in this work, we aim to communicate estimated global model parameters $\{\theta_a\}_a$ between the clients and the server. This is consistent with the FL framework, where only model updates are communicated instead of the raw data. 

\begin{assump}\label{assump:bounded}
We make the following assumptions throughout the paper:
\begin{enumerate}[leftmargin=12pt,topsep=0pt, itemsep=0pt,parsep=0pt]
\item[1)] \textbf{Bounded parameters:} For any $i\in [M]$, $a\in[K]$, we have $\|\theta_a\|_2\leq s$, $0<\ell\leq \|x_{i,a}\|_2\leq L$.
\item[2)]  \textbf{{Independent} 1-subgaussian noise:} 
$\eta_{i,t}$ is a 1-subgaussian noise parameter sampled independently at each time for each client with $\Eb[\eta_{i,t}]=0$, $\Eb[ \exp( \lambda \eta_{i,t} ) ] \le \exp(\frac{\lambda^2}{2})$ for any $\lambda>0$.
\end{enumerate}
\end{assump} 
Assumption~1.1 is {a standard assumption in the bandit literature, which ensures that the maximum regret at any step is bounded}. We emphasize that our work does not make any assumption on the knowledge of suboptimality gaps, nor do we assume the existence of a unique optimal arm at each client.

Our objective is to minimize the expected cumulative regret among all clients, defined as:
\begin{align}
\mathbb{E}[R(T)] = \mathbb{E}\left[\sum_{i=1}^M\sum_{t=1}^T \left(x_{i,a_i^*}^\TT\theta_{a_i^*} - x_{i,a_{i,t}}^\TT\theta_{a_{i,t}}\right)\right],
\end{align}
where $a_i^*\in[K]$ is an optimal arm for client $i$: $\forall b \neq a_i^*$, $x_{i,a_i^*}^\TT\theta_{a_i^*}  -x_{i,b}^\TT\theta_{b}  \geq 0$.

\section{Federated Linear Contextual Bandits: Disjoint Parameter Case}\label{sec:disjoint}
\subsection{Challenges}\label{sec:challenge}
Solving the federated linear contextual bandits model faces several new challenges. The first challenge is due to the constraint that only locally estimated parameters $\{\theta_a\}_{a\in[K]}$ are uploaded to the central server. While this is not an issue for stochastic MAB where the $\{\theta_a\}_{a\in[K]}$ are scalars~\citep{shi2021aaai,shi2021aistats}, this brings significant challenges for the aggregation of the local estimates into a ``global model'' in our setup. This is because under the linear reward structure, the locally received rewards $\{y_{i,t}\}_{t\in[T]}$ only contain the projection of $\theta_a$ along the direction of $x_{i,a}$, while the portion of information lying outside $\range(x_{i,a})$ is not captured in $\{y_{i,t}\}_{t\in[T]}$. Thus, by utilizing $\{y_{i,t}\}_{t\in[T]}$, the locally estimated $\theta_a$, denoted as $\hat{\theta}_{i,a}$, cannot provide any information of $\theta_a$ beyond $\range(x_{i,a})$. Since $\{x_{i,a}\}_{i\in[M]}$ are different for the same arm $a$, the locally estimated $\{\hat{\theta}_{i,a}\}_{i\in[M]}$ are essentially lying in different subspaces. The central server thus needs to take such geometric structure into account when aggregating $\{\hat{\theta}_{i,a}\}$ to construct the global estimate of $\theta_{a}$.  

The geometric structure of the local rewards also brings another challenge for the coordination of actions of local clients. Intuitively, in order to help client $i$ accurately estimate the expected reward by pulling arm $a$, it suffices to obtain an accurate projection of $\theta_a$ on $\range(x_{i,a})$; any part of $\theta_a$ lying outside this subspace is irrelevant. Therefore, if two clients $i$ and $j$ have $x_{i,a}$ and $x_{j,a}$ orthogonal to each other, exchanging the local estimates $\hat{\theta}_{i,a}$ and $\hat{\theta}_{j,a}$ does not help the other client improve her own local estimation. On the other hand, if $x_{i,a}$ and $x_{j,a}$ are completely aligned with each other, $\hat{\theta}_{i,a}$ and $\hat{\theta}_{j,a}$ can be aggregated directly to improve the local estimation accuracy of both. With $M$ possible subspaces spanned by $\{x_{i,a}\}_i$, it is highly likely that different clients receive different amounts of {\it relevant} information (i.e., information lying in $\range(x_{i,a})$) through information exchange facilitated by the central server. Therefore, in order to reduce the overall regret, it is necessary to coordinate the actions of clients in a sophisticated fashion. 

Third, since the exact knowledge of the local feature vectors $\{x_{i,a}\}$ are kept from the central server, the server may not have an accurate estimation of the uncertainty level of the local estimates at each client, or how much the coordination would help individual clients. This would make efficient and effective coordination even more challenging.

\subsection{Federated Phased Elimination (\texttt{Fed-PE}) Algorithm}
To address the aforementioned challenges, we propose the Federated Phased Elimination (\texttt{Fed-PE}) algorithm. The \alg algorithm works in phases, where the length of phase $p$ is $f^p+K$. It contains a client side subroutine (Algorithm~\ref{alg:client}) and a server side subroutine (Algorithm~\ref{alg:server}). Throughout the paper we use superscript $p$ to indicate phase $p$ barring explicit explanation. We use $\Ac_{i}^p\subset [K]$ to denote the subset of active arms at client $i$ in phase $p$, $\Ac^p:=\cup_{i=1}^M \Ac_{i}^p$,  $\mathcal{R}_a^p := \{i: a\in \Ac_i^p\}$, and define $\Tc_{i,a}^p$ as the time indices at which client $i$ pulls arm $a$ during the collaborative exploration step in phase $p$.  Then, the algorithm works as follows.

\begin{algorithm}[H]
\small
\caption{\alg: client $i$ }
\begin{algorithmic}[1]
	\Require $T$, $M$, $K$, $\alpha$, $f^p$
		\State \textbf{Initialization:}  Pull each arm $a\in [K]$ and receive reward $y_{i,a}$; 
		$\hat{\theta}_{i,a}^0\gets \frac{y_{i,a}x_{i,a}}{\|x_{i,a}\|^2}$; Send $\{\hat{\theta}_{i,a}^0\}_{a}$ to the server; $\Ac_{i}^0\gets [K]$; $p\gets 1$.
			\While{not reaching the time horizon $T$} 
\State Receive $ \{(\hat{\theta}_a^p,V_a^p)\}_{a\in \Ac^{p-1}}$ from the server. 
\Comment{\texttt{Arm elimination}}
\For{$a\in\Ac_i^{p-1}$}
\begin{equation}\label{eqn: confidence_interval}
\hat{r}_{i,a}^p \leftarrow x_{i,a}^\TT\hat{\theta}_{a}^p,\qquad
u_{i,a}^p\leftarrow\alpha \left\|x_{i,a}\right\|_{V_a^p}/\ell.
\end{equation}
\EndFor
\State $\hat{a}_i^p \leftarrow \arg\max_{a\in \Ac_i^{p-1}} \hat{r}_{i,a}^p$,\quad  $\Ac_i^p \leftarrow \left\{a\in \Ac_i^{p-1}\ |\ \hat{r}_{i,a}^p + u_{i,{a}}^p\geq \hat{r}_{i,\hat{a}_i^p}^p -u_{i,\hat{a}_i^p}^p\right\}.$
\State Send $\Ac_i^p$ to the central server. \Comment{\texttt{Active arm set updating}}
\State Receive $f_{i,a}^p$ for all $a\in \Ac_i^p$.

\For{$a \in \Ac_i^p $} \Comment{\texttt{Collaborative exploration}}
\State Pull arm $a$ for {$f_{i,a}^p$} times and receive rewards $\{y_{i,t}\}_{t\in \Tc_{i,a}^p}$. 
\begin{align}\label{eqn:local_theta}
    \hat{\theta}_{i,a}^p\gets\bigg(\frac{1}{f_{i,a}^p}\sum_{t\in \Tc_{i,a}^p}y_{i,t}\bigg) \frac{x_{i,a}}{\|x_{i,a}\|^2}.
\end{align}

\EndFor
\State Send $\{\hat{\theta}_{i,a}^p\}_{a\in \Ac_i^p}$ to the server; Pull $\hat{a}_i^p$ until phase length equals $f^p+K.$ 
\State  $p\gets p+1$.
\EndWhile
\end{algorithmic}\label{alg:client}
\end{algorithm}
\vspace{-0.1in} 

At the initialization phase, each client $i$ pulls every arm $a \in [K]$ once and receives a reward $y_{i,a}$, based on which it obtains an estimate of the projection of $\theta_a$. These estimates are sent to the server to construct a preliminary global estimate of $\theta_a$ for each $a$. The global estimates $\{\hat{\theta}_a^1\}_a$ and the potential matrices $\{V_a^1\}_a$ are then broadcast to all clients, after which phase $p=1$ begins.  Note that after receiving $\{\hat{\theta}_{i,a}^0\}_{i,a}$, the central server will keep a unit vector $\bar{e}_{i,a}= \nicefrac{\hat{\theta}_{i,a}^0}{\|\hat{\theta}_{i,a}^0\|}$ for all $i\in[M],a\in[K]$. Since $\hat{\theta}_{i,a}^0$ is a scaled version of $x_{i,a}$, $\bar{e}_{i,a}$ lies in $\range(x_{i,a})$, and will be utilized to coordinate the arm pulling process (coined as {\it collaborative exploration}), as elaborated in Section~\ref{sec:G_optimal}. 

At the beginning of phase $p$, after receiving the broadcast $\{\hat{\theta}_a^p\}_a$ and $\{V_a^p\}_a$ from the server, each client $i$ will utilize the $(\hat{\theta}_a^p, V_a^p)$ pair to estimate the expected rewards $r_{i,a}$ and obtain the confidence level according to Eqn.~(\ref{eqn: confidence_interval}) for each $a\in\Ac_i^{p-1}$. Based on the constructed confidence interval, client $i$ then eliminates some arms in $\Ac_i^{p-1}$ and obtains $\Ac_{i}^p$.

Next, each client $i$ sends the newly constructed active arm set $\Ac_{i}^p$ to the server. The server then decides $f_{i,a}^p$, the number of times client $i$ pulling arm $a$ during the collaborative exploration step in phase $p$ for each $i\in[M]$ and $a\in\Ac_i^p$. The specific mechanism to decide $f_{i,a}^p$ is elaborated in Section~\ref{sec:G_optimal}. 

After the collaborative exploration step, client $i$ performs least-square estimation (LSE) for each arm $a\in \Ac_i^p$ based on \textit{local} observations collected in the current phase according to Eqn. (\ref{eqn:local_theta}) and then sends it to the server for global aggregation. 
Note that although $\hat{\theta}_{i,a}^p$ lies in $\range(x_{i,a})$, the exact value of $x_{i,a}$ is not revealed to the server, thus preserving the privacy to certain extent.

\begin{algorithm}[H]
\small
\caption{\alg: Central server}
\begin{algorithmic}[1]
	\Require $T$, $M$, $K$, $\alpha$, $f^p$
		\State \textbf{Initialization:} Receive $\{\hat{\theta}_{i,a}^0\}_{i,a}$;  $\bar{e}_{i,a}\gets \frac{\hat{\theta}_{i,a}^0}{\|\hat{\theta}_{i,a}^0\|}$ for all $i\in[M],a\in[K]$;
$ V_a^1\gets\left(\sum_{i\in[M]}\frac{\hat{\theta}_{i,a}^0(\hat{\theta}^0_{i,a})^\TT}{\|\hat{\theta}_{i,a}^0\|}\right)^\dagger$, $\hat{\theta}_a^1\gets V_a^1\left(\sum_{i\in[M]} \hat{\theta}_{i,a}^0\right)$ for all $a\in[K]$; 
Broadcast $\{\hat{\theta}_a^1, V_a^1\}_{a\in[K]}$; $p\gets 1$.
	\While{not reaching the time horizon $T$} 
\State Receive $\{\Ac_i^p\}_{i\in[M]}$; Set $\Ac^p \gets \cup_{i=1}^M \Ac_{i}^p$; Set $\mathcal{R}_a^p \gets \{i: a\in \Ac_i^p\}$.
\State Solve the multi-client G-optimal design in (\ref{eqn: G_opt}), and obtain solution $\pi^p = \{\pi_{i,a}^p\}_{i\in[M],a\in\Ac_i^p}$.

\State For every client $i$, send $ \{f_{i,a}^p := \lceil \pi_{i,a}^pf^p\rceil  \}_{a\in \Ac_i^p}$.

\State Receive $ \{(a,\hat{\theta}_{i,a}^p) \}_{a\in \Ac_i^p}$  from each client $i$.
\For{{$a\in \Ac^p$}} \Comment{\texttt{Global aggregation}}
\vspace{-0.1in}
\begin{align}
V_a^{p+1}&\gets\left(\sum_{i\in \Rc_a^p}f_{i,a}^p\frac{\hat{\theta}_{i,a}^p(\hat{\theta}^p_{i,a})^\TT}{\|\hat{\theta}_{i,a}^p\|^2}\right)^{\dagger},  \quad\hat{\theta}_a^{p+1}\gets V_a^{p+1}\left(\sum_{i\in \Rc_a^{p}}f_{i,a}^p\hat{\theta}_{i,a}^p\right).\label{eqn: theta_hat}
\end{align}

\EndFor 
\State Broadcast $\{(\hat{\theta}_a^{p+1},V_a^{p+1})\}_{a\in\Ac^p} $ to all clients.
\State $p\gets p+1$.
\EndWhile
\end{algorithmic}\label{alg:server}
\end{algorithm}
\vspace{-0.1in}

\subsection{Multi-client G-optimal Design}\label{sec:G_optimal}
In this subsection, we elaborate the core design of \texttt{Fed-PE}, the collaborative exploration step. There are three main design objectives we aim to achieve: 1)
As explained in Section~\ref{sec:challenge}, one of the main challenges in our federated linear contextual bandits setting is that, each client may benefit differently from the information exchange through the central server. To minimize the overall regret, for each arm $a\in \Ac^p$, it is desirable to ensure that after the global aggregation following the collaboration exploration in phase $p$, the uncertainty in $\hat{r}^{p+1}_{i,a}$ {\it across the clients} is balanced. 
2) For each client $i$, in order to eliminate the sub-optimal arms efficiently, it is also important to guarantee that the uncertainty in $\hat{r}^{p+1}_{i,a}$ {\it across the arms} in $\Ac_i^p$ is balanced. 3) Finally, in order to ensure synchronized model updating, we aim to have each client perform the same number of arm pulling in each phase. 

Motivated by those objectives, we propose a multi-client G-optimal design to coordinate the exploration of all clients. Specifically, we define
$\pi_i^p: \Ac_i^p\rightarrow [0,1]$ as a distribution on the active arm set $\Ac_i^p$ for each $i$, and denote $\pi^p:=(\pi_1^p,\ldots,\pi_M^p)$ as a vector in $\Rb^{\sum_{i\in[M]}|\Ac_{i}^p|}$. {Let $e_{i,a}:=x_{i,a}/\|x_{i,a}\|$.} {We note that $e_{i,a}$ equals to either $\bar{e}_{i,a}$ or $-\bar{e}_{i,a}$.} Then, we define a feasible set $\Cc^p\subset\Rb^{\sum_{i\in[M]}|\Ac_{i}^p|}$ as follows:
\begin{align}\label{eqn: feasible set}
\Cc^p =\left\{ \pi^p \middle | \begin{array}{l}
 \pi_{i,a}^p\geq 0,\forall i\in [M], a\in \Ac_i^p,\\ \sum_{a\in \Ac_i^p}\pi_{i,a}^p = 1, \forall i\in[M],  \\
{\rank}(\{\pi_{i,a}^pe_{i,a}\}_{i\in\Rc_a^p}) = \rank(\{e_{i,a}\}_{i\in\Rc_a^p}),\forall a\in \Ac^p
\end{array}\right\}.\end{align}
We can verify that $\Cc^p$ is a convex set. 
The first two conditions ensure that $\{\pi_{i,a}^p\}_{a}$ form a valid distribution for each client $i$. We name the last condition as the ``{\it rank-preserving}'' condition. We note that the subspace spanned by the LHS of the rank-preserving condition is always a subset of that spanned by the RHS. Thus, once the rank is preserved, the subspaces spanned by the LHS and the RHS are the same. Thus, this condition ensures that every dimension of $\theta_a$ lying in $\range(\{x_{i,a}\}_{i\in \Rc_a^p})$ will be explored under the collaborative exploration. Any violation of the ``rank-preserving'' condition will lead to information missing along the unexplored dimensions, which shall be prevented in order to reduce the uncertainty level regarding arm $a$ at every client $i\in \Rc_a^p$. 

Then, we formulate the so called multi-client G-optimal design problem as follows: 
\begin{align}\label{eqn: G_opt}
&\text{minimize\ \  } G(\pi) = \sum_{i = 1}^M\max_{a\in \Ac_i^p}  e_{i,a}^\TT\bigg(\sum_{j\in \Rc_a^p} \pi_{j,a}^pe_{j,a}e_{j,a}^\TT\bigg)^{\dagger}e_{i,a} \quad\text{s.t. } \pi^p\in\Cc^p.
\end{align}
We note that $e_{i,a}^\TT\big(\sum_{j\in \Rc_a^p} \pi_{j,a}^p e_{j,a}e_{j,a}^\TT\big)^{\dagger}e_{i,a}$ can be interpreted as an {\it approximate} measure of the uncertainty level along dimension $x_{i,a}$ if the arms are explored locally according to distributions $\{\pi_i^p\}_i$. Thus, the objective function is an approximate measure of the total uncertainties in the least explored arms at each of the clients. By solving (\ref{eqn: G_opt}), the aforementioned three design objectives can be met. {We point out that although the server does not known ${e}_{i,a}$, the objective function remains the same when $e_{i,a}$ is replaced by $\bar{e}_{i,a}$. Thus, the server can simply use $\bar{e}_{i,a}$ to solve (\ref{eqn: G_opt}).}  

After solving (\ref{eqn: G_opt}) and obtaining $\{\pi_{i,a}^p\}$, the server would set $\{f_{i,a}^p := \lceil \pi_{i,a}^pf^p\rceil\}_{a\in \Ac_i^p}$ and send it to client $i$. Note that after taking the ceiling function, $\sum_{a\in \Ac_i^p}f_{i,a}^p$ may be greater than $f^p$. To ensure synchronized updating, each client $i$ would keep pulling the estimated best arm $\hat{a}_i^p$ until the phase length equals $f^p+K$.

We note that the multi-client G-optimal design formulated in (\ref{eqn: G_opt}) is related to the G-optimal design for the single-player linear bandits problem discussed in \cite{LS19bandit-book}, and the DELB algorithm for the distributed linear bandits in \cite{wang2019distributed}. However, for such cases, the player(s) faces a single bandit problem, thus the objective is to simply obtain a distribution over a so-called core set of arms in order to minimize the maximum uncertainty {\it across the arms}. In contrast, due to the multiple clients involved in the federated bandits setting and the heterogeneous reward distributions, we are essentially solving $M$ \emph{coupled} G-design problems, one associated with each client. Such coupling effect fundamentally changes the nature of the problem, leading to very different characterization of the problem and numerical approaches. 

In Appendix~\ref{appx:G-optimal}, we analyze an equivalent problem of the multi-client G-optimal design. We note that such equivalence essentially generalizes the equivalence between the original G-optimal design and D-optimal design in \cite{LS19bandit-book} to the coupled design case.  While the original G-design problem can be approximately solved through the Frank-Wolfe algorithm under an appropriate initialization \citep{MinimumVolumnEllipsoid}, solving the multi-client G-optimal design problem is numerically non-trivial. In Appendix~\ref{appx:CoordinateAscent}, we propose a block coordinate ascent algorithm to solve the equivalent problem of (\ref{eqn: G_opt}) efficiently with guaranteed convergence.

\subsection{Theoretical Analysis of \alg}
We now characterize the performance of the \alg algorithm.
\begin{theorem}\label{thm: BoundAlgorithm1}
Under Assumption~\ref{assump:bounded}, with probability at least $1-\delta$, the cumulative regret under \alg scales in
$O\left(\sqrt{dKMT(\log(K(\log T)/\delta)+\min\{d,\log M\})}\right)$ 
and the communication cost scales in $O(Md^2K\log T)$.
\end{theorem}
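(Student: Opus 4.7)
The plan is to follow the phased-elimination template, with the multi-client G-optimal design of Section~\ref{sec:G_optimal} providing the federated analog of the usual confidence-width control. I would first define a good event $\Ec$ on which, for every phase $p$, arm $a$, and client $i\in\Rc_a^{p-1}$, the global estimate satisfies $|x_{i,a}^\TT(\hat{\theta}_a^p-\theta_a)|\leq u_{i,a}^p$. Because each $\hat{\theta}_{i,a}^{p-1}$ is a scaled subgaussian empirical mean, the aggregation rule (\ref{eqn: theta_hat}) makes $\hat{\theta}_a^p-\theta_a$ a zero-mean subgaussian vector in the $V_a^p$ metric, so a standard Hoeffding-type tail bound together with a union bound over $K$ arms, $M$ clients, and the $P=O(\log T)$ phases forces the choice $\alpha=\Theta(\sqrt{\log(KM\log(T)/\delta)})$ and gives $\Pb(\Ec)\geq 1-\delta$. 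On $\Ec$, the optimal arm $a_i^*$ is never eliminated at any client and every $a\in\Ac_i^p$ satisfies $\Delta_{i,a}\leq 2(u_{i,a}^p+u_{i,\hat{a}_i^p}^p)$.

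Next, I would convert the value of the multi-client G-optimal design into a confidence-width bound. Since $f_{i,a}^p\geq \pi_{i,a}^p f^p$ and the rank-preserving constraint in (\ref{eqn: feasible set}) ensures $\range(V_a^{p+1})=\range(\sum_j\pi_{j,a}^p e_{j,a}e_{j,a}^\TT)$, one obtains
\[
V_a^{p+1}\preceq\frac{1}{f^p}\left(\sum_{j\in\Rc_a^p}\pi_{j,a}^p e_{j,a}e_{j,a}^\TT\right)^{\dagger},
\]
and hence $\sum_i\max_{a\in\Ac_i^p}\|x_{i,a}\|_{V_a^{p+1}}^2 \leq (L^2/f^p)\,G(\pi^p)$. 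The key structural lemma is then to prove $G(\pi^p)=O(dK\min\{d,\log M\})$: the $d$-branch follows from a per-client Kiefer--Wolfowitz / D-optimal equivalence (invoking the equivalent formulation developed in Appendix~\ref{appx:G-optimal}), while the $\log M$-branch follows from a volumetric / covering argument over the at most $M$ distinct unit directions $\{e_{i,a}\}_{i\in\Rc_a^p}$ for each arm.

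Summing over phases closes the loop. Plugging the width bound into $\Delta_{i,a}\leq 2(u_{i,a}^p+u_{i,\hat{a}_i^p}^p)$ and applying Cauchy--Schwarz across arms and clients, the regret contributed by phase $p$'s collaborative-exploration block is of order $(\alpha L/\ell)\sqrt{f^p\cdot MG(\pi^{p-1})/f^{p-1}}$; the $K$ trailing pulls of $\hat{a}_i^p$ are lower order because $\Delta_{i,\hat{a}_i^p}\leq O(u_{i,\hat{a}_i^p}^p)$ as well. Choosing $f^p=2^p$ yields $P=O(\log T)$ phases whose contributions form a geometric series summing to $O(\sqrt{T})$ times the leading constants, giving the stated regret $O(\sqrt{dKMT(\log(K\log(T)/\delta)+\min\{d,\log M\})})$. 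For the communication cost, in each phase every client sends an $O(K)$-scalar active-set description and up to $K$ vectors in $\Rb^d$, while the server broadcasts $\{(\hat{\theta}_a^{p+1},V_a^{p+1})\}_{a\in\Ac^p}$ to each of the $M$ clients --- totaling $O(MKd^2)$ scalars per phase. Multiplying by $P=O(\log T)$ gives $O(MKd^2\log T)$.

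The hard part will be bounding $G(\pi^p)$ in the second step. In the single-player Kiefer--Wolfowitz setup one gets $\max_a e_a^\TT V^{-1}e_a\leq d$ ``for free'' from D-optimality, but here $G(\pi)$ couples $M$ per-client maxima through the shared per-arm matrix $\sum_j\pi_{j,a}e_{j,a}e_{j,a}^\TT$, and the rank-preserving constraint blocks any naive client-by-client decoupling whenever the subspaces $\range(x_{i,a})$ are only partially aligned across $i\in\Rc_a^p$. Driving the bound down from the trivial $Md$ to $dK\min\{d,\log M\}$, and in particular securing the $\log M$-branch, is the most delicate piece of the analysis and is where most of the analytical effort will concentrate.
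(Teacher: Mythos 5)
There is a genuine gap, and it concerns where the $\min\{d,\log M\}$ factor comes from. In the paper this factor lives entirely inside the confidence radius $\alpha$, not inside the G-optimal design value. The paper sets $\alpha=\min\{\alpha_1,\alpha_2\}$ with $\alpha_1=\sqrt{2\log(2MKH/\delta)}$ (Hoeffding plus a union bound over all $M$ clients, $K$ arms and $H$ phases, giving the $\log M$ branch) and $\alpha_2=\sqrt{2\log(KH/\delta)+d\log(ke)}$ (a bound on the quadratic form $\bigl\|\sum_{j\in\Rc_a^{p-1}}e_{j,a}\ell\xi_{j,a}^{p-1}/\|x_{j,a}\|\bigr\|^2_{V_a^p}$, which after rewriting is $\|A\zeta\|^2$ for an idempotent matrix $A$ of rank $d_a^p\leq d$; this controls the error \emph{uniformly over all clients} at once via Lemma~\ref{lemma: subG-L2}, so the union bound is only over $K$ and $H$ and the price is an additive $d\log(ke)$ --- the $d$ branch). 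The design value itself needs no such refinement: by the generalized Kiefer--Wolfowitz equivalence (Lemma~\ref{lemma: OptimizationEquivalence}), the optimum satisfies $G(\pi^*)=\sum_a d_a^p\leq dK$ exactly. Your proposal instead takes only the $\alpha_1$-type union bound, $\alpha=\Theta(\sqrt{\log(KM\log(T)/\delta)})$, and then tries to recover the dichotomy by proving $G(\pi^p)=O(dK\min\{d,\log M\})$ via a ``volumetric/covering argument.'' That lemma is not needed (the $dK$ bound is already available and is stronger whenever $\min\{d,\log M\}\geq 1$), the proposed $\log M$-branch covering argument does not correspond to anything that holds or helps here, and even if both of your claims were established, combining them would yield a \emph{product} $\log(KM\log(T)/\delta)\cdot\min\{d,\log M\}$ under the square root rather than the claimed \emph{sum} $\log(K(\log T)/\delta)+\min\{d,\log M\}$, so the stated bound would not follow. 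The piece you flag as ``the most delicate part of the analysis'' is therefore aimed at the wrong target; the actual delicate step is the second concentration bound ($\alpha_2$), which requires showing that the error vector, measured in the $V_a^p$ norm, is a sub-Gaussian quadratic form with an idempotent, rank-$\leq d$ matrix.

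Two smaller points. First, your per-phase regret expression $(\alpha L/\ell)\sqrt{f^p\cdot MG(\pi^{p-1})/f^{p-1}}$ is off: with $f^p=2^p$ it is constant in $p$ and would sum to $O(\log T)$, not $O(\sqrt{T})$. The correct form multiplies the phase length outside the square root, $4(f^p+K)\cdot\alpha(L/\ell)\sqrt{M\cdot dK/f^{p-1}}\asymp 2^{p/2}$, whose geometric sum over $H=O(\log T)$ phases gives $O(\sqrt{T})$. Second, the rest of your outline --- the good-event definition, the elimination argument showing $a_i^*$ survives and $\Delta_{i,a}\leq 4\max_a u_{i,a}^p$, the step $V_a^{p+1}\preceq\frac{1}{f^p}\bigl(\sum_j\pi_{j,a}^pe_{j,a}e_{j,a}^\TT\bigr)^{\dagger}$, the Cauchy--Schwarz over clients, and the communication accounting --- matches the paper and is sound.
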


The complete version of Theorem \ref{thm: BoundAlgorithm1} and its proof can be found in Appendix \ref{sec: ProofAlgorithm1}. 
For the communication cost, at each phase $p$, client $i$ uploads at most $K$ local estimates with dimension $d$ and downloads at most $K$ global estimates and potential matrices with dimension $d$ and $d^2$, respectively. Thus, the upload cost is $O(MdK\log T)$ and the download cost is $O(Md^2K\log T)$. 

\vspace{-0.1in}
\begin{remark}
When we set {$\delta=O(\sqrt{\nicefrac{dK}{MT}})$}, the overall regret scales in $O(\sqrt{dKMT\log(MKT)})$, and the per-client regret scales in $O(\sqrt{dKT\log(MKT)/M})$. While the minimax lower bound for standard stochastic MAB scales in $\Omega(\sqrt{KT})$, the collaborative learning induced by \alg leads to $\sqrt{d/M}$-fold reduction of the per-client regret. We also note that for single-player linear contextual bandits with disjoint parameters, the best known upper bound scales in $\tilde{O}(\sqrt{dKMT})$ \citep{dimakopoulou2018estimation} over $MT$ arm pulls, which indicates that the regret of \alg is close to the state-of-the-art centralized algorithms at a communication cost in $O(\log T)$. 
\end{remark}
\subsection{Enhanced Fed-PE}\label{sec:enhanced}
The original \alg algorithm requires exponentially increasing $f^p$ in order to achieve the regret upper bound in Theorem~\ref{thm: BoundAlgorithm1}. This is because in each phase $p$, we only utilize the rewards collected in phase $p-1$ to estimate $\theta_a$. While this simplifies the analysis, the measurements collected in earlier phases cannot be utilized. In order to overcome this limitation, we propose an \texttt{Enhanced} \alg algorithm by leveraging all historical information. \texttt{Enhanced} \alg achieves different tradeoffs between communication cost and regret performance by adjusting $f^p$. 
The detailed description and analysis of \texttt{Enhanced} \alg for different selection of $f^p$ can be found in Appendix~\ref{appx:enhanced}. 

\subsection{Lower Bound}
To derive a tight lower bound, we focus on a set of representative policies defined as follows.
\begin{definition}[Collinearly-dependent policy]\label{defn:col_dependent_policy}
Two clients $i$ and $j$ are called \emph{collinear} if there exist an arm $a\in[K]$ and a subset $\Sc\subset [M]$ such that the following conditions are satisfied: 1) $x_{i,a}\notin \Span(\{x_{m,a}|m\in\Sc\})$; and 2) $x_{i,a}\in \Span(\{x_{m,a}|m\in\Sc\}\cup \{x_{j,a}\})$.
For any two clients $i$ and $j$ that are not collinear, if the action of client $i$ is independent of the action of $j$ under a policy $\pi$, then, the policy is called a collinearly-dependent policy. 
\end{definition}
We note that the definition of collinearly-dependent policies is actually quite natural. Intuitively, for two clients that are not collinear, their local observations on any arm $a$ cannot be utilized to improve each other's knowledge of their own local models. As a result, they should not affect each other's decision-making process. As shown in Appendix~\ref{appx:lower}, we can verify that the most celebrated ridge regression based LinUCB type of policies~\citep{Li:2010:LinUCB}, Thompson sampling based polices with Gaussian priors~\citep{Agrawal:2013:linear}, and least-square estimation based policies, including \texttt{Fed-PE}, all fall in this category. 
\begin{theorem}\label{thm:lower}
For any collinearly-dependent policy, there exists an instance of the federated linear contextual bandits such that the regret is lower bounded as $R(T)=\Omega(\sqrt{dKMT})$.
\end{theorem}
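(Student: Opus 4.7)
The plan is to exhibit a family of adversarial instances on which the clients decouple into $d$ independent sub-problems, and then apply the classical $K$-armed stochastic bandit lower bound to each sub-problem. Assume without loss of generality that $d$ divides $M$, write $m := M/d$, partition the $M$ clients into $d$ equal groups of size $m$, and for every client $i$ in group $g\in[d]$ and every arm $a\in[K]$ set $x_{i,a} = e_g$, where $\{e_g\}_{g=1}^{d}$ is the standard basis of $\mathbb{R}^d$. The adversary picks the unknown parameter $\theta_a\in\mathbb{R}^d$ with each coordinate $\theta_a[g]\in\{-\Delta,+\Delta\}$, for $\Delta$ of order $\sqrt{K/(mT)}$ as in the standard two-point construction. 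Client $i$ in group $g$ then observes the reward $\theta_{a_{i,t}}[g]+\eta_{i,t}$, so the clients of group $g$ collectively observe only the $g$-th coordinates of the $K$ parameters.

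I next verify that every pair of clients from different groups is non-collinear in the sense of Definition~\ref{defn:col_dependent_policy}. Fix $i$ in group $g$ and $j$ in group $g'\neq g$, any subset $\mathcal{S}\subset[M]$, and any arm $a$. Every feature vector in the construction is one of the $d$ basis vectors, so $\mathrm{Span}(\{x_{m,a}\}_{m\in\mathcal{S}})$ is coordinate-aligned; because $e_g\perp e_{g'}$, adjoining $x_{j,a}=e_{g'}$ to this span cannot introduce the direction $e_g$. Hence no pair of clients from distinct groups is collinear, so Definition~\ref{defn:col_dependent_policy} forces the joint action processes of any two cross-group clients, and therefore of any two distinct groups, to be mutually independent under $\pi$.

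The problem then reduces to $d$ parallel copies of the standard $K$-armed stochastic bandit: group $g$ collectively performs $mT$ pulls on arms with scalar means $\{\theta_a[g]\}_{a\in[K]}$ under $1$-subgaussian noise, and any intra-group coordination can only strengthen the lower bound because it corresponds to the best achievable centralized strategy on $mT$ samples. Invoking the classical minimax lower bound for $K$-armed stochastic bandits \citep{LS19bandit-book}, there exists a choice of $\{\theta_a[g]\}_a$ for which group $g$'s expected regret is at least $c\sqrt{KmT}$ for a universal constant $c>0$. Because the coordinates for different $g$ are disjoint parameters that the adversary may choose freely and independently, these worst-case coordinates can be fixed simultaneously, yielding
\begin{equation*}
\mathbb{E}[R(T)] \;\geq\; d\cdot c\sqrt{KmT} \;=\; c\sqrt{d^{2}\cdot K(M/d)T} \;=\; c\sqrt{dKMT}.
\end{equation*}

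The main technical obstacle is converting the qualitative statement \emph{cross-group actions are independent} into a clean reduction, since the central server may in principle broadcast messages derived from the other groups' rewards. My resolution is that Definition~\ref{defn:col_dependent_policy} constrains the joint law of the action sequences rather than the inputs the clients receive: whatever messages the server chooses to relay must be processed by the clients in a way that preserves cross-group action independence, and conditioning on such (possibly informative) auxiliary signals can only enlarge, never shrink, the minimax regret of each $K$-armed sub-problem. The boundary case $d\nmid M$ is handled by applying the construction to the largest multiple of $d$ at most $M$ and observing that the remaining clients contribute non-negative regret and do not affect the $\Omega$ rate.
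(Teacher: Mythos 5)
Your construction is exactly the paper's: the same partition of the $M$ clients into $d$ groups of size $M/d$ with $x_{i,a}=e_g$ for every arm, the same verification that cross-group clients are non-collinear, and the same final accounting $d\cdot\Omega(\sqrt{K(M/d)T})=\Omega(\sqrt{dKMT})$. The difference is that where you invoke the classical $K$-armed minimax bound per group as a black box, the paper runs the two-instance argument explicitly: a base instance $\theta$ with arm $1$ optimal everywhere, an under-explored arm $a_s^*$ per group with $\Eb_\theta[\bar T_{s,a_s^*}]\le MT/(d(K-1))$, a single simultaneous perturbation $\tilde\theta$, and the Bretagnolle--Huber inequality.

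The gap is in the reduction step, which is precisely where the collinearly-dependent hypothesis has to do work. Definition~\ref{defn:col_dependent_policy} only gives independence of the \emph{actions} of non-collinear clients; your argument needs the stronger statement that group $g$'s entire learning problem is statistically decoupled from the other groups -- i.e., that the law of group $g$'s history responds to a change in $\theta[g']$, $g'\neq g$, in no way at all. The paper supplies exactly this in Lemma~\ref{lemma:kl}, by factorizing the conditional law $\Pb_{\theta}(\Gc_{T}\mid\Hc_{T-1})=\prod_{r}\Pb_{\theta}(\Gc_{r,T}\mid\Hc_{r,T-1})$ and concluding that $D(\Pb_{\theta}(\Hc_{s,T})\,\|\,\Pb_{\tilde\theta}(\Hc_{s,T}))$ accumulates only from group $s$'s own pulls of the perturbed arm. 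Your substitute -- ``conditioning on such auxiliary signals can only enlarge, never shrink, the minimax regret'' -- is stated in the wrong direction (extra side information can only \emph{help} a learner), and the correct version (side information independent of $\theta[g]$ cannot lower the minimax regret of sub-problem $g$) itself presupposes the decoupling you are trying to establish, since a priori the server could route information derived from group $g$'s rewards back to group $g$ through messages that also touch other groups. Relatedly, ``these worst-case coordinates can be fixed simultaneously'' is not automatic: the worst-case instance for group $g$ produced by the classical lower bound depends on group $g$'s marginal policy, which, absent the decoupling, could depend on the other coordinates. The paper avoids this circularity by choosing the under-explored arms $a_s^*$ for all groups under one common base instance and perturbing them into a single alternative $\tilde\theta$; if you want to keep the black-box formulation, you would need a product-prior (Bayesian averaging) argument in its place, together with an explicit analogue of Lemma~\ref{lemma:kl}.
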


\begin{remark}
Theorem~\ref{thm:lower} essentially shows that, even if raw data transmission and instantaneous communication are allowed and other collinearly-dependent policies are adopted, we cannot improve the order of the regret summarized in Theorem~\ref{thm: BoundAlgorithm1} much, i.e., \alg is order-optimal up to $\sqrt{\log (KMT)}$. 
\end{remark}

The proof of Theorem~\ref{thm:lower} relies on the construction of a special instance of the federated linear contextual bandits where the clients can be divided into $d$ groups. Clients in each group face the same $K$-armed stochastic bandits model locally, while clients from two distinct groups are not collinear. Analyzing the regret bound in each individual group, we can show that it is lower bounded by $\Omega(\sqrt{KMT/d})$. Then, by utilizing the property of collinearly-dependent policies, we can show that the overall regret is lower bounded by $\Omega(\sqrt{dKMT})$ for this scenario. More discussions on the collinearly-dependent policies and the complete proof of Theorem~\ref{thm:lower} can be found in Appendix~\ref{appx:lower}.

\section{Federated Linear Contextual Bandits: Shared Parameter Case}\label{sec:shared}
The \alg algorithm can be slightly modified for the shared parameter case where $\theta_a = \theta, \forall a\in[K]$. While the client side operation stays the same, the global aggregation step at the server side in (\ref{eqn: theta_hat}) can be changed by letting the ``potential matrix'' $V^p $ be $ ( \sum_{a\in[K]}(V_a^p)^\dagger)^\dagger$, and the global estimator
$\hat{\theta}^p$ be $V^p\left(\sum_{i\in[M]}\sum_{a\in\Ac_i^{p-1}}f_{i,a}^{p-1}\hat{\theta}_{i,a}^{p-1}\right)$.
Below, we present the main result for this case and leave the detailed algorithm description and regret analysis in Appendix~\ref{appx:enhanced}.

\begin{theorem}\label{thm: BoundSharedParameter}
Under Assumption~\ref{assump:bounded}, with probability at least $1-\delta$, the regret of the adapted \alg for the shared parameter case is upper bounded by 
$O\left(\sqrt{dMT(\log({(\log T)}/{\delta})+\min\{d,\log MK\})}\right)$, and the communication cost scales in {$O((KdM + d^2M)\log T)$}.
\end{theorem}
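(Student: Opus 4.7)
The plan is to mirror the proof of Theorem~\ref{thm: BoundAlgorithm1} but exploit the fact that a \emph{single} parameter $\theta$ can be estimated by pooling observations across \emph{all} $K$ arms as well as across $M$ clients. First I would establish that the global estimator $\hat\theta^p$ concentrates around $\theta$ in the weighted norm induced by $V^p = \bigl(\sum_{a\in[K]}(V_a^p)^\dagger\bigr)^\dagger = \bigl(\sum_{a\in[K]}\sum_{i\in\Rc_a^{p-1}} f_{i,a}^{p-1}\, e_{i,a}e_{i,a}^\TT\bigr)^\dagger$. Since each local estimate satisfies $\hat\theta_{i,a}^{p-1}(\hat\theta_{i,a}^{p-1})^\TT/\|\hat\theta_{i,a}^{p-1}\|^2 = e_{i,a}e_{i,a}^\TT$ with mean along $e_{i,a}$ equal to $x_{i,a}^\TT\theta/\|x_{i,a}\|$, a direct subgaussian concentration argument (as in the disjoint case but applied once, not arm-by-arm) yields with probability $\ge 1-\delta$ the bound $\bigl|x_{i,a}^\TT(\hat\theta^p - \theta)\bigr| \le \alpha\,\|x_{i,a}\|_{V^p}/\ell$ for every phase $p$, client $i$, and arm $a$, where $\alpha$ absorbs the factor $\sqrt{\log((\log T)/\delta) + \min\{d,\log MK\}}$. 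The $\log MK$ term replaces the $\log M$ of the disjoint case because the single confidence set must hold simultaneously across all $MK$ (client, arm) pairs, and the $\min\{d,\cdot\}$ arises from an alternative union bound via an $\varepsilon$-net on the unit ball in $\range(V^p)$.

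Given concentration, the usual elimination-bandit accounting shows that, with high probability, no optimal arm is ever eliminated, and any arm that remains in $\Ac_i^p$ must satisfy a suboptimality gap of order $\max_{a\in\Ac_i^{p-1}} \|x_{i,a}\|_{V^p}$. The regret contribution of phase $p$ is then at most $(f^p+K)\sum_{i}\max_{a\in\Ac_i^{p-1}} \|x_{i,a}\|_{V^p}$, and I would bound the sum $\sum_i \max_a \|e_{i,a}\|^2_{V^p}$ via the multi-client G-optimal design objective of~\eqref{eqn: G_opt} adapted to the shared-$\theta$ setting: since $V^p$ now aggregates $f^{p-1}$ samples distributed across all $(i,a)$ according to $\pi^{p-1}$, the Kiefer-Wolfowitz-type equivalence (generalized to the multi-client case in Appendix~\ref{appx:G-optimal}) gives $\sum_i \max_a \|e_{i,a}\|^2_{V^p} \lesssim Md/f^{p-1}$. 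Substituting this back and choosing $f^p$ to grow geometrically (so that $O(\log T)$ phases suffice) produces, after a Cauchy--Schwarz step over phases, a cumulative regret of order $\sqrt{dMT \cdot (\log((\log T)/\delta) + \min\{d,\log MK\})}$, which drops the factor $\sqrt{K}$ present in the disjoint case precisely because one single $V^p$ pools measurements from all arms.

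The communication cost is then immediate: per phase, each client uploads at most $K$ local estimates of dimension $d$ plus a constant-size active-arm set, giving $O(KdM)$ scalars on the upload side, while the server broadcasts the single pair $(\hat\theta^p, V^p)$ to all $M$ clients, giving $O(d^2 M)$ scalars on the download side. Multiplying by the $O(\log T)$ phase count produces $O((KdM+d^2M)\log T)$.

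The main obstacle will be verifying that the multi-client G-optimal design still yields the same Kiefer-Wolfowitz-type bound when the objective is implicitly being re-interpreted as $e_{i,a}^\TT (V^p)^{-1} e_{i,a}$ with $V^p$ summing across all arms in $\Ac^{p-1}$, rather than the per-arm $V_a^p$. I would need to argue that the rank-preserving condition in~\eqref{eqn: feasible set} continues to guarantee invertibility of $V^p$ on the relevant subspace $\range\bigl(\{x_{i,a}\}_{i\in[M],a\in[K]}\bigr)$, and that the equivalence result in Appendix~\ref{appx:G-optimal} still bounds the maximum over arms per client by $d/f^p$ on average. Once this geometric step is handled, the remainder of the argument follows the same structure as the proof of Theorem~\ref{thm: BoundAlgorithm1} and is deferred to Appendix~\ref{appx:enhanced}.
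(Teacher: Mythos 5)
Your proposal follows essentially the same route as the paper (Appendix~\ref{appx:shared}): concentrate the pooled estimator $\hat\theta^p$ in the norm induced by $V^p=\bigl(\sum_{a}(V_a^p)^\dagger\bigr)^\dagger$, invoke the multi-client G-optimal design equivalence so that the optimal design value drops from $\rank(\{v_{i,a}\})\le Kd$ to $\rank(\{e_{i,a}\})\le d$, and reuse the elimination accounting and communication count from the disjoint case. Two points need attention, one of which is quantitatively consequential.

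First, your key geometric bound is stated as $\sum_i\max_a\|e_{i,a}\|_{V^p}^2\lesssim Md/f^{p-1}$. The correct statement (the paper's Lemma~\ref{lemma:variance_bound_shared}) is $\sum_i\max_a\|e_{i,a}\|_{V^p}^2\le d/f^{p-1}$, with \emph{no} factor of $M$: the G-optimal design value equals $\rank(\{e_{i,a}\}_{i,a})\le d$, and since $f_{j,a}^{p-1}\ge \pi_{j,a}f^{p-1}$ one divides by $f^{p-1}$ and nothing else. The factor $\sqrt{M}$ enters only afterwards, via Cauchy--Schwarz when converting $\sum_i\max_a\|e_{i,a}\|_{V^p}$ into $\sqrt{M\sum_i\max_a\|e_{i,a}\|_{V^p}^2}$. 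If you literally carry $Md/f^{p-1}$ for the sum of squares through that step you obtain a phase regret of order $(f^p+K)\,M\sqrt{d/f^{p-1}}$ and a cumulative regret of order $M\sqrt{dT}$, which is worse than the claimed $\sqrt{dMT}$ by $\sqrt{M}$. So either the exponent or the $M$ in that display must be removed for the argument to close.

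Second, a minor methodological difference: you attribute the $d$ branch of $\min\{d,\log MK\}$ to an $\varepsilon$-net over the unit ball of $\range(V^p)$. The paper instead forms, for each phase, a single self-normalized statistic $X_p=\|A\xi\|^2$ with $A$ an idempotent matrix of trace $\le d$, applies the sub-Gaussian quadratic-form tail (Lemma~\ref{lemma: subG-L2}), and union-bounds over only the $H$ phases; the bound on every $(i,a)$ pair then follows from Cauchy--Schwarz against $X_p$. An $\varepsilon$-net could be made to work, but the paper's route is what lets the union bound cost only $\log(H/\delta)$ rather than $\log(MKH/\delta)$ in that branch, which is exactly what produces the $\min\{d,\log MK\}$ trade-off in the final statement. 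The communication-cost accounting in your proposal matches the paper's.
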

\begin{remark} 
By setting $\delta=O(\sqrt{\nicefrac{1}{MT}})$, we can show that the overall regret scales in $O(\sqrt{dMT\log(MTK)})$. We note that this bound improves the regret bound for the no differential privacy guarantee case in \cite{dubey2020differentiallyprivate} by a factor of $\sqrt{\log T}$, although our settings are slightly different. By assuming all clients face the same linear bandits in this shared parameter setting, the regret in the federated setting over horizon $T$ must be worse than the linear bandits over horizon $MT$. Since the latter is lowered bounded by $\Omega(\sqrt{dMT})$~\citep{pmlr-v15-chu11a}, the minimax regret for the shared parameter setting is bounded by $\Omega(\sqrt{dMT})$ as well. Thus, the modified \alg is near-optimal for this case. 

In terms of the communication cost, the uploading cost stays the same as in the disjoint parameter case, while the broadcast cost is reduced by a factor of $K$, since only one potential matrix needs to be broadcast for the shared parameter case. 
\end{remark}
\vspace{-0.05in}

\vspace{-0.1in}
\section{Experiments}\label{sec:simulation}
Experiment results using both synthetic and real-world datasets are reported in this section to evaluate \alg and the proposed enhancement. Additional experimental details and more experimental results can be found in Appendix~\ref{appx:experiment}.  
We consider four different algorithms, namely, \texttt{Fed-PE}, \texttt{Enhanced Fed-PE}, local UCB without communication, and a modified \alg algorithm with full information exchange after collaborative exploration in each phase (coined as `Collaborative' in Figure~\ref{fig:syn1}). 
For all experiments, we set $T = 2^{17}, f^p = 2^p, p\in\{1,2,\ldots,16\}$, and run 10 trials. For \alg and its variants, we choose $\delta=0.1$. Note that other values of $\delta$  may further improve the regret.
We evaluate the algorithms on both synthetic and MovieLens-100K datasets.

\textbf{Synthetic Dataset}: We first set $M=100, K=10,$ and $d=3$. We set $\{\theta_a\}$ as the canonical basis of $\Rb^3$. The feature vectors $x_{i,a}$ are generated randomly ensuring that the suboptimality reward gaps lie in $[0.2,0.4]$ and $\ell=0.5, L=1$. The per-client cumulative regret as a function of $T$ is plotted in Figure~\ref{fig:syn1}(a). We see that \texttt{Enhanced Fed-PE} outperforms \texttt{Fed-PE} while being slightly worse than `Collaborative'. This indicates that keeping feature vectors $x_{i,a}$ private to clients does not impact the learning performance significantly. All \alg related algorithms outperform local UCB when $T$ is sufficiently large, demonstrating the effectiveness of communication in improving learning locally. We also set $K=10$, $d=4$, and vary the number of clients $M$. The performance of \texttt{Enhanced Fed-PE} is plotted in Figure~\ref{fig:syn1}(b). We note that the per-client regret  monotonically decreases as $M$ increases, corroborating the theoretical results.     

\textbf{Movielens Dataset}: We then use the MovieLens-100K dataset~\citep{harper2016movielens} to evaluate the performances. Motivated by \cite{bogunovic2020stochastic-RobustAttack}, we first complete the rating matrix $R=[r_{i,a}]\in\Rb^{943\times 1682}$ through collaborative filtering \citep{collaborative-filtering}, and then use non-negative matrix factorization with 3 latent factors to get $R=WH$, where $W\in\Rb^{943\times 3}$, $H\in\Rb^{3\times 1682}$. Let $x_{i,a}$ be the $i$th row vector of $W$. We apply the $k$-means algorithm to the row vectors of $H$ to produce $K = 30$ groups (arms), and let $\theta_a$ be the center of the $a$-th group. Finally, we randomly choose $M=100$ users' feature vectors. 
We observe that $0.4\leq \|x_{i,a}\|^2\leq 0.8, $ and the suboptimality gaps lie in $[0.01, 0.8]$. The regret performances of the algorithms are plotted in Figure~\ref{fig:syn1}(c). The curves show similar characteristics as in Figure~\ref{fig:syn1}(a). These results demonstrate the effectiveness of collaborative learning in the federated bandits setting.

\begin{figure*}[t]	
\vspace{-0.3in}
	\centering  
	\subfigure[\small Synthetic regret.]{\includegraphics[width=0.32\textwidth]{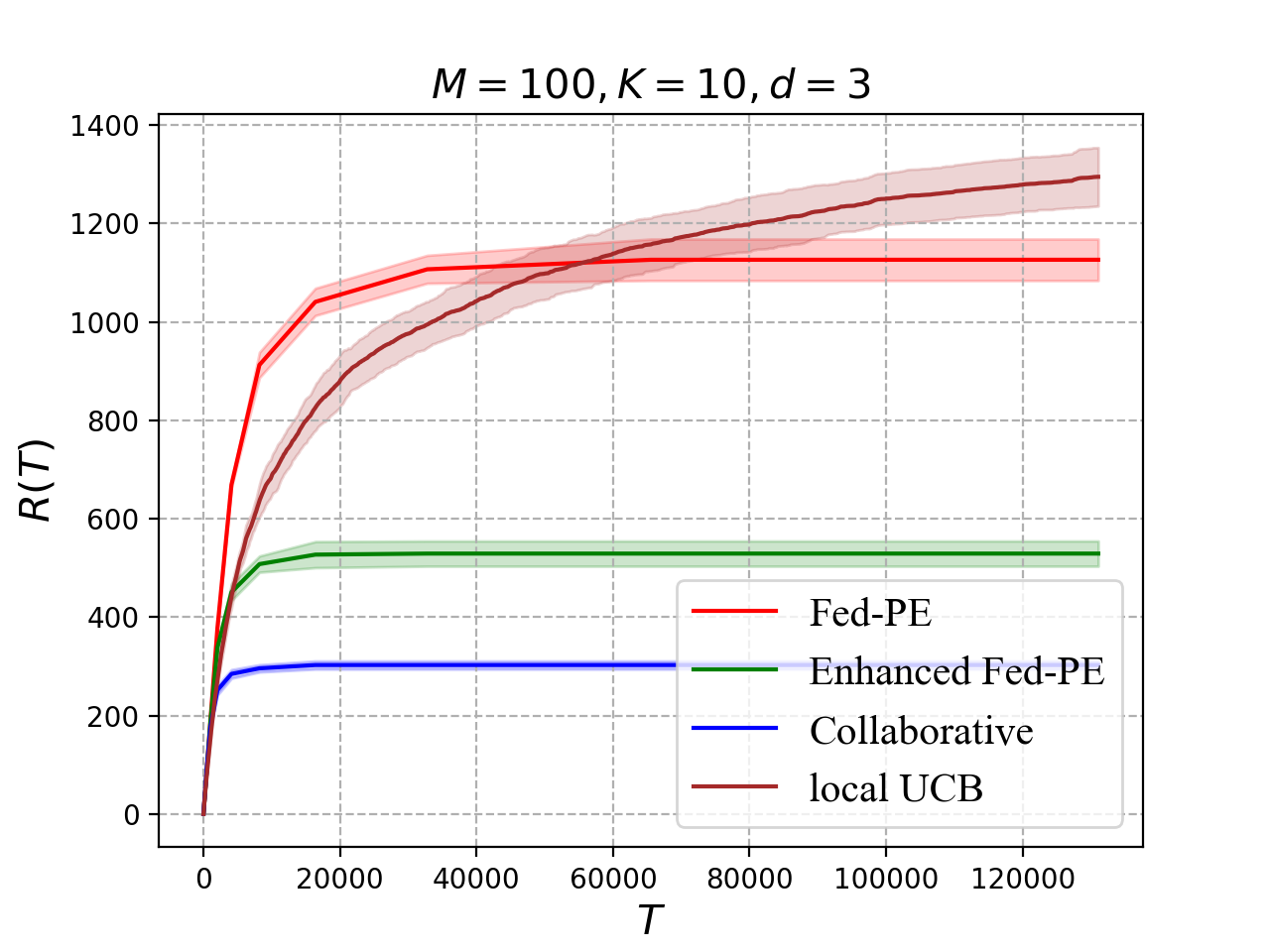}}
	\subfigure[\small Regret with varying $M$.]{\includegraphics[width=0.32\textwidth]{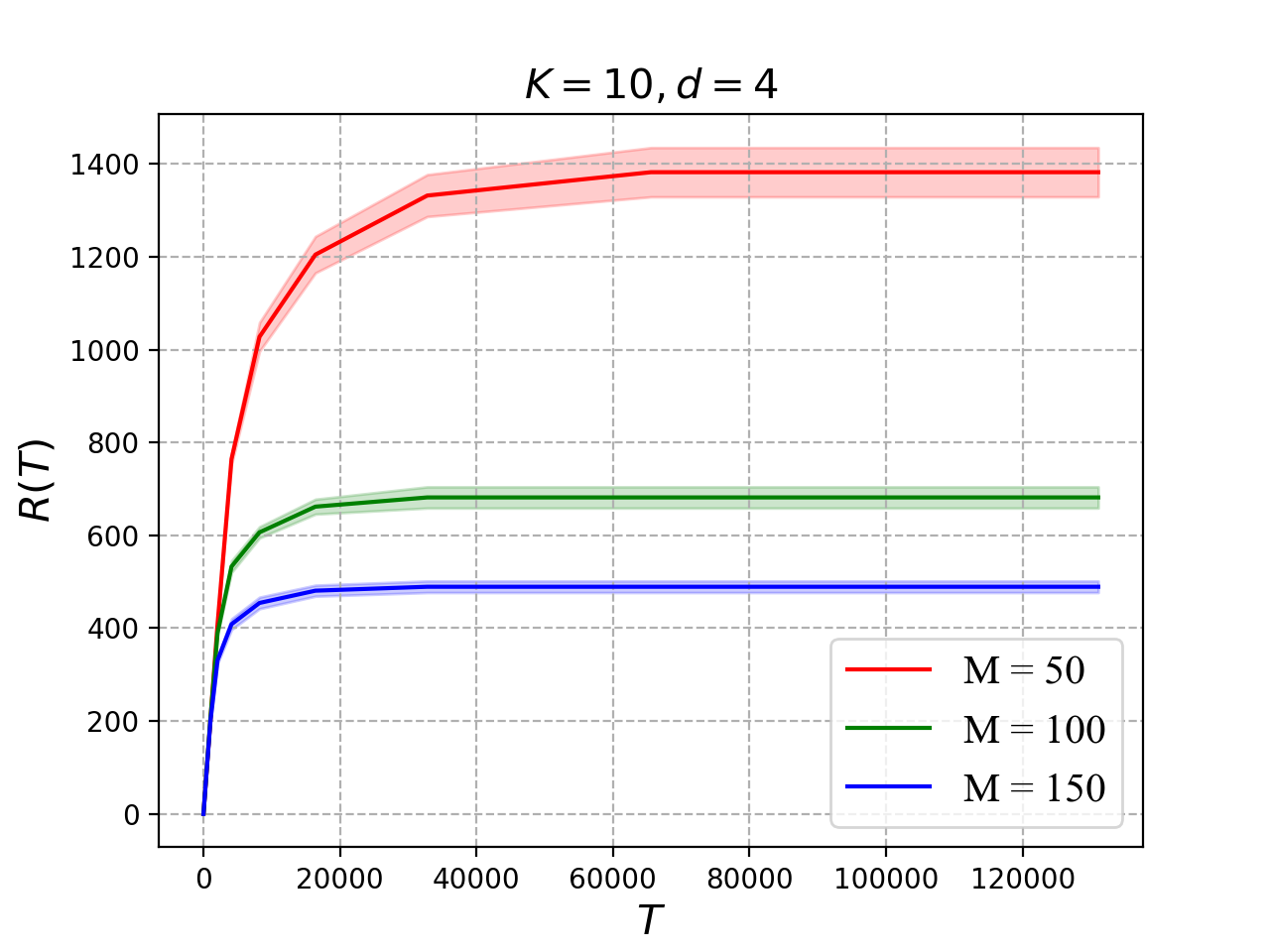}}
		\subfigure[\small MovieLens regret.]{\includegraphics[width=0.32\textwidth]{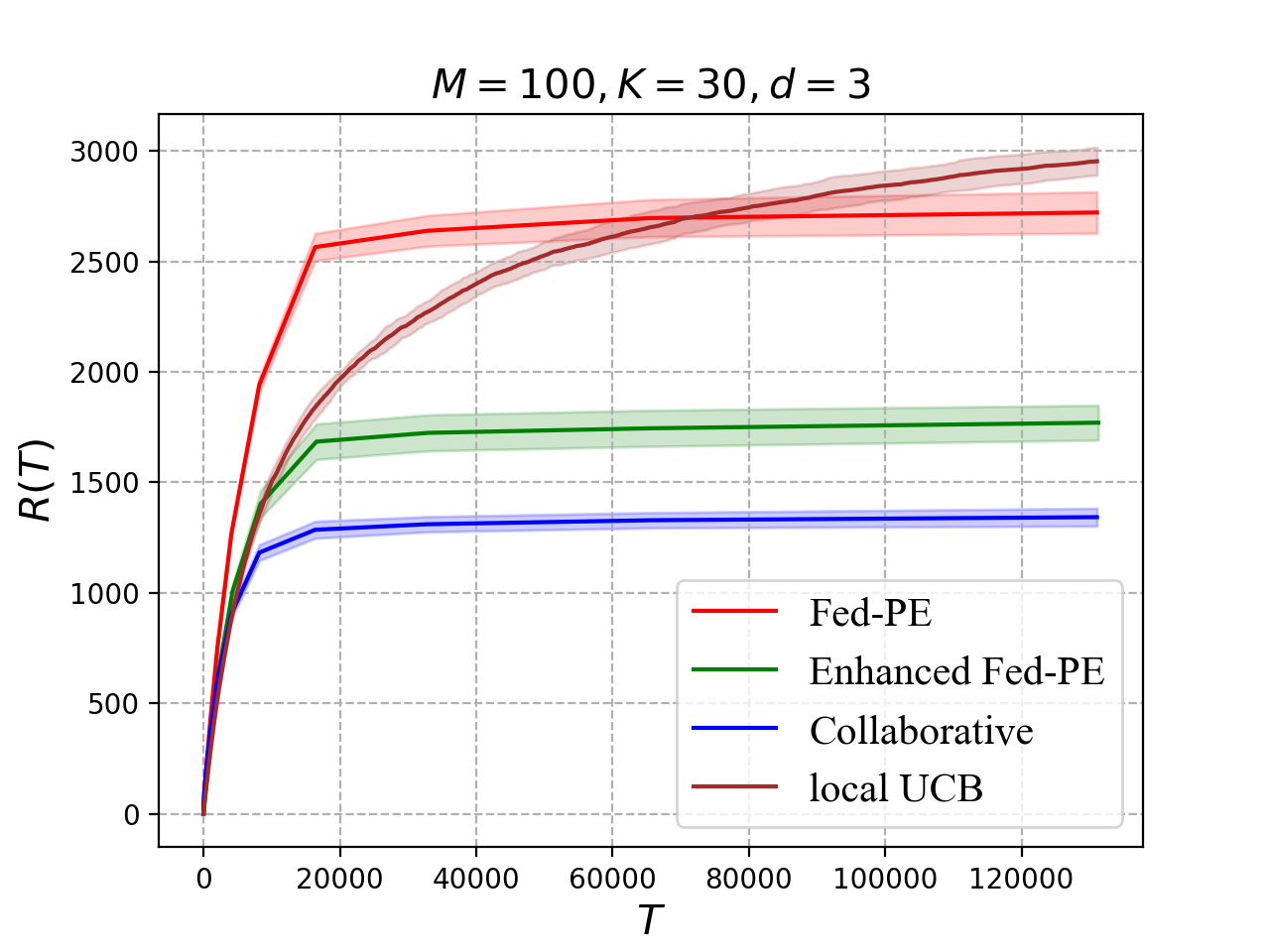}}
		\vspace{-0.1in}
	\caption{\small Pseudo-regret over $T$. Shaded area indicates the standard deviation.}\label{fig:syn1}
	\vspace{-0.1in}
\end{figure*}

\section{Discussion and Conclusion}\label{sec:discussion}
In this work, we have considered a novel federated linear contextual bandits model, which naturally connects local stochastic MAB models with linear contextual bandits through common global parameters. While each client can only observe a projection of each global parameter in its own subspace, \alg utilizes the geometric structure of the local estimates to reconstruct the global parameters and guide efficient collaborative exploration. Theoretical analysis indicates that \alg achieves near-optimal regret for both disjoint and shared parameter cases with a communication cost in the order of $O(\log T)$.  

An interesting open question is whether we can further reduce the communication cost without downgrading the regret performance. 
In particular, we note the the original single-player G-optimal design allows for a sparse solution whose support is of size $d(d+1)/2$. Our numerical results indicate that such sparse solutions exist for the multi-client G-optimal design as well. Utilizing the sparsity of the solution may reduce the communication cost significantly. Theoretical characterization of the existence of such sparse solutions is our next step. 

Another possible direction to explore is to incorporate the differential privacy mechanism to the \alg framework. Although local feature vectors $\{x_{i,a}\}$ are kept private under \texttt{Fed-PE}, local estimate $\hat{\theta}_{i,a}$ lies in $\range(x_{i,a})$, thus revealing the direction of $x_{i,a}$ to the central server. We aim to add certain perturbation on $\hat{\theta}_{i,a}$ in order to obfuscate the direction information without significantly affecting the regret performance.

\begin{ack}
 The work of RH and JY was supported by the US National Science Foundation under Grants CNS-1956276, CNS-2003131, CNS-2114542, and ECCS-2030026. CS acknowledges the funding support by the US National Science Foundation under Grants ECCS-2029978, ECCS-2033671, and CNS-2002902. WW's work was done before he joined Facebook.
\end{ack}

\appendix

\allowdisplaybreaks

\section{Related Works}
\myparagraph{Linear contextual bandits.} The reward model considered in this work is similar to that of linear contextual bandits in the literature. The single-agent setting is first introduced in~\cite{Auer:2003:UCB} through the LinRel algorithm and is subsequently improved through the OFUL algorithm in \cite{DaniHK08} and the LinUCB algorithm in \cite{Li:2010:LinUCB}. \cite{Tsitsiklis:2010:LinearBandits} extend the work of \cite{DaniHK08} by considering both optimistic and explore-then-commit strategies. 
A modified version of LinUCB, named SupLinUCB, is considered in \cite{pmlr-v15-chu11a}, which is further improved by \cite{Valko:2013:Kernel}.
This line of literature typically allows for sequential stochastic {\citep{goldenshluger2013,Bastani:2015}} or adversarial {\citep{Raghavan2018colt,KannanMRWW2018nips}} context arrivals, which is different from the fixed context setting and distributed nature of the federated linear contextual bandits considered in this work. 

\myparagraph{Batched and parallel bandits.} Batched bandits is the setting where the time axis is partitioned into batches, and the decision at each time $t$ depends only on observations from batches strictly prior to the current one~\citep{Perchet:2016}. While \cite{Perchet:2016} study the two-armed stochastic bandits, \citet{gao2019batched} and \citet{han2020sequential} extend it to the $K$-armed MAB and the linear contextual bandits, respectively. The parallel linear contextual bandits studied in \cite{chan2021parallelizing} is essentially similar to the batched bandits setting in which $P$ distinct processors perform simultaneous queries in batches.
We note that the phased decision processes in our setting is similar to batched bandits, as the decision in a phase only depends on observations collected in previous phases. However, the distributed network structure and the associated communication and privacy concerns are not considered in the batched bandits setting.

\myparagraph{Collaborative and distributed bandits.} 
The collaborative and distributed bandits with multiple agents has gained growing interest recently. One research direction is the multi-player multi-armed bandits (MP-MAB) problem \citep{liu2010distributed,anandkumar2011distributed}, where collision occurs when two players simultaneously play the same arm. Without explicit communication among players, the main focus of MP-MAP is to avoid~\citep{rosenski2016multi,besson2017multi,avner2014concurrent} or exploit~\citep{boursier2018sic,Shi2020aistats} such collisions in order to maximize the collective cumulative rewards.

When action collision is not considered, \citet{landgren2016distributed,landgren2018social} and \citet{martinez2019decentralized} study distributed bandits in which multiple agents face the same MAB instance, and the agents collaboratively share their estimates over a fixed communication graph in order to design consensus-based distributed estimation algorithms to estimate the mean of rewards at each arm. \citet{szorenyi13} considers a similar setup where in each round an agent is able to communicate with a few random peers.  \citet{pmlr-v48-korda16-distributed-cluster} considers the case where clients in different unknown clusters face independent bandit problems, and every agent can communicate with only one other agent per round. {Similar approaches have been extended to the contextual bandits and recommender systems, where user/context similarities are exploited to improve sample efficiency of online recommendation \citep{cesa-bianchi2013nips,wu2016contextual,pmlr-v32-gentile14-online-clustering,gentile2017context}.}
The communication and coordination among the clients in those works are fundamentally different from our work. 

\citet{wang2019distributed} investigates communication-efficient distributed linear bandits, where the agents can communicate with a server by sending and receiving packets. It proposes two algorithms, namely, DELB and DisLinUCB, for fixed and time-varying action sets, respectively. The fixed action set setting is quite similar to our setup, except that it assumes that all agents face the same bandits problem, which does not take data heterogeneity into consideration.

\myparagraph{Federated bandits.} A few recent works have touched upon the concept of federated bandits. With heterogeneous reward distributions at local clients, \cite{shi2021aaai} and \cite{shi2021aistats} investigate efficient client-server communication and coordination protocols for federated MAB without and with personalization, respectively. \cite{agarwal2020federated} studies regression-based contextual bandits as an example of the federated residual learning framework, where the reward of a client depends on both a global model and a local model. \citet{li2020federated2} and \cite{Zhu_2021} focus on differential privacy based local data privacy protection in federated bandits. While the linear contextual bandit model considered in \citet{dubey2020differentiallyprivate} is similar to the shared parameter case studied this work, it focuses on federated differential privacy and proposes a LinUCB-based algorithm, which incurs a much higher regret compared with our result.

\section{Preliminaries}\label{appx:prelim}
\subsection{Notations}
Throughout this paper, we use  $\|x\|_V$ to denote $\sqrt{x^\TT V x}$. The \textit{range} of a matrix $A$, denoted by $\range(A)$, is the subspace spanned by the column vectors of $A$. Occasionally we use $\Span(A)$ to denote $\range(A)$ as well. We use $\rank(\{x_i\}_i)$ to denote
the maximum number of linearly independent vectors in $\{x_i\}_i$, and $\Span(\{x_i\}_i)$ to denote the subspace spanned by them. 
For any positive semi-definite matrices $X$ and $Y$ of the same size, $X\succeq Y$ implies  that $X-Y$ is positive semi-definite. $\lv\{\cdot\}$ is the indicator function while $I_d$ is a $d\times d$ identify matrix.

\subsection{Matrix Analysis}
\begin{definition}[Pseudo-inverse of matrices]\label{def: pinv}
Given a matrix $A$, the pseudo-inverse of $A$ is a {unique} matrix, denoted by $A^\dagger$, that satisfies the following properties:
\[AA^\dagger A = A,\quad A^\dagger A A^\dagger = A^\dagger,
\quad (AA^\dagger)^\TT = AA^\dagger,\quad (A^\dagger A)^\TT = A^\dagger A.\]
\end{definition}

\begin{definition}[Pseudo-determinant of matrices]\label{defn:peudo_det}
For a square matrix $A\in\mathbb{R}^{n \times n}$, the pseudo-determinant is defined as 
\[\Det(A) = \lim_{\epsilon\rightarrow 0}\frac{\det(A + \epsilon I)}{\epsilon^{n-\rank(A)}},\]
and the generalized pseudo-determinant with respect to a degree $s$ is defined as
\[\Det_s(A) = \lim_{\epsilon\rightarrow 0} \frac{\det(A+\epsilon I)}{\epsilon^{n - s}}.\]
\end{definition}
\begin{remark}\label{remark:peudo}
Note that $\Det_s(A) = 0$ when $s>\rank(A)$, and $\Det_s(A) = \infty$ if $s<\rank(A)$. When $s=\rank(A)$, $\Det_{\rank(A)}(A)$ becomes $\Det(A)$, the standard pseudo-determinant of matrix $A$. We introduce the definition of generalized pseudo-determinant in order to handle cases when the rank of $A$ is uncertain.
\end{remark}

We first characterize some important properties of the pseudo-inverse of a matrix.

\begin{Proposition}\label{prop: limit_relations_pseudo}
Suppose $A\in\Rb^{n\times n}$ is a symmetric matrix, and $v\in\range(A)$. Then, the following limit relations hold:
\begin{itemize}[leftmargin=18pt,topsep=0pt, itemsep=0pt,parsep=0pt] 
    \item [1)] $A^\dagger = \lim_{\epsilon\rightarrow0}(A+\epsilon I)^{-1}A(A+\epsilon I)^{-1}$.
    \item [2)] $A = \lim_{\epsilon\rightarrow0}A(A+\epsilon I)^{-1} A$.
    \item [3)] $A^\dagger v = \lim_{\epsilon\rightarrow0}(A+\epsilon I)^{-1} v$.
\end{itemize}
\end{Proposition}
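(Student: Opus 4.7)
The plan is to prove all three identities by reducing to the spectral decomposition of $A$. Since $A\in\Rb^{n\times n}$ is symmetric, write $A=U\Lambda U^\TT$ with $U$ orthogonal and $\Lambda=\diag(\lambda_1,\ldots,\lambda_n)$. By the definition of the pseudo-inverse (Definition~\ref{def: pinv}), one can verify directly that $A^\dagger = U\Lambda^\dagger U^\TT$, where $\Lambda^\dagger$ is the diagonal matrix whose $i$-th entry is $1/\lambda_i$ if $\lambda_i\neq 0$ and $0$ otherwise. Orthogonality of $U$ also gives $(A+\epsilon I)^{-1}=U(\Lambda+\epsilon I)^{-1}U^\TT$ for every $\epsilon>0$, so each of the three expressions in the proposition can be conjugated by $U$ and reduced to a purely diagonal entry-wise limit computation.

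For (1), multiplying out gives $(A+\epsilon I)^{-1}A(A+\epsilon I)^{-1} = U\,\diag\!\bigl(\lambda_i/(\lambda_i+\epsilon)^2\bigr)\,U^\TT$. Taking $\epsilon\to 0$ entrywise yields $1/\lambda_i$ when $\lambda_i\neq 0$ and $0$ when $\lambda_i=0$, which is exactly $\Lambda^\dagger$; conjugating back recovers $A^\dagger$. For (2) the same approach gives $A(A+\epsilon I)^{-1}A=U\,\diag\!\bigl(\lambda_i^2/(\lambda_i+\epsilon)\bigr)\,U^\TT$, and the entrywise limit is $\lambda_i$ in both cases $\lambda_i\neq 0$ and $\lambda_i=0$, which reassembles to $A$. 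Both (1) and (2) are therefore essentially book-keeping once the eigendecomposition is in hand.

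The only step that requires the hypothesis $v\in\range(A)$ is (3). Let $w=U^\TT v$. Because $\range(A)$ is spanned by those eigenvectors $u_i$ with $\lambda_i\neq 0$, the assumption $v\in\range(A)$ forces $w_i=0$ whenever $\lambda_i=0$. Then
\begin{equation*}
(A+\epsilon I)^{-1}v = U\,\diag\!\bigl(1/(\lambda_i+\epsilon)\bigr)\,w,
\end{equation*}
so the $i$-th coordinate of $U^\TT(A+\epsilon I)^{-1}v$ equals $w_i/\lambda_i$ for $\lambda_i\neq 0$ and $0/\epsilon=0$ for $\lambda_i=0$. The limit is $\Lambda^\dagger w = \Lambda^\dagger U^\TT v$, and multiplying by $U$ on the left yields $A^\dagger v$.

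The main (though minor) obstacle is handling the degenerate $\lambda_i=0$ directions cleanly: in (1) and (2) the numerator vanishes fast enough that no hypothesis on $v$ is needed, but in (3) one must explicitly use $v\in\range(A)$ to kill the coordinates of $w$ where $\lambda_i=0$, since otherwise those coordinates would blow up like $1/\epsilon$. The rest is purely algebraic and does not require further machinery beyond the symmetric spectral theorem and the characterization of $A^\dagger$ from Definition~\ref{def: pinv}.
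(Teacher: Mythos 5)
Your proof is correct and follows essentially the same route as the paper: both diagonalize the symmetric matrix $A$, reduce each limit to an entrywise computation on the eigenvalues, and use $v\in\range(A)$ only in part (3) to kill the coordinates along the null eigendirections. You simply spell out the diagonal entries $\lambda_i/(\lambda_i+\epsilon)^2$ and $\lambda_i^2/(\lambda_i+\epsilon)$ that the paper leaves implicit.
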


\begin{proof}
Since $A$ is a symmetric matrix, we can decompose it as $A = U\Lambda U^\TT$, where $UU^\TT = I$ and $\Lambda = \diag(\lambda_1,\ldots,\lambda_n)$. We assume $|\lambda_1|\geq|\lambda_2|\geq\ldots\geq |\lambda_n|$ are the eigenvalues of $A$. If $\rank(A)=d\leq n$, then $|\lambda_d|>0 = |\lambda_{d+1}|$. There must exist $d$ constants $c_1,\ldots,c_d$ such that $v = \sum_{s=1}^d c_su_s$, where $u_s$ is the $s$th column vector of matrix $U$.

Note that when $A$ is symmetric, 
\[A^\dagger = U\diag\left(\frac{1}{\lambda_1},\ldots,\frac{1}{\lambda_d}, 0 ,\ldots, 0\right)U^\TT.\]

For the first two relations, note that $A+\epsilon I = U\Lambda U^\TT + \epsilon UU^\TT = U\diag(\lambda_1+\epsilon,\ldots,\lambda_n+\epsilon)U^\TT$. Thus, for sufficiently small $\epsilon$, we have
\begin{align}\label{eqn:inverse}
(A+\epsilon I)^{-1} = U\diag\left(\frac{1}{\lambda_1 + \epsilon}, \ldots, \frac{1}{\lambda_n + \epsilon}\right) U^\TT.
\end{align}

Substituting (\ref{eqn:inverse}) into 1) and 2), the first two relations can be readily obtained.

For 3), we express $v = Uc$, where $c = (c_1,\ldots,c_d,0,\ldots,0)^\TT$. Then we have
\begin{align}
\lim_{\epsilon\rightarrow0}(A+\epsilon I)^{-1} v& = \lim_{\epsilon\rightarrow 0} U\diag\left(\frac{1}{\lambda_1+\epsilon},\ldots,\frac{1}{\lambda_n+\epsilon}\right) U^\TT Uc\\
& = \lim_{\epsilon\rightarrow 0 } U\left(\frac{c_1}{\lambda_1+\epsilon},\ldots,\frac{c_d}{\lambda_d + \epsilon},0,\ldots,0\right)^\TT\\
& = U\diag\left(\frac{1}{\lambda_1},\ldots,\frac{1}{\lambda_d},0,\ldots,0\right)U^\TT Uc\\
& = A^\dagger v.
\end{align}
\end{proof}

Next, we present several useful lemmas that are needed in the analysis of the generalized G-optimal design discussed in Appendix \ref{appx:G}.

\begin{lemma}[Jacobi's formula \citep{Jacobi1999Formula}]\label{lemma: Jacobi}
If $A:\mathbb{R}\rightarrow\mathbb{R}^{n\times n}$ is differentiable in its domain, and $A(t)$ is invertible, then \[\frac{d\det(A(t))}{dt} = \tr\left(\adj(A(t))\cdot\frac{dA(t)}{dt}\right),\]
where $\adj(A(t))$ is the adjugate of $A(t)$, i.e. $\adj(A(t))A(t) = \det(A(t))\cdot I_n$. 
\end{lemma}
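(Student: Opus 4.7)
The plan is to prove Jacobi's formula by reducing it to the first-order expansion of $\det$ near the identity and then using the invertibility assumption to convert between $A^{-1}$ and $\adj(A)$. The argument is local in $t$, so I would work from the definition of the derivative.

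First I would use the definition of derivative and write $A(t+h) = A(t) + h A'(t) + o(h)$. Since $A(t)$ is assumed invertible, I can factor to obtain
\[
\det(A(t+h)) = \det\bigl(A(t)\bigr)\,\det\bigl(I_n + h A(t)^{-1}A'(t) + o(h)\bigr).
\]
The task then reduces to a first-order expansion of $\det(I_n + \epsilon M)$ around $\epsilon = 0$ for any fixed matrix $M$.

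Next I would establish that $\det(I_n + \epsilon M) = 1 + \epsilon \tr(M) + O(\epsilon^2)$. The cleanest way is to regard $\det(I_n + \epsilon M)$ as the characteristic polynomial of $-M$ evaluated at $-1/\epsilon$, scaled appropriately; equivalently, one expands $\prod_{i=1}^n(1 + \epsilon \lambda_i)$ where $\{\lambda_i\}$ are the eigenvalues (over $\mathbb{C}$), and collects the linear term, which is $\epsilon \sum_i \lambda_i = \epsilon \tr(M)$. A more elementary alternative is to use multilinearity of $\det$ in the columns of $I_n + \epsilon M$ and observe that only the $n$ terms in which exactly one column of $\epsilon M$ is chosen contribute at order $\epsilon$, summing to $\epsilon \tr(M)$.

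Combining these two steps gives
\[
\det(A(t+h)) - \det(A(t)) = h\,\det\bigl(A(t)\bigr)\,\tr\bigl(A(t)^{-1}A'(t)\bigr) + o(h),
\]
so dividing by $h$ and passing to the limit yields $\frac{d}{dt}\det(A(t)) = \det(A(t))\,\tr(A(t)^{-1}A'(t))$. Finally I would invoke the standard identity $\adj(A)\,A = \det(A)\,I_n$, which for invertible $A$ gives $\adj(A) = \det(A)\,A^{-1}$; pulling the scalar $\det(A(t))$ inside the trace then produces $\tr(\adj(A(t))\,A'(t))$, which is the desired formula.

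The only genuinely non-routine step is the expansion $\det(I_n + \epsilon M) = 1 + \epsilon\,\tr(M) + O(\epsilon^2)$, and even this is standard; everything else is algebraic manipulation and the definition of the derivative. I do not foresee any real obstacle, so this proof is essentially a clean three-line computation once the expansion is in hand.
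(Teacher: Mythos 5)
Your proof is correct. Note that the paper states this lemma as a cited classical result (\citealp{Jacobi1999Formula}, per its bibliography style) and supplies no proof of its own, so there is no in-paper argument to compare against; your derivation --- factor out the invertible $A(t)$, use the first-order expansion $\det(I_n+\epsilon M)=1+\epsilon\,\tr(M)+O(\epsilon^2)$ (the $o(h)$ perturbation is harmless since $\det$ is a polynomial in the entries), and then rewrite $\det(A)\,A^{-1}$ as $\adj(A)$ --- is the standard proof for exactly the invertible case the lemma hypothesizes, and it is sound.
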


Lemma~\ref{lemma: logdet} below explicitly identifies the derivative of $\log\Det(\cdot)$. We note that the general derivative of pseudo-determinant is studied in \cite{differentiatingPseudo2018holbrook}. 

\begin{lemma}\label{lemma: logdet}
For a given index set $\mathcal{I}$ and a set of vectors $v_{s}\in\mathbb{R}^{n}$ where $s\in\mathcal{I}$, 
consider the following function in terms of $\pi:=\{\pi_{s}\}_{s\in\mathcal{I}}$:
\[F(\pi) = \log\Det\left(\sum_{s\in\mathcal{I}}\pi_{s}v_{s}v_{s}^\TT\right)\]
defined over $\{\pi|\pi_s\geq 0, \forall s\in \mathcal{I}, \rank\left(\sum_{s\in\mathcal{I}}\pi_{s}v_{s}v_{s}^\TT\right)=\rank(\{v_{s}\}_{s\in\mathcal{I}}):= D\}$.

Then, the $\iota$-th coordinate of the gradient of $F(\pi)$ satisfies
\begin{equation}\label{eqn: gradient}
    (\nabla F)_{\iota} = v_{\iota}^\TT\left(\sum_{s\in\mathcal{I}} \pi_{s}v_{s}v_{s}^\TT\right)^{\dagger}v_{\iota}.
\end{equation}
\end{lemma}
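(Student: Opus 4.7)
The key observation is that under the stated rank-preserving constraint, the rank of $M(\pi):=\sum_{s\in\mathcal{I}}\pi_{s}v_{s}v_{s}^\TT$ is constantly equal to $D$ throughout the feasible domain, and in particular $\range(M(\pi))=\Span(\{v_{s}\}_{s\in\mathcal{I}})$, so every $v_\iota$ lies in $\range(M(\pi))$. My plan is to exploit this by smoothing $M$ with $\epsilon I$ to bring back the ordinary determinant, applying Jacobi's formula (Lemma~\ref{lemma: Jacobi}), and then sending $\epsilon\to 0$ via Proposition~\ref{prop: limit_relations_pseudo}.

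Concretely, set $F_\epsilon(\pi):=\log\det(M(\pi)+\epsilon I)$. Since $M+\epsilon I$ is positive definite with coordinate derivative $v_\iota v_\iota^\TT$, Lemma~\ref{lemma: Jacobi} together with $\adj(M+\epsilon I)=\det(M+\epsilon I)(M+\epsilon I)^{-1}$ gives
\[\frac{\partial F_\epsilon}{\partial \pi_\iota}=\tr\!\left((M+\epsilon I)^{-1}v_\iota v_\iota^\TT\right)=v_\iota^\TT(M+\epsilon I)^{-1}v_\iota.\]
Definition~\ref{defn:peudo_det} combined with $\rank(M(\pi))\equiv D$ yields $F(\pi)=\log\det(M(\pi)+\epsilon I)-(n-D)\log\epsilon+o(1)$ as $\epsilon\to 0$, and the offset $(n-D)\log\epsilon$ is independent of $\pi$, so once the interchange of $\lim_{\epsilon\to 0}$ and $\partial/\partial\pi_\iota$ is validated, one has $(\nabla F)_\iota=\lim_{\epsilon\to 0}\partial F_\epsilon/\partial\pi_\iota$. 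Because $v_\iota\in\range(M(\pi))$, Proposition~\ref{prop: limit_relations_pseudo}(3) gives $(M+\epsilon I)^{-1}v_\iota\to M^\dagger v_\iota$, and the claim $(\nabla F)_\iota=v_\iota^\TT M^\dagger v_\iota$ follows.

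\textbf{Main obstacle.} The one technical subtlety is the interchange of the limit and the partial derivative. I would handle this by restricting to a compact neighborhood $\mathcal{N}$ of $\pi$ on which the rank-preserving condition still holds; continuity of the eigenvalues of $M(\pi')$ then gives a uniform positive lower bound on the smallest nonzero eigenvalue of $M(\pi')$ over $\mathcal{N}$, from which uniform convergence of $(M(\pi')+\epsilon I)^{-1}v_\iota\to M(\pi')^\dagger v_\iota$ and hence of $\partial F_\epsilon/\partial\pi_\iota$ on $\mathcal{N}$ follows, justifying the exchange.

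A cleaner alternative that bypasses any regularization argument is to choose an orthonormal basis $P\in\mathbb{R}^{n\times D}$ of $\Span(\{v_{s}\}_s)$, write $v_s=P\tilde v_s$, and observe that $M(\pi)=P\tilde M(\pi)P^\TT$ with $\tilde M(\pi):=\sum_s\pi_s\tilde v_s\tilde v_s^\TT$ invertible on the domain. Then $\Det(M)=\det(\tilde M)$ and $M^\dagger=P\tilde M^{-1}P^\TT$, so applying Lemma~\ref{lemma: Jacobi} directly to the ordinary $\log\det(\tilde M)$ immediately yields $\tilde v_\iota^\TT\tilde M^{-1}\tilde v_\iota=v_\iota^\TT M^\dagger v_\iota$ in a single line.
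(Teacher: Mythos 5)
Your main argument is essentially the paper's own proof: regularize $M(\pi)$ by $\epsilon I$, apply Jacobi's formula together with $\adj(M+\epsilon I)=\det(M+\epsilon I)(M+\epsilon I)^{-1}$ to get $v_\iota^\TT (M+\epsilon I)^{-1}v_\iota$, and pass to the limit via Proposition~\ref{prop: limit_relations_pseudo}(3) using $v_\iota\in\range(M(\pi))$. The one place you go beyond the paper is the interchange of $\lim_{\epsilon\to 0}$ with $\partial/\partial\pi_\iota$: the paper simply differentiates inside the limit defining $\Det$ in Eqn.~(\ref{eqn:F_derivative3}) without comment, whereas you justify it by uniform convergence on a compact neighborhood where the smallest nonzero eigenvalue is bounded away from zero (this works because the rank-preserving constraint forces $\range(M(\pi'))=\Span(\{v_s\}_s)$ throughout the feasible set, so the troublesome $\frac{1}{\epsilon}(I-PP^\TT)$ part of $(M+\epsilon I)^{-1}$ annihilates $v_\iota$ uniformly). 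Your second, projection-based route is a genuinely different and cleaner argument: writing $M=P\tilde M P^\TT$ with $\tilde M$ invertible reduces everything to the ordinary $\log\det$, for which Jacobi's formula applies directly, and the identities $\Det(M)=\det(\tilde M)$ and $M^\dagger=P\tilde M^{-1}P^\TT$ (both easily checked) give the result with no limiting argument at all; it trades the paper's self-contained pseudo-determinant calculus for a one-time change of basis. Both routes are correct; the only shared caveat (present in the paper as well) is that at boundary points of the feasible set with some $\pi_s=0$ the "gradient" must be read as a directional derivative along feasible directions.
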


\begin{proof}
Denote $A:=\sum_{s\in\mathcal{I}} \pi_{s}v_{s}v_{s}^\TT$. 
Then, according to the definition of pseudo-determinant in Definition~\ref{defn:peudo_det}, we have
\begin{align}
     \frac{\partial F(\pi)}{\partial \pi_{\iota}}&
     =\frac{1}{\Det(A)}\frac{\partial\Det(A)}{\partial\pi_{\iota}}\\
     &=\lim_{\epsilon\rightarrow0}\frac{1}{\Det(A)}\frac{\partial\det(A+\epsilon I)}{\epsilon^{n-D}\partial\pi_{\iota}}\label{eqn:F_derivative3}\\
     & = \lim_{\epsilon\rightarrow 0}\frac{1}{\Det(A)\epsilon^{n-D}}\trace(\adj(A+\epsilon I) v_{\iota}v_{\iota}^\TT)\label{eqn:F_derivative4}\\
     & = \lim_{\epsilon\rightarrow 0}\frac{1}{\Det(A)\epsilon^{n-D}}\trace(\det(A+\epsilon I) (A+\epsilon I)^{-1} v_{\iota}v_{\iota}^\TT)\label{eqn:F_derivative5}\\
     & = \lim_{\epsilon\rightarrow0} v_{\iota}^\TT(A+\epsilon I)^{-1}v_{\iota}\label{eqn:F_derivative6}\\
     & = v_{\iota}^\TT\left(\sum_{s\in\mathcal{I}} \pi_{s}v_{s}v_{s}^\TT\right)^{\dagger}v_{\iota},\label{eqn:F_derivative7}
\end{align}
where Eqn. (\ref{eqn:F_derivative4}) follows from Lemma~\ref{lemma: Jacobi}, (\ref{eqn:F_derivative5}) follows from the definition of adjugate, and (\ref{eqn:F_derivative6}) follows from the definition of $\Det(A)$ in Definition~\ref{defn:peudo_det} and the fact that $\trace(AB)=\trace(BA)$ when their dimensions match. Eqn. (\ref{eqn:F_derivative7}) is due to Proposition \ref{prop: limit_relations_pseudo}-3) since $v_{\iota}\in \range(A)$.
\end{proof}

The following lemma establishes the concavity of $\log\Det(\cdot)$, which generalizes the result for $\log\det(\cdot)$ \citep{boyd_vandenberghe_2004} to 
the pseudo-determinant case. 

\begin{lemma}\label{lemma: concave}
Let $S$ be a subspace of $\mathbb{R}^n$. Then, $\log\Det(A)$ is a concave function in any convex subset of $\Mc_S=\{A| A= A^\TT,  \range(A) = S, A\succeq 0\}$, i.e., the collection of symmetric positive semi-definite matrices with range $S$. 
\end{lemma}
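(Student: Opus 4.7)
The plan is to reduce the pseudo-determinant statement to the classical concavity of $\log\det$ on the cone of positive definite matrices by restricting attention to the common range $S$.

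First, I would fix an orthonormal basis of $S$: let $k=\dim(S)$ and let $U\in\Rb^{n\times k}$ have orthonormal columns spanning $S$, complemented by $V\in\Rb^{n\times(n-k)}$ whose columns form an orthonormal basis of $S^\perp$, so that $[U,V]$ is orthogonal. For any $A\in\Mc_S$, since $A\succeq 0$ and $\range(A)=S$, we have $AV=0$, so
\begin{equation}
A=U\tilde A U^\TT,\qquad \tilde A:=U^\TT AU\in\Rb^{k\times k},
\end{equation}
and $\tilde A$ is symmetric positive definite (any null direction of $\tilde A$ would correspond to a null direction of $A$ inside $S$, contradicting $\range(A)=S$).

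Next, I would relate $\Det(A)$ to $\det(\tilde A)$. Using the block diagonalization
\begin{equation}
A+\epsilon I_n=[U,V]\begin{pmatrix}\tilde A+\epsilon I_k & 0\\ 0 & \epsilon I_{n-k}\end{pmatrix}[U,V]^\TT,
\end{equation}
we get $\det(A+\epsilon I_n)=\det(\tilde A+\epsilon I_k)\cdot\epsilon^{n-k}$. Since $\rank(A)=k$, the definition in Definition~\ref{defn:peudo_det} yields
\begin{equation}
\Det(A)=\lim_{\epsilon\to 0}\frac{\det(A+\epsilon I_n)}{\epsilon^{n-k}}=\lim_{\epsilon\to 0}\det(\tilde A+\epsilon I_k)=\det(\tilde A).
\end{equation}
Hence $\log\Det(A)=\log\det(U^\TT AU)$ for every $A\in\Mc_S$.

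Finally, the map $\Phi:A\mapsto U^\TT AU$ is linear (in particular affine), and it sends $\Mc_S$ into the open cone of $k\times k$ positive definite matrices, on which $\log\det$ is well known to be concave (see e.g. \cite{boyd_vandenberghe_2004}). Given any convex subset $\Cc\subset\Mc_S$ and $A_1,A_2\in\Cc$, $\lambda\in[0,1]$, note that $\lambda A_1+(1-\lambda)A_2\in\Mc_S$ because a convex combination of PSD matrices with identical range $S$ still has range $S$ (PSD matrices give $\range(A_1+A_2)=\range(A_1)+\range(A_2)$). Therefore
\begin{equation}
\log\Det\bigl(\lambda A_1+(1-\lambda)A_2\bigr)=\log\det\bigl(\lambda\Phi(A_1)+(1-\lambda)\Phi(A_2)\bigr)\ge \lambda\log\det\Phi(A_1)+(1-\lambda)\log\det\Phi(A_2),
\end{equation}
which is exactly $\lambda\log\Det(A_1)+(1-\lambda)\log\Det(A_2)$, proving concavity.

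The only nontrivial step is the identity $\Det(A)=\det(U^\TT AU)$; once that is secured, the rest is simply composition with an affine map, so there is no real analytic obstacle. The argument also transparently explains why the common-range hypothesis is essential: if $A_1$ and $A_2$ had different ranges, the $\epsilon^{n-\rank}$ normalization would differ between the convex combination and its endpoints and concavity could fail.
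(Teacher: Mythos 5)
Your proof is correct, but it takes a genuinely different route from the paper's. The paper argues along an arbitrary line $X=Y+tZ$ inside the convex set: it expands $\det(Y+\epsilon I+tZ)$ as $\det(Y+\epsilon I)\det\bigl(I+t(Y+\epsilon I)^{-1/2}Z(Y+\epsilon I)^{-1/2}\bigr)$, passes to the $\epsilon\to 0$ limit to obtain $g(t)=\log\Det(Y)+\sum_{i=1}^k\log(1+t\lambda_i)$ with $\lambda_i$ the eigenvalues of the limiting compressed matrix, and then checks $g''(t)\le 0$ directly — in effect re-deriving the one-dimensional concavity computation from scratch in the pseudo-determinant setting. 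You instead isolate the structural identity $\Det(A)=\det(U^\TT AU)$ for a fixed orthonormal basis $U$ of $S$ (justified cleanly via the block diagonalization $\det(A+\epsilon I_n)=\det(\tilde A+\epsilon I_k)\,\epsilon^{n-k}$ and the fact that $\range(A)=S$ forces $\tilde A\succ 0$), and then obtain concavity for free as the composition of the classical concave function $\log\det$ on the positive definite cone with the linear map $A\mapsto U^\TT AU$. Your reduction is arguably cleaner and more modular: the single identity $\Det(A)=\det(U^\TT AU)$ carries all the content, the common-range hypothesis is transparently the reason a \emph{fixed} $U$ works for every matrix in the set, and you avoid the paper's limit-of-eigenvalues bookkeeping (and its slightly overstated strict inequality $g''(t)<0$, which should be $\le 0$ when $Z$ vanishes on $S$). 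The paper's line-restriction argument, on the other hand, is self-contained and does not presuppose the classical $\log\det$ concavity result, though it cites it anyway. Both are valid; no gap in yours.
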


\begin{proof} We show the concavity by
considering an arbitrary line, defined by $X=Y+tZ$, 
where $X,Y,Z$ are matrices lying in  
a convex subset of $\Mc_S$. Assume $\rank(X) = k$.

Let $g(t) = \log\Det(Y+tZ)$. Then,
\begin{align*}
    g(t)& = \log\lim_{\epsilon\rightarrow0}\frac{\det(Y+tZ+\epsilon I)}{\epsilon^{n-k}}\\
    &=\lim_{\epsilon\rightarrow0}\log\det(Y+\epsilon I+tZ) - (n-k)\log\epsilon\\
    &=\lim_{\epsilon\rightarrow0}\log \det \left((Y+\epsilon I)^{\frac{1}{2}}\left(I+t(Y+\epsilon I)^{-\frac{1}{2}}Z(Y+\epsilon I)^{-\frac{1}{2}}\right)(Y+\epsilon I)^{\frac{1}{2}}\right) - (n-k)\log\epsilon\\
    &= \lim_{\epsilon\rightarrow0} \log \det (Y+\epsilon I) + \log\det\left(I+t(Y+\epsilon I)^{-\frac{1}{2}}Z(Y+\epsilon I)^{-\frac{1}{2}}\right) - (n-k)\log\epsilon\\
    &= \lim_{\epsilon\rightarrow0} \log \frac{\det (Y+\epsilon I)}{\epsilon^{n-k}} + \log\det\left(I+t(Y+\epsilon I)^{-\frac{1}{2}}Z(Y+\epsilon I)^{-\frac{1}{2}}\right)\\
  &  =\log\Det Y + \lim_{\epsilon\rightarrow0} \log\det\left(I+t(Y+\epsilon I)^{-\frac{1}{2}}Z(Y+\epsilon I)^{-\frac{1}{2}}\right)\\
    &=\log\Det Y + \sum_{i=1}^k\log\det(1 + t\lambda_i),
    \end{align*}
where $\lambda_1,\lambda_2,\ldots,\lambda_k$ are eigenvalues of $\lim_{\epsilon\rightarrow 0}(Y+\epsilon I)^{-\frac{1}{2}}Z(Y+\epsilon I)^{-\frac{1}{2}}$. 

Taking the second derivative of $g(t)$, we have 
\begin{align}
\frac{d^2g(t)}{dt^2} = -\sum_{i=1}^k\frac{\lambda_i^2}{(1+t\lambda_i)^2}<0.
\end{align}
Therefore, $g(t)$ is concave, and the proof is complete.
\end{proof}

\subsection{Important Inequalities}
In this subsection, we present two important inequalities that will be utilized in Appendix~\ref{sec: ProofAlgorithm1} and Appendix~\ref{appx:enhanced}, respectively.

\begin{lemma}\label{lemma: subG-L2}
Let $\zeta\in\mathbb{R}^{n}$ be a 1-sub-Gaussian random vector conditioned on $\mathcal{F}_p$ and $A\in\mathbb{R}^{n\times n}$ be a $\mathcal{F}_p$-measurable matrix. Let $\lambda>0$ and $ \det(I_n-2\lambda AA^\TT)>0$. Then, we have
\begin{equation}
    \mathbb{E}\left[e^{\lambda\|A\zeta\|^2}\bigg|\mathcal{F}_p\right]
    \leq \sqrt{\frac{1}{\det(I_n-2\lambda AA^\TT)}}.
\end{equation}
\end{lemma}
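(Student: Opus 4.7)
The plan is to prove this bound via the standard Gaussian decoupling (or \emph{Gaussian smoothing}) trick, which converts the quadratic form $\lambda\|A\zeta\|^2$ into a linear form in $\zeta$ at the price of an auxiliary Gaussian integral. Let $W\sim N(0,I_n)$ be independent of everything (in particular of $\mathcal{F}_p$ and $\zeta$). The key identity is the scalar Gaussian MGF applied coordinate-wise: for any deterministic vector $v\in\mathbb{R}^n$,
\[
e^{\|v\|^2/2}=\mathbb{E}_W\!\left[e^{v^\TT W}\right].
\]
Writing $\lambda\|A\zeta\|^2 = \tfrac{1}{2}\|\sqrt{2\lambda}\,A\zeta\|^2$ and applying the identity conditionally on $(\zeta,A)$ gives
\[
e^{\lambda\|A\zeta\|^2}=\mathbb{E}_W\!\left[\exp\!\big(\sqrt{2\lambda}\,\zeta^\TT A^\TT W\big)\,\big|\,\zeta,\mathcal{F}_p\right].
\]

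Next I would take $\mathbb{E}[\cdot\,|\,\mathcal{F}_p]$ of both sides and interchange the order of integration (Fubini is justified because the integrand is nonnegative). This yields
\[
\mathbb{E}\!\left[e^{\lambda\|A\zeta\|^2}\,\big|\,\mathcal{F}_p\right]
=\mathbb{E}_W\!\left[\mathbb{E}\!\left[\exp\!\big(\sqrt{2\lambda}\,(A^\TT W)^\TT\zeta\big)\,\big|\,\mathcal{F}_p,W\right]\,\Big|\,\mathcal{F}_p\right].
\]
Because $W$ is independent of $\zeta$ and $A$ is $\mathcal{F}_p$-measurable, the vector $u:=\sqrt{2\lambda}\,A^\TT W$ is a fixed vector once we condition on $(\mathcal{F}_p,W)$. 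Invoking the $1$-sub-Gaussian property of $\zeta$ in the vector form $\mathbb{E}[e^{u^\TT\zeta}\,|\,\mathcal{F}_p]\le e^{\|u\|^2/2}$ then gives
\[
\mathbb{E}\!\left[e^{\lambda\|A\zeta\|^2}\,\big|\,\mathcal{F}_p\right]
\le \mathbb{E}_W\!\left[\exp\!\Big(\tfrac{1}{2}\big\|\sqrt{2\lambda}\,A^\TT W\big\|^2\Big)\,\Big|\,\mathcal{F}_p\right]
=\mathbb{E}_W\!\left[\exp\!\Big(\tfrac{1}{2}W^\TT(2\lambda AA^\TT)W\Big)\,\Big|\,\mathcal{F}_p\right].
\]
Finally I would evaluate the remaining Gaussian quadratic-form integral in closed form. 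Completing the square against the standard normal density $(2\pi)^{-n/2}e^{-w^\TT w/2}$, the hypothesis $\det(I_n-2\lambda AA^\TT)>0$ (equivalently, $I_n-2\lambda AA^\TT\succ 0$, since $AA^\TT\succeq 0$ makes $I_n-2\lambda AA^\TT$ symmetric with all eigenvalues less than one) makes the integrand a (rescaled) Gaussian, so
\[
\mathbb{E}_W\!\left[\exp\!\Big(\tfrac{1}{2}W^\TT(2\lambda AA^\TT)W\Big)\right]
=\frac{1}{\sqrt{\det(I_n-2\lambda AA^\TT)}},
\]
which yields the claimed bound.

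The only subtle point is the use of the sub-Gaussian hypothesis in vector form, $\mathbb{E}[e^{u^\TT\zeta}\,|\,\mathcal{F}_p]\le e^{\|u\|^2/2}$ for all $u\in\mathbb{R}^n$; this is the convention implied by the statement ``$1$-sub-Gaussian random vector'' and is consistent with Assumption~\ref{assump:bounded}~2), since the noise coordinates are sampled independently and each is scalar $1$-sub-Gaussian, so the vector MGF factorizes and gives exactly this bound. Everything else is either the Gaussian MGF identity, Fubini, or the standard Gaussian quadratic-form integral; no step is genuinely hard, and the whole argument is a clean three-line computation once the auxiliary Gaussian is introduced.
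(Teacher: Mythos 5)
Your proof is correct and is essentially the paper's own argument: the paper introduces an auxiliary Gaussian $x\sim\Nc(0,2\lambda I_n)$ (your $\sqrt{2\lambda}\,W$), evaluates $\Eb[e^{x^\TT A\zeta}\,|\,\Fc_p]$ once by conditioning on $\zeta$ (Gaussian MGF, recovering $e^{\lambda\|A\zeta\|^2}$) and once by conditioning on $x$ (vector sub-Gaussian bound followed by the Gaussian quadratic-form integral), exactly as you do. The only difference is cosmetic parameterization of the decoupling Gaussian, so there is nothing to add.
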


\begin{proof}
Assume $x\thicksim \Nc(0,2\lambda I_n)$ is an independent Gaussian random vector.
Then, we have
\begin{align}
\mathbb{E}_{x,\zeta}\left[e^{x^\TT A\zeta}\bigg|\Fc_p\right]& = \mathbb{E}_{\zeta}\left[\mathbb{E}_x\left[e^{x^\TT A\zeta}|A, \zeta\right]\bigg|\Fc_p\right] = \mathbb{E}\left[e^{\lambda\|A\zeta\|^2}\bigg|\Fc_p\right].\label{eqn: E(zeta^2)}
\end{align}
On the other hand,
\begin{align}
\mathbb{E}_{x,\zeta}\left[e^{x^\TT A\zeta}\bigg|\Fc_p\right]& = \mathbb{E}_{x}\left[\mathbb{E}_{\zeta}\left[e^{x^\TT A\zeta}|x\right]\bigg|\Fc_p\right]\\
& \leq \mathbb{E}\left[e^{\|A^\TT x\|^2/2}\bigg|\Fc_p\right]\label{eqn:subG}\\
& = \int \sqrt{\frac{1}{(4\pi\lambda)^n}} e^{\frac{1}{2}x^\TT AA^\TT x - \frac{1}{4\lambda}x^\TT x} dx\\
&=\sqrt{\frac{1 }{(2\lambda)^n\det(\frac{1}{2\lambda}I_n - AA^\TT)}}\\
& = \sqrt{\frac{1}{\det(I_n-2\lambda AA^\TT)}},\label{eqn: E(x^2)}
\end{align}
where (\ref{eqn:subG}) is due to the property of sub-Gaussian random vectors. 
The result then follows by combining Eqn.~(\ref{eqn: E(zeta^2)}) and Eqn.~(\ref{eqn: E(x^2)}).
\end{proof}

We note that similar but slightly more complicated versions of the inequalities for standard Gaussian vectors and sub-Gaussian vectors have been shown in \cite{Laurent-Massart} and \cite{hsu2011tail}, respectively. 

The following result is a simplified version of the classical self-normalized bound (Theorem 1 in \cite{Abbasi:2011:IAL}) with one-dimensional random variables. We recover the proof for completeness.
\begin{lemma}\label{lemma: Laplace}
Suppose $\sigma_q$ is $\Fc_q$-measurable, and $\{X_{q}\}_{q\geq 1}$ is an $\Fc$-adapted $\sigma_q$-sub-Gaussian random variable, i.e., $\Eb[\exp(\lambda X_q)|\Fc_q]\leq \exp(\lambda^2\sigma^2_q/2)$. Then, for any $\sigma,\delta>0$:
$$\Pb\left[\exists p \text{ s.t. } \left|\sum_{q=1}^p X_q\right|\geq \sqrt{\left(\sigma^2+\sum_{q=1}^p\sigma_q^2\right)\log \frac{\sigma^2+\sum_{q=1}^p\sigma_q^2}{\sigma^2\delta^2}}\right] \leq \delta.$$
\end{lemma}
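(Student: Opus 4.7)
The plan is to prove this self-normalized bound by the classical method of mixtures followed by Ville's maximal inequality. I would introduce $S_p := \sum_{q=1}^p X_q$ and $V_p := \sum_{q=1}^p \sigma_q^2$, with $S_0 = V_0 = 0$. For any fixed deterministic $\lambda \in \Rb$, the $\sigma_q$-sub-Gaussian hypothesis combined with the $\Fc_q$-measurability of $\sigma_q$ gives $\Eb[\exp(\lambda X_q - \tfrac{\lambda^2}{2}\sigma_q^2)\,|\,\Fc_q]\leq 1$. Using the tower property across $q$, one verifies that
$$M_p^\lambda \;:=\; \exp\!\left(\lambda S_p - \tfrac{\lambda^2}{2}V_p\right)$$
is a nonnegative supermartingale with respect to the filtration, with $\Eb[M_0^\lambda]=1$.

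The key step is to average out $\lambda$ against the mean-zero Gaussian density $f(\lambda) = \sqrt{\sigma^2/(2\pi)}\,\exp(-\lambda^2\sigma^2/2)$, setting $\bar{M}_p := \int_{\Rb} M_p^\lambda\,f(\lambda)\,d\lambda$. Because the integrand is nonnegative, Fubini--Tonelli ensures that $\bar{M}_p$ remains a nonnegative supermartingale with $\Eb[\bar{M}_0]=1$. Completing the square inside the integral in $\lambda$ yields the closed form
$$\bar{M}_p \;=\; \sqrt{\frac{\sigma^2}{V_p + \sigma^2}}\,\exp\!\left(\frac{S_p^2}{2(V_p + \sigma^2)}\right).$$

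Finally, Ville's inequality gives $\Pb[\exists p\geq 1:\bar{M}_p \geq 1/\delta] \leq \delta$. Taking logarithms shows that the event $\bar{M}_p \geq 1/\delta$ is equivalent to
$$S_p^2 \;\geq\; (V_p + \sigma^2)\log\frac{V_p + \sigma^2}{\sigma^2\delta^2},$$
which is precisely the exceedance event appearing in the lemma. The argument is essentially bookkeeping; the only delicate point is the Fubini exchange used to transfer the supermartingale property from $M_p^\lambda$ to $\bar{M}_p$, and this holds directly because the integrand is nonnegative and the Gaussian tails are integrable for every sample path. I therefore do not anticipate any substantial obstacle beyond the Gaussian integral computation and the final algebraic rearrangement of the threshold.
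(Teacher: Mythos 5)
Your proposal is correct and follows essentially the same route as the paper's proof: the paper also forms the exponential supermartingale $M_p(x)=\exp(x\sum_{q\le p}X_q-\tfrac{1}{2}x^2\sum_{q\le p}\sigma_q^2)$, mixes it against the $\Nc(0,1/\sigma^2)$-type Gaussian weight to obtain the identical closed form $\bar{M}_p=\sqrt{\sigma^2/(\sigma^2+\sum_q\sigma_q^2)}\exp\bigl((\sum_q X_q)^2/(2\sigma^2+2\sum_q\sigma_q^2)\bigr)$, and concludes via the maximal inequality for nonnegative supermartingales. No substantive differences.
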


\begin{proof}
Let $M_p(x) := \exp\left(x\sum_{q=1}^p X_q - \frac{1}{2}x^2\sum_{q=1}^p\sigma_q^2\right)$. We can verify that $\Eb[M_p(x)|\Fc_p] \leq M_{p-1}(x)$, and $\bar{M}_{p} := \int \frac{\exp(-\sigma^2x^2/2)}{\sqrt{2\pi/\sigma^2}}M_p(x)dx$ is a super-martingale. Besides,
\begin{align*}
    \bar{M}_p &= \int \exp\left(\frac{(\sum_{q=1}^pX_q)^2}{2\sigma^2+2\sum_{q=1}^p\sigma_q^2}\right)
    \exp\left(-\frac{\sigma^2+\sum_{q=1}^p\sigma_q^2}{2}\left(x - \frac{\sum_{q=1}^p X_q}{\sigma^2+\sum_{q=1}^p\sigma_q^2}\right)^2\right)\frac{\sigma dx}{\sqrt{2\pi}}\\
    &=\sqrt{\frac{\sigma^2}{\sigma^2+\sum_{q=1}^p\sigma_q^2}}\exp\left(\frac{(\sum_{q=1}^pX_q)^2}{2\sigma^2+2\sum_{q=1}^p\sigma_q^2}\right).
\end{align*}
By the maximal inequality (Theorem 3.9 in \cite{LS19bandit-book}), and the fact that $\Eb[\bar{M}_1] \leq 1$, we have
\[\Pb\left[\exists p \text{ s.t. } \sqrt{\frac{\sigma^2}{\sigma^2+\sum_{q=1}^p\sigma_q^2}}\exp\left(\frac{(\sum_{q=1}^pX_q)^2}{2\sigma^2+2\sum_{q=1}^p\sigma_q^2}\right)\geq \frac{1}{\delta} \right] = \Pb\left[\max_p \bar{M}_p\geq \frac{1}{\delta}\right]\leq \delta\Eb[\bar{M}_1]\leq \delta.\]
\end{proof}

\section{Generalized G-optimal Design}\label{appx:G}
\subsection{Analysis of the Multi-client G-optimal Design}\label{appx:G-optimal}
After extending the techniques in \citep{LS19bandit-book} to the multi-constraint and pseudo-determinant case, we establish an important property of the optimization problems (\ref{eqn: G_opt}) and (\ref{eqn: F_opt}), as stated in the following lemma. 
\begin{lemma}\label{lemma: OptimizationEquivalence}
Given sets $\Ac_i^p\subset[K]$ and $\Rc_a^p\subset[M]$ under \texttt{Fed-PE} in phase $p$, consider the following optimization problem:
\begin{align}\label{eqn: F_opt}
\text{maximize} \quad&F(\pi) = \sum_{a\in[K]}\log \Det\left(\sum_{j\in \Rc_a^p} \pi_{j,a}^p e_{j,a}e_{j,a}^\TT\right) \quad\text{s.t. } \pi^p\in\Cc^p.
\end{align}
Denote $d_a^p := \rank(\{e_{i,a}\}_{i\in\Rc_a^p}),\forall a\in[K]$. Then, we have the following equivalent statements:
\begin{itemize}[leftmargin=18pt,topsep=0pt, itemsep=0pt,parsep=0pt] 
\item[1)] $\pi^*$ is a maximizer of $F(\pi)$.
\item[2)] $\pi^*$ is a minimizer of $G(\pi)$ defined in Eqn.~\eqref{eqn: G_opt}.
\item[3)] $G(\pi^*) = \sum_{a\in[K]} d_a^p$.
\end{itemize}
\end{lemma}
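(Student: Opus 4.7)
The plan is to establish the cycle $1\Rightarrow 3\Rightarrow 2$ together with $3\Rightarrow 1$ and the existence of an $F$-maximizer, from which all pairwise equivalences follow. The backbone is a universal lower bound on $G$ that meets $\sum_a d_a^p$ precisely at the KKT stationary points of $F$.

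First I would prove the identity $G(\pi)\geq \sum_{a}d_a^p$ for every $\pi\in\Cc^p$. Writing $V_a(\pi):=\sum_{j\in\Rc_a^p}\pi_{j,a}^p e_{j,a}e_{j,a}^\TT$, the rank-preserving clause of $\Cc^p$ guarantees $\range(V_a(\pi))=\Span(\{e_{i,a}\}_{i\in\Rc_a^p})$, so every $e_{i,a}$ with $i\in\Rc_a^p$ lies in $\range(V_a(\pi))$ and $e_{i,a}^\TT V_a(\pi)^\dagger e_{i,a}$ is well defined. Bounding the client-wise maximum by a convex combination and exchanging the order of summation gives
\[G(\pi)\;\geq\;\sum_i\sum_{a\in\Ac_i^p}\pi_{i,a}^p\, e_{i,a}^\TT V_a(\pi)^\dagger e_{i,a}\;=\;\sum_a\tr\!\bigl(V_a(\pi)^\dagger V_a(\pi)\bigr)\;=\;\sum_a d_a^p,\]
where the last equality uses the fact that $V_a(\pi)^\dagger V_a(\pi)$ is the orthogonal projector onto $\range(V_a(\pi))$, a subspace of dimension $d_a^p$.

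Next I would analyze the maximization of $F$ on $\Cc^p$. Lemma~\ref{lemma: concave} shows that $F$ is concave on $\Cc^p$ (a convex set on which the range of each $V_a$ is fixed), and Lemma~\ref{lemma: logdet} gives $\partial F/\partial \pi_{i,a}^p = e_{i,a}^\TT V_a(\pi)^\dagger e_{i,a}$. The KKT conditions for the per-client simplex constraints $\sum_{a\in\Ac_i^p}\pi_{i,a}^p=1$, $\pi_{i,a}^p\geq 0$ then produce multipliers $\{\lambda_i\}$ such that at any optimizer $\pi^*$, for every $i$ and every $a\in\Ac_i^p$, $e_{i,a}^\TT V_a(\pi^*)^\dagger e_{i,a}\leq \lambda_i$, with equality whenever $\pi_{i,a}^{*}>0$; concavity makes these sufficient for optimality. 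Forming the convex combination $\sum_a\pi_{i,a}^*\!\cdot$(equality case) yields $\lambda_i=\sum_a\pi_{i,a}^*\, e_{i,a}^\TT V_a(\pi^*)^\dagger e_{i,a}$, and summing over $i$ and swapping sums as above gives $\sum_i\lambda_i=\sum_a d_a^p$. Since $\max_{a\in\Ac_i^p}e_{i,a}^\TT V_a(\pi^*)^\dagger e_{i,a}=\lambda_i$ at a KKT point, we conclude $G(\pi^*)=\sum_i\lambda_i=\sum_a d_a^p$, establishing $1\Rightarrow 3$. To close the loop, $3\Rightarrow 1$ follows by tracing the same inequality chain backward: if $G(\pi^*)=\sum_a d_a^p$, the bound $\max_a(\cdot)\geq\sum_a\pi_{i,a}^*(\cdot)$ must be tight for every $i$, forcing $e_{i,a}^\TT V_a(\pi^*)^\dagger e_{i,a}$ to equal the client-wise maximum whenever $\pi_{i,a}^*>0$, which is precisely KKT stationarity and hence optimality for the concave $F$. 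Then $3\Rightarrow 2$ is immediate from the universal lower bound, and $2\Rightarrow 3$ uses that $F$ attains a maximizer $\pi^\dagger$ (on the closure of $\Cc^p$ inside the product of simplices, with $F=-\infty$ where the rank condition fails) which already satisfies $G(\pi^\dagger)=\sum_a d_a^p$; any $G$-minimizer $\pi^*$ thus satisfies $\sum_a d_a^p\leq G(\pi^*)\leq G(\pi^\dagger)=\sum_a d_a^p$.

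The main obstacle I anticipate is the pseudo-determinant/pseudo-inverse bookkeeping when the $V_a(\pi)$ are rank-deficient. One must verify that $F$ is well-defined and concave on all of $\Cc^p$ rather than only on the open stratum where each $V_a$ has full ambient rank, and that $e_{i,a}^\TT V_a(\pi)^\dagger e_{i,a}$ really is the partial derivative $\partial F/\partial\pi_{i,a}^p$ at such rank-deficient points. These are precisely what the rank-preserving clause of $\Cc^p$ together with Lemmas~\ref{lemma: concave} and \ref{lemma: logdet} buy us, so the bulk of the work lies in invoking them carefully rather than in producing any substantively new inequality.
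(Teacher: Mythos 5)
Your proposal is correct and follows essentially the same route as the paper, which proves the lemma (via a block-diagonal lifting to a single pseudo-determinant) using exactly the same three ingredients: concavity of $\log\Det$ on the rank-preserving convex set, the gradient formula $\partial F/\partial\pi_{i,a}=e_{i,a}^\TT V_a(\pi)^\dagger e_{i,a}$, and the trace identity $\sum_{i,a}\pi_{i,a}e_{i,a}^\TT V_a(\pi)^\dagger e_{i,a}=\sum_a d_a^p$ that yields the universal bound $G\geq\sum_a d_a^p$. The only difference is presentational: you phrase the first-order optimality condition as KKT with explicit per-client multipliers $\lambda_i$, whereas the paper uses the variational inequality $\langle\nabla F,\pi-\pi^*\rangle\leq 0$ directly and tests it against a limit of feasible points concentrating each client's mass on its argmax arm — these are the same argument in dual and primal form.
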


\begin{remark}
Lemma~\ref{lemma: OptimizationEquivalence} is surprisingly similar to the original equivalence between G-optimal design and D-optimal design introduced in Theorem 21.1 of \cite{LS19bandit-book}, which was first studied in \cite{kiefer_wolfowitz_1960}. 
It provides an alternative method to solve (\ref{eqn: G_opt}) by maximizing $F(\pi)$. We note that a determinant maximization problem similar to (\ref{eqn: F_opt}) with two linear constraints has been studied in \cite{D_optimal_design_additional_cost}, while a generalized experimental design similar to (\ref{eqn: G_opt}) has been investigated in \cite{harman2014multi_constraints_d_design}. However, those works do not establish the equivalence between the two problems. Although an algorithm is proposed in \cite{harman2014multi_constraints_d_design} to solve the generalized experimental design problem, it does not provide any convergence guarantees. 

The generalized multi-client G-optimal design problem studied in this work can be treated as a particularized version of the generalized experimental design in \cite{harman2014multi_constraints_d_design}. As we will show below, the pseudo-determinant in the objective function in (\ref{eqn: F_opt}) can be viewed as the determinant of a block diagonal matrix, and the client-wise constraints essentially are imposed on individual blocks. Such special structure enables us to establish the equivalence between the generalized multi-agent G-optimal design in (\ref{eqn: G_opt}) and the optimization problem in (\ref{eqn: F_opt}), as well as to design a novel block coordinate ascent method to solve (\ref{eqn: F_opt}) efficiently without violating the constraints.
\end{remark}

In order to prove Lemma~\ref{lemma: OptimizationEquivalence}, we prove a generalized version of it instead. Before we proceed, we introduce the following notations.
First, we omit the phase index $p$ in this subsection without causing any ambiguity. Next, we embed each normalized feature vector $e_{i,a}:=\nicefrac{x_{i,a}}{\|x_{i,a}\|}$ into a higher dimensional space as follows: we construct a vector $v_{i,a}\in\mathbb{R}^{dK}$ as:
\[v_{i,a} = (0,\ldots,0,\underbrace{e_{i,a}^\TT}_{a\text{-th block}},0,\ldots,0)^\TT\in\mathbb{R}^{dK},\]
i.e., the $[(a-1)d+k]$-th coordinate of $v_{i,a}$ is the $k$-th coordinate of $e_{i,a}$ for all $k\in[d]$ and all other coordinates of $v_{i,a}$ are 0.

We can verify that 
\[e_{i,a}^\TT\left(\sum_{j\in\Rc_a}\pi_{j,a} e_{j,a}e_{j,a}^\TT\right)^\dagger e_{i,a} = v_{i,a}^\TT\left(\sum_{a\in[K]}\sum_{j\in\Rc_a}\pi_{j,a}v_{j,a}v_{j,a}^\TT\right)^\dagger v_{i,a},\]
and
\[\prod_{a\in[K]} \Det\left(\sum_{j\in\Rc_a}\pi_{j,a} e_{j,a}e_{j,a}^\TT\right) = \Det\left(\sum_{a\in[K]}\sum_{j\in\Rc_a}\pi_{j,a}v_{j,a}v_{j,a}^\TT\right),\]
due to the fact that $\sum_{a\in[K]}\sum_{j\in\Rc_a}\pi_{j,a}v_{j,a}v_{j,a}^\TT$ is a block diagonal matrix, and the $a$-th diagonal block is exactly the matrix $\sum_{j\in\Rc_a}\pi_{j,a} e_{j,a}e_{j,a}^\TT$. Meanwhile, the following relation also holds:
\[\sum_{a\in[K]}\rank\left(\sum_{j\in\Rc_a}\pi_{j,a} e_{j,a}e_{j,a}^\TT\right) = \rank\left(\sum_{a\in[K]}\sum_{j\in\Rc_a}\pi_{j,a}v_{j,a}v_{j,a}^\TT\right).\]

With those notations, we present a generalized version of Lemma \ref{lemma: OptimizationEquivalence} below. It is a ``generalized version'' in the sense that, {the vectors $\{v_{i,a}\}_{i,a}$ involved in Lemma~\ref{lemma: General_equivalence} can be arbitrary real vectors, which allows more versatility in the formulation. Besides,} in Lemma \ref{lemma: General_equivalence}, $\sum_{a\in\Ac_i}\pi_{i,a}$ may be different for different client $i\in [M]$, which implies that the clients can work ``asynchronously'' instead of ``synchronously'' as under \texttt{Fed-PE}. Note that we change the order of the double summation, which removes $\Rc_a$ from the expressions.

\begin{lemma}[Generalized version of Lemma \ref{lemma: OptimizationEquivalence}]\label{lemma: General_equivalence} 
Given any  $v_{i,a}\in\mathbb{R}^{n}$, where $i\in[M], a\in\Ac_i$, and $\{\Ac_i\}$ are index sets. Consider the following optimization problems \eqref{eqn: general_F_opt} and \eqref{eqn: general_G_opt}:
\begin{align}
&\left\{\begin{array}{lc}
\text{maximize\ \  } \bar{F}(\pi) = \log \Det\left(\sum_{j\in [M]} \sum_{a\in\Ac_j}\pi_{j,a}v_{j,a}v_{j,a}^\TT\right),\\
\text{subject to }  \pi\in\bar{\Cc} .
\end{array}\right.\label{eqn: general_F_opt}\\
&\left\{\begin{array}{lc}
\text{minimize\ \ } \bar{G}(\pi) = \sum_{i = 1}^Mf_i\max_{a\in \Ac_i}  v_{i,a}^\TT\left(\sum_{j\in [M]}\sum_{a\in\Ac_j} \pi_{j,a}v_{j,a}v_{j,a}^\TT\right)^{\dagger}v_{i,a},\\
\text{subject to } \pi\in\bar{\Cc}. \end{array}\right.\label{eqn: general_G_opt}
\end{align}
where the feasible set $\bar{\Cc}\subset\Rb^{\sum_{i\in[M]}|\Ac_i|}$ contains all $\pi$ that satisfy: 
\begin{align}\label{eqn: general_feasible set}
\bar{\Cc} =\left\{ \pi \middle | \begin{array}{l}
 \pi_{i,a}\geq 0,\forall i\in [M], a\in \Ac_i\\ \sum_{a\in \Ac_i}\pi_{i,a} = f_i, \forall i\in[M],  \\
\rank(\{\pi_{i,a}v_{i,a}\}_{i\in[M],a\in\Ac_i}) = \rank(\{v_{i,a}\}_{i\in[M],a\in\Ac_i})
\end{array}\right\}.\end{align}
Then, we have the following equivalent statements:
\begin{itemize}[leftmargin=18pt,topsep=0pt, itemsep=0pt,parsep=0pt] 
    \item [1)] $\pi^*$ is a maximizer of (\ref{eqn: general_F_opt}).
    \item [2)] $\pi^*$ is a minimizer of (\ref{eqn: general_G_opt}).
    \item [3)] $\bar{G}(\pi^*) = \rank(\{v_{i,a}\}_{i\in[M],a\in\Ac_i}):= D$.
\end{itemize}
\end{lemma}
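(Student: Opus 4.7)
The plan is to establish the three-way equivalence in the spirit of the Kiefer-Wolfowitz equivalence theorem, adapted to pseudo-inverses and per-client budget constraints. Concretely, I will first prove a universal lower bound $\bar G(\pi)\geq D$ on $\bar{\Cc}$, and then show this bound is attained precisely at the maximizers of $\bar F$.

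\textbf{Step 1 (universal lower bound).} For $\pi\in\bar{\Cc}$ set $M_\pi:=\sum_{j\in[M]}\sum_{a\in\Ac_j}\pi_{j,a}v_{j,a}v_{j,a}^\TT$. Using $\sum_{a\in\Ac_i}\pi_{i,a}=f_i$ and the max-vs-weighted-average inequality,
\begin{align*}
\bar G(\pi)=\sum_{i=1}^M f_i\max_{a\in\Ac_i}v_{i,a}^\TT M_\pi^\dagger v_{i,a}\geq \sum_{i,a}\pi_{i,a}\,v_{i,a}^\TT M_\pi^\dagger v_{i,a}=\trace(M_\pi^\dagger M_\pi).
\end{align*}
The rank-preserving clause in $\bar{\Cc}$ ensures $\range(M_\pi)=\Span\{v_{i,a}\}$, so $M_\pi^\dagger M_\pi$ is the orthogonal projector onto this $D$-dimensional subspace, giving $\trace(M_\pi^\dagger M_\pi)=D$.

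\textbf{Step 2 ($(1)\Rightarrow(3)$ via KKT).} On the stratum of $\bar{\Cc}$ where the rank condition is tight, Lemma~\ref{lemma: concave} yields concavity of $\bar F$ and Lemma~\ref{lemma: logdet} yields $\partial\bar F/\partial\pi_{i,a}=v_{i,a}^\TT M_\pi^\dagger v_{i,a}$. Since $\bar F\to-\infty$ whenever the rank drops, any maximizer $\pi^*$ lies strictly inside this stratum and satisfies KKT for the linear equalities $\sum_a\pi_{i,a}=f_i$ and bounds $\pi_{i,a}\geq 0$: there exist $\lambda_i\in\Rb$ and $\mu_{i,a}\geq 0$ with
\begin{align*}
v_{i,a}^\TT M_{\pi^*}^\dagger v_{i,a}=\lambda_i-\mu_{i,a},\qquad \mu_{i,a}\pi^*_{i,a}=0.
\end{align*}
Hence $\max_{a\in\Ac_i}v_{i,a}^\TT M_{\pi^*}^\dagger v_{i,a}=\lambda_i$, and multiplying the stationarity condition by $\pi^*_{i,a}$ and summing over $(i,a)$ gives
\begin{align*}
\bar G(\pi^*)=\sum_i f_i\lambda_i=\sum_{i,a}\pi^*_{i,a}v_{i,a}^\TT M_{\pi^*}^\dagger v_{i,a}=\trace(M_{\pi^*}^\dagger M_{\pi^*})=D.
\end{align*}

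\textbf{Step 3 (closing the loop).} Combined with Step 1, Step 2 proves $(1)\Rightarrow(3)$. The equivalence $(2)\Leftrightarrow(3)$ is then immediate: the universal bound forces $\bar G\geq D$ on $\bar{\Cc}$, and Step 2 exhibits a feasible point achieving $D$, so the minimum equals $D$ and is attained precisely on $\{\pi:\bar G(\pi)=D\}$. For $(3)\Rightarrow(1)$, if $\bar G(\pi^*)=D$ then equality must hold throughout the chain in Step 1, which forces $v_{i,a}^\TT M_{\pi^*}^\dagger v_{i,a}=\max_{a'\in\Ac_i}v_{i,a'}^\TT M_{\pi^*}^\dagger v_{i,a'}=:\lambda_i$ whenever $\pi^*_{i,a}>0$. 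Setting $\mu_{i,a}:=\lambda_i-v_{i,a}^\TT M_{\pi^*}^\dagger v_{i,a}\geq 0$ furnishes a valid KKT triple for $\bar F$, and by the concavity of $\bar F$, KKT is sufficient, so $\pi^*$ maximizes $\bar F$.

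\textbf{Main obstacle.} The principal subtlety is the book-keeping around the pseudo-inverse and pseudo-determinant when $M_\pi$ is rank-deficient: Lemma~\ref{lemma: logdet}'s gradient formula applies only where the rank is preserved, and Lemma~\ref{lemma: concave}'s concavity holds only within a convex slice of PSD matrices sharing a common range. Both are reconciled by the observation that $\bar F\equiv-\infty$ off the maximal-rank stratum, so the maximization never touches the degenerate boundary, while the key identity $\trace(M_\pi^\dagger M_\pi)=D$ holds throughout $\bar{\Cc}$ precisely because of the rank-preserving constraint baked into the feasible set.
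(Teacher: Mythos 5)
Your proof is correct and follows essentially the same route as the paper's: concavity of $\log\Det$ (Lemma~\ref{lemma: concave}), the gradient formula (Lemma~\ref{lemma: logdet}), the trace identity $\trace(M_{\pi}^\dagger M_{\pi})=D$, and the max-versus-weighted-average inequality. The only substantive difference is in how first-order optimality is exploited to get $\bar G(\pi^*)\le D$: the paper tests the variational inequality $\langle\nabla\bar F,\pi-\pi^*\rangle\le 0$ against the mass-concentrated point $\hat\pi_{i,a_i}=f_i$, which may violate the rank-preserving constraint and therefore requires an approximating sequence $\pi^m\in\bar{\Cc}$ with $\pi^m\to\hat\pi$; your KKT formulation with multipliers $\lambda_i,\mu_{i,a}$ and complementary slackness reaches the same identity $\sum_i f_i\lambda_i=\trace(M_{\pi^*}^\dagger M_{\pi^*})=D$ without ever leaving the feasible set, which is arguably cleaner. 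One imprecision worth fixing: the claim that ``$\bar F\equiv-\infty$ off the maximal-rank stratum'' is not literally true, since the pseudo-determinant simply drops zero eigenvalues and remains finite and positive at rank-deficient points (this discontinuity is exactly why the paper introduces the generalized $\Det_s$). What is true, and what your argument actually needs, is that $\bar F\to-\infty$ as $\pi$ approaches the rank-dropping boundary from within $\bar{\Cc}$; consequently every coordinate that is essential for the rank stays strictly positive at a maximizer, the rank constraint is inactive in a neighborhood, and the only binding constraints are the linear ones --- which is what licenses the KKT conditions you invoke.
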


\begin{proof}
$1)\Longrightarrow2)$: Since $\bar{\Cc}$ is a convex set, and its image under a linear transformation is still convex, $\bar{F}(\pi)$ is a concave function over $\bar{\Cc}$ according to Lemma \ref{lemma: concave}. Using the first-order optimality, we have
\[
    \left<\nabla \bar{F},\pi-\pi^*\right>\leq 0
\]
holds for {any $\pi\in\bar{\Cc}$}. 
Therefore, based on Lemma \ref{lemma: logdet}, we have
\begin{align}
    0&\geq \sum_{i\in[M]}\sum_{a\in\Ac_i}(\pi_{i,a}-\pi^*_{i,a})v_{i,a}^\TT\bigg(\sum_{j\in [M]}\sum_{a\in\Ac_j} \pi^*_{j,a}v_{j,a}v_{j,a}^\TT\bigg)^{\dagger}v_{i,a}\\
    &=\sum_{i\in[M]}\sum_{a\in\Ac_i}\pi_{i,a}v_{i,a}^\TT\bigg(\sum_{j\in [M]}\sum_{a\in\Ac_j} \pi^*_{j,a}v_{j,a}v_{j,a}^\TT\bigg)^{\dagger}v_{i,a}\nonumber\\
    & \quad- 
     \trace\bigg(\sum_{i\in[M]}\sum_{a\in\Ac_i}\pi^*_{i,a}v_{i,a}v_{i,a}^\TT \bigg(\sum_{j\in [M]}\sum_{a\in\Ac_j} \pi^*_{j,a}v_{j,a}v_{j,a}^\TT\bigg)^{\dagger}\bigg)\\
    & = \sum_{i\in[M]}\sum_{a\in\Ac_i}\pi_{i,a}v_{i,a}^\TT\bigg(\sum_{j\in [M]}\sum_{a\in\Ac_j} \pi^*_{j,a}v_{j,a}v_{j,a}^\TT\bigg)^{\dagger}v_{i,a} -  D.\label{eqn: preFirstOrderOptimality1}
\end{align}

Let $a_i = \arg\max_{b}v_{i,b}^\TT\left(\sum_{j\in \Rc_b} \pi^*_{j,b}v_{j,b}v_{j,b}^\TT\right)^{\dagger}v_{i,b}$, and set $\hat{\pi}_{i,a_i} = f_i$ and $\hat{\pi}_{i,b}=0$ for any $b\neq a_i$.  Then, define a sequence $\{\pi^m\}_{m=1}^{\infty}\in\bar{\Cc}$ with $\lim_{m\rightarrow\infty}\pi_m=\hat{\pi}$. Substituting $\pi^m$ into Eqn.~(\ref{eqn: preFirstOrderOptimality1}) and taking the limit of $m$, we have
\begin{align}
& \lim_{m\rightarrow\infty}\sum_{i\in[M]}\sum_{a\in\Ac_i}\pi^m_{i,a}v_{i,a}^\TT\bigg(\sum_{j\in [M]}\sum_{a\in\Ac_j} \pi^*_{j,a}v_{j,a}v_{j,a}^\TT\bigg)^{\dagger}v_{i,a} -  D\\
&=\sum_{i\in[M]}\sum_{a\in\Ac_i}\hat{\pi}_{i,a}v_{i,a}^\TT\bigg(\sum_{j\in [M]}\sum_{a\in\Ac_j} \pi^*_{j,a}v_{j,a}v_{j,a}^\TT\bigg)^{\dagger}v_{i,a} -  D\\
&=\sum_{i\in[M]}\hat{\pi}_{i,a_i} \max_{a\in\Ac_i}v_{i,a}^\TT\bigg(\sum_{j\in [M]}\sum_{a\in\Ac_j} \pi^*_{j,a}v_{j,a}v_{j,a}^\TT\bigg)^{\dagger}v_{i,a} -  D\leq 0,
\end{align}
i.e.,
\begin{equation}\label{eqn: FirstOrderOptimality1}
\sum_{i\in[M]}f_i\max_{a\in\Ac_i}v_{i,a}^\TT\bigg(\sum_{j\in [M]}\sum_{a\in\Ac_j} \pi^*_{j,a}v_{j,a}v_{j,a}^\TT\bigg)^{\dagger}v_{i,a}\leq D.
\end{equation}

On the other hand, for any feasible point $\pi$, we have
\begin{align}
    D &=\trace\bigg(\sum_{i\in[M]}\sum_{a\in\Ac_i}\pi_{i,a}v_{i,a}v_{i,a}^\TT \bigg(\sum_{j\in [M]}\sum_{a\in\Ac_j} \pi_{j,a}v_{j,a}v_{j,a}^\TT\bigg)^{\dagger}\bigg)\\
    &=\sum_{i\in[M]}\sum_{a\in\Ac_i}\pi_{i,a}v_{i,a}^\TT\bigg(\sum_{j\in [M]}\sum_{a\in\Ac_j} \pi_{j,a}v_{j,a}v_{j,a}^\TT\bigg)^{\dagger}v_{i,a}\\
    &\leq\sum_{i\in[M]}f_i\max_{a\in\Ac_i}v_{i,a}^\TT\bigg(\sum_{j\in [M]}\sum_{a\in\Ac_j} \pi_{j,a}v_{j,a}v_{j,a}^\TT\bigg)^{\dagger}v_{i,a}.\label{eqn: FirstOrderOptimality2}
\end{align}

Combining Eqns. (\ref{eqn: FirstOrderOptimality1}) and (\ref{eqn: FirstOrderOptimality2}), we conclude that  $\pi^*$ is also a minimizer of $\bar{G}(\pi)$, and $\bar{G}(\pi^*) = D$.

$2)\Longrightarrow3)$ This can be seen from the above argument.

$3)\Longrightarrow1)$ For any feasible point $\pi\in \bar{\Cc}$, we have:
\begin{align}
    &\left<\nabla \bar{F},\pi-\pi^*\right>\\
    &=\sum_{i\in[M]}\sum_{a\in\Ac_i}\pi_{i,a}v_{i,a}^\TT\bigg(\sum_{j\in [M]}\sum_{a\in\Ac_j} \pi^*_{j,a}v_{j,a}v_{j,a}^\TT\bigg)^{+}v_{i,a} - D\\
    &\leq \bar{G}(\pi^*) - D = 0.
\end{align}
Thus, based on the concavity of $\bar{F}$, we conclude that $\pi^*$ is a maximizer of $\bar{F}(\pi)$.
\end{proof}

\subsection{Block Coordinate Ascent for Generalized G-optimal Design}\label{appx:CoordinateAscent}
In this subsection, we provide an efficient block coordinate ascent algorithm to solve the optimization problem in (\ref{eqn: F_opt}). We keep the same notations as in Appendix~\ref{appx:G-optimal}, e.g., omitting the phase index $p$.

The block coordinate ascent algorithm inherits the idea from \citep{MinimumVolumnEllipsoid}, which leverages the low-rank updating formula (Corollary A.10 in \cite{MinimumVolumnEllipsoid}). While only nonsingular matrices are considered in \cite{MinimumVolumnEllipsoid}, we extend it to the case where pseudo-inverse and pseudo-determinant are involved. Such extension is necessary, because under \texttt{Fed-PE}, $\rank(\{x_{i,a}\}_{i\in \Rc^p_a})$ is decreasing as phase $p$ progresses in general, making singular matrices unavoidable. As elaborated in this subsection, such extension is technically non-trivial.

\begin{lemma}\label{lemma:low_rank_update}
Assume $A\in\mathbb{R}^{n\times n}$ is a positive semi-definite matrix and vector $u\in \range(A)$. Let $\lambda\in\mathbb{R}$ such that $\range(A+\lambda uu^T) = \range(A)$. Then, $\Det_s(A+\lambda uu^\TT) = (1+\lambda u^\TT A^\dagger u)\Det_s(A)$ holds for any $s\in\{0,1,\ldots,n\}$. Moreover, 
\[(A+\lambda uu^\TT)^\dagger = A^\dagger - \frac{\lambda A^\dagger uu^\TT A^\dagger}{1+\lambda u^\TT A^\dagger u}. \]
\end{lemma}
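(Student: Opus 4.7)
\textbf{Proof proposal for Lemma \ref{lemma:low_rank_update}.}

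The plan is to handle the two claims somewhat differently: the determinant identity via a direct limiting argument from the classical matrix determinant lemma, and the pseudo-inverse identity by verifying the four defining axioms of Definition \ref{def: pinv} directly on the candidate formula. Throughout, the crucial facts I will exploit are that $A$ is symmetric positive semi-definite (so $AA^\dagger = A^\dagger A$ is the orthogonal projector onto $\range(A)$ and $A^\dagger$ is also symmetric), together with the hypothesis $u\in\range(A)$, which gives $AA^\dagger u = u$.

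For the determinant identity, I would regularize by setting $A_\epsilon := A+\epsilon I$ for $\epsilon>0$, which is invertible. The classical matrix determinant lemma then gives
\begin{equation*}
\det(A_\epsilon+\lambda uu^\TT) \;=\; \bigl(1+\lambda u^\TT A_\epsilon^{-1}u\bigr)\det(A_\epsilon).
\end{equation*}
Dividing both sides by $\epsilon^{n-s}$, passing to the limit $\epsilon\to 0^+$, and invoking Definition \ref{defn:peudo_det} together with Proposition \ref{prop: limit_relations_pseudo}-3) (which yields $\lim_{\epsilon\to 0}u^\TT A_\epsilon^{-1}u = u^\TT A^\dagger u$ because $u\in\range(A)$) produces the claimed formula $\Det_s(A+\lambda uu^\TT) = (1+\lambda u^\TT A^\dagger u)\,\Det_s(A)$. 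The same argument works uniformly for every $s\in\{0,\dots,n\}$ since the $\epsilon^{n-s}$ factor cancels symmetrically on both sides.

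For the pseudo-inverse identity, let $B := A^\dagger - \tfrac{\lambda A^\dagger uu^\TT A^\dagger}{1+\lambda u^\TT A^\dagger u}$ and $M := A+\lambda uu^\TT$. I would first compute $MB$ by distributing and then collapsing the fractional term using $AA^\dagger u = u$ (and therefore $AA^\dagger uu^\TT A^\dagger = uu^\TT A^\dagger$); a short manipulation yields
\begin{equation*}
MB \;=\; AA^\dagger + \lambda uu^\TT A^\dagger - \frac{\lambda(1+\lambda u^\TT A^\dagger u)\,uu^\TT A^\dagger}{1+\lambda u^\TT A^\dagger u} \;=\; AA^\dagger,
\end{equation*}
which is symmetric, establishing axiom (iii). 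By taking transposes and using symmetry of $A$, $A^\dagger$, and $uu^\TT$, I get $BM = A^\dagger A = AA^\dagger$, which establishes axiom (iv). Axiom (i) follows immediately: $MBM = AA^\dagger M = AA^\dagger A + \lambda AA^\dagger uu^\TT = A + \lambda uu^\TT = M$, using $AA^\dagger A = A$ and $AA^\dagger u = u$. Axiom (ii) reduces to $BMB = AA^\dagger B = A^\dagger$ (because $AA^\dagger A^\dagger = A^\dagger A A^\dagger = A^\dagger$), and multiplying $AA^\dagger$ into the fractional piece of $B$ reproduces $B$ itself. By uniqueness of the Moore--Penrose inverse (Definition \ref{def: pinv}), $B = M^\dagger$.

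The one subtlety I expect to address is well-posedness of the formula, namely $1+\lambda u^\TT A^\dagger u\neq 0$. I would derive this from the hypothesis $\range(M)=\range(A)$ by noting that if $1+\lambda u^\TT A^\dagger u = 0$, then the nonzero vector $w := A^\dagger u \in \range(A)$ satisfies
\begin{equation*}
Mw \;=\; AA^\dagger u + \lambda u(u^\TT A^\dagger u) \;=\; (1+\lambda u^\TT A^\dagger u)\,u \;=\; 0,
\end{equation*}
so $w\in\range(A)\cap\ker(M)$. Because $M$ is symmetric, $\range(M)=\ker(M)^\perp$, and combined with $\range(M)=\range(A)$ this forces $w=0$, contradicting $u\neq 0$ together with $u\in\range(A)$ (which implies $A^\dagger u\neq 0$). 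This is the only nontrivial step; everything else reduces to symbolic manipulation exploiting $AA^\dagger u = u$.
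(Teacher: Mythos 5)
Your proof is correct. The determinant identity is handled exactly as in the paper: regularize with $A_\epsilon = A+\epsilon I$, apply the classical rank-one determinant update, divide by $\epsilon^{n-s}$, and pass to the limit via Proposition~\ref{prop: limit_relations_pseudo}-3). For the pseudo-inverse identity, however, you take a genuinely different route. The paper stays inside the limiting framework: it writes $(A+\lambda uu^\TT)^\dagger$ via the representation $\lim_{\epsilon\to 0}(A_\epsilon+\lambda uu^\TT)^{-1}(A+\lambda uu^\TT)(A_\epsilon+\lambda uu^\TT)^{-1}$ from Proposition~\ref{prop: limit_relations_pseudo}, substitutes the Sherman--Morrison expansion of $(A_\epsilon+\lambda uu^\TT)^{-1}$, and sends $\epsilon\to 0$. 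You instead verify the four Moore--Penrose axioms of Definition~\ref{def: pinv} directly on the candidate matrix, using only $AA^\dagger u=u$ and symmetry, and then invoke uniqueness. Your approach is more self-contained and arguably cleaner (no limit interchange to justify), and it has the additional merit of explicitly establishing well-posedness, i.e.\ $1+\lambda u^\TT A^\dagger u\neq 0$, which the paper's proof never addresses; your argument via $MA^\dagger u=(1+\lambda u^\TT A^\dagger u)u$ and $\range(M)=\ker(M)^\perp$ is a nice touch. The paper's limiting route, on the other hand, requires no guess of the answer in advance and reuses machinery already set up for the determinant half. One cosmetic slip: in your verification of axiom (ii) you write ``$BMB=AA^\dagger B=A^\dagger$'' where the chain should conclude with $B$, not $A^\dagger$; the surrounding sentence makes clear you intend $AA^\dagger B=B$, so the logic is unaffected, but the displayed equality should be fixed.
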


\begin{proof}
Corollary A.10 in \citep{MinimumVolumnEllipsoid} states that for nonsingular matrix $B$ and any vector $u$ and $\lambda$ such that $B+\lambda uu^\TT$ is also nonsingular, the following updating rules hold:
\begin{align}\label{eqn:update1}
    \det(B+\lambda uu^\TT) &= (1+\lambda u^\TT B^{-1} u)\det(B),\\
(B+\lambda uu^\TT)^{-1} & = B^{-1} - \frac{\lambda B^{-1} uu^\TT B^{-1}}{1+\lambda u^\TT B^{-1} u}.\label{eqn:update2}
\end{align}

For a general positive semi-definite matrix $A\in\mathbb{R}^{n\times n}$, we let $B:=A+\epsilon I_n$, where $\epsilon >0$ is an arbitrarily small number, and insert it in Eqn. (\ref{eqn:update1}). We have
\begin{align}
     \det(A+\epsilon I_n+\lambda uu^\TT) =(1+\lambda u^\TT (A+\epsilon I_n)^{-1} u)\det(A+\epsilon I_n).
\end{align}
 Dividing both sides by $\epsilon^{n-s}$ and letting $\epsilon$ go to 0, we have
\begin{align}
   \lim_{\epsilon\rightarrow0} \frac{ \det(A+\epsilon I_n+\lambda uu^\TT)}{\epsilon^{n-s}} =   \lim_{\epsilon\rightarrow0} \frac{(1+\lambda u^\TT (A+\epsilon I_n)^{-1} u)\det(A+\epsilon I_n)}{\epsilon^{n-s}}.
\end{align}
Given the fact that $u\in \range(A+\lambda uu^\TT) = \range(A)$ and Proposition~\ref{prop: limit_relations_pseudo}, we then have
\begin{align}\label{eqn:low_rank_update1}
    \Det_s(A+\lambda uu^\TT) = (1+\lambda u^\TT A^\dagger u)\Det_s(A).
\end{align}

In order to obtain the second part of Lemma~\ref{lemma:low_rank_update}, we first have 
\begin{align}\label{eqn:update3}
(A+\lambda uu^T)^\dagger = \lim_{\epsilon\rightarrow0}(A+\epsilon I_n + \lambda uu^\TT)^{-1}(A+\lambda uu^\TT)(A+\epsilon I_n+\lambda uu^\TT)^{-1}
\end{align}
based on Proposition~\ref{prop: limit_relations_pseudo}. 
Then, we use (\ref{eqn:update2}) to obtain
\begin{align}\label{eqn:update4}
    (A+\epsilon I_n + \lambda uu^\TT)^{-1}=(A+\epsilon I_n)^{-1}-\frac{\lambda(A+\epsilon I_n)^{-1}uu^\TT(A+\epsilon I_n)^{-1}}{1+\lambda u^\TT (A+\epsilon I_n)^{-1} u}.
\end{align}

Inserting (\ref{eqn:update2}) in (\ref{eqn:update3}), using Proposition \ref{prop: limit_relations_pseudo} to remove the limit, and manipulating the expression with the associative property of matrix multiplication, we finally have
\begin{align}
    (A+\lambda uu^\TT)^\dagger 
    &=A^\dagger -\frac{\lambda A^\dagger uu^\TT A^\dagger}{1+\lambda u^\TT A^\dagger u}.\label{eqn:low_rank_update2}
\end{align}
\end{proof}

The next two lemmas will be utilized to demonstrate that the coordinate ascent algorithm (Algorithm \ref{alg: BCM}) described below does not violate the ``rank-preserving'' condition defined in Eqn.~(\ref{eqn: feasible set}).

\begin{lemma}\label{lemma:rank}
Let  $A :=\sum_{s=1}^d \lambda_s u_su_s^\TT$ with $u_s\in\Rb^n$, $\lambda_s>0$ for $s\in[d]$. Then, $\range(A)=\Span(\{u_s\}_{s\in[d]})$, and $\rank(A)=\rank(A^\dagger)=\rank(\{\lambda_s u_s\}_{s\in[d]})=\rank(\{u_s\}_{s\in[d]})$.
\end{lemma}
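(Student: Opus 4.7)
The plan is to establish the range identity first, since both rank statements follow from it almost immediately.

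For the inclusion $\range(A) \subseteq \Span(\{u_s\}_{s\in[d]})$, I would just compute: for any $x \in \Rb^n$,
\[
Ax \;=\; \sum_{s=1}^d \lambda_s (u_s^\TT x)\, u_s,
\]
which is a linear combination of the $u_s$'s. For the reverse inclusion, I would exploit the symmetry of $A$: since $A = A^\TT$, one has $\range(A) = \ker(A)^\perp$. Take any $w \in \ker(A)$; then
\[
0 \;=\; w^\TT A w \;=\; \sum_{s=1}^d \lambda_s (u_s^\TT w)^2.
\]
Because every $\lambda_s > 0$, each term must vanish, so $u_s^\TT w = 0$ for all $s$. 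Hence $\ker(A) \perp \Span(\{u_s\}_{s\in[d]})$, which gives $\Span(\{u_s\}_{s\in[d]}) \subseteq \ker(A)^\perp = \range(A)$. Combining the two inclusions yields $\range(A) = \Span(\{u_s\}_{s\in[d]})$.

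With the range identity in hand, the rank equalities drop out. First, $\rank(A) = \dim \range(A) = \dim \Span(\{u_s\}_{s\in[d]}) = \rank(\{u_s\}_{s\in[d]})$. Second, since $\lambda_s > 0$ for every $s$, scaling is a bijection on $\Span$, so $\Span(\{\lambda_s u_s\}_{s\in[d]}) = \Span(\{u_s\}_{s\in[d]})$ and hence $\rank(\{\lambda_s u_s\}_{s\in[d]}) = \rank(\{u_s\}_{s\in[d]})$. Finally, for $\rank(A^\dagger)$, I would invoke the defining identities from Definition~\ref{def: pinv}: from $AA^\dagger A = A$ we get $\rank(A) \le \rank(A^\dagger)$, and from $A^\dagger A A^\dagger = A^\dagger$ we get $\rank(A^\dagger) \le \rank(A)$, giving equality.

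The steps are all essentially direct, so there is no serious obstacle; the only subtlety worth flagging is the need for symmetry of $A$ to invoke $\range(A) = \ker(A)^\perp$, which is exactly why the $u_s^\TT w = 0$ argument closes the range inclusion.
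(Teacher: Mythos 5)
Your proof is correct and rests on the same key computation as the paper's: evaluating the quadratic form $w^\TT A w = \sum_{s}\lambda_s (u_s^\TT w)^2$ and using $\lambda_s>0$ to force $u_s^\TT w=0$. The only cosmetic difference is that you package the reverse inclusion via $\range(A)=\ker(A)^\perp$ for symmetric $A$, whereas the paper runs a contradiction argument producing a nonzero $v\in\Span(\{u_s\}_{s\in[d]})$ with $Av=0$; both routes are sound, and your explicit treatment of the $\rank(A^\dagger)$ and $\rank(\{\lambda_s u_s\}_{s\in[d]})$ equalities fills in steps the paper leaves implicit.
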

\begin{proof}
It suffices to show the column space of $A$ is the same as the space spanned by $\{u_s,s\in[d]\}$. 
Note that 
\[\range(A) = \{v\in\Rb^n | v = Ac, c\in\Rb^n\},\] 
and 
\[\Span(\{u_s\}_{s\in[d]}) = \bigg\{v\in\Rb^n \bigg| v = \sum_{s\in[d]}u_s c_s, c_s\in\Rb\bigg\}.\]
Then, for any $v\in\range(A)$, there must exist a $c\in\Rb^n$ such that $v = Ac = \sum_{s\in[d]}\lambda_s u_s u_s^\TT c = \sum_{s\in[d]}(\lambda_s u_s^\TT c) u_s\in\Span(\{u_s\}_{s\in[d]}).$
Thus, we have $\range(A)\subseteq \Span(\{u_s\}_{s\in[d]}).$

On the other hand, if $\range(A)\neq \Span(\{u_s\}_{s\in[d]})$, there must exist a $v\in \Span(\{u_s\}_{s\in[d]})$, $v\neq 0$, such that $Av=0$. Thus, we have $v^\TT Av=0$, which implies that
\begin{align}
    v^\TT \left(\sum_{s=1}^d \lambda_s u_su_s^\TT\right)v=0\Leftrightarrow \sum_{s=1}^d \lambda_s (v^\TT u_s)^2=0.
\end{align}
Since $\lambda_s>0$ for $s\in[d]$, we must have $v^\TT u_s=0$ for all $s\in[d]$. Meanwhile, since $v\in\Span(\{u_s\}_{s\in[d]})$, it indicates that $v=0$, which contradicts with the assumption that $v\neq 0$.
\end{proof}

\begin{lemma}\label{lemma: linIndpdnt}
Assume $A \in\mathbb{R}^{n\times n}$ is a singular positive semi-definite matrix with $d=\rank({A}) < n$. Let $u_0\notin \range(A)$. Then, for any $\lambda_0>0$, $u_0^\TT(\lambda_0 u_0u_0^\TT + A)^\dagger u_0 = 1/\lambda_0$.
\end{lemma}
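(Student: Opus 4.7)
The plan is to reduce the pseudo-inverse to a genuine inverse via the limit formula in Proposition~\ref{prop: limit_relations_pseudo}-3), then apply the classical Sherman--Morrison identity to the invertible perturbation $A+\epsilon I+\lambda_0 u_0u_0^\TT$, and finally exploit the hypothesis $u_0\notin\range(A)$ to push the resulting quadratic form to infinity as $\epsilon\to 0$.

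First I would verify that $u_0$ lies in the range of $B:=A+\lambda_0 u_0u_0^\TT$, which is needed to invoke Proposition~\ref{prop: limit_relations_pseudo}-3). Since $A$ and $\lambda_0 u_0u_0^\TT$ are both positive semi-definite, the null space of $B$ equals the intersection of their null spaces, i.e., $\range(A)^\perp \cap u_0^\perp$. Taking orthogonal complements yields $\range(B)=\range(A)+\Span(u_0)\ni u_0$. Hence $B^\dagger u_0 = \lim_{\epsilon\to 0}(B+\epsilon I)^{-1}u_0$, so it suffices to evaluate the limit $\lim_{\epsilon\to 0} u_0^\TT(A+\epsilon I+\lambda_0 u_0u_0^\TT)^{-1}u_0$.

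For each $\epsilon>0$, $A+\epsilon I$ is invertible, so Sherman--Morrison gives a closed form for $(A+\epsilon I+\lambda_0 u_0u_0^\TT)^{-1}$. Contracting with $u_0$ on both sides and letting $q_\epsilon := u_0^\TT(A+\epsilon I)^{-1}u_0$, a short algebraic simplification yields
\begin{equation}
u_0^\TT(B+\epsilon I)^{-1}u_0 \;=\; \frac{q_\epsilon}{1+\lambda_0\, q_\epsilon}.
\end{equation}
The remaining step is to show $q_\epsilon\to\infty$. Using the spectral decomposition $A=\sum_{i=1}^d \lambda_i v_iv_i^\TT$ and extending $\{v_i\}$ to an orthonormal basis of $\mathbb{R}^n$, one gets $q_\epsilon=\sum_{i=1}^d (u_0^\TT v_i)^2/(\lambda_i+\epsilon) + \epsilon^{-1}\|u_\perp\|^2$, where $u_\perp$ is the component of $u_0$ orthogonal to $\range(A)$. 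Because $u_0\notin\range(A)$, $\|u_\perp\|^2>0$, which forces $q_\epsilon\to\infty$; consequently $q_\epsilon/(1+\lambda_0 q_\epsilon)\to 1/\lambda_0$, completing the proof.

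The only delicate point is the range-containment step, which justifies replacing the pseudo-inverse by its regularized limit; once that is in place, the calculation is mechanical rank-one bookkeeping and a single scalar limit. No new machinery beyond Proposition~\ref{prop: limit_relations_pseudo} and standard Sherman--Morrison is required.
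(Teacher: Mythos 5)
Your proof is correct, but it takes a genuinely different route from the paper's. The paper eigen-decomposes $A=\sum_{s=1}^d\lambda_s u_su_s^\TT$, notes that $u_0,u_1,\ldots,u_d$ are linearly independent, bounds each quadratic form via Loewner monotonicity of the pseudo-inverse to get $u_t^\TT(\lambda_0u_0u_0^\TT+A)^\dagger u_t\le 1/\lambda_t$ for every $t\in\{0,\ldots,d\}$, and then observes that the weighted sum $\sum_t\lambda_t u_t^\TT(\cdot)^\dagger u_t$ equals $\trace\bigl((\lambda_0u_0u_0^\TT+A)^\dagger(\lambda_0u_0u_0^\TT+A)\bigr)=d+1$, which forces equality in every term simultaneously; the $t=0$ case is the lemma. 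You instead regularize, apply Sherman--Morrison to $A+\epsilon I+\lambda_0u_0u_0^\TT$, and take a scalar limit. Your route is arguably more robust: the paper's step ``$X\succeq Y$ is equivalent to $X^\dagger\preceq Y^\dagger$'' is stated too broadly (for pseudo-inverses it needs a range compatibility condition, which does hold here but is not checked), whereas your argument only invokes Proposition~\ref{prop: limit_relations_pseudo} and an explicit computation, and your range-containment check $u_0\in\range(A+\lambda_0u_0u_0^\TT)$ is exactly the justification needed. It also cleanly explains \emph{why} a separate lemma is needed at all: the paper's rank-one update formula (Lemma~\ref{lemma:low_rank_update}) requires $\range(A+\lambda uu^\TT)=\range(A)$, which fails when $u_0\notin\range(A)$, and your $\epsilon$-regularization sidesteps that restriction. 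The paper's argument, in exchange, is computation-free and yields the slightly stronger conclusion that equality $u_t^\TT(\cdot)^\dagger u_t=1/\lambda_t$ holds for all $t$, not just $t=0$.
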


\begin{proof}
First, we decompose $A$ into $A = \sum_{s=1}^d \lambda_s u_su_s^\TT$, where $\{u_s\}_s$ are orthonormal eigenvectors of $A$, and $\{\lambda_s>0\}_s$ are non-zero eigenvalues of $A$. Thus, $u_0\notin \range(A)$ indicates that $u_0, u_1,\ldots, u_d$ are linearly independent.

Since $X\succeq Y$ is equivalent to $ X^\dagger \preceq Y^\dagger$, we have
\begin{equation}\label{eqn: uVu<1}
u_t^\TT\bigg(\sum_{s=0}^d \lambda_su_su_s^\TT\bigg)^\dagger u_t\leq \frac{u_t^\TT(u_tu_t^\TT)^\dagger u_t}{\lambda_t} = \frac{1}{\lambda_t}
\end{equation}
holds for all $t\in\{0,1,\ldots,d\}$.

Multiplying both sides of (\ref{eqn: uVu<1}) by $\lambda_t$ and summing over $t$, we have
\[d+1\geq \sum_{t=0}^d \lambda_tu_t^\TT\bigg(\sum_{s=0}^d \lambda_s u_su_s^\TT\bigg)^\dagger u_t = \trace\bigg(\bigg(\sum_{s=0}^d \lambda_s u_su_s^\TT\bigg)^\dagger\bigg(\sum_{t=0}^d \lambda_t u_tu_t^\TT\bigg)\bigg) = d+1.\]
Thus, Eqn.~$(\ref{eqn: uVu<1})$ must hold with equality for all $t\in\{0,1\ldots,d\}$. 
In particular, $u_0^\TT(\lambda_0 u_0u_0^\TT + A)^\dagger u_0 = 1/\lambda_0.$ 
\end{proof}

\begin{algorithm}
\caption{Block Coordinate Ascent (BCA)}
\begin{algorithmic}[1]
\State \textbf{Input:} $\{\Ac_i\}_i,\{\Rc_a\}_a, \{e_{i,a}\}_{i,a}, {\epsilon}$. 
\State For each $i {\in[M]},a {\in \Ac_i}$: $\pi_{i,a}\leftarrow \frac{1}{|\Ac_i|}$.
\State For each $a\in[K]$: {$\tilde{V}_a\leftarrow \left(\sum_{i\in\Rc_a} \pi_{i,a}e_{i,a}e_{i,a}^\TT\right)^\dagger$}, $d_a \leftarrow \rank(\tilde{V}_a)$.
\While{$G(\pi)> \sum_{a\in[K]}d_a+\epsilon$ } 
    \For{$i\in[M]$}
    solve the following optimization problem: 
    \begin{align}\label{eqn:bca}
    \max_{\{\omega_{a}\}_{a\in\Ac_i}} & \sum_{a\in\Ac_i}\log(1+\omega_a e_{i,a}^\TT \tilde{V}_a e_{i,a}),\quad \text{s.t.} \sum_{a\in\Ac_i}\omega_a = 0 \text{ and } -\pi_{i,a}\leq \omega_a \leq 1-\pi_{i,a}.
    \end{align}
    \For{$a\in\Ac_i$} 
    \[\pi_{i,a}\leftarrow\pi_{i,a} + \omega_a,\quad \tilde{V}_a \leftarrow \tilde{V}_a - \frac{\omega_a \tilde{V}_a e_{i,a}e_{i,a}^\TT \tilde{V}_a}{1+\omega_a e_{i,a}^\TT \tilde{V}_a e_{i,a}}.\]
    \EndFor
    \EndFor
\EndWhile
\end{algorithmic}\label{alg: BCM}
\end{algorithm}

We are now ready to introduce the Block Coordinate Ascent (\texttt{BCA}) algorithm. As depicted in Algorithm~\ref{alg: BCM}, at each phase $p$, \bca aims to obtain a distribution $\pi$ by changing $\pi_i:=\{\pi_{i,a}\}_{a\in\Ac_i}$ for a client $i$ while fixing the distributions for all the other clients $j\neq i$ in each step.

Consider that $\pi$ is changed to $\pi^{+}$ by redistributing $\pi_i$ to $\pi_i^+$. Thus, $\pi_i^+ = \pi_i + \omega$, where $\sum_{a\in\Ac_i}\omega_{a} = 0$, and $-\pi_{i,a}\leq\omega
_a\leq 1-\pi_{i,a}$.  

Recall that
\[F(\pi)  = \sum_{a\in[K]}\log\Det_{d_a}\bigg(\sum_{j\in\Rc_a}\pi_{j,a}e_{j,a}e_{j,a}^\TT\bigg).\]
To simplify the notation, let $U_a := \sum_{j\in\Rc_a}\pi_{j,a}e_{j,a}e_{j,a}^\TT$.
Then, to maximize $F(\pi^{+})$ is equivalent to maximizing the following:
\begin{align}
F(\pi^{+}) - F(\pi) &= \sum_{a\in\Ac_i}\log\frac{\Det_{d_a}\left(U_a + \omega_ae_{i,a}e_{i,a}^\TT\right)}{\Det_{d_a}\left(U_a\right)}\\
& = \sum_{a\in\Ac_i}\log\left(1+\omega_a e_{i,a}^\TT U_a^\dagger e_{i,a}\right),\label{eqn:update5}
\end{align}
where Eqn.~(\ref{eqn:update5}) follows from the low-rank updating formula (\ref{eqn:low_rank_update1}) in Lemma~\ref{lemma:low_rank_update} when $\rank(U_a + \omega_ae_{i,a}e_{i,a}^\TT)=\rank(U_a)$.

After obtaining $\{\omega_a\}_{a\in\Ac_i}$, the distribution $\pi_i$ will be updated; Correspondingly, all involved $\{U_a^\dagger\}_{a\in\Ac_i}$ in the objective function (\ref{eqn:update5}) shall be updated as well. The low-rank updating formula (\ref{eqn:low_rank_update2}) from Lemma~\ref{lemma:low_rank_update} is then invoked to perform the updating efficiently. After that, \bca proceeds to the next block and repeats the procedure.
 
\begin{Proposition}
 The distribution vector $\pi$ after each block updating under \bca is always in $\Cc$.
\end{Proposition}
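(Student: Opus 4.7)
My plan is to verify the three defining conditions of $\Cc$ one after the other. The non-negativity $\pi_{i,a}^+\ge 0$ and the unit-sum $\sum_a\pi_{i,a}^+=1$ will follow immediately from the constraints $-\pi_{i,a}\le\omega_a\le 1-\pi_{i,a}$ and $\sum_a\omega_a=0$ of the subproblem in Eqn.~\eqref{eqn:bca}, together with the fact that other clients' distributions are untouched. Since the initialization $\pi_{i,a}=1/|\Ac_i|$ already satisfies these, a trivial induction on block updates takes care of the first two conditions.

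For the rank-preserving condition I would also argue by induction. The initial uniform weights are all strictly positive, so Lemma~\ref{lemma:rank} gives the base case directly. For the inductive step, fix $a\in\Ac_i$ and consider how $U_a:=\sum_{j\in\Rc_a}\pi_{j,a}e_{j,a}e_{j,a}^\TT$ is perturbed by $\omega_a e_{i,a}e_{i,a}^\TT$. If $\omega_a>-\pi_{i,a}$ then $\pi_{i,a}^+>0$, so the set of positively weighted vectors is unchanged and Lemma~\ref{lemma:rank} again preserves the rank. If $\omega_a=-\pi_{i,a}$ but $e_{i,a}\in\Span(\{e_{j,a}\}_{j\in\Rc_a\setminus\{i\}})$, then removing $e_{i,a}$ does not shrink the span, and the rank is again preserved.

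The only delicate situation is $\omega_a=-\pi_{i,a}$ with $e_{i,a}\notin\Span(\{e_{j,a}\}_{j\in\Rc_a\setminus\{i\}})$, and this is where I expect the main obstacle; the resolution hinges on Lemma~\ref{lemma: linIndpdnt}. Applying that lemma with $A=\sum_{j\in\Rc_a\setminus\{i\}}\pi_{j,a}e_{j,a}e_{j,a}^\TT$, $u_0=e_{i,a}$, and $\lambda_0=\pi_{i,a}$ gives $e_{i,a}^\TT\tilde V_a e_{i,a}=1/\pi_{i,a}$, so the $a$-th summand of the objective in Eqn.~\eqref{eqn:bca} equals $\log(1+\omega_a/\pi_{i,a})$, which diverges to $-\infty$ as $\omega_a\to-\pi_{i,a}$. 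The remaining summands are bounded above on the feasible set and $\omega\equiv 0$ is a feasible point with objective value $0$, so any maximizer must remain strictly away from this boundary face. This would force $\pi_{i,a}^+>0$ and exclude the rank-dropping scenario, completing the induction and showing that each block update keeps $\pi$ in $\Cc$.
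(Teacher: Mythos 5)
Your proposal is correct and follows essentially the same route as the paper's proof: the first two conditions of $\Cc$ follow directly from the constraints of the subproblem in Eqn.~(\ref{eqn:bca}), and the rank-preserving condition is handled by Lemma~\ref{lemma:rank} when the updated weight stays positive, with the boundary case $\pi_{i,a}+\omega_a=0$, $e_{i,a}\notin\Span(\{e_{j,a}\}_{j\in\Rc_a\setminus\{i\}})$ ruled out via Lemma~\ref{lemma: linIndpdnt} and the observation that the objective then diverges to $-\infty$ while feasible points with finite value exist. Your explicit treatment of the sub-case where $e_{i,a}$ lies in the span of the other vectors is a minor clarification of the same argument.
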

\begin{proof}
 We prove this through induction. First, it is obvious that the initialization  $\pi_{i,a}=\frac{1}{|\Ac_i|}$, for $i\in[M]$, $a\in\Ac_i$ ensures that $\pi_i$ is a valid distribution for every $i\in[M]$. Meanwhile, since  $\pi_{i,a}=\frac{1}{|\Ac_i|}$ for $i\in[M]$, $a\in\Ac_i$, according to Lemma~\ref{lemma:rank}, we have $\rank(U_a)=\rank(\{e_{j,a}\}_{j\in\Rc_a})$. Thus, the rank-preserving condition is satisfied as well. 
 
 Next, we use $\pi$ and $\pi^+$ to denote the distribution before and after one block updating on $\pi_i$, respectively. Assume $\pi\in\Cc$, and we aim to show that $\pi^+\in \Cc$ as well. It is straightforward to see that the conditions $\sum_{a\in\Ac_i}\omega_a = 0$ and $-\pi_{i,a}\leq \omega_a \leq 1-\pi_{i,a}$ ensure that $\pi_i^+$ is still a valid distribution. It thus suffices to show that the rank-preserving condition is satisfied for $\pi^+$. We prove this through contradiction. Assume after the updating, $\rank(U_a^+)<\rank(U_a)$. 
 We note that 
 \begin{align}
 U_a^+&=U_a + \omega_a e_{i,a}e_{i,a}^\TT =(\pi_{i,a}+\omega_a)e_{i,a}e_{i,a}^\TT+ \sum_{j\in\Rc_a\backslash i}\pi_{j,a}e_{j,a}e_{j,a}^\TT.
 \end{align}
 According to Lemma~\ref{lemma:rank}, if $\pi_{i,a}+\omega_a>0$, we must have $\rank(U_a^+)=\rank(U_a)$. Thus, if $\rank(U_a^+)<\rank(U_a)$, we must have $\pi_{i,a}+\omega_a=0$, and $e_{i,a}\notin\Span(\{e_{j,b}\}_{j\in\Rc_a\backslash\{i\}})$. Then, according to Lemma \ref{lemma: linIndpdnt}, we must have $e_{i,a}^\TT U_a^{\dagger} e_{i,a} = 1/\pi_{i,a}$. Correspondingly, Eqn.~(\ref{eqn:update5}) becomes $-\infty$. Since we can always perturb $\omega$ to make $\pi^+_{i,a}\neq 0$ for every $a\in\Ac_i$, there exist feasible solutions making (\ref{eqn:update5}) greater than $-\infty$. Therefore, the distribution that makes $\rank(U_a^+)<\rank(U_a)$ cannot be the solution to (\ref{eqn:bca}) in the \bca algorithm. This indicates that the solution of \bca in each phase satisfies the rank-preserving condition.
\end{proof}

The convergence of \bca can be obtained by leveraging the result for the block coordinate minimization (BCM) algorithm introduced in \citep{Optimizationmodels}. Given a function $f_0(x_1,\ldots,x_v)$ where $x_i\in\mathcal{X}_i$, $i=1,\ldots,v$, the BCM algorithm generates $x^{(k)} := (x_1^{(k)},\ldots,x_v^{(k)})$ as follows:
\[x_{i}^{(k+1)} = \arg\min_{y\in\mathcal{X}_i}
f_0\left (x_1^{(k+1)},\ldots,x_{i-1}^{(k+1)}, y, x_{i+1}^{(k)},\ldots,x_v^{(k)} \right ).\] Then, the following theorem guarantees the convergence of the BCM algorithm.

\begin{theorem}[Theorem 12.4 in \cite{Optimizationmodels}]\label{thm:convergence}
Assume $f_0(x_1,\ldots,x_v)$ is convex and continuously differentiable on the feasible set $\mathcal{X} := \mathcal{X}_1\otimes\ldots\otimes\mathcal{X}_v$. Moreover, let $f_0$ be strictly convex in $x_i$ when the other variable blocks $x_j, j\neq i$, are held constant. If the sequence $\{x^{(k)}\}$ generated by the BCM algorithm is {well-defined}, then every limit point of $\{x^{(k)}\}$ converges to an optimal solution of the optimization problem $\min_{x\in\mathcal{X} } f_0(x)$. 
\end{theorem}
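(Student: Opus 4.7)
The plan is a standard monotonicity-plus-continuity argument that exploits both the block-wise strict convexity (to get unique block minimizers, hence a well-behaved update map) and the global convexity (to lift block-wise stationarity to global optimality). First I would establish monotonicity: by the very definition of the BCM update, $f_0(x^{(k+1)}) \leq f_0(x^{(k)})$, with strict inequality whenever some block update actually moves. Since the iterates possess at least one limit point, the sequence $f_0(x^{(k)})$ is non-increasing and bounded below, so it converges to some value $\bar f$.

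Next, pick any limit point $x^*$ along a subsequence $x^{(k_j)} \to x^*$; by continuity of $f_0$, $f_0(x^*) = \bar f$. I would apply one full BCM sweep starting from $x^{(k_j)}$ and pass to the limit. The central step is to use strict convexity in each block: the map $(x_{\neq i}) \mapsto \arg\min_{z \in \mathcal{X}_i} f_0(x_1,\ldots,x_{i-1},z,x_{i+1},\ldots,x_v)$ is single-valued (uniqueness from strict convexity) and continuous in its argument (standard maximum-theorem/Berge argument, using continuity of $f_0$ and the product structure of $\mathcal{X}$). Composing these $v$ continuous block maps, the BCM sweep $T$ is continuous at $x^*$, so $x^{(k_j+1)} \to T(x^*)$. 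Because $f_0(x^{(k_j+1)}) \to \bar f = f_0(x^*)$ and BCM strictly decreases $f_0$ whenever any block is sub-optimal, I conclude that $x^*$ is a fixed point of every block minimization: for each $i$, $x^*_i$ minimizes $f_0(x^*_1,\ldots,x^*_{i-1},\cdot,x^*_{i+1},\ldots,x^*_v)$ over $\mathcal{X}_i$.

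Finally, I would lift this block-wise optimality to global optimality via the first-order characterization of convex minimization. Block-wise optimality yields the variational inequalities
\[
\nabla_i f_0(x^*)^\TT (y_i - x^*_i) \geq 0, \qquad \forall y_i \in \mathcal{X}_i,\; i = 1,\ldots,v.
\]
Summing over $i$ and using the product structure $\mathcal{X} = \mathcal{X}_1\otimes\cdots\otimes\mathcal{X}_v$ (so an arbitrary $y \in \mathcal{X}$ decomposes block-wise), this becomes $\nabla f_0(x^*)^\TT(y - x^*) \geq 0$ for every $y \in \mathcal{X}$. Convexity of $f_0$ then upgrades this first-order condition to global optimality via $f_0(y) \geq f_0(x^*) + \nabla f_0(x^*)^\TT (y - x^*) \geq f_0(x^*)$, so $x^*$ is an optimal solution of $\min_{x \in \mathcal{X}} f_0(x)$.

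The main obstacle will be the continuity of the per-block argmin map, which is precisely where strict convexity in each block is indispensable: without it, Powell's classical counterexample shows that BCM can cycle through non-optimal points without converging, so the assumption is not decorative but exactly what makes the fixed-point argument go through. A secondary technicality is handling the constraints correctly when writing the first-order conditions; here the product structure of $\mathcal{X}$ is essential, since it lets block-wise variational inequalities be summed into a global variational inequality without cross-block coupling.
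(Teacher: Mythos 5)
The paper does not prove this statement at all: it is imported verbatim as Theorem~12.4 of the cited reference \citep{Optimizationmodels} and used as a black box to certify convergence of \texttt{BCA}, so there is no in-paper proof to compare against. Judged on its own merits, your argument is the standard textbook proof of block-coordinate-minimization convergence and is essentially correct: monotone descent of $f_0(x^{(k)})$, identification of any limit point as a block-wise minimizer, and the lift from the per-block variational inequalities $\nabla_i f_0(x^*)^\TT(y_i-x_i^*)\geq 0$ to the global one via the product structure of $\mathcal{X}$ and convexity of $f_0$ are all sound. The one place you should be careful is the middle step: invoking Berge's maximum theorem to get continuity of the per-block $\arg\min$ map implicitly requires compactness (or at least local boundedness of the relevant sublevel sets), which is not guaranteed when the $\mathcal{X}_i$ are merely closed convex sets; the classical Bertsekas-style proof sidesteps this by arguing directly that the intermediate iterates $z_i^{(k_j)}$ converge to the same limit point --- if $\|z_1^{(k_j)}-x^{(k_j)}\|$ stayed bounded away from zero, monotonicity of $f_0$ along the segment toward the block minimizer would force $f_0$ to be constant on a nondegenerate segment in the limit, contradicting strict convexity in that block. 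Either route works, but if you keep the maximum-theorem phrasing you should state the extra compactness/coercivity hypothesis or note that the ``well-defined'' assumption is being read to include it. Your closing remarks about Powell's counterexample and the role of the product structure are apt.
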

Theorem~\ref{thm:convergence} assures that every sequence $\{\pi^{(k)}\}$ generated by \bca converges to an optimal solution of (\ref{eqn: F_opt}), since  $F(\pi^+) - F(\pi)$ is concave, continuously differentiable, and strictly concave in each $\pi_i$, $i\in[M]$ when the other $\pi_j$'s, $j\neq i$, are fixed.

\section{Proof of Theorem~\ref{thm: BoundAlgorithm1}}\label{sec: ProofAlgorithm1}

We first present the complete version of Theorem~\ref{thm: BoundAlgorithm1}. 
\begin{theorem}[Complete version of Theorem~\ref{thm: BoundAlgorithm1}]\label{thm: complete_thm_Fed-PE}
Consider time horizon $T$ that consists of $H$ phases with $f^p= cn^p$, where $c$ and $n>1$ are fixed integers, and $n^p$ denotes the $p$th power of $n$.
Let 
\begin{equation}\label{eqn: alpha}
\alpha=\min\left\{\sqrt{2\log (KH/\delta)+d\log(ke)},\sqrt{2\log(2MKH/\delta)}\right\},
\end{equation}
where $k>1$ is a number satisfying $kd\geq 2\log(KH/\delta)+d\log(ke)$.  
Then, with probability at least $1-\delta$, the regret under \alg is upper bounded as 
$$R(T)\leq 4\alpha\frac{L}{\ell}\sqrt{dKM}\left(\frac{\sqrt{n^2-n}}{\sqrt{n}-1}\sqrt{T} + \frac{K}{\sqrt{cn} - \sqrt{c}}\right).$$
Furthermore, by assuming $K=O(\sqrt{T})$, the cumulative regret scales as
$$O\left(\frac{L}{\ell}\sqrt{dKMT(\log(K(\log T)/\delta)+\min\{d,\log M\})}\right)$$ 
and the communication cost scales as $O(Md^2K\log T)$.
\end{theorem}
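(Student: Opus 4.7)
My plan is to combine a concentration analysis of $\hat\theta_a^p$, a phased-elimination regret decomposition, and the multi-client G-optimal design bound of Lemma~\ref{lemma: OptimizationEquivalence}, and then to sum the resulting per-phase bounds under $f^p=cn^p$.

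First I would define the good event $\Ec=\{|x_{i,a}^\TT(\hat\theta_a^p-\theta_a)|\leq u_{i,a}^p \text{ for all admissible } (i,a,p)\}$ and show $\Pb(\Ec)\geq 1-\delta$ under either choice of $\alpha$. Using the aggregation formula (\ref{eqn: theta_hat}), $\hat\theta_a^{p+1}-\theta_a^\|$, where $\theta_a^\|$ is the projection of $\theta_a$ onto $\range(M_a^{p+1})$ with $M_a^{p+1}=\sum_{j\in\Rc_a^p}f_{j,a}^p e_{j,a}e_{j,a}^\TT$, equals $V_a^{p+1}\sum_{j\in\Rc_a^p}\sum_{t\in\Tc_{j,a}^p}\eta_{j,t}\,e_{j,a}/\|x_{j,a}\|$. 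Because $x_{i,a}\in\range(M_a^{p+1})$ whenever $i\in\Rc_a^p$, $x_{i,a}^\TT(\hat\theta_a^{p+1}-\theta_a)$ is a linear combination of independent $1$-sub-Gaussian noises whose variance proxy collapses via the identity $\sum_{j\in\Rc_a^p} f_{j,a}^p (e_{i,a}^\TT V_a^{p+1}e_{j,a})^2 = e_{i,a}^\TT V_a^{p+1}e_{i,a}$ (which uses $V_a^{p+1}M_a^{p+1}V_a^{p+1}=V_a^{p+1}$), yielding sub-Gaussian parameter $\|x_{i,a}\|_{V_a^{p+1}}/\ell$. A Hoeffding tail bound with a union bound over $[M]\times[K]\times[H]$ then delivers the second value of $\alpha$ in (\ref{eqn: alpha}). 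For the first value, I would apply Cauchy--Schwarz $|x_{i,a}^\TT(\hat\theta_a^p-\theta_a)|^2\leq \|x_{i,a}\|_{V_a^p}^2\|\hat\theta_a^p-\theta_a^\|\|_{M_a^p}^2$ and control the quadratic form $\|\hat\theta_a^p-\theta_a^\|\|_{M_a^p}^2$ via Lemma~\ref{lemma: subG-L2} combined with a Laplace/Chernoff argument; this introduces the $d\log(ke)$ term and only costs a $[K]\times[H]$ union bound.

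On $\Ec$, standard phased-elimination reasoning implies that $a_i^*$ never leaves $\Ac_i^p$ and that every surviving arm $a\in\Ac_i^p$ satisfies $\Delta_{i,a}\leq 4\max_{b\in\Ac_i^{p-1}}u_{i,b}^p$. Since client $i$ pulls exactly $f^p+K$ arms in phase $p$,
\[
R_p\leq 4(f^p+K)\sum_{i\in[M]}\max_{b\in\Ac_i^{p-1}}u_{i,b}^p.
\]
Using $f_{j,b}^{p-1}\geq\pi_{j,b}^{p-1}f^{p-1}$, we get $V_b^p\preceq (1/f^{p-1})\bigl(\sum_{j\in\Rc_b^{p-1}}\pi_{j,b}^{p-1}e_{j,b}e_{j,b}^\TT\bigr)^{\dagger}$, so $\|x_{i,b}\|_{V_b^p}^2\leq (L^2/f^{p-1})\,e_{i,b}^\TT\bigl(\sum_{j\in\Rc_b^{p-1}}\pi_{j,b}^{p-1}e_{j,b}e_{j,b}^\TT\bigr)^{\dagger}e_{i,b}$. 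Cauchy--Schwarz across the $M$ clients turns $\sum_i\max_b u_{i,b}^p$ into $(\alpha L/\ell)$ times $\sqrt{M\sum_i \max_b e_{i,b}^\TT(\cdot)^{\dagger}e_{i,b}/f^{p-1}}$, whose inner sum is precisely the objective $G(\pi^{(p-1)*})$ of the multi-client G-optimal design. Lemma~\ref{lemma: OptimizationEquivalence} evaluates this to $\sum_a d_a^{p-1}\leq dK$, giving $R_p\leq (4\alpha L/\ell)\sqrt{MdK}\,(f^p+K)/\sqrt{f^{p-1}}$.

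Finally, substituting $f^p=cn^p$ turns $f^p/\sqrt{f^{p-1}}=\sqrt{c}\,n^{(p+1)/2}$ into a geometric series whose partial sum, combined with $T\approx cn^{H+1}/(n-1)$, gives the $\sqrt{(n^2-n)T}/(\sqrt{n}-1)$ term; the $K/\sqrt{f^{p-1}}$ terms sum geometrically to the second summand. For communication, each of the $H=O(\log T)$ phases has each client uploading at most $K$ $d$-dimensional estimates and an active set, and downloading at most $K$ global estimates plus $K$ $d\times d$ potential matrices, for $O(MKd^2\log T)$ scalars in total. Plugging in the explicit $\alpha$ and dropping lower-order terms using $K=O(\sqrt T)$ delivers the final big-$O$ form. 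The main obstacle will be the concentration step: because $M_a^p$ can be rank-deficient, the classical self-normalized bound must be reformulated with pseudo-inverses, and one must carefully isolate the only identifiable part $\theta_a^\|$ so that the component of $\theta_a$ invisible to $\{x_{i,a}\}_{i\in\Rc_a^p}$ does not contaminate the Cauchy--Schwarz step before Lemma~\ref{lemma: subG-L2} is invoked inside the rank-preserving subspace.
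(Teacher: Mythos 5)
Your proposal follows essentially the same route as the paper's proof: the same good event and two-regime choice of $\alpha$ (Hoeffding with an $MKH$ union bound for one value; Cauchy--Schwarz plus the idempotent-projection trick and Lemma~\ref{lemma: subG-L2} with only a $KH$ union bound for the other), the same variance-proxy collapse via $V_a^p(V_a^p)^\dagger V_a^p=V_a^p$ restricted to $\range(\{x_{j,a}\}_{j\in\Rc_a^{p-1}})$, the same per-phase bound $R_p\leq 4\alpha(L/\ell)\sqrt{dKM}(f^p+K)/\sqrt{f^{p-1}}$ obtained from Cauchy--Schwarz across clients and the G-optimal design value $\sum_a d_a^p\leq dK$ of Lemma~\ref{lemma: OptimizationEquivalence}, and the same geometric summation. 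The argument is correct and matches the paper's proof in all essential respects.
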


The proof of Theorem~\ref{thm: complete_thm_Fed-PE} relies on a ``good'' event that happens with high probability. We define the ``bad'' event as follows:
\begin{equation}\label{eqn: errorevent}
\Ec(\alpha) = \{\exists p\in[H], i\in[M], a\in\Ac_i^{p-1}, |\hat{r}_{i,a}^p - r_{i,a}|\geq u_{i,a}^p = \alpha\sigma_{i,a}^p \},
\end{equation}
where $\alpha$ is defined in (\ref{eqn: alpha}) barring explicit explanations, and $\sigma_{i,a}^p := \|x_{i,a}\|_{V_a^p}/\ell$. 
We call $\Ec^c(\alpha)$ the ``good'' event.

\subsection{Bound the Probability of the Bad Event}
Before we bound the probability of the bad event $\Ec(\alpha)$, let us first introduce some necessary notations and characterize the sub-Gaussianity of the locally estimated rewards.

Let $\mathcal{F}_p$ be the $\sigma$-algebra of the events happened {in and} before the arm elimination stage at phase $p$, i.e.
$\mathcal{F}_p = \sigma\{a_{i,t}\in[K],y_{i,t}\in\Rb|t\in \cup_{q=0}^{p-1}\cup_{a\in \Ac_i^q}\mathcal{T}_{i,a}^q, i\in[M]\}$, with $\Fc_0 = \emptyset$. Note that $\Ac_i^{p}$ and $\Rc_a^{p}$ are $\mathcal{F}_p$-measurable.

Note that $\{a\in\Ac_i^{p}\}$ is equivalent to $\{i\in\Rc_a^{p}\}$. Throughout the analysis, we frequently exchange the order of double summations $\sum_{i\in[M]}\sum_{a\in\Ac_i^{p}}$ and $\sum_{a\in[K]}\sum_{i\in\Rc_a^p}$. 

\begin{lemma}\label{lemma: BndOneErr}
At phase $p\in[H]$, for any client $i\in[M]$, arm $a\in \Ac_i^{p-1}$, $\hat{r}_{i,a}^p - r_{i,a}$ is a conditionally sub-Gaussian random variable, i.e. $\mathbb{E}[\exp(\lambda(\hat{r}_{i,a}^p - r_{i,a}))|\mathcal{F}_{p-1}]\leq\exp\big(\frac{\lambda^2(\sigma_{i,a}^p)^2}{2}\big)$.
\end{lemma}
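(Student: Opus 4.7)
The plan is to explicitly write $\hat r_{i,a}^p - r_{i,a}$ as a linear combination of independent sub-Gaussian noise averages conditioned on $\Fc_{p-1}$, then read off the variance proxy.

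First, I would unpack the local estimate. Writing $y_{j,t} = x_{j,a}^\TT \theta_a + \eta_{j,t}$ and plugging into Eqn.~(\ref{eqn:local_theta}) gives
\begin{equation*}
\hat{\theta}_{j,a}^{p-1} = \frac{x_{j,a}^\TT \theta_a}{\|x_{j,a}\|^2}\,x_{j,a} + \bar{\eta}_{j,a}^{p-1}\,\frac{x_{j,a}}{\|x_{j,a}\|^2}, \qquad \bar{\eta}_{j,a}^{p-1} := \frac{1}{f_{j,a}^{p-1}}\sum_{t\in\Tc_{j,a}^{p-1}} \eta_{j,t}.
\end{equation*}
Because $\hat{\theta}_{j,a}^{p-1}$ is a scalar multiple of $x_{j,a}$, the rank-one summands in Eqn.~(\ref{eqn: theta_hat}) simplify to $\hat{\theta}_{j,a}^{p-1}(\hat{\theta}_{j,a}^{p-1})^\TT/\|\hat{\theta}_{j,a}^{p-1}\|^2 = e_{j,a}e_{j,a}^\TT$, so $V_a^p = A^\dagger$ with $A := \sum_{j\in\Rc_a^{p-1}} f_{j,a}^{p-1} e_{j,a}e_{j,a}^\TT$.

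Second, I would substitute the expression for $\hat{\theta}_{j,a}^{p-1}$ into Eqn.~(\ref{eqn: theta_hat}) to obtain
\begin{equation*}
\hat{\theta}_a^p = A^\dagger A\,\theta_a + A^\dagger \sum_{j\in\Rc_a^{p-1}} \frac{f_{j,a}^{p-1}}{\|x_{j,a}\|}\,\bar{\eta}_{j,a}^{p-1}\, e_{j,a}.
\end{equation*}
Now $a\in\Ac_i^{p-1}$ implies $i\in\Rc_a^{p-1}$, so $x_{i,a}\in\range(A)$ by Lemma~\ref{lemma:rank} (combined with the rank-preserving condition from $\Cc^{p-1}$). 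Hence $x_{i,a}^\TT A^\dagger A = x_{i,a}^\TT$ by Definition~\ref{def: pinv}, and the deterministic bias term collapses to $r_{i,a}$:
\begin{equation*}
\hat{r}_{i,a}^p - r_{i,a} = \sum_{j\in\Rc_a^{p-1}} \frac{f_{j,a}^{p-1}}{\|x_{j,a}\|}\,\big(x_{i,a}^\TT V_a^p e_{j,a}\big)\,\bar{\eta}_{j,a}^{p-1}.
\end{equation*}

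Third, I would bound the conditional MGF. Given $\Fc_{p-1}$, the coefficients $\Ac_i^{p-1}$, $\Rc_a^{p-1}$, $f_{j,a}^{p-1}$, $V_a^p$ are all measurable, and $\{\bar\eta_{j,a}^{p-1}\}_{j\in\Rc_a^{p-1}}$ are independent across clients with each being $1/\sqrt{f_{j,a}^{p-1}}$-sub-Gaussian by Assumption~\ref{assump:bounded}. Standard tensorization of sub-Gaussianity yields a variance proxy
\begin{align*}
\sum_{j\in\Rc_a^{p-1}} \frac{f_{j,a}^{p-1}}{\|x_{j,a}\|^2}\big(x_{i,a}^\TT V_a^p e_{j,a}\big)^2 &\leq \frac{1}{\ell^2}\, x_{i,a}^\TT V_a^p \Big(\sum_{j\in\Rc_a^{p-1}} f_{j,a}^{p-1} e_{j,a} e_{j,a}^\TT\Big) V_a^p\, x_{i,a} \\
&= \frac{1}{\ell^2}\, x_{i,a}^\TT A^\dagger A A^\dagger x_{i,a} = \frac{\|x_{i,a}\|_{V_a^p}^2}{\ell^2} = (\sigma_{i,a}^p)^2,
\end{align*}
using $\|x_{j,a}\|\geq\ell$ and the pseudo-inverse identity $A^\dagger A A^\dagger = A^\dagger$ (Definition~\ref{def: pinv}). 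This gives the claimed bound.

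The only real subtlety is step two: one must justify that the ``bias term'' $A^\dagger A\theta_a$, when contracted with $x_{i,a}$, exactly recovers $x_{i,a}^\TT \theta_a$ despite $A$ being singular in general. This hinges on the rank-preserving constraint baked into $\Cc^{p-1}$, which guarantees $\range(A)=\Span(\{x_{j,a}\}_{j\in\Rc_a^{p-1}})\supseteq\Span\{x_{i,a}\}$, so that $A^\dagger A$ acts as the identity on $x_{i,a}$. Everything else is bookkeeping.
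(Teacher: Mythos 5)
Your proof is correct and follows essentially the same route as the paper's: write $\hat\theta_{j,a}^{p-1}$ as its deterministic projection plus a noise term, use $x_{i,a}\in\range(V_a^p)$ (via the rank-preserving condition) to cancel the bias, and then bound the resulting variance proxy by $\frac{1}{\ell^2}x_{i,a}^\TT A^\dagger A A^\dagger x_{i,a}=(\sigma_{i,a}^p)^2$ using $\|x_{j,a}\|\geq\ell$. The only cosmetic difference is that you work with noise averages $\bar\eta_{j,a}^{p-1}$ where the paper uses noise sums $\xi_{j,a}^{p-1}$, and you make explicit the rank-preserving justification that the paper leaves implicit.
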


\begin{proof}
Let $\xi_{i,a}^{p-1}$ be the sum of the independent sub-Gaussian noise incurred during the collaborative exploration step in phase $p-1$, i.e. $\xi_{i,a}^{p-1}:=\sum_{t\in \Tc^{p-1}_{i,a}} \eta_{i,t}$. Then, given $f_{i,a}^{p-1}$, $\xi_{i,a}^{p-1}$ is a conditionally $\sqrt{f_{i,a}^{p-1}}$-sub-Gaussian random variable. 

Recall the definition of local estimators, and we have
\begin{align}
    \hat{\theta}_{i,a}^{p-1} &= \bigg(\frac{1}{f_{i,a}^{p-1}}\sum_{t\in\Tc_{i,a}^{p-1}}y_{i,t}\bigg)\frac{x_{i,a}}{\|x_{i,a}\|^2}\\
    &=\bigg(x_{i,a}^\TT\theta_a + \frac{\xi_{i,a}^{p-1}}{f_{i,a}^{p-1}}\bigg)\frac{x_{i,a}}{\|x_{i,a}\|^2}\\
    & = \frac{x_{i,a}x_{i,a}^\TT}{\|x_{i,a}\|^2}\theta_a + \frac{x_{i,a}\xi_{i,a}^{p-1}}{f_{i,a}^{p-1}\|x_{i,a}\|^2}.
\end{align}

 Since $\hat{r}_{i,a}^p= x_{i,a}^\TT\hat{\theta}_a^p$, and $V_a^p = \left(\sum_{j\in\Rc_a^{p-1}}f_{j,a}^{p-1}\frac{x_{j,a}x_{j,a}^\TT}{\|x_{j,a}\|^2}\right)^\dagger$, we have 
\begin{align}
\hat{r}_{i,a}^p-r_{i,a}&=x_{i,a}^\TT\hat{\theta}_{a}^p - x_{i,a}^\TT\theta_a\\
& = x_{i,a}^\TT V_a^p\bigg(\sum_{j\in\Rc_a^{p-1}}f_{j,a}^{p-1}\hat{\theta}_{j,a}^{p-1}\bigg) - x_{i,a}^\TT\theta_a\\
& = x_{i,a}^\TT V_a^p\bigg(\sum_{j\in\Rc_a^{p-1}}f_{j,a}^{p-1}\frac{x_{j,a}x_{j,a}^\TT}{\|x_{j,a}\|^2}\theta_a + \sum_{j\in\Rc_a^{p-1}}\frac{x_{j,a}\xi_{j,a}^{p-1}}{\|x_{j,a}\|^2} \bigg) - x_{i,a}^\TT\theta_a\\
&=x_{i,a}V_a^p (V_a^p)^\dagger\theta_a + x_{i,a}^\TT V_a^p\bigg(\sum_{j\in\Rc_a^{p-1}}\frac{e_{j,a}}{\|x_{j,a}\|}\xi_{j,a}^{p-1}\bigg) - x_{i,a}^\TT\theta_a\label{eqn: preErrorTerm}\\
&= x_{i,a}^\TT V_a^p\bigg(\sum_{j\in\Rc_a^{p-1}}\frac{e_{j,a}}{\|x_{j,a}\|}\xi_{j,a}^{p-1}\bigg),\label{eqn: ErrorTerm}
\end{align}
where (\ref{eqn: preErrorTerm}) is due to the definition of $V_a^p$ in Eqn. (\ref{eqn: theta_hat}), and (\ref{eqn: ErrorTerm}) due to the fact that $x_{i,a}\in \range(V_a^p)$ and the property of pseudo-inverse specified in Definition \ref{def: pinv}.

Noe that Eqn. (\ref{eqn: ErrorTerm}) is a linear combination of $\{\xi_{j,a}^{p-1}\}_{  j\in\Rc_a^{p-1}}$. Thus, given $\Fc_p$, $\hat{r}_{i,a}^p-r_{i,a}$ is a conditionally sub-Gaussian random variable, whose parameter can be bounded as
\begin{align}
    \sum_{j\in\Rc_a^{p-1}}\left(x_{i,a}^\TT V_a^p\frac{e_{j,a}}{\|x_{j,a}\|}\right)^2f_{j,a}^{p-1}
    &\leq\frac{1}{\ell^2}\sum_{j\in\Rc_a^{p-1}}x_{i,a}^\TT V_a^p f_{j,a}^{p-1}e_{j,a}e_{j,a}^\TT V_a^p x_{i,a}\label{eqn: x>l}\\
    &=\frac{1}{\ell^2}x_{i,a}^\TT V_a^p x_{i,a}=(\sigma_{i,a}^p)^2,
\end{align}
where Eqn. (\ref{eqn: x>l}) comes from the bounded parameter assumption in Assumption~\ref{assump:bounded} that $\|x_{i,a}\|\geq \ell$ for all $i,a$.
\end{proof}

Lemma \ref{lemma: BndOneErr} enables us to use concentration inequalities to bound the probability of the bad event $\Ec(\alpha)$ defined in (\ref{eqn: errorevent}) as follows.

\begin{lemma}\label{lemma: BndAllErr}
 Under \texttt{Fed-PE}, we have $\mathbb{P}[\Ec(\alpha)]\leq\delta$.
\end{lemma}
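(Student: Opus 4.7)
The plan rests on the fact that $\Ec(\alpha)$ is a decreasing set in $\alpha$, so it suffices to prove $\Pb[\Ec(\alpha_i^*)]\leq \delta$ for each of $\alpha_1^* = \sqrt{2\log(2MKH/\delta)}$ and $\alpha_2^* = \sqrt{2\log(KH/\delta)+d\log(ke)}$ separately. The first case is a textbook union bound: the conditional sub-Gaussianity established in Lemma~\ref{lemma: BndOneErr} yields
\begin{align*}
\Pb\left[|\hat{r}_{i,a}^p - r_{i,a}|\geq \alpha_1^*\sigma_{i,a}^p\,\middle|\,\Fc_{p-1}\right]\leq 2\exp(-(\alpha_1^*)^2/2)=\frac{\delta}{MKH}
\end{align*}
for each triple $(p,i,a)$ with $a\in\Ac_i^{p-1}$, and summing over the at most $MKH$ such triples gives the result.

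The case $\alpha_2^*$ requires eliminating the $\log M$ factor by treating all $i\in\Rc_a^{p-1}$ jointly for fixed $(p,a)$. Starting from the representation $\hat{r}_{i,a}^p - r_{i,a} = x_{i,a}^\TT V_a^p \tilde{\zeta}_a^{p-1}$ derived in Lemma~\ref{lemma: BndOneErr}, with $\tilde{\zeta}_a^{p-1}:=\sum_{j\in\Rc_a^{p-1}} (e_{j,a}/\|x_{j,a}\|)\xi_{j,a}^{p-1}$, I apply the Cauchy--Schwarz inequality for the semi-inner product induced by $V_a^p$. The rank-preserving condition ensures $x_{i,a}\in\range(V_a^p)$ for every $i\in\Rc_a^{p-1}$, yielding
\begin{align*}
\sup_{i\in\Rc_a^{p-1}}\frac{|\hat{r}_{i,a}^p - r_{i,a}|}{\sigma_{i,a}^p}\leq \ell\,\|\tilde{\zeta}_a^{p-1}\|_{V_a^p}.
\end{align*}
Writing $\tilde{\zeta}_a^{p-1}=B\eta$ where $\eta$ is a conditionally independent 1-sub-Gaussian vector with entries $\eta_j=\xi_{j,a}^{p-1}/\sqrt{f_{j,a}^{p-1}}$ and $B$ has columns $e_{j,a}\sqrt{f_{j,a}^{p-1}}/\|x_{j,a}\|$, the lower bound $\|x_{j,a}\|\geq \ell$ gives $BB^\TT\preceq (V_a^p)^\dagger/\ell^2$ and therefore $AA^\TT\preceq P/\ell^2$ for $A=(V_a^p)^{1/2}B$, where $P$ is the orthogonal projection onto $\range(V_a^p)$ of some rank $r\leq d$. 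Feeding this into Lemma~\ref{lemma: subG-L2} and optimizing the Chernoff parameter produces
\begin{align*}
\Pb\left[\ell^2\|\tilde{\zeta}_a^{p-1}\|^2_{V_a^p}\geq (\alpha_2^*)^2\,\middle|\,\Fc_{p-1}\right]\leq \left(\frac{(\alpha_2^*)^2 e}{r}\right)^{r/2}\exp\left(-\frac{(\alpha_2^*)^2}{2}\right).
\end{align*}
A short monotonicity check using $(\alpha_2^*)^2\leq kd$ shows $((\alpha_2^*)^2 e/r)^{r/2}\leq (ke)^{d/2}$ for all $r\leq d$, which together with the specific form of $\alpha_2^*$ bounds the right-hand side above by $\delta/(KH)$; a union bound over the $KH$ pairs $(p,a)$ completes the case.

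The main obstacle is the $\alpha_2^*$ derivation. Because $V_a^p$ is a genuine pseudo-inverse whose rank $r$ may be strictly less than the ambient dimension $d$, every algebraic step must be executed with pseudo-inverses rather than inverses: the Cauchy--Schwarz-type inequality requires identifying $\range(V_a^p)$ as the natural domain of the seminorm, Lemma~\ref{lemma: subG-L2} must be applied to the rectangular map $A=(V_a^p)^{1/2}B$ while producing a clean determinant factor $(1-2\lambda/\ell^2)^{-r/2}$, and obtaining the tight exponent $(ke)^{d/2}$ rather than a looser expression depends on carefully bounding the nonzero eigenvalues of $AA^\TT$ by $1/\ell^2$. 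Once both $\alpha_i^*$ bounds are in place, taking their minimum delivers $\Pb[\Ec(\alpha)]\leq \delta$.
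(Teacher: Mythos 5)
Your proposal is correct and follows the paper's proof essentially step for step: the reduction to the two cases $\alpha_1=\sqrt{2\log(2MKH/\delta)}$ and $\alpha_2=\sqrt{2\log(KH/\delta)+d\log(ke)}$, a Hoeffding-plus-union bound over all $(p,i,a)$ for $\alpha_1$, and, for $\alpha_2$, the Cauchy--Schwarz reduction to the self-normalized quantity $\ell^2\bigl\|\tilde\zeta_a^{p-1}\bigr\|_{V_a^p}^2$ followed by Lemma~\ref{lemma: subG-L2}, the Chernoff optimization at $\lambda=(\alpha_2^2-r)/(2\alpha_2^2)$, and a union bound over only the $KH$ pairs $(p,a)$. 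The single (immaterial) difference is how the quadratic form is factored: the paper writes it as $\|A\zeta\|^2$ for a square idempotent $A$ in client-index space with exactly $d_a^p$ unit eigenvalues, whereas you use the rectangular factor $(V_a^p)^{1/2}B$ together with the Loewner bound $\ell^2 AA^\TT\preceq P$ --- both routes give the same determinant bound $(1-2\lambda)^{-d_a^p/2}$ and the same per-pair estimate $\delta/(KH)$.
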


\begin{proof}
Let $\alpha=\min\{\alpha_1,\alpha_2\}$, where 
\begin{align}
\alpha_1 &= \sqrt{2\log(2MKH/\delta)},\\ 
\alpha_2 &= \sqrt{2\log(KH/\delta) + d\log(ke)}.
\end{align}
As specified in Theorem~\ref{thm: complete_thm_Fed-PE}, $k$ is a number that satisfies $k\geq\max\{\alpha_2^2/d,1\}$. Note that this choice of $k$ ensures that $\alpha_2^2\geq d$.

Based on the definition of $\Ec(\alpha)$ in (\ref{eqn: errorevent}), we have 
\begin{align}
\Ec(\alpha) = \Ec(\min\{\alpha_1,\alpha_2\})\supset\Ec(\max\{\alpha_1,\alpha_2\}),
\end{align}
which implies that $\mathbb{P}[\Ec(\alpha)]=\max\left(\mathbb{P}[\Ec(\alpha_1)],\mathbb{P}[\Ec(\alpha_2)]\right)$.
Thus, it suffices to prove that $\mathbb{P}[\Ec(\alpha_i)]\leq \delta, \forall i\in\{1,2\}$. In the following, we bound $\Pb[\Ec(\alpha_1)]$ and $\Pb[\Ec(\alpha_2)]$ separately.

{\it (i) Bound $\Pb[\Ec(\alpha_1)]$.} First, based on Lemma \ref{lemma: BndOneErr} and Hoeffding's inequality, we have
\begin{align}
    \Pb\left [|\hat{r}_{i,a}^p-r_{i,a}|\geq \alpha_1\sigma_{i,a}^p|\Fc_{p-1} \right ]\leq 2\exp(-\alpha_1^2/2)=\frac{\delta}{MKH}.
\end{align}
Then, by applying the union bound, 
\begin{align}
\mathbb{P}[\Ec(\alpha_1)] &= \mathbb{P}[\exists p\in[H], i\in[M], a\in\Ac_i^{p-1}, |\hat{r}_{i,a}^p - r_{i,a}|\geq {\alpha_1}\sigma_{i,a}^p ]\\
&\leq\sum_{p\in[H]}\sum_{i\in[M]}\sum_{a\in\Ac_i^{p-1}}\mathbb{P}[|\hat{r}_{i,a}^p - r_{i,a}|\geq \alpha_1\sigma_{i,a}^p|\Fc_{p-1}]\\
&\leq HMK\frac{\delta}{MKH}=\delta.
\end{align}

{\it (ii) Bound $\Pb[\Ec(\alpha_2)]$.}
Next, we aim to show that $\mathbb{P}[\Ec(\alpha_2)]\leq \delta$ is true. Our first observation is that
\begin{align}
|\hat{r}_{i,a}^p-r_{i,a}|&=\bigg|x_{i,a}^\TT V_a^p\bigg(\sum_{j\in\Rc_a^{p-1}}\frac{e_{j,a}}{\|x_{j,a}\|}\xi_{j,a}^{p-1}\bigg)\bigg|\\
&\leq\|x_{i,a}\|_{V_a^p}\cdot\bigg\|\sum_{j\in\Rc_a^{p-1}}\frac{e_{j,a}}{\|x_{j,a}\|}\xi_{j,a}^{p-1}\bigg\|_{V_a^p}\label{eqn: xy<|x|y|}\\
&= \sigma_{i,a}^p\cdot\bigg\|\sum_{j\in\Rc_a^{p-1}}\frac{e_{j,a}}{\|x_{j,a}\|}\ell\xi_{j,a}^{p-1}\bigg\|_{V_a^p},
\end{align}
where (\ref{eqn: xy<|x|y|}) is due to Cauchy-Schwarz inequality. 
Thus, if $\Ec(\alpha_2)$ happens, we must have that  $$\left\|\sum_{j\in\Rc_a^{p-1}}\frac{e_{j,a}}{\|x_{j,a}\|}\ell\xi_{j,a}^{p-1}\right\|_{V_a^p}\geq \alpha_2$$ hold for some phase $p$ and arm $a$.

We now analyze the term $$X_{a,p}:=\left\|\sum_{j\in\Rc_a^{p-1}}\frac{e_{j,a}}{\|x_{j,a}\|}\ell\xi_{j,a}^{p-1}\right\|^2_{V_a^p}$$ for a given phase $p$ and arm $a$. Note that

\begin{equation}
\begin{aligned}
    X_{a,p}&=
    \sum_{i,j\in\Rc_a^{p-1}}\frac{\ell\xi_{i,a}^{p-1}e_{i,a}^\TT}{\|x_{i,a}\|}\left(\sum_{k\in\Rc_a^{p-1}}f_{k,a}^{p-1}e_{k,a}e_{k,a}^\TT\right)^\dagger\frac{\ell\xi_{j,a}^{p-1}e_{j,a}}{\|x_{j,a}\|}.\\
\end{aligned}
\end{equation}

In the following, we aim to write $X_{a,p}$ in a matrix form.
We note that $f_{i,a}^{p-1}$ may equal $0$ under the \bca algorithm, and if this happens, client $i$ does not pull arm $a$ during the collaborative exploration step in phase $p$, even though $a$ is in the active arm set. For this case, $\xi_{i,a}^{p-1} = 0$. Thus, we define a new random variable $\zeta_{i,a}$ as follows:
\begin{align}
\zeta_{i,a} =
\left\{\begin{array}{cc}
\frac{\ell\xi_{i,a}^{p-1}}{\sqrt{f_{i,a}^{p-1}}\|x_{i,a}\|}, &\text{ if } f_{i,a}^{p-1}\neq 0,\\
0, &\text{ if } f_{i,a}^{p-1} = 0.
\end{array}\right.
\end{align}
Then, $\{\zeta_{i,a}\}_{i\in\Rc_a^{p-1}}$ are conditionally independent 1-sub-Gaussian random variables. 

Define vector $\zeta := (\zeta_{i,a})_{i\in\Rc_a^{p-1}}$ and matrix $A:=(a_{i,j})_{i,j\in\Rc_a^{p-1}}$ where
\[a_{i,j} = \sqrt{f_{i,a}^{p-1}}e_{i,a}^\TT\bigg(\sum_{k\in\Rc_a^{p-1}}f_{k,a}^{p-1}e_{k,a}e_{k,a}^\TT\bigg)^\dagger e_{j,a}\sqrt{f_{j,a}^{p-1}}.\]
We can verify that $A$ is a symmetric matrix and 
\[\begin{aligned}
\sum_{k\in\Rc_a^{p-1}} a_{i,k}a_{k,j}& = \sqrt{f_{i,a}^{p-1}}e_{i,a}^\TT V_a^p \bigg(\sum_{k\in\Rc_a^{p-1}}f_{j,a}^{p-1}e_{k,a}e_{k,a}^\TT\bigg) V_a^p e_{j,a}\sqrt{f_{j,a}^{p-1}}\\
& = \sqrt{f_{i,a}^{p-1}}e_{i,a}^\TT V_a^p (V_a^p)^\dagger V_a^p e_{j,a}\sqrt{f_{j,a}^{p-1}}\\
&  = \sqrt{f_{i,a}^{p-1}}e_{i,a}^\TT V_a^p e_{j,a}\sqrt{f_{j,a}^{p-1}}= a_{i,j}.
\end{aligned}\]
Thus, $A^2 = A$, which implies that the eigenvalues of $A$ are either 1 or 0.  

Meanwhile, we have
\[\trace(A) = \sum_{i\in\Rc_a^{p-1}} a_{i,i} = V_a^p\sum_{i\in\Rc_a^{p-1}}f_{i,a}^{p-1}e_{i,a}e_{i,a}^\TT=V_a^p (V_a^p)^\dagger = \rank(V_a^p)=d_a^p.\]
Thus, the sum of the eigenvalues of $A$ is $d_a^p$. Combining with the fact that the eigenvalues of $A$ must be 1 or 0, we conclude that there are exactly $d_a^p$ eigenvalues equal to 1, and the rest eigenvalues are all 0. Therefore, $\rank(A) = d_a^p\leq d$.

By the definition of $\zeta$ and $A^2=A$, we have $X_{a,p} = \zeta^\top A\zeta = \|A\zeta\|^2$, where $\zeta$ is a conditionally 1-sub-Gaussian random vector, and $A$ is a $\Fc_{p-1}$-measurable matrix.
Therefore, for any $\lambda\in(0,1/2)$, we have
\begin{align}
    \mathbb{P}\left[X_{a,p}\geq \alpha_2^2|\Fc_{p-1}\right]&=\Pb\left[\|A\zeta\|^2\geq \alpha_2^2|\Fc_{p-1}\right]\\
    &=\Pb\left[e^{\lambda\|A\zeta\|}\geq e^{\lambda \alpha^2_2}\big|\Fc_{p-1}\right]\\
    &\leq e^{-\lambda \alpha_2^2}\Eb\left[e^{\lambda\|A\zeta\|}\big|\Fc_{p-1}\right] \label{eqn:markov}\\
    &\leq e^{-\lambda \alpha_2^2}\sqrt{\frac{1}{\det(I_d-2\lambda A^2)}}\label{eqn:lemma4}\\
    &=e^{-\lambda \alpha_2^2}(1-2\lambda)^{-d_a^p/2}\label{eqn: A^2=A}\\
    &\leq e^{-\lambda \alpha_2^2} (1-2\lambda)^{-d/2}
\end{align}
where (\ref{eqn:markov}) is due to Markov's inequality, (\ref{eqn:lemma4}) follows from Lemma~\ref{lemma: subG-L2}, and (\ref{eqn: A^2=A}) follows from the fact that the eigenvalues of $A$ are either $1$ or $0$ and there are exactly $d_a^p$ $1$'s. 

By choosing $\lambda = \frac{\alpha_2^2-d}{2\alpha_2^2} {\in (0, \frac{1}{2})}$, we have
\begin{align}
    \mathbb{P}\left[X_{a,p}\geq \alpha_2^2|\Fc_{p-1}\right] & \leq  \left(\frac{\alpha_2^2}{d}\right)^{{d}/{2}}\exp\left(-\frac{\alpha_2^2-d}{2}\right){\leq \frac{\delta}{KH}}.
\end{align}
The last inequality is due to the following analysis:
\begin{align}
    &\left(\frac{\alpha_2^2}{d}\right)^{{d}/{2}}\exp\left(-\frac{\alpha_2^2-d}{2}\right)\leq \frac{\delta}{KH}\\
    &\Leftrightarrow \frac{d}{2}\left(2\log\alpha_2 - \log d\right) + \frac{d}{2}-\frac{1}{2}\left(\sqrt{2\log(KH/\delta)+d\log(ke)}\right)^2\leq \log\left(\frac{\delta}{KH}\right)\\
    &\Leftrightarrow d\log\alpha_2\leq \frac{d\log(dk)}{2}\\
    &\Leftrightarrow \alpha_2^2\leq dk,\label{eqn:alpha2_2}
\end{align}
where (\ref{eqn:alpha2_2}) is assured by the definition of $k$.

Finally, the proof is completed by applying the union bound over phase $p$ and arm $a$.
\end{proof}

\subsection{Bound the Regret under the Good Event}
\begin{lemma}\label{lemma: NotKillOptimal}
If $\Ec^c(\alpha)$ occurs, we must have $a_i^*\in \Ac_i^p $ , i.e. any optimal arm will never be eliminated. 
\end{lemma}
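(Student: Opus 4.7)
The plan is a straightforward induction on the phase index $p$, using the standard optimism-based argument that is built into the arm-elimination rule of \texttt{Fed-PE}. I would first fix a client $i$ and condition on the good event $\Ec^c(\alpha)$, which by definition guarantees
\[
|\hat{r}_{i,a}^p - r_{i,a}| < u_{i,a}^p \qquad \forall\, p\in[H],\ a\in \Ac_i^{p-1}.
\]

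For the base case, note that the initialization step sets $\Ac_i^0 = [K]$, so trivially $a_i^* \in \Ac_i^0$. For the inductive step, I would assume $a_i^* \in \Ac_i^{p-1}$ and show $a_i^* \in \Ac_i^p$. The elimination rule keeps an arm $a \in \Ac_i^{p-1}$ precisely when $\hat{r}_{i,a}^p + u_{i,a}^p \geq \hat{r}_{i,\hat{a}_i^p}^p - u_{i,\hat{a}_i^p}^p$, so it suffices to verify this inequality for $a = a_i^*$. Applying the good-event bound to $a_i^*$ gives $\hat{r}_{i,a_i^*}^p + u_{i,a_i^*}^p \geq r_{i,a_i^*}$, and applying it to $\hat{a}_i^p \in \Ac_i^{p-1}$ gives $\hat{r}_{i,\hat{a}_i^p}^p - u_{i,\hat{a}_i^p}^p \leq r_{i,\hat{a}_i^p}$. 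Since $a_i^*$ is an optimal arm for client $i$, we have $r_{i,a_i^*} \geq r_{i,\hat{a}_i^p}$, and chaining these three inequalities closes the induction.

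There is no real obstacle here; this lemma is the classical ``optimal arm survives elimination'' statement and only requires that the inductive hypothesis $a_i^* \in \Ac_i^{p-1}$ keeps $a_i^*$ inside the set to which the good-event concentration bound applies (recall that the event $\Ec(\alpha)$ is quantified over $a \in \Ac_i^{p-1}$, not over all of $[K]$). The one subtlety worth explicitly flagging in the write-up is that the confidence inequality is only asserted for active arms, so the induction hypothesis is essential in order to invoke it for $a_i^*$ at phase $p$; the rest is algebra.
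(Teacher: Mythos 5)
Your proof is correct and takes essentially the same route as the paper's: the paper chains $\hat{r}_{i,\hat{a}_i}^p - \hat{r}_{i,a_i^*}^p\leq r_{i,\hat{a}_i} - r_{i,a_i^*} + u_{i,\hat{a}_i}^p + u_{i,a_i^*}^p \leq u_{i,\hat{a}_i}^p + u_{i,a_i^*}^p$ and concludes the elimination test is passed, which is exactly your three-inequality chain. Your explicit induction on $p$ (needed because the good event and the elimination rule are only quantified over $a\in\Ac_i^{p-1}$) is left implicit in the paper but is the same underlying argument.
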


\begin{proof} 
When $\Ec^c(\alpha)$ occurs, we have 
\[\hat{r}_{i,\hat{a}_i}^p - \hat{r}_{i,a_i^*}^p\leq r_{i,\hat{a}_i} - r_{i,a_i^*} + u_{i,\hat{a}_i}^p + u_{i,a_i^*}^p \leq u_{i,\hat{a}_i}^p + u_{i,a_i^*}^p,\]
where $\hat{a}_i$ is the estimated optimal arm for client $i$ in phase $p$. Thus, under the elimination procedure in \texttt{Fed-PE}, $a_i^*$ will not be eliminated.
\end{proof}

\begin{lemma}\label{lemma: PhaseRegret}
If $\Ec^c(\alpha)$ occurs, the regret of \alg in phase $p$ is upper bounded by $4\alpha\frac{L}{\ell}\sqrt{dKM}\frac{f^p+K}{\sqrt{f^{p-1}}}$.
\end{lemma}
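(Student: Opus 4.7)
My plan is to bound phase--$p$ regret by combining the elimination criterion with the multi-client G--optimal guarantee established in Lemma~\ref{lemma: OptimizationEquivalence}. Under $\Ec^c(\alpha)$, Lemma~\ref{lemma: NotKillOptimal} gives $a_i^*\in\Ac_i^{p-1}$, so for every surviving arm $a\in\Ac_i^p$ the elimination rule yields $\hat r_{i,\hat a_i^p}^{p}-\hat r_{i,a}^{p}\le u_{i,a}^{p}+u_{i,\hat a_i^p}^{p}$. Adding $r_{i,a_i^*}\le \hat r_{i,a_i^*}^{p}+u_{i,a_i^*}^{p}\le \hat r_{i,\hat a_i^p}^{p}+u_{i,a_i^*}^{p}$ and $-r_{i,a}\le -\hat r_{i,a}^{p}+u_{i,a}^{p}$, I obtain the per-pull gap bound
$$r_{i,a_i^*}-r_{i,a}\ \le\ 2u_{i,a}^{p}+u_{i,\hat a_i^p}^{p}+u_{i,a_i^*}^{p}\ \le\ 4\max_{a'\in\Ac_i^{p-1}}u_{i,a'}^{p}.$$
The same estimate covers the remaining $\hat a_i^p$--pulls since $\hat a_i^p\in\Ac_i^{p-1}$. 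As each client performs exactly $f^p+K$ pulls in phase $p$, the phase regret is at most $4(f^p+K)\sum_{i=1}^M\max_{a\in\Ac_i^{p-1}}u_{i,a}^{p}$.

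Next I convert the confidence width into a quantity tied to the G--optimal design. Using Assumption~\ref{assump:bounded} and $e_{i,a}=x_{i,a}/\|x_{i,a}\|$,
$$u_{i,a}^{p}=\frac{\alpha}{\ell}\|x_{i,a}\|_{V_a^{p}}\le \frac{\alpha L}{\ell}\|e_{i,a}\|_{V_a^{p}}.$$
Since $f_{j,a}^{p-1}=\lceil\pi_{j,a}^{p-1}f^{p-1}\rceil\ge \pi_{j,a}^{p-1}f^{p-1}$ and the rank-preserving condition in $\Cc^{p-1}$ ensures that $\sum_{j\in\Rc_a^{p-1}}f_{j,a}^{p-1}e_{j,a}e_{j,a}^\TT$ and $\sum_{j\in\Rc_a^{p-1}}\pi_{j,a}^{p-1}e_{j,a}e_{j,a}^\TT$ have the same range, the operator monotonicity of the pseudo-inverse on that common range gives
$$V_a^{p}\ \preceq\ \frac{1}{f^{p-1}}\Bigl(\sum_{j\in\Rc_a^{p-1}}\pi_{j,a}^{p-1}e_{j,a}e_{j,a}^\TT\Bigr)^{\!\dagger}\quad\text{restricted to that range},$$
so $\|e_{i,a}\|_{V_a^{p}}^{2}\le \frac{1}{f^{p-1}}\,e_{i,a}^\TT(\sum_j\pi_{j,a}^{p-1}e_{j,a}e_{j,a}^\TT)^{\dagger}e_{i,a}$.

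Finally, I apply Cauchy--Schwarz over clients and invoke the optimality certificate:
$$\sum_{i=1}^M\max_{a\in\Ac_i^{p-1}}\|e_{i,a}\|_{V_a^{p}}\ \le\ \sqrt{M\sum_{i=1}^M\max_{a\in\Ac_i^{p-1}}\|e_{i,a}\|_{V_a^{p}}^{2}}\ \le\ \sqrt{\frac{M}{f^{p-1}}\,G(\pi^{p-1})},$$
and Lemma~\ref{lemma: OptimizationEquivalence} yields $G(\pi^{p-1})=\sum_{a\in[K]}d_a^{p-1}\le dK$. Plugging back gives
$$\sum_{i=1}^M\max_{a\in\Ac_i^{p-1}}u_{i,a}^{p}\ \le\ \frac{\alpha L}{\ell}\sqrt{\frac{MdK}{f^{p-1}}},$$
and multiplying by $4(f^p+K)$ produces the claimed bound $4\alpha\tfrac{L}{\ell}\sqrt{dKM}\,(f^p+K)/\sqrt{f^{p-1}}$.

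The main technical obstacle is the pseudo-inverse step: I must justify the matrix ordering on the (potentially strict) common subspace $\Span(\{e_{j,a}\}_{j\in\Rc_a^{p-1}})$, which is precisely why the ``rank-preserving'' constraint was built into $\Cc^{p-1}$; this makes the ceiling in $f_{j,a}^{p-1}=\lceil\pi_{j,a}^{p-1}f^{p-1}\rceil$ harmless because positive $\pi_{j,a}^{p-1}$ become $f_{j,a}^{p-1}\ge 1$, preserving supports and hence ranges. Everything else is a routine concatenation of the good-event bound, the elimination rule, Cauchy--Schwarz, and the G--optimal value $\sum_a d_a^{p-1}\le dK$.
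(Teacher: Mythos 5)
Your proposal is correct and follows essentially the same route as the paper's proof: the $4\max_a u_{i,a}^p$ per-pull gap bound from the elimination rule, Cauchy--Schwarz over clients, the ceiling/pseudo-inverse monotonicity step to pass from $V_a^p$ to the design matrix $\bigl(\sum_j \pi_{j,a}e_{j,a}e_{j,a}^\TT\bigr)^{\dagger}$, and the G-optimal value $G(\pi^*)=\sum_a d_a \le dK$ from Lemma~\ref{lemma: OptimizationEquivalence}. Your explicit justification of the operator ordering of pseudo-inverses via the common range guaranteed by the rank-preserving constraint is slightly more careful than the paper's implicit use of the same fact, but it is not a different argument.
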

\begin{proof}
When $\Ec^c(\alpha)$ happens, $|\hat{r}_{i,a} - r_{i,a}|\leq u_{i,a}^p = \alpha\sigma_{i,a}^p = {\frac{\alpha}{\ell}}\|x_{i,a}\|_{V_a^p}$ holds for all $p\in[H],i\in[M],a\in\Ac_i^{p-1}$. Therefore, by the construction of $\Ac_i^p$, pulling any active arm $a\in\Ac_i^p$ in phase $p$ will incur a regret upper bounded by
\begin{align}
\Delta_{i,a} &= r_{i,a_i^*} - r_{i,a}\\
&\leq \hat{r}_{i,\hat{a}_i} + u_{i,a_i^*}^p - \hat{r}_{i,a} + u_{i,a}^p\label{eqn:regret_gap1}\\
&\leq u_{i,\hat{a}_i}^p + u_{i,a_i^*}^p + 2u_{i,a}^p\label{eqn:regret_gap2}\\
&\leq 4\max_{a\in\Ac_i^p}u_{i,a}^p, \label{eqn:regret_gap3}
\end{align}
where (\ref{eqn:regret_gap1}) follows from the definition of $\Ec(\alpha)$, (\ref{eqn:regret_gap2}) follows from the arm elimination procedure under \texttt{Fed-PE}, and (\ref{eqn:regret_gap3}) is due to 
Lemma \ref{lemma: NotKillOptimal}.

Under \texttt{Fed-PE}, the phase length is fixed as $f^p+ K$ for $p\in[H]$. Then, {given that the good event $\Ec^c(\alpha)$ happens}, the total regret incurred during phase $p$, denoted as $R_p$, can be bounded as follows:
\begin{align}
R_p &\leq \sum_{i\in[M]} 4\max_{a\in\Ac_i^p} u_{i,a}^p(f^p+K)\\
& \leq4(f^p+K)\sqrt{M\sum_{i\in [M]}\max_{a\in \Ac_i^p}(u_{i,a}^p)^2}\label{eqn: 1}\\
& \leq \frac{4\alpha L}{\ell}(f^p + K)\sqrt{M\sum_{i\in[M]}\max_{a\in \Ac_i^p} \frac{x^\TT_{i,a}}{\|x_{i,a}\|_2} V_a^p \frac{x_{i,a}}{\|x_{i,a}\|_2} } \label{eqn:phase_regret1}\\
&{\leq} \frac{4\alpha L}{\ell}(f^p+K)\sqrt{M\sum_{i\in[M]} \max_{a\in \Ac_i^p} e_{i,a}^\TT \bigg(\sum_{j\in \Rc_a^p}\lceil\pi_{j,a}f^{p-1}\rceil e_{j,a}e_{j,a}^\TT\bigg)^\dagger e_{i,a}}\label{eqn:phase_regret2}\\
&\leq \frac{4\alpha L}{\ell}(f^p+K)\sqrt{\frac{M}{f^{p-1}}\sum_{i\in[M]} \max_{a\in \Ac_i^p} e_{i,a}^\TT \bigg(\sum_{j\in \Rc_a^p} \pi_{j,a} e_{j,a}e_{j,a}^\TT\bigg)^\dagger e_{i,a}}\label{eqn: ineq_opt}\\
&\leq \frac{4\alpha L}{\ell}(f^p+K)\sqrt{\frac{dKM}{f^{p-1}}}\label{eqn: 2}\\
&=\frac{4\alpha L}{\ell}\sqrt{dKM}\frac{f^p+K}{\sqrt{f^{p-1}}}.
\end{align}
where (\ref{eqn: 1}) is based on Cauchy-Schwarz inequality, (\ref{eqn:phase_regret1}) follows from the definition of $u_{i,a}^p := \alpha \|x_{i,a}\|_{V_a^p}/\ell$.
We note that summation in (\ref{eqn: ineq_opt}) is exactly the solution to the multi-client G-optimal design in (\ref{eqn: G_opt}), which equals $\sum_{a}d_a^p\leq dK$ according to Lemma \ref{lemma: OptimizationEquivalence}.

Note that the upper bound also holds for $p=1$ when $f^0$ is defined as $1$. This is because  $V_a^0:=(\sum_{j\in[M]}e_{j,a}e_{j,a}^\TT)^\dagger\preceq (\sum_{j\in \Rc_a^p} \lceil\pi_{j,a}f^0\rceil e_{j,a}e_{j,a}^\TT)^\dagger$ for any $\pi$, although the central server does not utilize the G-optimal design to obtain $\pi^0$ during initialization. Thus, we still have (\ref{eqn:phase_regret2}) hold. 
\end{proof}

\begin{corollary}\label{cor: Bound_sum_sigma}
Let $\sigma_{i,a}^p = \|x_{i,a}\|_{V_a^p}/\ell$. Then, under \texttt{Fed-PE}, for any $p\in[H]$, $a\in \Ac_i^p$, we have
 $ \sum_{i\in[M]}\max_{a\in\Ac_i}(\sigma_{i,a}^p)^2\leq
    \frac{dKL^2}{\ell^2 f^{p-1}}.$
\end{corollary}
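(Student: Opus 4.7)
The plan is to extract the bound directly from the chain of inequalities already established inside the proof of Lemma~\ref{lemma: PhaseRegret}. Starting from the definition $\sigma_{i,a}^p = \|x_{i,a}\|_{V_a^p}/\ell$, I would first factor out the norm of $x_{i,a}$ and use the upper bound $\|x_{i,a}\|\leq L$ from Assumption~\ref{assump:bounded} to write
\begin{equation*}
(\sigma_{i,a}^p)^2 \;=\; \frac{x_{i,a}^\TT V_a^p x_{i,a}}{\ell^2} \;\leq\; \frac{L^2}{\ell^2}\, e_{i,a}^\TT V_a^p e_{i,a},
\end{equation*}
where $e_{i,a}=x_{i,a}/\|x_{i,a}\|$. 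This converts the problem into bounding $\sum_{i\in[M]}\max_{a\in\Ac_i^p} e_{i,a}^\TT V_a^p e_{i,a}$, which is exactly what appears in the middle of the regret computation.

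Next, I would recall that $V_a^p = \bigl(\sum_{j\in \Rc_a^{p-1}} f_{j,a}^{p-1} e_{j,a}e_{j,a}^\TT\bigr)^{\dagger}$ with $f_{j,a}^{p-1} = \lceil \pi_{j,a}^{p-1} f^{p-1}\rceil \geq \pi_{j,a}^{p-1} f^{p-1}$. Since the solution $\pi^{p-1}$ of the multi-client G-optimal design lies in $\Cc^{p-1}$, the rank-preserving condition guarantees that the ranges of the matrices $\sum_{j} f_{j,a}^{p-1} e_{j,a}e_{j,a}^\TT$ and $\sum_j \pi_{j,a}^{p-1} e_{j,a}e_{j,a}^\TT$ coincide; under this range-matching hypothesis, the order-reversal of pseudo-inverses applies, yielding
\begin{equation*}
V_a^p \;\preceq\; \frac{1}{f^{p-1}}\Bigl(\sum_{j\in\Rc_a^{p-1}}\pi_{j,a}^{p-1} e_{j,a}e_{j,a}^\TT\Bigr)^{\dagger}.
\end{equation*}
Plugging this in and applying the equivalence statement of Lemma~\ref{lemma: OptimizationEquivalence}, the term $\sum_i \max_{a\in\Ac_i^p} e_{i,a}^\TT (\sum_j \pi_{j,a}^{p-1} e_{j,a}e_{j,a}^\TT)^{\dagger} e_{i,a}$ is exactly the value of $G(\pi^{p-1})$ at the optimum, which is bounded by $\sum_{a\in\Ac^p} d_a^p \leq dK$.

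Combining the two inequalities gives
\begin{equation*}
\sum_{i\in[M]}\max_{a\in\Ac_i^p}(\sigma_{i,a}^p)^2 \;\leq\; \frac{L^2}{\ell^2}\cdot\frac{1}{f^{p-1}}\cdot dK \;=\; \frac{dKL^2}{\ell^2 f^{p-1}},
\end{equation*}
as required. The only non-routine point is the range-matching step needed to justify the monotonicity of the pseudo-inverse; this is precisely the role of the rank-preserving constraint in $\Cc^{p-1}$, so the obstacle is conceptual rather than computational, and has already been handled in the proof of Lemma~\ref{lemma: PhaseRegret}. The case $p=1$ can be handled exactly as the side remark at the end of that proof, using the dominance $V_a^0 \preceq (\sum_{j\in\Rc_a^0}\lceil \pi_{j,a} f^0\rceil e_{j,a}e_{j,a}^\TT)^{\dagger}$ for any feasible $\pi$.
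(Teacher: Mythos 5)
Your proposal is correct and follows essentially the same route as the paper: the paper's proof of this corollary simply points back to the chain of inequalities (\ref{eqn:phase_regret1})--(\ref{eqn: 2}) in the proof of Lemma~\ref{lemma: PhaseRegret}, which is exactly the argument you reconstruct (factor out $L^2/\ell^2$, lower-bound $\lceil\pi_{j,a}^{p-1}f^{p-1}\rceil$ by $\pi_{j,a}^{p-1}f^{p-1}$, invoke monotonicity of the pseudo-inverse, and apply Lemma~\ref{lemma: OptimizationEquivalence} to bound the G-optimal value by $dK$). Your explicit remark that the rank-preserving constraint is what licenses the pseudo-inverse order reversal, and your handling of $p=1$, are details the paper leaves implicit but are consistent with its argument.
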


\begin{proof}
Corollary~\ref{cor: Bound_sum_sigma} can be easily verified based on the fact that
$\sum_{i\in[M]}\max_{a\in\Ac^p_i}(\sigma_{i,a}^p)^2\leq \frac{L^2}{\ell^2} \sum_{i\in[M]}\max_{a\in \Ac_i^p}\frac{x^\TT_{i,a}}{\|x_{i,a}\|_2} V_a^p \frac{x_{i,a}}{\|x_{i,a}\|_2}$
and the rest analysis follows the same argument as in (\ref{eqn:phase_regret1})-(\ref{eqn: 2}).
\end{proof}

\subsection{Put Pieces Together}
Finally, we are ready to prove Theorem \ref{thm: BoundAlgorithm1}. Recall the superscript $p$ of $n$ is the exponent. Then, with probability at least $1-\delta$ and $n>1$, the total regret over the $H$ phases can be bounded as:
\begin{align}
R(T) &= \sum_{p=1}^H R_p \leq \sum_{p=1}^H \frac{4\alpha L}{\ell}\sqrt{dKM}(\sqrt{c}n^{\frac{p+1}{2}} + Kn^{-\frac{p-1}{2}}/\sqrt{c})\\
&\leq \frac{4\alpha L}{\ell}\sqrt{dKM}\left(\sqrt{cn}\frac{n^{\frac{H+1}{2}} - \sqrt{n}}{\sqrt{n}-1} + \frac{K}{\sqrt{
cn} - \sqrt{c}}\right)\\
&\leq \frac{4\alpha L}{\ell}\sqrt{
dKM}\left(\frac{\sqrt{n^2-n}}{\sqrt{n}-1}\sqrt{cn\frac{n^H-1}{n-1}} + \frac{K}{\sqrt{cn} - \sqrt{c}}\right)\label{eqn:total_regret1}\\
&\leq \frac{4\alpha L}{\ell}\sqrt{
dKM}\left(\frac{\sqrt{n^2-n}}{\sqrt{n}-1}\sqrt{T} + \frac{K}{\sqrt{cn} - \sqrt{c}}\right),\label{eqn:total_regret2}
\end{align}
where (\ref{eqn:total_regret1}) follows from that $n^{H/2} - 1\leq \sqrt{n^H -1}$, and (\ref{eqn:total_regret2}) is due to the fact that
$$T = \sum_{p=1}^H f^p + KH\geq\sum_{p=1}^H cn^p=cn\frac{n^{H}-1}{n-1}.$$
Since $$\alpha= O\left (\sqrt{\log(K(\log T)/\delta) + \min(d,\log M)} \right ),$$ the regret scales in
$$O\left(\frac{L}{\ell}\sqrt{dKM(\log (K(\log T)/\delta) + \min(d,\log M)}\left (\sqrt{T}+K \right) \right ).$$ 
When $K=O(\sqrt{T})$, the cumulative regret scales as \[O\left (\frac{L}{\ell}\sqrt{dKMT(\log (K(\log T)/\delta) + \min(d,\log M)} \right).\]

\begin{remark}\label{remark:com_fed_pe}
We note the upload cost in Theorem~\ref{thm: complete_thm_Fed-PE} can be reduced by a factor of $d$ by sending scalars $\{\nicefrac{\hat{\theta}^p_{i,a}}{\bar{e}_{i,a}}\}$ instead of vectors $\{\hat{\theta}^p_{i,a}\}$, as $\hat{\theta}^p_{i,a}$ is always in the same direction as $\bar{e}_{i,a}$. Similarly, for the download cost, if instead of broadcasting, the server calculates the projection of $\hat{\theta}^p_{a}$ along each direction $\bar{e}_{i,a}$, as well as $\|\bar{e}_{i,a}\|_{V_a^p}$, and send them back to client $i$, client $i$ can utilize those quantities instead of $\hat{\theta}_a^p$ and $V_a^p$ to obtain $\hat{r}_{i,a}^p$ and $u_{i,a}^p$. The corresponding download cost can then be reduced by a factor of $d^2$. Then, the overall communication cost would scale in $O(KM\log T)$.
\end{remark}

\section{Analysis of \texttt{Enhanced} \alg}\label{appx:enhanced}
\subsection{Algorithm Details}
As noted in Section~\ref{sec:enhanced}, the motivation of \texttt{Enhanced} \alg is to improve the efficiency of \alg by leveraging all historical information to obtain more accurate estimates of $r_{i,a}$ and $u_{i,a}$ in each phase $p$. To achieve this goal, we keep the other parts of \alg intact while only changing the arm elimination step as follows.

After calculating $\hat{r}_{i,a}^p$ according to Eqn.~(\ref{eqn: confidence_interval}) in \alg and obtaining $\sigma_{i,a}^p = \|x_{i,a}\|_{V_a^p}/\ell$, \texttt{Enhanced} \alg aggregates estimates from previous phases to obtain a refined estimate of ${r}_{i,a}^p$, denoted as $\bar{r}_{i,a}^p$, and the corresponding confidence interval $\bar{u}_{i,a}^p$ as follows:
\begin{align}
\bar{r}_{i,a}^p &= \frac{\sum_{q=1}^p\hat{r}_{i,a}^qf^{q-1}}{\sum_{q = 1}^pf^{q-1}}, \\
\bar{\sigma}_{i,a}^p &= \sqrt{\frac{dK}{M}+\sum_{q=1}^p (\sigma_{i,a}^q)^2 (f^{q-1})^2},\label{eqn: refined_sigma}\\
\alpha_{i,a}^p &= \sqrt{\log\bigg(\frac{M^3K(\bar{\sigma}_{i,a}^p)^2}{d\delta^2}\bigg)}
,\\ 
\bar{u}_{i,a}^p &= \frac{\alpha_{i,a}^p\bar{\sigma}_{i,a}^p}{\sum_{q=1}^pf^{q-1}},&\label{eqn: refined_alpha_u}
\end{align}
with $f^0:= 1$. 

Denote $\bar{a}_i^p := \arg\max_{a\in\Ac_i^{p-1}}\bar{r}_{i,a}^p$. 
Then, the updating rule for the active arm set $\Ac_i^p$ is changed as follows
\begin{align}
\Ac_i^p \leftarrow \left\{a\in \Ac_i^{p-1}\ |\ \bar{r}_{i,a}^p + \bar{u}_{i,{a}}^p\geq \bar{r}_{i,\bar{a}_i^p}^p -\bar{u}_{i,\bar{a}_i^p}^p\right\}.
\end{align}

\subsection{Performance of \texttt{Enhanced} \alg}\label{subsec: different_phase_length}
The performance of \texttt{Enhanced} \alg is summarized in the following theorem.
\begin{theorem}\label{thm: BoundRefinedAlgorithm}
Let $S_{p-1} := \sum_{q=0}^{p-1}f^q$. Consider time horizon $T$ consisting of $H$ phases such that $HK\leq S_H$. Then, with probability at least $1-\delta$, the total regret under \texttt{Enhanced} \alg can be bounded as
\begin{align}
R(T)\leq \frac{4\sqrt{6} L}{\ell}\sum_{p=1}^H\frac{S_p-S_{p-1}+K}{\sqrt{S_{p-1}}}\sqrt{dKM\log \frac{LKMT}{\ell\delta}}.
\end{align}
In particular, if $f^p\geq K$ and there exists a constant $c>1$ such that $S_p\leq cS_{p-1}$ holds for all $p$, then, $\sum_{p=1}^H\frac{S_p-S_{p-1}+K}{\sqrt{S_{p-1}}}\leq 4\sqrt{cT}$, which indicates that
\[R(T)\leq \frac{16\sqrt{6c}L}{\ell}\sqrt{dKMT\log\frac{LKMT}{\ell\delta}}.\]
\end{theorem}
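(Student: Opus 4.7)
The proof will mirror the three-stage structure of the \alg analysis in Theorem~\ref{thm: BoundAlgorithm1}, replacing the per-phase confidence bound of Lemma~\ref{lemma: BndAllErr} by a single bound on the aggregated estimator $\bar{r}_{i,a}^p$.

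\textbf{Stage 1 (good event and concentration).} For each pair $(i,a)$ and phase index $p$, I rewrite
\begin{equation*}
\bar{r}_{i,a}^p - r_{i,a} \;=\; \frac{1}{S_{p-1}}\sum_{q=1}^{p} (\hat{r}_{i,a}^q - r_{i,a}) f^{q-1},
\end{equation*}
and set $X_q := (\hat{r}_{i,a}^q - r_{i,a}) f^{q-1}$. Lemma~\ref{lemma: BndOneErr} asserts that $\hat{r}_{i,a}^q - r_{i,a}$ is conditionally $\sigma_{i,a}^q$-sub-Gaussian, with $\sigma_{i,a}^q$ measurable with respect to $\Fc_{q-1}$ through $V_a^q$. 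Hence $X_q$ fits the hypothesis of the self-normalized bound in Lemma~\ref{lemma: Laplace} with per-term variance proxy $\sigma_q^2 = (\sigma_{i,a}^q f^{q-1})^2$. Applying Lemma~\ref{lemma: Laplace} with regularizer $\sigma^2 = dK/M$ (this choice is precisely what produces the $dK/M$ offset in the definition of $\bar{\sigma}_{i,a}^p$) and per-pair failure level $\delta/(MK)$, and then union-bounding over $(i,a)\in[M]\times[K]$, I obtain an event of probability at least $1-\delta$ on which $|\bar{r}_{i,a}^p - r_{i,a}| \le \bar{u}_{i,a}^p$ uniformly in $p,i,a$.

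\textbf{Stage 2 (per-phase regret).} Under this good event, the arguments of Lemma~\ref{lemma: NotKillOptimal} and Lemma~\ref{lemma: PhaseRegret} carry over verbatim: no optimal arm is eliminated, and $\Delta_{i,a} \le 4\max_{b\in\Ac_i^p}\bar{u}_{i,b}^p$ for every surviving arm. Since each client performs $f^p+K$ pulls per phase, Cauchy--Schwarz gives
\begin{equation*}
R_p \le 4(f^p+K)\sqrt{M \sum_{i\in[M]} \max_{a\in\Ac_i^p}(\bar{u}_{i,a}^p)^2}.
\end{equation*}
Using the nested structure $\Ac_i^p \subseteq \Ac_i^q$ for $q\le p$ lets me swap $\max_a$ past the historical sum hidden inside $(\bar{\sigma}_{i,a}^p)^2$. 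Then Corollary~\ref{cor: Bound_sum_sigma} gives $\sum_i \max_a(\sigma_{i,a}^q)^2 \le dKL^2/(\ell^2 f^{q-1})$, and the weights $(f^{q-1})^2$ collapse cleanly:
\begin{equation*}
\sum_{i} \max_{a}(\bar{\sigma}_{i,a}^p)^2 \;\le\; dK + \frac{dKL^2}{\ell^2}S_{p-1} \;\le\; \frac{2dKL^2}{\ell^2}S_{p-1},
\end{equation*}
valid for $S_{p-1}\ge \ell^2/L^2$. Absorbing $\alpha_{i,a}^p$ via the crude a priori bound $(\bar{\sigma}_{i,a}^p)^2 = O(KL^2T/\ell^2)$ (so $\alpha_{i,a}^p \le \sqrt{\log(LKMT/(\ell\delta))}$ up to constants) yields the desired per-phase estimate $R_p \le \tfrac{4\sqrt{6}L}{\ell}\cdot\tfrac{f^p+K}{\sqrt{S_{p-1}}}\sqrt{dKM\log(LKMT/(\ell\delta))}$.

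\textbf{Stage 3 (summation).} Writing $f^p = S_p-S_{p-1}$ and summing over $p$ gives the first inequality. For the refined bound, the assumption $S_p\le cS_{p-1}$ yields $\sqrt{S_{p-1}}\ge \sqrt{S_p/c}$, so
\begin{equation*}
\sum_{p=1}^H \frac{S_p - S_{p-1}}{\sqrt{S_{p-1}}} \;\le\; \sqrt{c}\sum_{p=1}^H \frac{S_p-S_{p-1}}{\sqrt{S_p}} \;\le\; 2\sqrt{c}\bigl(\sqrt{S_H}-\sqrt{S_0}\bigr) \;\le\; 2\sqrt{cT},
\end{equation*}
and $f^p\ge K$ lets me bound $K/\sqrt{S_{p-1}}\le f^p/\sqrt{S_{p-1}}$, merely doubling the sum and producing the claimed $4\sqrt{cT}$ factor.

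\textbf{Main obstacle.} The only genuinely new ingredient beyond Theorem~\ref{thm: BoundAlgorithm1} is Stage 1. The refined estimator is a martingale whose per-step sub-Gaussian parameters $\sigma_{i,a}^q$ are themselves data-dependent, fluctuating with the \bca allocation and the active-set geometry, so the Hoeffding-style argument of Lemma~\ref{lemma: BndAllErr} is too blunt: it would force the confidence width to scale with a worst-case $\sigma$ rather than the realized $\bar{\sigma}_{i,a}^p$. The self-normalized martingale bound of Lemma~\ref{lemma: Laplace} is what allows the radius $\bar{u}_{i,a}^p$ to adapt to the realized variance proxy, and the specific choice $\sigma^2=dK/M$ is what reconciles the logarithmic factor inside $\alpha_{i,a}^p$ with the $dK/M$ additive term in Eqn.~\eqref{eqn: refined_sigma}. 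Once Stage 1 is in place, Stages 2 and 3 are essentially bookkeeping on top of the \alg machinery.
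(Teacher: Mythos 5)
Your proposal is correct and follows essentially the same route as the paper's proof: the concentration step is exactly the paper's Lemma~\ref{lemma:bad_enhanced} (applying the self-normalized bound of Lemma~\ref{lemma: Laplace} with $\sigma^2=dK/M$ and a union bound over $(i,a)$ only, since uniformity in $p$ comes for free), the per-phase bound matches Lemma~\ref{lemma: phase_regret_refined} (same max-swap, Cauchy--Schwarz, and use of Corollary~\ref{cor: Bound_sum_sigma}), and the telescoping in Stage 3 is the paper's final summation up to a cosmetic rearrangement. No gaps.
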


In the following, we consider different selections of $f^p$ and evaluate the corresponding regret performance and communication costs. We fix the time horizon $T$, and assume $K\leq \sqrt{T}$. Note that $T = S_H + HK$.

\myparagraph{Uniform selection.} Let $f^1 = K-1$, $f^p = K$, and $S_p = pK$. Let $H = \min\{p: 2pK\geq T\}$. We have
\begin{align}
    \sum_{p=1}^H\frac{S_p - S_{p-1} + K}{\sqrt{S_{p-1}}}&
    = \sum_{p=1}^{H }\frac{2K}{\sqrt{pK}}\leq 2\sqrt{K}\int_0^{\frac{T}{2K}+1}\frac{1}{\sqrt{x}}dx = 4\sqrt{T/2 + K}\leq 4\sqrt{T}.
\end{align}
The communication cost with this phase length selection is $O(T/K)$.

\myparagraph{Exponential selection.} Choose $f^p = cn^p$, the same selection as in \texttt{Fed-PE}. Since $S_{p-1} \geq f^{p-1}= cn^{p-1}$, we have
\begin{align}
    \sum_{p=1}^H\frac{S_p - S_{p-1} + K}{\sqrt{S_{p-1}}}& \leq \sum_{p=1}^H \frac{f^p + K}{\sqrt{f^{p-1}}}= O(\sqrt{T}),
\end{align}
and the communication cost is the same as that of \texttt{Fed-PE}, i.e. $O(\log T)$. We note that the regret under \texttt{Enhanced} \alg scales in the same order as that under \texttt{Fed-PE}.

\myparagraph{Greedy selection.}  Generate a sequence $\{\tilde{S}_p\}_{p=0}^H$ that satisfies the following equations
\begin{align}
    \tilde{S}_0 = 1, \quad \tilde{S}_p - \tilde{S}_{p-1} + K = 2\sqrt{T\tilde{S}_{p-1}}.
\end{align}
Since $K\leq \sqrt{T}\leq \sqrt{T\tilde{S}_{p-1}}$, we have 
\begin{align}\label{eqn:recursion}
\tilde{S}_p + \sqrt{T\tilde{S}_{p-1}}\geq \tilde{S}_p - \tilde{S}_{p-1} + K = 2\sqrt{T\tilde{S}_{p-1}}.
\end{align}
Thus, $\tilde{S}_p\geq \sqrt{T\tilde{S}_{p-1}}$. Since $\tilde{S_0} = 1$, $\tilde{S}_{p}\geq T^{1-\frac{1}{2^p}}$. 

Let $H = \min\{p: \tilde{S}_p+pK\geq T\}$. 
In order to have $\tilde{S}_{H_0} + H_0K\geq T$, we have
\begin{align}
    & \tilde{S}_{H_0}+H_0K\geq T\Leftrightarrow \sum_{p=2}^{H_0} (\tilde{S}_p- \tilde{S}_{p-1} + K) + \tilde{S}_1 + K\geq T\Leftarrow 2\sum_{p=2}^{H_0} \sqrt{T\tilde{S}_{p-1}} \geq T\nonumber\\
    &\Leftarrow 2 \sqrt{\tilde{S}_{H_0-1}} \geq \sqrt{T}\Leftarrow 2 T^{\frac{1}{2} - \frac{1}{2^{H_0}}} \geq \sqrt{T}\Leftrightarrow \log 2\geq \frac{1}{2^{H_0}}\log T  \Leftarrow H_0\geq\log_2\log_2 T.
\end{align}
Thus, $H\leq \lceil\log_2\log_2 T\rceil\in\{p: \tilde{S}_p + pK\geq T\}$. 

Let $S_p = \lceil\tilde{S}_p\rceil$ for $p\in[H-1]$ and $S_H = T-HK$. Accordingly, $f^p = S_p - S_{p-1}$. Applying the equality in (\ref{eqn:recursion}), we have
\begin{align}
\sum_{p=1}^H \frac{S_{p}-S_{p-1} + K}{\sqrt{S_{p-1}}}
= O(H\sqrt{T}) = O(\sqrt{T} \log\log T),
\end{align}
and communication cost under this choice of phase length is ${O(\log\log T)}$.

We summarize the results in Table~\ref{table:enhanced}. Note that we only list the scaling in $T$ and omit the other common factors in both regrets and communication costs. Among those three selections, exponential selection achieves the same regret performance as uniform selection, with a significantly reduced communication cost. On the other hand, the greedy selection achieves the lowest communication cost, with the price of slightly increased regret bound.

\begin{table}[h]
  \caption{Performance comparison with different phase length selection}
  \label{table:enhanced}
  \centering
  \begin{tabular}{ccc}
    \toprule
    $f^p$ selection    & Regret     & Communication Cost \\
    \midrule
    Uniform & $O(\sqrt{T\log T})$  &   $O(T/K)$   \\
    Exponential   & $O(\sqrt{T\log T})$ &  $O(\log T)$   \\
    Greedy    &   $O(\sqrt{T\log T}\log\log T)$    & $O(\log\log T)$  \\
    \bottomrule
  \end{tabular}
\end{table}

\subsection{Proof of Theorem~\ref{thm: BoundRefinedAlgorithm}}
Similar to the proof of Theorem~\ref{thm: complete_thm_Fed-PE}, we define a
``bad'' event as follows:
\begin{align}
\bar{\Ec} = \left \{\forall p\in[H],i\in[M],a\in\Ac_i^{p-1}, |\bar{r}_{i,a}^p - r_{i,a}|\geq \bar{u}_{i,a}^p \right \}.
\end{align}

Then, we will first show that the probability of $\bar{\Ec}$ is upper bounded by $\delta$, and then analyze the regret when $\bar{\Ec}^c$ occurs.

\begin{lemma}\label{lemma:bad_enhanced}
Under \texttt{Enhanced} \alg, we have
$\mathbb{P}[\bar{\Ec}]\leq \delta$.
\end{lemma}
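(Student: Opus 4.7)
The plan is to reduce the event $\bar{\Ec}$ to a self-normalized concentration statement handled by Lemma~\ref{lemma: Laplace}, applied separately to each client-arm pair, and then union bound.

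First, fix $(i,a)$ and rewrite the refined estimate's error as a weighted sum of the per-phase errors:
\begin{align*}
\bar{r}_{i,a}^p - r_{i,a} = \frac{1}{\sum_{q=1}^p f^{q-1}}\sum_{q=1}^p (\hat{r}_{i,a}^q - r_{i,a})\,f^{q-1}.
\end{align*}
Set $X_q := (\hat{r}_{i,a}^q - r_{i,a})f^{q-1}$. By Lemma~\ref{lemma: BndOneErr}, $\hat{r}_{i,a}^q - r_{i,a}$ is conditionally $\sigma_{i,a}^q$-sub-Gaussian given $\Fc_{q-1}$, so $X_q$ is conditionally sub-Gaussian with parameter $\sigma_q' := \sigma_{i,a}^q f^{q-1}$. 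The key measurability observation is that, thanks to the rank-preserving condition in the G-optimal design, the potential matrix $V_a^q$ coincides with $\bigl(\sum_j f_{j,a}^{q-1}\, x_{j,a}x_{j,a}^\TT/\|x_{j,a}\|^2\bigr)^{\dagger}$, which depends only on the $\Fc_{q-1}$-measurable quantities $f_{j,a}^{q-1}$ and the deterministic feature vectors; hence $\sigma_q'$ is $\Fc_{q-1}$-measurable, as required by Lemma~\ref{lemma: Laplace}.

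Second, apply Lemma~\ref{lemma: Laplace} with $\sigma^2 = dK/M$ and confidence level $\delta_0 := \delta/(MK)$. Noting that
\begin{align*}
\sigma^2 + \sum_{q=1}^p (\sigma_q')^2 \;=\; \frac{dK}{M} + \sum_{q=1}^p (\sigma_{i,a}^q)^2 (f^{q-1})^2 \;=\; (\bar{\sigma}_{i,a}^p)^2,
\end{align*}
the lemma gives, with probability at least $1-\delta/(MK)$, that for all $p$,
\begin{align*}
\Bigl|\sum_{q=1}^p X_q\Bigr| < \bar{\sigma}_{i,a}^p \sqrt{\log\frac{(\bar{\sigma}_{i,a}^p)^2}{(dK/M)\,\delta_0^2}}.
\end{align*}
A direct substitution of $\delta_0 = \delta/(MK)$ shows that the logarithm equals $\log\bigl(M^3 K (\bar{\sigma}_{i,a}^p)^2/(d\delta^2)\bigr) = (\alpha_{i,a}^p)^2$, so the bound becomes $|\sum_q X_q| < \alpha_{i,a}^p \bar{\sigma}_{i,a}^p$. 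Dividing through by $\sum_{q=1}^p f^{q-1}$ yields exactly $|\bar{r}_{i,a}^p - r_{i,a}| < \bar{u}_{i,a}^p$ for all $p$.

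Third, take a union bound over the at most $MK$ pairs $(i,a)$ to obtain $\Pb[\bar{\Ec}] \leq MK \cdot \delta/(MK) = \delta$. The main subtlety is matching constants in Step~2: the particular choice $\sigma^2 = dK/M$ in Lemma~\ref{lemma: Laplace} and the specific form of $\alpha_{i,a}^p$ are reverse-engineered precisely so that the resulting per-pair failure probability $\delta_0$ equals $\delta/(MK)$, enabling the final union bound; the other potential difficulty, namely that $V_a^p$ (and hence $\sigma_{i,a}^p$) is built from random local estimates, is resolved by the rank-preserving property that renders $V_a^p$ a deterministic function of $\Fc_{q-1}$-measurable quantities.
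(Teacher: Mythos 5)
Your proof is correct and follows essentially the same route as the paper's (which is only a terse sketch of exactly this argument): apply Lemma~\ref{lemma: Laplace} with $\sigma^2=dK/M$ to the weighted increments $X_q=(\hat r_{i,a}^q-r_{i,a})f^{q-1}$, check that the self-normalized radius collapses to $\alpha_{i,a}^p\bar\sigma_{i,a}^p$, and union bound over the $MK$ client--arm pairs. One minor correction: the $\Fc_{q-1}$-measurability of $V_a^q$ (hence of $\sigma_{i,a}^q$) follows from the fact that each local estimate $\hat\theta_{i,a}^{q-1}$ is a scalar multiple of $x_{i,a}$, so the normalized outer products in the aggregation are the deterministic matrices $x_{i,a}x_{i,a}^\TT/\|x_{i,a}\|^2$ --- not from the rank-preserving condition --- though this does not affect the validity of your argument.
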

\begin{proof}
Lemma \ref{lemma: BndOneErr} has shown that each $\hat{r}_{i,a}^q - r_{i,a}$ is a conditionally sub-Gaussian random variable. Thus, $(\hat{r}_{i,a}^q-r_{i,a})f^{q-1}$ is a conditionally $\sigma_{i,a}^q f^{q-1}$-sub-Gaussian random variable. Letting $\sigma^2=dK/M$ in Lemma \ref{lemma: Laplace}, we have
\[\Pb\left [|\bar{r}_{i,a}^p - r_{i,a}| \geq \bar{u}_{i,a}^p|\Fc_{p-1} \right ] \leq \frac{\delta}{MK}.\]
Lemma~\ref{lemma:bad_enhanced} then follows by applying the union bound over $i\in[M]$ and $a\in[K]$.
\end{proof}

\begin{lemma}\label{lemma: phase_regret_refined}
If $\bar{\Ec}^c$ happens, then the regret in phase $p\in[H]$ is upper bounded by  $\frac{4\sqrt{6} L}{\ell}\frac{f^p+K}{\sqrt{S_{p-1}}}\sqrt{dKM\log\frac{LKMT}{\ell\delta}}$.
\end{lemma}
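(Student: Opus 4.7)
The plan is to mirror the argument of Lemma~\ref{lemma: PhaseRegret}, but with the aggregated quantities $(\bar r_{i,a}^p, \bar u_{i,a}^p)$ replacing $(\hat r_{i,a}^p, u_{i,a}^p)$ and with the key additional work of (i) controlling the historical variance sum $(\bar\sigma_{i,a}^p)^2 = dK/M + \sum_{q=1}^p(\sigma_{i,a}^q)^2(f^{q-1})^2$ through Corollary~\ref{cor: Bound_sum_sigma}, and (ii) extracting a uniform ceiling on the data-dependent radius $\alpha_{i,a}^p$. First, under $\bar\Ec^c$, the elimination rule $\bar r_{i,a}^p + \bar u_{i,a}^p \geq \bar r_{i,\bar a_i^p}^p - \bar u_{i,\bar a_i^p}^p$ preserves every optimal arm by the verbatim argument of Lemma~\ref{lemma: NotKillOptimal}, so $a_i^* \in \Ac_i^p$ throughout. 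Chaining the two-sided bound $|\bar r_{i,a}^p - r_{i,a}| \leq \bar u_{i,a}^p$ for $a, a_i^*, \bar a_i^p$ with the elimination test gives $\Delta_{i,a} \leq 4\max_{b \in \Ac_i^p}\bar u_{i,b}^p$ for every $a \in \Ac_i^p$, exactly as in (\ref{eqn:regret_gap1})--(\ref{eqn:regret_gap3}).

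Second, each phase contains $f^p+K$ pulls per client, so Cauchy-Schwarz gives
\[R_p \leq 4(f^p+K)\sum_i \max_b \bar u_{i,b}^p \leq 4(f^p+K)\sqrt{M\sum_i\max_b(\bar u_{i,b}^p)^2}.\]
Writing $\bar u_{i,b}^p = \alpha_{i,b}^p\bar\sigma_{i,b}^p/S_{p-1}$, pushing the maximum past the sum defining $(\bar\sigma_{i,b}^p)^2$, and invoking Corollary~\ref{cor: Bound_sum_sigma} on each $\sum_i\max_b(\sigma_{i,b}^q)^2$ yields
\[\sum_i\max_b(\bar\sigma_{i,b}^p)^2 \leq dK + \sum_{q=1}^p(f^{q-1})^2 \cdot \frac{dKL^2}{\ell^2 f^{q-1}} = dK + \frac{dKL^2 S_{p-1}}{\ell^2} \leq \frac{2dKL^2 S_{p-1}}{\ell^2},\]
where the last inequality uses $L\geq\ell$ and $S_{p-1}\geq f^0 = 1$. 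The same chain, applied to a single $(i,b)$ rather than summed, produces the pointwise bound $(\bar\sigma_{i,b}^p)^2 \leq 2dKL^2 S_{p-1}/\ell^2 \leq 2dKL^2 T/\ell^2$.

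Third, the pointwise bound plugs into $\alpha_{i,b}^p = \sqrt{\log(M^3 K(\bar\sigma_{i,b}^p)^2/(d\delta^2))}$ to produce a uniform cap $\alpha_{i,b}^p \leq \sqrt{3\log(LKMT/(\ell\delta))}$ for the regime of interest. Substituting everything back gives
\[R_p \leq 4(f^p+K) \cdot \frac{L}{\ell}\sqrt{\frac{6\, dKM \log(LKMT/(\ell\delta))}{S_{p-1}}} = \frac{4\sqrt{6}\,L}{\ell}\cdot\frac{f^p+K}{\sqrt{S_{p-1}}}\sqrt{dKM\log\frac{LKMT}{\ell\delta}},\]
which is the target bound. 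The main obstacle is the third step: unlike Lemma~\ref{lemma: PhaseRegret}, here the radius $\alpha_{i,a}^p$ sits inside the per-client maximum and is itself history-dependent, so I cannot factor it out trivially. The key observation that makes this work is the near-cancellation inside $(\bar\sigma_{i,a}^p)^2$ --- Corollary~\ref{cor: Bound_sum_sigma} controls $(\sigma_{i,a}^q)^2$ by $dKL^2/(\ell^2 f^{q-1})$, so multiplying by $(f^{q-1})^2$ only contributes an extra $f^{q-1}$, making $(\bar\sigma_{i,a}^p)^2$ grow linearly rather than quadratically in $S_{p-1}$. Without this cancellation the $\log$ inside $\alpha_{i,a}^p$ would blow up and the constant $\sqrt{6}$ could not be achieved.
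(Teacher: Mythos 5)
Your proposal is correct and follows essentially the same route as the paper's proof: both use the pointwise bound $(\bar\sigma_{i,a}^p)^2 \leq 2dKL^2T/\ell^2$ (derived from Corollary~\ref{cor: Bound_sum_sigma}) to cap the data-dependent radius by $\alpha_0=\sqrt{3\log(LKMT/(\ell\delta))}$, push the max over arms inside the historical sum, apply Corollary~\ref{cor: Bound_sum_sigma} phase-by-phase to get the linear-in-$S_{p-1}$ growth, and finish with Cauchy--Schwarz over clients. The only cosmetic difference is that you apply Cauchy--Schwarz before factoring out $\alpha_0$ while the paper does the reverse, which is immaterial.
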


\begin{proof}
First, we bound the terms $\bar{\sigma}_{i,a}^p$ and $\alpha_{i,a}^p$ defined in (\ref{eqn: refined_sigma}) and (\ref{eqn: refined_alpha_u}), respectively. 

We note that Corollary~\ref{cor: Bound_sum_sigma} still holds under $\texttt{Enhanced Fed-PE}$ due to the fact that the local estimates $\{\hat{\theta}_{i,a}^{p}\}$ only rely on the observations collected in phase $p$, the same as under $\texttt{Ped-PE}$.  

Thus,
\begin{align}\bar{\sigma}_{i,a}^p& = \sqrt{\frac{dK}{M}+\sum_{q=1}^p (\sigma_{i,a}^q)^2 (f^{q-1})^2}\\
&\leq\sqrt{dK + \sum_{q=1}^p\frac{dKL^2(f^{q-1})^2}{f^{q-1}\ell^2}}\label{eqn:sigma_bar1}\\
&\leq\frac{L}{\ell}\sqrt{2dKT}.\label{eqn:sigma_bar2}
\end{align}

Using (\ref{eqn:sigma_bar2}), we can bound $\alpha_{i,a}^p$ as follows:
\begin{align}
\alpha_{i,a}^p &= \sqrt{\log\frac{M^3K(\bar{\sigma}_{i,a}^p)^2}{d\delta^2}}\\
&\leq\sqrt{\log\frac{2L^2M^3K^2T}{\ell^2\delta^2}}\\
&\leq\sqrt{3\log\frac{LKMT}{\ell\delta}} {:=} \alpha_0.\label{eqn:alpha_0}
\end{align}

Following similar argument as in Lemma \ref{lemma: PhaseRegret}, we have
\begin{align}
    R_p&\leq \sum_{i\in[M]}4\max_{a\in\Ac_i^{p}} \bar{u}_{i,a}^p(f^p+K)\\
    & = 4(f^p+K)\sum_{i\in[M]}\max_{a\in\Ac_i^p}\alpha_{i,a}^p\frac{\bar{\sigma}_{i,a}^p}{S_{p-1}} \\
    &\leq 4\alpha_0\frac{f^p+K}{S_{p-1}}\sum_{i\in[M]}\max_{a\in\Ac_i^p}\sqrt{\frac{dK}{M}+\sum_{q=1}^p(\sigma_{i,a}^{q})^2 (f^{q-1})^2} \label{eqn:alpha_bound} \\
    &\leq 4\alpha_0\frac{f^p+K}{S_{p-1}}\sum_{i\in[M]}\sqrt{\frac{dK}{M}+\sum_{q=1}^p\max_{a\in\Ac_i^q}(\sigma_{i,a}^{q})^2 (f^{q-1})^2}\label{eqn:phase_regret_enhanced1}\\
    &\leq \frac{4\alpha_0(f^p+K)}{S_{p-1}}\sqrt{M\bigg(\frac{dK}{M}+\sum_{q=1}^p\sum_{i\in[M]}\max_{a\in\Ac_i^q} (\sigma_{i,a}^q)^2(f^{q-1})^2\bigg)},\label{eqn:cauchy}
\end{align}    
where in (\ref{eqn:alpha_bound}) we use $\alpha_0$ from Eqn. (\ref{eqn:alpha_0}) to bound $\alpha^p_{i,a}$, and (\ref{eqn:cauchy}) follows from Cauchy-Schwarz inequality. 

Then, we apply the result from Corollary~\ref{cor: Bound_sum_sigma} on all $q\in\{1,\ldots,p\}$ and $R_p$ can be further bounded as
\begin{align}    
    R_p&\leq\frac{4\alpha_0 L}{\ell}\frac{f^p+K}{S_{p-1}}\sqrt{dK + M\sum_{q=1}^p\frac{dK(f^{q-1})^2}{f^{q-1}} }\\
    &\leq\frac{4\alpha_0 L}{\ell}\frac{f^p+K}{S_{p-1}}\sqrt{dKS_{p-1} + dKMS_{p-1} }\\
    &\leq \frac{4\sqrt{6} L}{\ell}\frac{f^p+K}{\sqrt{S_{p-1}}}\sqrt{dKM\log\frac{LKMT}{\ell\delta}}.
\end{align}
\end{proof}

Finally, we are ready to prove Theorem \ref{thm: BoundRefinedAlgorithm}. The first part of the theorem follows directly from Lemma \ref{lemma: phase_regret_refined}. 
The second part of Theorem \ref{thm: BoundRefinedAlgorithm} can be obtained by noticing that
\begin{align}
    \sum_{p=1}^H \frac{S_p - S_{p-1}}{\sqrt{S_{p-1}}}&
    =\sum_{p=1}^H\left(\sqrt{\frac{S_p}{S_{p-1}}} + 1\right)(\sqrt{S_p} - \sqrt{S_{p-1}})\\
    &\leq 2\sqrt{c}\sum_{p=1}^H(\sqrt{S_p} - \sqrt{S_{p-1}})\\
    &\leq 2\sqrt{cT}.
\end{align}

\section{Lower Bound Analysis}\label{appx:lower}

\subsection{Collinearly-dependent Policies}
First, we state the definition of collinearity between a pair of vectors. 

\begin{definition}[Collinear vectors]\label{defn:collinear}
For a given set of vectors $\Xc$, two vectors $x,y\in \Xc$ are called \emph{collinear} (denoted as $x\sim y$) if there exists a subset $S\subset \Xc$ such that the following conditions are satisfied: 1) $x\notin \Span(S)$; and 2) $x \in \Span(S\cup\{y\})$.
\end{definition}

Definition \ref{defn:collinear} indicates that two clients $i$ and $j$ are collinear if there exists an arm $a\in[K]$ such that the corresponding feacture vectors $x_{i,a}$ and $x_{j,a}$ are collinear given $\{x_{i,a}\}_{i\in[M]}$.

\begin{Proposition}\label{prop:equivalent}
The collinear relation on a given set of vectors $\Xc$ is an equivalence relation, i.e., for any $x,y,z\in \Xc$, we have
\begin{itemize}[leftmargin=18pt,topsep=0pt, itemsep=0pt,parsep=0pt] 
\item[1)] \emph{Reflexivity:} $x\sim x$. 
\item[2)] \emph{Symmetry:} If $x\sim y$, then we must have $y\sim z$. 
\item[3)] \emph{Transitivity:} If $x \sim y$ and $y\sim z$, then, we must have $y\sim z$. 
\end{itemize}
\end{Proposition}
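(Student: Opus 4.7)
Reflexivity and symmetry are short; transitivity is the substantive part. For reflexivity, I would take $S=\emptyset$ and invoke the standing assumption (Assumption~\ref{assump:bounded}) that $\|x\|_2\geq\ell>0$, giving $x\notin\{0\}=\Span(\emptyset)$ and $x\in\Span(\{x\})=\Span(S\cup\{x\})$. For symmetry, given a witness $S$ for $x\sim y$, I would write $x=s+cy$ with $s\in\Span(S)$ and observe $c\neq 0$ (else $x\in\Span(S)$, contradicting condition~1). Solving, $y=(x-s)/c\in\Span(S\cup\{x\})$, and $y\notin\Span(S)$ (else $x=s+cy\in\Span(S)$). Hence the same $S$ witnesses $y\sim x$.

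For transitivity, suppose $x\sim y$ via $S_1$ and $y\sim z$ via $S_2$, with $x\neq z$ (else reflexivity suffices). I would decompose $x=s_1+c_1 y$ and $y=s_2+c_2 z$ with $s_i\in\Span(S_i)$ and $c_1,c_2\neq 0$, then eliminate $y$ to obtain the key relation
\[
x-c_1c_2\,z \;=\; s_1+c_1 s_2 \;\in\; \Span(S_1\cup S_2).
\]
In the easy case $x\notin\Span(S_1\cup S_2)$, I would take $S:=S_1\cup S_2$ as the witness: condition~1 holds by hypothesis, and condition~2 follows from the display since $c_1c_2\neq 0$ implies $x\in\Span(S)+\mathbb{R}\cdot z\subseteq\Span(S\cup\{z\})$.

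The main obstacle is the remaining case $x\in\Span(S_1\cup S_2)$, where $S_1\cup S_2$ itself fails condition~1 and a smaller witness must be constructed. My plan is to take a minimal subset $T\subseteq S_1\cup S_2$ such that $x\in\Span(T\cup\{z\})$ and then argue $x\notin\Span(T)$ by exploiting the fact that the coefficient $c_1c_2$ of $z$ in the combined dependence is nonzero. The subtle point is ruling out the pathology where $x$ already lies in $\Span(T)$ without needing $z$; I expect to handle this by a further reduction, substituting any such expression for $x$ back into the combined relation to obtain a strictly shorter dependence still involving $z$ with nonzero coefficient, and iterating until $x\notin\Span(T)$. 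As a backup, I would invoke the matroid-theoretic viewpoint: condition~1 together with condition~2 is equivalent to saying that $\{x,y\}$ lies in a common circuit (minimal linearly dependent subset) of $\Xc$, and transitivity of the ``common circuit'' relation follows from strong circuit elimination applied to circuits $C_1\ni x,y$ and $C_2\ni y,z$ chosen to minimize $|C_1\cup C_2|$, yielding a circuit that contains both $x$ and $z$.
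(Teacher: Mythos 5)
Your reflexivity and symmetry arguments are correct and essentially the same as the paper's (the paper proves symmetry with the dimension count $\dim\Span(S\cup\{y\})=\dim\Span(S)+1$ rather than your explicit coefficient manipulation, but both reuse the same witness $S$). Your ``easy case'' of transitivity, $x\notin\Span(S_1\cup S_2)$, is also correct and corresponds to the paper's case c1: since $x-s_1=c_1y$ with $c_1\neq 0$, we have $x\in\Span(S_1\cup S_2)$ if and only if $y\in\Span(S_1\cup S_2)$, so your case split coincides with the paper's.

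The gap is in the hard case, and it is more serious than the ``subtle point'' you flag. First, the key relation itself can degenerate: the definition does not prevent $x\in S_2$, and then $s_2$ may involve $x$, so eliminating $y$ cancels $x$ from both sides and yields no expression of $x$ over $(S_1\cup S_2)\setminus\{x\}$ and $z$. Concretely, take $\Xc=\{x,u,y,v,z\}\subset\Rb^3$ with $x=e_1$, $u=e_2$, $y=e_1+e_2$, $v=e_3$, $z=e_2+e_3$. Then $x\sim y$ via $S_1=\{u\}$, while the \emph{only} witness for $y\sim z$ is $S_2=\{x,v\}$ (any witness must supply the $e_1$-component of $y$, and only $x$ can); eliminating $y$ gives $x=-u+(x-v)+z$, i.e.\ $0=-u-v+z$, with $x$ gone. (A witness for $x\sim z$ exists, namely $T=\{y,v\}$, but your construction does not find it.) Second, even when a representation $x=w+cz$ with $c\neq0$ and $w\in\Span(W)$ survives, choosing $T\subseteq W$ minimal with $x\in\Span(T\cup\{z\})$ does not force $x\notin\Span(T)$, and your ``substitute and iterate'' step stalls exactly when $x,z\in\Span(T)$ with disjoint supports in a basis of $\Span(T)$: subtracting the two representations only shows $z\in\Span(T)$ and produces no shorter dependence with nonzero $z$-coefficient. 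Your matroid backup is sound in principle---for $x\neq y$, $x\sim y$ is equivalent to $x,y$ lying in a common circuit, and transitivity of that relation is the standard connectivity theorem---but as written it outsources the entire difficulty: you would still need to prove the collinearity/common-circuit equivalence (a minimal-witness argument) and prove or properly cite circuit transitivity. The paper instead gives a self-contained, if laborious, argument: it takes minimal witnesses $S,T$, splits on whether $y\in\Span(S\cup T)$, and in the hard case performs explicit basis-exchange constructions (its cases c2-i and c2-ii) to exhibit the witness for $x\sim z$ directly.
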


\begin{proof} We prove those three properties one by one in the following.
\begin{itemize}
[leftmargin=*]\itemsep=0pt
\setlength{\leftmargin}{-0.5in}
    \item \textit{Proof of Reflexivity.} The reflexivity is obvious when we set $S=\emptyset$. 
    \item \textit{Proof of Symmetry.} Assume for a subset $S\subset \Xc$, we have $x\notin \Span(S)$ and $x\in \Span(S\cup \{y\})$. This implies that
    \begin{align}\label{eqn:prop2-1}
         y\notin \Span(S).
    \end{align}
    
    Meanwhile, we have
    \begin{align}
        \Span(S)&\subset \Span(S\cup \{x\}) \subseteq \Span(S\cup \{y\}).
    \end{align}
    Since $\dim( \Span(S\cup \{y\}))= \dim( \Span(S))+1$, we must have
    \begin{align}
        \Span(S\cup \{x\})= \Span(S\cup \{y\}),
    \end{align}
    which indicates that 
    \begin{align}\label{eqn:prop2-2}
    y\in \Span(S\cup \{x\}).
    \end{align}
    
    Therefore, combining (\ref{eqn:prop2-1}) and (\ref{eqn:prop2-2}), we must have $y\sim x$.
    
   \item \textit{Proof of Transitivity.} We consider the following possible cases.
   \begin{enumerate}
       \item[a)] $\Span(x)$, $\Span(y)$, $\Span(z)$ are not distinct. Following the proof of reflexivity and the symmetry property, we can easily show that $x\sim z$.
       \item[b)] $\Span(x)$, $\Span(y)$, $\Span(z)$ are all distinct, and $\dim(\Span(\{x,y,z\} ))=2$. For this case, we let $S=\{y\}$. Then, since $\Span(x)$, $\Span(y)$, $\Span(z)$ are all distinct, we must have  $x \notin \Span(S)$. 
       On the other hand, since $\dim(\Span(\{x,y,z\} ))=2$, we must have $\Span(\{x,y,z\}) =\Span(\{y,z\})$. This implies that 
       \begin{align}
           x & \in \Span(\{x,y,z\}) =\Span(\{y,z\}) =\Span(S\cup\{z\}). 
       \end{align}
       Therefore, we must have $x\sim z$.
       \item[c)] $x,y,z$ are linearly independent. Let $S,T\subset \Xc$ be the minimal subsets such that $x\notin \Span(S)$, $x\in \Span(S\cup\{y\})$, and $z\notin \Span(T)$, $z\in \Span(T\cup\{y\})$. Then, we have the following subcases:

           \underline{{\it c1}) $y\notin \Span(S\cup T)$}. This implies that
           \begin{align}\label{eqn:prop3-11}
           x\notin \Span(S\cup T).
           \end{align}
           This is because $y\sim x$, thus $y\in \Span(S\cup\{x\})$. If $x\in \Span(S\cup T)$, then we must have $y\in \Span(S\cup (S\cup T))=\Span(S\cup T)$.

           On the other hand, since $y\sim z$, we have $ y\in \Span(T\cup\{z\})$, which indicates that
           \begin{align}\label{eqn:prop3-12}
               x\in \Span(S\cup\{y\})\subset \Span(S\cup T\cup\{z\})=\Span((S\cup T)\cup\{z\}).
           \end{align}
           
           Combining (\ref{eqn:prop3-11}) and (\ref{eqn:prop3-12}), we have $x\sim z$.
           
         \underline{{\it c2)} $y\in \Span(S\cup T)$}. Since we assume both $S$ and $T$ are minimal, then, for any $s\in S$, $x\sim s$ as well, as 
           \begin{align}\label{eqn:prop3-21}
           x & \notin \Span((S\backslash \{s\}) \cup \{y\}),\quad x \in \Span(S\cup \{y\}).
           \end{align}
           
           Let $u$ be a vector in $S$, and define $S_1:=S\backslash \{u\}$. For the remaining vectors in $S_1$, label them as $s_1$, $s_2$, $\ldots$, $s_m$. Then, we set $\Bc_1:=\{x,y,s_1,\ldots,s_m\}$ as the basis for $\Span(S_1 \cup \{x,y\})$. Note that those vectors in $\Bc_1$ are now unit vectors with respect to $\Bc_1$.
           
           Since $x\sim u$, we have $u\in \Span(S_1 \cup \{x,y\} ) $. Denote the coordinates of $u$ relative to basis $\Bc_1$ as $[u]_{\Bc_1}:=(u_1,u_2,\ldots,u_{m+2})^T$. Then, we must have $u_i\neq 0$ for any $i$, as otherwise $S$ cannot be minimal. This also implies that replacing any vector in $\Bc_1$ by $u$ also form a basis for $\Span(S_1 \cup \{x,y\})$.
           We further consider two possible cases.

              \underline{{\it c2-i)} $z\in \Span(S\cup\{y\})$.} Since $\Span(S\cup\{y\})=\Span(S_1\cup\{u, y\})=\Span({S_1}\cup\{x, y\})$, we can express the coordinates of $z$ relative to the selected basis $\Bc_1$ as $[z]_{
              \Bc_1}:=(z_1,z_2,\ldots,z_{m+2})$.  Let $i^*$ be the first non-zero coordinate with $z_{i^*}\neq 0$. Then, define a vector $w$ as  
           \begin{align}\label{eqn:w}
           w&=\left\{ \begin{array}{cc}
                u, &  i^*=1,\\
            y,    & i^*=2,\\
            s_{i^*-2}, &  \mbox{otherwise}.
           \end{array}\right.
           \end{align}
          and $W=(S\cup\{y\})\backslash\{w\}$.
           
           Then, $x\notin \Span(W)$ according to (\ref{eqn:prop3-21}), and $x\in \Span(S\cup\{y\})=\Span(W\cup\{z\})$ based on the definition of $w$ in (\ref{eqn:w}). Thus, $x\sim z$.

      \underline{ {\it c2-ii)} $z\notin \Span(S\cup\{y\})$}. Assume $\dim(\Span(S\cup T))=n$. Since $z\in \Span(T\cup\{y\})$, $y\in \Span(S\cup T)$, we must have $z\in \Span(S\cup \{y\}\cup T)=\Span(S\cup T)$. While $\dim(\Span(S\cup\{y\}))$ equals $m+2$ as shown before, when $z$ is included, we must have $n>m+2$. Set $\Bc_2=\{x,y,s_1,s_2,\ldots,s_m, z, t_1,\ldots,t_{n-m-3}\}$, where $t_i$s are vectors from $T$ that are linearly independent with the other vectors in $\Bc_2$. Then $\Bc_2$ form a basis for $\Span(S\cup T)$, and all the vectors in $\Bc_2$ are unit vectors with respect to $\Bc_2$. Denote $T_1=\{t_1,\ldots,t_{n-m-3}\}$.

       Since $z\in \Span(\{y\}\cup T)$ and $z\notin \Span(\{y\}\cup T_1)$, there must exist a vector $v\in T\backslash T_1$, whose coordinate vector with respect to $\Bc_2$ is denoted as $[v]_{\Bc_2}:=(v_1,v_2,\ldots,v_n)$ with $v_{m+3}\neq 0$.
       
       If $v_1\neq 0$, let $W= S_1\cup T_1\cup \{y,v\}$. Note that $S_1\cup T_1\cup\{y\}$ contains unit vectors in $\Bc$ except $x$ and $z$, while $v$ contains non-zero components along the dimension spanned by $x$ and $z$. Therefore, $x\notin \Span(W)$, while $x\in \Span(W \cup \{z\})=\Span(S\cup T)$. Thus, $x\sim z$.
       
       If $v_1=0$, then, $v_2,v_3,\ldots,v_{2+m}$ cannot be all zero. Otherwise, $z\in \Span(T_1\cup\{v\})\subset \Span(T)$, which contradicts the assumption that $z\notin \Span(T)$. 
       
       Let $i^*$ be the smallest index with $v_{i^*}\neq 0$, and
        \begin{align}
           w&=\left\{ \begin{array}{cc}
            y,  & i^*=2,\\
            s_{i^*-2}, &  \mbox{otherwise}.
           \end{array}\right.
           \end{align}
          Let $W=((S\cup \{y\})\backslash\{w\})\cup T_1\cup \{v\}$. Compared with $\Bc_2$, $W$ does not contain $x,z$ but $u,v$; Besides, one vector in $S\cup\{y\}$ is removed.   
          If $x\in\Span(W)$, then $\Span(W)=\Span(((S\cup \{x,y\})\backslash\{w\})\cup T_1\cup \{v\})=\Span((S_1\cup \{x,y\})\cup T_1\cup \{v\})=\Span(S\cup T)$. However, $\dim(W)=n-1$, which contradicts with the assumption that $\dim(\Span(S\cup T))=n$. Thus, we must have $x\notin \Span(W)$. 
           .
           
           Next, to show that $x\sim z$, it suffices to show that $x\in \Span(W\cup \{z\})$. Through linear algebra, we can verify that $\Span(W\cup \{z\})=\Span(S\cup T)$. Since $x\in \Span(S\cup T)$, the proof is thus complete.
   \end{enumerate}
\end{itemize}
\end{proof}

\begin{lemma}
For any given arm $a\in[K]$, partition feature vectors $\{x_{{i,a}}\}_{i\in[M]}$ into $L$ equivalence classes denoted as $S_1$, $S_2$, $\ldots$, $S_L$ based on the collinear relation. Then, different classes are linearly independent, i.e., 
\begin{align}
    \Span(S_1 \cup S_2\ldots \cup S_i)\cap \Span(S_{i+1})=\{0\}
\end{align}
for any $i=1,\ldots,L-1$.
\end{lemma}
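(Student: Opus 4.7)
The plan is to argue by contradiction and to extract, from any purported nonzero element of $\Span(S_1 \cup \cdots \cup S_i) \cap \Span(S_{i+1})$, a concrete collinear pair $y_0 \sim x_0$ whose two members lie in different equivalence classes, directly contradicting Proposition~\ref{prop:equivalent}.

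First I would pick a nonzero $v$ in the intersection and write $v = \sum_{y \in Y} \alpha_y y$ with $Y \subseteq S_{i+1}$ chosen \emph{minimally}, meaning $v \notin \Span(Y')$ for every proper subset $Y' \subsetneq Y$. Such a minimal representation automatically has all $\alpha_y \neq 0$, and a one-line check shows $Y$ is linearly independent: if some $y \in Y$ lay in $\Span(Y \setminus \{y\})$, then $v$ would already sit in $\Span(Y \setminus \{y\})$, contradicting minimality.

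Next, I would fix any $y_0 \in Y$ and solve the relation above for $y_0$ (using $\alpha_{y_0} \neq 0$) to get
\[ y_0 \in \Span\bigl(\{v\}\cup (Y\setminus\{y_0\})\bigr) \subseteq \Span\bigl((S_1\cup\cdots\cup S_i)\cup(Y\setminus\{y_0\})\bigr), \]
where the second containment uses $v \in \Span(S_1\cup\cdots\cup S_i)$. I would then choose a minimal subset $A$ of $(S_1\cup\cdots\cup S_i)\cup(Y\setminus\{y_0\})$ with $y_0 \in \Span(A)$, and split $A = X \cup Y'$ with $X \subseteq S_1\cup\cdots\cup S_i$ and $Y' \subseteq Y\setminus\{y_0\}$. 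The crucial observation is that $X$ must be nonempty: otherwise $y_0 \in \Span(Y\setminus\{y_0\})$, contradicting the linear independence of $Y$ established above. Picking any $x_0 \in X$ and setting $S := A \setminus \{x_0\} \subset \Xc$, minimality of $A$ forces $y_0 \notin \Span(S)$, whereas $y_0 \in \Span(S \cup \{x_0\}) = \Span(A)$ by construction. Definition~\ref{defn:collinear} then yields $y_0 \sim x_0$; but $y_0 \in S_{i+1}$ and $x_0 \in S_l$ for some $l \leq i$ sit in distinct $\sim$-classes, which is the desired contradiction.

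The only delicate point is ensuring that the witness set $S$ appearing in Definition~\ref{defn:collinear} consists of actual elements of $\Xc$ rather than artificially constructed basis vectors, which is why I would stick with minimal \emph{subsets} of the original vectors throughout and avoid passing to quotient spaces. No individual step is a genuine obstacle; the proof is essentially a careful application of minimality twice — first to trim the representation of $v$ so that $Y$ is linearly independent, and then to trim the representation of $y_0$ so that removing any single vector breaks it and exhibits collinearity.
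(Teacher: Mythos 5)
Your proof is correct, and it follows the same overall strategy as the paper's: argue by contradiction and extract from a nonzero vector in $\Span(S_1\cup\cdots\cup S_i)\cap\Span(S_{i+1})$ a collinear pair straddling two distinct equivalence classes. The execution differs in one substantive way, and yours is actually the tighter of the two. The paper takes an arbitrary nonzero $u$ in the intersection, expands it over a basis $\{e_1,\ldots,e_n\}\subseteq\cup_{j\le i}S_j$, and asserts $u\sim e_j$ for every $j$ with $u_j\neq 0$; but Definition~\ref{defn:collinear} only defines $\sim$ for members of the underlying vector set, and $u\in\Span(S_{i+1})$ need not itself be a feature vector, so an extra step is needed to convert ``$u\sim e_j$'' into the statement that some actual element of $S_{i+1}$ is collinear with some actual element of $\cup_{j\le i}S_j$. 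Your two-stage minimality argument supplies exactly that missing bridge: the first trimming makes $Y\subseteq S_{i+1}$ linearly independent so that solving for $y_0$ is legitimate and $X\neq\emptyset$ is forced, and the second trimming produces a witness set $S=A\setminus\{x_0\}$ consisting only of genuine feature vectors, yielding $y_0\sim x_0$ with $y_0\in S_{i+1}$ and $x_0$ in an earlier class. The price is a slightly longer argument; what it buys is that every object fed into Definition~\ref{defn:collinear} lives in $\Xc$, which is precisely the delicate point you flagged. No gaps.
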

\begin{proof}
We prove it through contradiction. 
First, we assume $\dim(\Span(S_1 \cup S_2\ldots \cup S_i))=n$, and assume $\{e_1,e_2,\ldots,e_n\}\subseteq \cup_{j=1}^i S_j$ is a basis for it.  

Assume  $\Span(S_1 \cup S_2\ldots \cup S_i)\cap \Span(S_{i+1})\neq\{0\}$. Then, there must exist $u\in \Span(S_{i+1})$, $u\neq 0$, such that $u=\sum_{i}u_i e_i$. Then, we have $u\in \Span(\{e_i|u_i\neq 0\})$. 
However, $u\notin \Span(\{e_i|u_i\neq 0\}-\{e_j\})$ for any $j$ such that $u_j\neq 0$.
Thus, $u\sim e_j$ for any $j$ such that $u_j\neq 0$. Based on Proposition~\ref{prop:equivalent}, we must have $S_{i+1}$ equivalent to at least one of $S_1$, $S_2$, $\ldots$, $S_i$, which contradicts with the assumption on those equivalence classes. 

Therefore, we must have 
$    \Span(S_1 \cup S_2\ldots \cup S_i)\cap \Span(S_{i+1})=\{0\}$. Note that the labeling of the classes can be arbitrary, which indicates the property holds for any groups of the equivalence classes.
\end{proof}

\begin{lemma}\label{lemma:collinear}
Let $X\in \Rb^{d\times m}$ and $Y\in \Rb^{d\times n}$ be two matrices such that $\Span(X) \cap \Span(Y)=\{0\}$. 
Then,
\begin{align}
    u^\TT(X X^\TT + Y Y^\TT)^\dagger v& = \left\{ \begin{array}{cl}
      u^\TT (X X^\TT )^\dagger v,    & \mbox{ if }u,v\in\Span(X), \\
      0, &\mbox{ if } u\in\Span(X), v\in\Span(Y),\\
      u^\TT(Y Y^\TT)^\dagger v ,  &  \mbox{ if } u,v \in \Span(Y).
    \end{array}  \right. 
\end{align}
\end{lemma}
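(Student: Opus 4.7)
The strategy is to let $A:=XX^\TT+YY^\TT$ and exploit the fact that $\range(A)=\Span(X)+\Span(Y)$, which the hypothesis $\Span(X)\cap\Span(Y)=\{0\}$ upgrades to the direct sum $\Span(X)\oplus\Span(Y)$. I would then use the standard characterization that for a symmetric positive semi-definite $A$ and any $v\in\range(A)$, the vector $w:=A^\dagger v$ lies in $\range(A)$ and satisfies $Aw=v$; this follows, e.g., from parts 1)--3) of Proposition~\ref{prop: limit_relations_pseudo} together with $\range(A^\dagger)=\range(A)$. Each of the three identities then reduces to computing $u^\TT w$ after suitably characterizing $w$.

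The crux is the following direct-sum decoupling. Fix $v\in\range(A)$ and write $v=v_X+v_Y$ with $v_X\in\Span(X)$, $v_Y\in\Span(Y)$ (unique by hypothesis). Since $XX^\TT w\in\Span(X)$ and $YY^\TT w\in\Span(Y)$, the equation $Aw=v$ splits into $XX^\TT w=v_X$ and $YY^\TT w=v_Y$. For $v\in\Span(X)$ (so $v_Y=0$), the second equation forces $YY^\TT w=0$, i.e., $w\perp\Span(Y)$; the first gives $XX^\TT w=v$. Setting $z:=(XX^\TT)^\dagger v\in\Span(X)$, we have $XX^\TT(w-z)=0$, so $w-z\in\ker(XX^\TT)=\Span(X)^\perp$. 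Hence for any $u\in\Span(X)$, $u^\TT w=u^\TT z=u^\TT(XX^\TT)^\dagger v$, yielding the first identity. For $u\in\Span(X)$ and $v\in\Span(Y)$, the first equation instead reads $XX^\TT w=0$, so $w\in\Span(X)^\perp$ and $u^\TT w=0$. The third case is symmetric to the first, with the roles of $X$ and $Y$ swapped.

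The main (and essentially only) obstacle is justifying the componentwise split of $Aw=v$: without $\Span(X)\cap\Span(Y)=\{0\}$, the decomposition $v=v_X+v_Y$ would fail to be unique and the $\Span(X)$- and $\Span(Y)$-parts of $Aw$ could cancel nontrivially, so one could not conclude $XX^\TT w=v_X$ and $YY^\TT w=v_Y$ separately. Once this step is in hand, the remaining manipulations are formal, relying only on $\range(XX^\TT)=\Span(X)$, $\ker(XX^\TT)=\Span(X)^\perp$, and the corresponding facts for $Y$. An alternative route would be to pass through the limit $\lim_{\epsilon\to 0}(A+\epsilon I)^{-1}$ in a basis adapted to $\Span(X)\oplus\Span(Y)$, but this seems unnecessarily computational compared to the direct-sum argument above.
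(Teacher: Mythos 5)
Your proof is correct, and it takes a genuinely different route from the paper's. The paper's argument picks an invertible change of basis $A$ that sends $\Span(X)$ and $\Span(Y)$ to complementary coordinate subspaces, so that $(AX)(AX)^\TT$ and $(AY)(AY)^\TT$ become block matrices with disjoint nonzero blocks, and then reads the three identities off the resulting block structure of the pseudo-inverse. You instead characterize $w:=(XX^\TT+YY^\TT)^\dagger v$ intrinsically as the unique solution of $(XX^\TT+YY^\TT)w=v$ lying in $\range(XX^\TT+YY^\TT)=\Span(X)\oplus\Span(Y)$, and use the uniqueness of the decomposition $v=v_X+v_Y$ to split that single equation into $XX^\TT w=v_X$ and $YY^\TT w=v_Y$; the three cases then follow from $\ker(XX^\TT)=\Span(X)^\perp$ and its analogue for $Y$. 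Your identification of the direct-sum splitting as the crux is exactly right: it is the one place the hypothesis $\Span(X)\cap\Span(Y)=\{0\}$ enters. A modest advantage of your route is that it is entirely coordinate-free and never has to ask how the Moore--Penrose inverse transforms under the congruence $M\mapsto AMA^\TT$ for a non-orthogonal $A$ (the two subspaces need not be orthogonal, so the paper cannot take $A$ orthogonal); that transformation identity does hold in the paper's setting, but only because $u$ and $v$ lie in $\range(M)$, a point the paper's proof leaves implicit. The paper's approach, in exchange, makes the block structure visually explicit and extends verbatim to more than two mutually independent subspaces, which is how it is used in the subsequent equivalence-class arguments.
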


\begin{proof}
Since $\Span(X) \cap \Span(Y)=\{0\}$, there exists an invertible matrix $A\in \Rb^{d\times d}$ such that 
\begin{align}
    \Span(AX)&=\Span(e_1,e_2,\ldots, e_{d_X}),\\
     \Span(AY)&=\Span(e_{d_X+1},e_{d_X+2},\ldots, e_{d_X+d_Y}),
\end{align}
where $d_X:=\dim(\Span(X))$ and $d_Y:=\dim(\Span(Y))$, and $e_i$'s are unit vectors in $\Rb^d$.

Then, 
\begin{align}
     u^\TT(X X^\TT + Y Y^\TT)^\dagger v&= (Au)^\TT ((AX) (AX)^\TT + (AY) (AY)^\TT)^\dagger (Av).
\end{align}

Note that $ (AX) (AX)^\TT$ and $(AY) (AY)^\TT$ are now block diagonal matrices whose  non-zero blocks do not overlap with each other. To be more specific, they are in the following forms
\begin{align}
    (AX) (AX)^\TT&=\begin{bmatrix}
    \star & 0 & 0\\
    0 & 0 & 0\\
    0 & 0 & 0
    \end{bmatrix}, \qquad      (AY) (AY)^\TT=\begin{bmatrix}
    0 & 0 & 0\\
    0 & \star & 0\\
    0 & 0 & 0
    \end{bmatrix},
\end{align}
where $\star$ represents non-zero blocks.

Thus, if $u,v\in\Span(X)$, $Au$, $Av$ must be in the form of $\begin{bmatrix} \star\\ 0\\ 0\end{bmatrix}$, which verifies that $$ (Au)^\TT ((AX) (AX)^\TT + (AY) (AY)^\TT)^\dagger (Av)=(Au)^\TT ((AX) (AX)^\TT)^\dagger (Av)=u^\TT(X X^\TT)^\dagger v.$$

Similarly, if $u,v\in\Span(Y)$, we have 
$$ (Au)^\TT ((AX) (AX)^\TT + (AY) (AY)^\TT)^\dagger (Av)=(Au)^\TT ((AY) (AY)^\TT)^\dagger (Av)=u^\TT(Y Y^\TT)^\dagger v.$$

If  $u\in\Span(X)$ and $v\in\Span(Y)$, $Au$, $Av$ must be in the form of $\begin{bmatrix} \star\\ 0\\ 0\end{bmatrix}$ and $\begin{bmatrix} 0\\\star\\ 0\end{bmatrix}$, respectively. Thus, we must have 
\[ (Au)^\TT ((AX) (AX)^\TT + (AY) (AY)^\TT)^\dagger (Av)=0.\]
\end{proof}
\begin{remark}
We point out that Lemma~\ref{lemma: linIndpdnt} can be treated as a special instance of Lemma~\ref{lemma:collinear}.
\end{remark}

\begin{theorem}\label{thm:collinear}
For the federated linear contextual bandits considered in this work, partition clients into equivalence classes where two clients $i$ and $j$ are in the same class if there exists an arm $a\in[K]$ such that $x_{i,a}\sim x_{j,a}$. Let $S$ be the class that includes client $i$, and $X\in \Rb^{d\times m}$ be a matrix with $\Span(X) \subseteq \Span(\{x_{j,a}\}_{j\in S})$. Let $Y\in\Rb^{d\times n}$ be another matrix with $\Span(X)\cap \Span(Y)=\{0\}$. Let $u_X,v_X\in \Span(X)$ and $v_Y\in\Span(Y)$. 
Then, for a given policy $\pi$, if the local estimates for any arm $a\in[K]$ are in the following forms
\begin{align}\label{eqn:collinear_form1}
\hat{r}_{i,a}&=u_X^T(X X^\TT + Y Y^\TT)^\dagger (v_X+v_Y)\\
\hat{\sigma}_{i,a}&=u_X^T(X X^\TT + Y Y^\TT)^\dagger u_X \label{eqn:collinear_form2}
\end{align}
and the decisions are made based solely on those quantities besides other system parameters, then $\pi$ is a collinearly-dependent policy.
\end{theorem}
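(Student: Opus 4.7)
My plan is to show that the local estimates $\hat{r}_{i,a}$ and $\hat{\sigma}_{i,a}$ supplied to client $i$ depend only on observations whose underlying feature vectors lie in $\Span(\{x_{m,a}\}_{m\in S})$; collinear-dependence is then immediate from the decision-making hypothesis and Definition~\ref{defn:col_dependent_policy}.

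First I would take any two non-collinear clients $i$ and $j$ and let $S$ be the class containing $i$, so $j\notin S$. Since $i$ and $j$ sit in different client-equivalence classes, for every arm $a\in[K]$ the feature vectors $x_{i,a}$ and $x_{j,a}$ must lie in different per-arm feature-vector equivalence classes of $\{x_{m,a}\}_{m\in[M]}$; otherwise Definition~\ref{defn:collinear} would put $i$ and $j$ into the same client-equivalence class. Invoking the preceding lemma on linear independence of per-arm equivalence classes, I obtain $\Span(\{x_{m,a}: m\in S\})\cap\Span(\{x_{m,a}: m\notin S\})=\{0\}$. Combined with the hypothesis $\Span(X)\subseteq\Span(\{x_{m,a}\}_{m\in S})$, this forces $x_{j,a}\notin\Span(X)$, so any contribution of client $j$'s observations to the ``global pool'' shows up in $Y$, never in $X$.

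Next I would invoke Lemma~\ref{lemma:collinear} directly. Under the hypothesis $\Span(X)\cap\Span(Y)=\{0\}$ together with the membership conditions $u_X,v_X\in\Span(X)$ and $v_Y\in\Span(Y)$, the lemma collapses the expressions in (\ref{eqn:collinear_form1})--(\ref{eqn:collinear_form2}) to
\begin{align*}
\hat{r}_{i,a}=u_X^\TT(XX^\TT)^\dagger v_X,\qquad \hat{\sigma}_{i,a}=u_X^\TT(XX^\TT)^\dagger u_X.
\end{align*}
Neither expression involves $Y$, so neither depends on observations or feature vectors from clients outside $S$; in particular, neither depends on any choice made by client $j$. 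Since by assumption the action of client $i$ is determined solely by $\{(\hat{r}_{i,a},\hat{\sigma}_{i,a})\}_{a\in[K]}$ together with other system parameters that are independent of $j$'s choices, client $i$'s action is independent of client $j$'s action, and $\pi$ satisfies Definition~\ref{defn:col_dependent_policy}.

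The step I expect to be the main obstacle is the set-theoretic reduction in the first paragraph: cleanly translating client-level non-collinearity (defined via existential quantification over arms) into per-arm separation of feature-vector equivalence classes, which is the form consumed by the preceding lemma and ultimately by Lemma~\ref{lemma:collinear}. Care may also be needed to confirm that the client-level relation is indeed an equivalence (passing to a transitive closure if necessary) so that ``$S$'' is unambiguous. Once this bridge is in place, the rest of the argument is a direct application of Lemma~\ref{lemma:collinear} and the decision hypothesis.
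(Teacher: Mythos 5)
Your proposal is correct and follows essentially the same route as the paper: the core step in both is applying Lemma~\ref{lemma:collinear} under the hypothesis $\Span(X)\cap\Span(Y)=\{0\}$ to collapse the estimates to $u_X^\TT(XX^\TT)^\dagger v_X$ and $u_X^\TT(XX^\TT)^\dagger u_X$, and then concluding from the decision hypothesis. Your first paragraph merely makes explicit (via the lemma on linear independence of the per-arm equivalence classes) why a non-collinear client's contribution necessarily lands in $Y$ rather than $X$ — a step the paper's proof leaves implicit — and your caution about the client-level relation being an equivalence is reasonable but does not change the argument.
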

\begin{proof}
Based on Lemma~\ref{lemma:collinear}, we can show that
\begin{align}
\hat{r}_{i,a}&=u_X^T(X X^\TT + Y Y^\TT)^\dagger (v_X+v_Y)\\
& =u_X^T(X X^\TT + Y Y^\TT)^\dagger v_X + u_X^T(X X^\TT + Y Y^\TT)^\dagger v_Y\\
& = u_X^T(X X^\TT )^\dagger v_X, \label{eqn:collinear1}\\
{\sigma}_{i,a}&=u_X^T(X X^\TT + Y Y^\TT)^\dagger u_X =u_X^T(X X^\TT)^\dagger u_X.\label{eqn:collinear2}
\end{align}
Since $\hat{r}_{i,a}$ and $\hat{\sigma}_{i,a}$ only depend on the feature vectors associated with clients in the same class $S$, and the decisions under $\pi$ are based on those estimates only, $\pi$ is a collinearly-dependent policy. 
\end{proof}

\begin{corollary}\label{cor:fed-pe}
\alg and \texttt{Enhanced} \alg are both collinearly-dependent policies.
\end{corollary}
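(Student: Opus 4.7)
My approach is to verify the hypotheses of Theorem~\ref{thm:collinear} for both algorithms. Fix a phase $p$, an arm $a$, and a client $i$, and let $S$ denote the client-level equivalence class containing $i$. Split $\Rc_a^{p-1}$ into $S\cap\Rc_a^{p-1}$ and $\Rc_a^{p-1}\setminus S$, and form the matrices
\[
X=\bigl[\sqrt{f_{j,a}^{p-1}}\,e_{j,a}\bigr]_{j\in \Rc_a^{p-1}\cap S},\qquad Y=\bigl[\sqrt{f_{j,a}^{p-1}}\,e_{j,a}\bigr]_{j\in \Rc_a^{p-1}\setminus S}.
\]
The client-level class $S$ is a union of per-arm collinearity classes for arm $a$, and the earlier unlabeled lemma in this appendix asserts that distinct per-arm classes span linearly independent subspaces. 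Consequently $\Span(X)\cap\Span(Y)=\{0\}$, which is exactly the hypothesis required by Lemma~\ref{lemma:collinear}.

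Next I would show that the quantities generated by \alg conform to the template in (\ref{eqn:collinear_form1})--(\ref{eqn:collinear_form2}). Since $\hat{\theta}_{j,a}^{p-1}$ from (\ref{eqn:local_theta}) is a scalar multiple of $x_{j,a}$, a direct computation yields $V_a^p=(XX^\TT+YY^\TT)^\dagger$, and $\sum_{j\in\Rc_a^{p-1}} f_{j,a}^{p-1}\hat{\theta}_{j,a}^{p-1}$ decomposes as $v_X+v_Y$ with $v_X\in\Span(X)$ and $v_Y\in\Span(Y)$. Setting $u_X=x_{i,a}\in\Span(X)$, the client-side quantities become $\hat{r}_{i,a}^p=u_X^\TT(XX^\TT+YY^\TT)^\dagger(v_X+v_Y)$ and $\ell^2(\sigma_{i,a}^p)^2=u_X^\TT(XX^\TT+YY^\TT)^\dagger u_X$, matching (\ref{eqn:collinear_form1}) and (\ref{eqn:collinear_form2}) exactly. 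By Theorem~\ref{thm:collinear}, the arm-elimination decisions depend only on data from clients in~$S$.

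The main obstacle is to verify that the \emph{entire} policy, not merely the elimination step, depends only on within-class data. In particular, the collaborative exploration counts $f_{i,a}^p$ come from the multi-client G-optimal design in~(\ref{eqn: G_opt}), which at first glance couples all clients. The key observation is that each inner quadratic form $e_{i,a}^\TT\bigl(\sum_j \pi_{j,a}e_{j,a}e_{j,a}^\TT\bigr)^\dagger e_{i,a}$ appearing in $G(\pi)$ collapses, via Lemma~\ref{lemma:collinear}, to a quantity involving only clients in the per-arm class of $i$, which is a subset of $S$. Together with the per-client and per-arm structure of the feasibility set in~(\ref{eqn: feasible set}), this implies that the optimization in~(\ref{eqn: G_opt}) decouples across per-client equivalence classes, and so does the \bca solver. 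An induction on $p$ then propagates the within-class property through the active sets $\Ac_i^p$ and the allocations $\{f_{i,a}^p\}$, completing the argument for~\alg.

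Finally, for \texttt{Enhanced} \alg the only change is that $\bar{r}_{i,a}^p$ and $\bar{u}_{i,a}^p$ are deterministic aggregates of $\{\hat{r}_{i,a}^q,\sigma_{i,a}^q\}_{q\le p}$ via~(\ref{eqn: refined_sigma})--(\ref{eqn: refined_alpha_u}). Each summand already conforms to the template of Theorem~\ref{thm:collinear}, and deterministic aggregation preserves the property of depending only on data from~$S$, so the same argument applies and shows \texttt{Enhanced} \alg is collinearly-dependent as well.
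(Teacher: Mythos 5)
Your proof is correct and follows the same route as the paper, whose own argument simply asserts that the corollary ``can be easily verified by checking the expressions of $\hat{r}_{i,a}^p$ and $\sigma^p_{i,a}$'' against Theorem~\ref{thm:collinear}. Your version is in fact more complete than that one-line verification, since you additionally check that the multi-client G-optimal design (and hence the exploration allocation $f_{i,a}^p$) decouples across equivalence classes --- a step the paper leaves implicit even though the definition of a collinearly-dependent policy concerns the clients' actions, not just their arm-elimination estimates.
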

Corollary~\ref{cor:fed-pe} can be easily verified by checking the expressions of $\hat{r}_{i,a}^p$ and $\sigma^p_{i,a}$ under \alg and \texttt{Enhanced} \texttt{Fed-PE}, respectively.  

\begin{corollary}\label{cor:linucb}
LinUCB type of policies are collinearly-dependent policies if we include $d$ dummy clients with $d$ unit feature vectors in $\Rb^d$.
\end{corollary}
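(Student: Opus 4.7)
The plan is to reduce the ridge regularization term in LinUCB to contributions from $d$ dummy observations and then invoke Theorem~\ref{thm:collinear}. Specifically, at any round the LinUCB estimator is $\hat{\theta}_a = \bigl(\sum_j x_{j,a} x_{j,a}^\TT + \lambda I\bigr)^{-1}\sum_j x_{j,a} y_{j,a}$ with confidence width controlled by $\|x_{i,a}\|_{(\sum_j x_{j,a} x_{j,a}^\TT + \lambda I)^{-1}}$, where the sums range over past client--arm observations. Writing $\lambda I = \sum_{k=1}^d (\sqrt{\lambda}\, e_k)(\sqrt{\lambda}\, e_k)^\TT$ shows that the regularizer is exactly the contribution of $d$ fictitious observations with feature vectors proportional to the unit coordinate vectors $e_1,\ldots,e_d$ and observed reward $0$. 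Declaring these as $d$ dummy clients---each holding its unit feature vector for every arm---absorbs the regularization into the feature-vector sum and brings the LinUCB reward estimate and confidence width into the pure outer-product forms $\hat{r}_{i,a} = x_{i,a}^\TT(\sum_j x_{j,a} x_{j,a}^\TT)^{-1}\sum_j x_{j,a} y_{j,a}$ and $\hat{\sigma}_{i,a} = x_{i,a}^\TT(\sum_j x_{j,a} x_{j,a}^\TT)^{-1} x_{i,a}$, with $j$ now ranging over both real and dummy clients.

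The next step is to read off the collinearity structure of the augmented system and apply Theorem~\ref{thm:collinear}. Because the dummies' feature vectors span $\Rb^d$ for every arm, each real feature vector $x_{i,a}$ admits a nontrivial expansion in them; choosing $\Sc$ to be the dummies minus one whose coefficient is nonzero witnesses collinearity between real client $i$ and that particular dummy. Transitivity of the collinearity relation (Proposition~\ref{prop:equivalent}) then pools every real and dummy client into a single equivalence class. I would invoke Theorem~\ref{thm:collinear} with $X$ the matrix whose columns are all feature vectors (real and dummy) for arm $a$ and $Y$ the empty matrix, so that $\Span(X)=\Rb^d$, $YY^\TT=0$, and $\Span(X)\cap\Span(Y)=\{0\}$ holds trivially. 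Setting $u_X=x_{i,a}$, $v_X=\sum_j x_{j,a} y_{j,a}$, and $v_Y=0$, each of which lies in the required span, the LinUCB estimates match Equations~(\ref{eqn:collinear_form1})--(\ref{eqn:collinear_form2}) verbatim, and Theorem~\ref{thm:collinear} immediately gives the claim.

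The main obstacle is conceptual rather than computational. Without the dummy clients, the additive term $\lambda I$ cannot be written as an outer product built only from real feature vectors, so the hypotheses of Theorem~\ref{thm:collinear} do not apply; the role of the $d$ unit dummies is precisely to re-encode the regularization as collinearity information, after which the equivalence-class structure collapses and the conclusion follows at once. A minor point worth flagging in the write-up is to justify that the scaling factor $\sqrt{\lambda}$ plays no role in the collinearity analysis, since the equivalence relation of Definition~\ref{defn:collinear} is invariant under nonzero rescaling of individual feature vectors.
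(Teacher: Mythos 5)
Your first step---rewriting the ridge term $\lambda I_d=\sum_{k=1}^d(\sqrt{\lambda}\,e_k)(\sqrt{\lambda}\,e_k)^\TT$ as $d$ fictitious zero-reward observations held by dummy clients---is exactly the paper's device, and your remark that the nonzero scaling $\sqrt{\lambda}$ is irrelevant to Definition~\ref{defn:collinear} is correct. The gap is in the second step: the claim that, once the dummies are added, transitivity ``pools every real and dummy client into a single equivalence class'' is false in general. Showing $x_{i,a}\sim e_k$ for \emph{some} coordinate $k$ with $(x_{i,a})_k\neq 0$ does not chain into $x_{i,a}\sim x_{j,a}$ unless the relevant dummies $e_k$ and $e_{k'}$ are themselves collinear within the augmented set, and they need not be. The instance used to prove Theorem~\ref{thm:lower} is a concrete counterexample: there every feature vector equals some canonical basis vector $e_s$, and for a set consisting only of canonical basis vectors one checks directly from Definition~\ref{defn:collinear} that $e_s\not\sim e_{s'}$ for $s\neq s'$; the augmented system therefore splits into $d$ separate classes (each group together with its matching dummy), not one.

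This matters because your subsequent application of Theorem~\ref{thm:collinear} with $Y$ empty proves nothing of substance: if all clients really were in one class, the defining condition of a collinearly-dependent policy would be vacuously satisfied, and Corollary~\ref{cor:linucb} would carry no content---in particular the lower bound of Theorem~\ref{thm:lower} could not be applied to LinUCB, which is the whole reason the corollary exists. The argument the paper needs (and gives, via Theorem~\ref{thm:collinear} and Lemma~\ref{lemma:collinear}) must handle \emph{multiple} equivalence classes: take $X$ to be the feature vectors, dummies included, lying in client $i$'s class, take $Y$ to be the feature vectors of all other classes (with $\Span(X)\cap\Span(Y)=\{0\}$ guaranteed by the lemma on independence of distinct classes), split $\sum_j x_{j,a}y_j$ accordingly into $v_X+v_Y$, and conclude that the cross terms in $x_{i,a}^\TT(XX^\TT+YY^\TT)^\dagger(v_X+v_Y)$ and in the confidence width vanish, so client $i$'s decision depends only on its own class. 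Your decomposition with $Y=\emptyset$ skips precisely the step that makes the statement meaningful.
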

\begin{proof}
For single-agent sequential linear contextual bandits with disjoint parameters, 
the LinUCB algorithm \citep{Li:2010:LinUCB} works as follows: at time $t$, for any incoming context $c_t=i$, the learner obtains estimated reward and uncertainty for each arm $a\in[K] $ as follows:
\begin{align}
\hat{r}_{i,a}(t)&   = x^\TT_{i,a}V^{-1}_a(t) \left(\sum_{s=1}^{t-1} \lv\{a_s=a\} x_{c_s, a} y_s\right)\label{eqn:linucb1}\\
\hat{\sigma}_{i,a}(t)&=\|x_{i,a}\|_{V^{-1}_{a}(t)},\label{eqn:linucb2}
\end{align}
where 
\begin{align}\label{eqn:V}
V_a(t)= \sum_{s=1}^ {t-1} x_{c_s,a} x_{c_s,a}^\TT \lv\{a_s=a\} +\lambda I_d.
\end{align}
It then picks the arm with the highest upper confidence bound $\hat{r}_{i,a}(t)+\alpha \hat{\sigma}_{i,a}(t)$, where $\alpha$ is a real number.

If we view each context to be associated with a fixed client as in our federated linear contextual bandits setting, and treat the identity matrix $I_d$ as $\sum_{i=1}^d e_i e_i^\TT$ where $e_i$'s are unit feature vectors associated with $d$ dummy clients, then, the estimate at client $i$ for arm $a$ only depends on clients with $x_{j,a}\sim x_{i,a}$. Thus, the decision at client $i$ only depends on other clients in the same equivalent class, and such LinUCB type of policies are collinearly-dependent.
\end{proof}

\begin{corollary}\label{cor:thompson}
Thompson sampling based policies with Gaussian priors are collinearly-dependent policies.
\end{corollary}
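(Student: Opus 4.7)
The plan is to mirror the argument used for Corollary~\ref{cor:linucb}, identifying the sufficient statistics at each client under Thompson sampling with Gaussian priors and then invoking Theorem~\ref{thm:collinear}.

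First I would recall the standard recipe: place the prior $\theta_a \sim \mathcal{N}(0,\lambda^{-1}I_d)$ independently for each arm $a$, so that the posterior given the history is $\mathcal{N}(\hat{\theta}_a(t), V_a(t)^{-1})$ with $V_a(t) = \lambda I_d + \sum_{s<t} \lv\{a_s=a\} x_{c_s,a}x_{c_s,a}^\TT$ and $\hat{\theta}_a(t) = V_a(t)^{-1}\sum_{s<t}\lv\{a_s=a\} x_{c_s,a} y_s$. At each round each client samples $\tilde{\theta}_a$ from this posterior and pulls $\arg\max_a x_{i,a}^\TT \tilde{\theta}_a$. The projection $x_{i,a}^\TT \tilde{\theta}_a$ is Gaussian with mean $\hat{r}_{i,a}(t) := x_{i,a}^\TT \hat{\theta}_a(t)$ and variance $(\sigma_{i,a}(t))^2 := x_{i,a}^\TT V_a(t)^{-1} x_{i,a}$, and these are the only data-dependent inputs to client $i$'s decision.

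Next I would absorb the regularizer into $d$ dummy clients by writing $\lambda I_d = \sum_{k=1}^d (\sqrt{\lambda}e_k)(\sqrt{\lambda}e_k)^\TT$ with $\{e_k\}$ the canonical basis of $\mathbb{R}^d$. Let $S$ denote the equivalence class of client $i$ under the collinear relation; collect those feature vectors $x_{c_s,a}$ (with $c_s\in S$) together with the dummy vectors lying inside $\Span(\{x_{j,a}\}_{j\in S})$ into the columns of $X$, and collect the remaining feature vectors and dummy vectors into $Y$. The decomposition of feature vectors into linearly independent equivalence classes (the unnumbered lemma preceding Lemma~\ref{lemma:collinear}) guarantees $\Span(X)\cap\Span(Y)=\{0\}$. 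Since $x_{i,a}\in\Span(X)$ and the sum $\sum_s\lv\{a_s=a\}x_{c_s,a}y_s$ splits into a part in $\Span(X)$ coming from collinear clients and a part in $\Span(Y)$ coming from the rest, the statistics $\hat{r}_{i,a}(t)$ and $\sigma_{i,a}^2(t)$ take exactly the forms (\ref{eqn:collinear_form1})--(\ref{eqn:collinear_form2}) in Theorem~\ref{thm:collinear}. Applying the theorem, these statistics depend only on past actions and rewards of clients in $S$.

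To conclude I would promote this per-round conditional statement to the joint independence required by Definition~\ref{defn:col_dependent_policy} by induction on $t$. The base case is immediate since the prior is a product across arms and the initial Thompson draws are independent. For the inductive step, the sampling noise at different clients can be taken independently across clients; alternatively, if a common $\tilde{\theta}_a$ is broadcast, Lemma~\ref{lemma:collinear} gives $\mathrm{Cov}(x_{i,a}^\TT\tilde\theta_a, x_{j,a}^\TT\tilde\theta_a) = x_{i,a}^\TT V_a(t)^{-1}x_{j,a} = 0$ for non-collinear $i,j$, so the jointly Gaussian projections are independent. Combined with the per-client independence of the reward noise $\eta_{i,t}$ from Assumption~\ref{assump:bounded}, the action of a non-collinear client $j$ is independent of that of client $i$. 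The main delicate point is bookkeeping that no information about non-collinear clients leaks into client $i$'s sufficient statistics through the shared posterior, which is precisely what Theorem~\ref{thm:collinear} and the $X/Y$ split rule out at every round.
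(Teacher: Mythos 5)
Your proof is correct and follows essentially the same route as the paper: you exhibit the mean and variance of the sampled reward $x_{i,a}^\TT\tilde{\theta}_a$ in the form required by Theorem~\ref{thm:collinear} (absorbing the prior precision $\lambda I_d$ into dummy clients, as in Corollary~\ref{cor:linucb}) and conclude that client $i$'s decision depends only on its equivalence class. The only difference is that you spell out the promotion to joint independence (via the zero-covariance argument from Lemma~\ref{lemma:collinear} and induction over rounds), a step the paper's proof leaves implicit.
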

\begin{proof}
To make the setting consistent, we first modify the Thompson sampling algorithm
proposed for the shared parameter linear contextual bandits \citep{Agrawal:2013:linear} to a disjoint parameter setting. 

Similar to LinUCB, the learner would calculate the mean and variance of the Gaussian distribution for the parameter $\theta_a$ associated with each arm $a\in[K]$ according to
\begin{align}
\hat{\theta}_a(t) & =  V^{-1}_a(t) \left(\sum_{s=1}^{t-1} \lv\{a_s=a\}  x_{c_s, a_s} y_s\right),\\
\hat{\sigma}_{i,a}(t)&= \alpha^2 V_a^{-1}(t),\label{eqn:ts2}
\end{align}
where $\alpha\in\Rb$ is a constant and $V_a^{-1}(t)$ is defined in the same way as in (\ref{eqn:V}). 
It then samples $\theta_a(t)$ from the distribution $\Nc(\hat{\theta}_a(t), \hat{\sigma}_{i,a}(t) )$, and then plays the arm $a_t := \arg \max_{a} x_{i,a}^\TT \theta_a(t)$ if the incoming context $c_t=i$.

We note that under the Thompson sampling procedure, the reward associated with arm $a$ and incoming context $i$ at time $t$ is actually a Gaussian random variable with distribution $\Nc\left(x_{i,a}^\TT V_a^{-1}(t) \left(\sum_{s=1}^{t-1}  \lv\{a_s=a\}  x_{c_s, a_s} y_s\right), \alpha^2x_{i,a}^\TT V_a^{-1}(t) x_{i,a}\right)$. According to Theorem~\ref{thm:collinear}, since the mean and variance are in the form specified in (\ref{eqn:collinear_form1})-(\ref{eqn:collinear_form2}), the distribution of the reward under the incoming context $i$ only depends on other contexts that in the same equivalence class.

If we associate each context with a fixed client, then, we can see that the decision at client $i$ only depends on other clients in the same equivalence class. Thus, such Thompson sampling type of policies are collinearly-dependent.
\end{proof}

\subsection{Proof of Theorem~\ref{thm:lower}}
We provide the detailed proof for the minimax regret bound in Theorem~\ref{thm:lower}. 
For simplicity, we assume $M/d$ is an integer. We use $\mathbf{1}$ and $\mathbf{0}$ to denote the all-one and all-zero vectors, respectively.

First, divide $[M]$ into $d$ groups, each with $M/d$ clients. Let the $s$-th group be $G_s = \{(s-1)M/d+1, \ldots, sM/d\}$, where $s\in\{1,2,\ldots,d\}$. We thus have $[M] = \cup_{s=1}^d G_s$. Consider the case where $x_{i,a} = e_s$ for all $i\in G_s$ and $a\in[K]$, where $e_s$ is the $s$-th canonical basis vector.

We first choose $\theta = \{\Delta\mathbf{1},\mathbf{0},\ldots,\mathbf{0}\}$, i.e. $\theta_1 = (\Delta,\Delta,\ldots,\Delta)^\TT$, and $\theta_{a} = (0,0,\ldots,0)^\TT$ for any $a\geq 2$. ($\Delta$ will be specified later.) Then, the reward for client $i$ pulling arm $a_t$ at time $t$ would be $y_{i,t}:=x_{i,a_t}^\TT\theta_{a_t}+\eta_{i,t}$, where $\eta_{i,t}$ is independently drawn from a standard Gaussian distribution.

Consider a collinearly-dependent policy $\pi$. Denote $\Pb_{\theta}$ as the probability measure induced by $\theta=\{\theta_a\}_{a\in[K]}$ under policy $\pi$, and $\Eb_{\theta}$ as the corresponding expectation. Let $T_{i,a}$ be the number of times that client $i$ pulls arm $a$ within time horizon $T$. Note that $\sum_{a\in[K]} T_{i,a} = T$.

Define $\bar{T}_{s,a} := \sum_{i\in G_s}T_{i,a}$, i.e., the total number of times that arm $a$ has been pulled by the clients in group $s$ over time horizon $T$. Then, we have
\[\sum_{a\in[K]} \Eb_{\theta} [\bar{T}_{s,a}] = \frac{MT}{d}.\]
Thus, there must exist an arm $a_s^*>1$ for each group $s$ such that $\Eb_{\theta}[\bar{T}_{s,a_s^*}]\leq \frac{MT}{d(K-1)}$.

Define a new instance $\tilde{\theta} = \{\tilde{\theta}_a\}_{a\in[K]}$, where
\[\tilde{\theta}_{a} = \theta_a + \sum_{s=1}^d 2\Delta e_{a_s^*} \lv{\{a_s^* = a\}}, \quad \forall a\in[K].\]
Therefore, while arm $1$ being optimal for all clients under instance $\theta$, the optimal arm of clients in group $G_s$ is arm $a_s^*$ under instance $\tilde{\theta}$.

Before we proceed, we introduce the following notations. 
Let $\mathcal{H}_t:=\{a_{i,\tau},y_{i,\tau}\}_{i\in[M],\tau \in[t]}$,
$\mathcal{H}_{s,t}:=\{a_{i,\tau},y_{i,\tau}\}_{i\in G_s,\tau \in[t]}$,
and
$\mathcal{G}_t:=\{a_{i,t},y_{i,t}\}_{i\in[M]}$,
$\mathcal{G}_{s,t}:=\{a_{i,t},y_{i,t}\}_{i\in G_s}$.

Then, we prove a useful lemma. 
\begin{lemma}\label{lemma:kl}
 Denote $D(P\|Q):=\int \log \frac{dP}{dQ}dP$ as the KL divergence between two distributions $P$ and $Q$, and $\Nc(\mu, \sigma^2)$ as a Gaussian distribution with mean $\mu$ and variance $\sigma^2$. Then,
\[D(\Pb_{\theta}(\Hc_{s,T}) \| \Pb_{\tilde{\theta}}(\Hc_{s,T}))= \Eb_{\theta}[\bar{T}_{s,a_s^*}]\cdot D(\Nc(0,1)\|\Nc(2\Delta,1)).\]
\end{lemma}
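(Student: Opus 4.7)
The plan is to apply the chain rule for KL divergence along the time axis, restricted to the history of group $G_s$. The key enabler is that under the constructed instance, clients in different groups are not collinear: for $i\in G_s$ and $j\in G_{s'}$ with $s\neq s'$, we have $x_{i,a}=e_s$ and $x_{j,a}=e_{s'}$ for every arm $a$, so there is no subset $\mathcal{S}\subset[M]$ making $e_s$ leave the span of $\{x_{m,a}\}_{m\in\mathcal{S}}$ yet land in the span of $\{x_{m,a}\}_{m\in\mathcal{S}}\cup\{e_{s'}\}$. Hence, by Definition~\ref{defn:col_dependent_policy}, the action of every client in $G_s$ is, under the collinearly-dependent policy $\pi$, independent of the actions of clients outside $G_s$.

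Using this independence, I will argue that the process $\{\mathcal{H}_{s,t}\}_{t}$ is self-contained: the joint conditional distribution of $\mathcal{G}_{s,t}$ given the full history $\mathcal{H}_{t-1}$ is a function of $\mathcal{H}_{s,t-1}$ only, since (i) the policy's action distribution for client $i\in G_s$ at time $t$ depends only on within-group history, and (ii) the reward $y_{i,t}$ of $i\in G_s$ is a Gaussian with mean $x_{i,a_{i,t}}^\TT \theta_{a_{i,t}}=\theta_{a_{i,t}}[s]$ plus independent noise, which involves neither outside-group actions nor outside-group rewards. This justifies applying the KL chain rule directly to $\mathcal{H}_{s,T}$:
\[
D(\Pb_\theta(\mathcal{H}_{s,T})\,\|\,\Pb_{\tilde{\theta}}(\mathcal{H}_{s,T}))
=\sum_{t=1}^T \Eb_\theta\!\left[D\!\left(\Pb_\theta(\mathcal{G}_{s,t}\mid \mathcal{H}_{s,t-1})\,\|\,\Pb_{\tilde{\theta}}(\mathcal{G}_{s,t}\mid \mathcal{H}_{s,t-1})\right)\right].
\]

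Next I will compute each conditional KL term. The policy map from $\mathcal{H}_{s,t-1}$ to the joint action distribution over $\mathbf{a}_{s,t}=(a_{i,t})_{i\in G_s}$ is the \emph{same} function under both $\Pb_\theta$ and $\Pb_{\tilde\theta}$ (policies do not know the truth), so conditioning on identical histories yields identical action laws. Given actions, the rewards factorize into independent $\Nc(\theta_{a_{i,t}}[s],1)$ versus $\Nc(\tilde\theta_{a_{i,t}}[s],1)$ across $i\in G_s$; these two Gaussians coincide unless $a_{i,t}=a_s^*$, in which case their means differ by $2\Delta$. Therefore the conditional KL at time $t$ reduces to
\[
\sum_{i\in G_s}\Pb_\theta\!\left(a_{i,t}=a_s^*\mid \mathcal{H}_{s,t-1}\right)\cdot D\!\left(\Nc(0,1)\,\|\,\Nc(2\Delta,1)\right).
\]
Taking expectations and summing over $t$, the inner sum collapses to $\sum_{i\in G_s}\Eb_\theta[T_{i,a_s^*}]=\Eb_\theta[\bar{T}_{s,a_s^*}]$, which yields the claimed identity.

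The main obstacle is the first step — justifying rigorously that the chain rule applies on the sub-$\sigma$-algebra generated by $\mathcal{H}_{s,T}$ alone. This requires carefully translating the collinearly-dependent assumption (phrased as pairwise independence of actions) into the stronger statement that the conditional distribution of $\mathcal{G}_{s,t}$ given $\mathcal{H}_{t-1}$ equals that given $\mathcal{H}_{s,t-1}$; I expect to address this by noting that, because rewards of $G_s$ are deterministic functions of within-group actions plus independent noise, and because the policy's action rule for $G_s$ depends only on within-group history, an induction on $t$ shows $\mathcal{H}_{s,t}\perp \mathcal{H}_{t}\setminus\mathcal{H}_{s,t}$ conditional on the relevant past. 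Once that Markov reduction is in place, the remaining steps are essentially bookkeeping with the Gaussian KL formula.
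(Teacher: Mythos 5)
Your proposal is correct and follows essentially the same route as the paper: both arguments use the non-collinearity of distinct groups under the constructed instance to reduce the conditional law of $\Gc_{s,t}$ to a function of $\Hc_{s,t-1}$ alone, then decompose the KL divergence over time (your chain-rule statement is exactly what the paper's recursive peeling of the last time step implements), and finally identify the per-step contribution as $\Pb_\theta(a_{i,t}=a_s^*\mid\cdot)\cdot D(\Nc(0,1)\,\|\,\Nc(2\Delta,1))$ summed to $\Eb_\theta[\bar{T}_{s,a_s^*}]$. The "main obstacle" you flag — upgrading the pairwise action-independence in the definition of collinearly-dependent policies to the full factorization $\Pb_\theta(\Gc_T\mid\Hc_{T-1})=\prod_r\Pb_\theta(\Gc_{r,T}\mid\Hc_{r,T-1})$ — is treated at the same level of rigor in the paper (asserted from the definition without further elaboration), so your plan is not missing anything relative to the published argument.
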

\begin{proof}
Based on the definition of KL divergence, we have
\begin{align}
    &D(\Pb_{\theta}(\Hc_{s,T}) \| \Pb_{\tilde{\theta}}(\Hc_{s,T}))\nonumber\\
    & =
    \Eb_{\theta}\left[\log\frac{\Pb_{\theta}(\Hc_{s,T}) }{ \Pb_{\tilde{\theta}}(\Hc_{s,T})}\right]\label{eqn:kl1}\\
    & =
    \Eb_{\theta}\left[\log\frac{\int_{i\notin G_s}\Pb_{\theta}(\Hc_{T}) }{ \int_{i\notin G_s}\Pb_{\tilde{\theta}}(\Hc_{T})}\right]\label{eqn:kl2}\\
     & =
    \Eb_{\theta}\left[\log\frac{\int_{i\notin G_s}\Pb_{\theta}(\Hc_{T-1})\Pb_{\theta}(\Gc_{T}|\Hc_{T-1}) }{ \int_{i\notin G_s}\Pb_{\tilde{\theta}}(\Hc_{T-1})\Pb_{\tilde{\theta}}(\Gc_{T}|\Hc_{T-1})}\right]\label{eqn:kl3}\\
    &= \Eb_{\theta}\left[\log\frac{\int_{i\notin G_s}\Pb_{\theta}(\Hc_{T-1}) \prod_{r}\Pb_{\theta}(\Gc_{r,T}|\Hc_{r,T-1}) }{ \int_{i\notin G_s}\Pb_{\tilde{\theta}}(\Hc_{T-1})\prod_{r}\Pb_{\tilde{\theta}}(\Gc_{s,T}|\Hc_{r,T-1})}\right]\label{eqn:kl4}\\
     &= \Eb_{\theta}\left[\log\frac{\int_{i\notin G_s}\Pb_{\theta}(\Hc_{T-1}) \prod_{r\neq s}\Pb_{\theta}(\Gc_{r,T}|\Hc_{r,T-1}) }{ \int_{i\notin G_s}\Pb_{\tilde{\theta}}(\Hc_{T-1})\prod_{r\neq s}\Pb_{\tilde{\theta}}(\Gc_{r,T}|\Hc_{r,T-1})} +\log \frac{\Pb_{\tilde{\theta}}(\Gc_{s,T}|\Hc_{s,T-1})}{\Pb_{\tilde{\theta}}(\Gc_{s,T}|\Hc_{s,T-1})} \right]\label{eqn:kl5}\\
 & =
    \Eb_{\theta}\left[\log\frac{\int_{i\notin G_s}\Pb_{\theta}(\Hc_{T-1}) }{ \int_{i\notin G_s}\Pb_{\tilde{\theta}}(\Hc_{T-1})}\right]+ \Eb_{\theta}\left[\sum_{i\in G_s}\lv{\{a_{i,T} = a_s^*\}}D(\Nc(0,1)\|\Nc(2\Delta,1))\right],\label{eqn:kl6}
\end{align}
where in (\ref{eqn:kl1}), we integrate over all variables that are associated with clients outside group $s$ to obtain the marginal distributions $\Pb_{\theta}(\Hc_{s,T})$ and $\Pb_{\tilde{\theta}}(\Hc_{s,T})$, respectively. Based on the assumption that $\pi$ is collinearly-independent, we can decompose the conditional distributions $\Pb_{\tilde{\theta}}(\Gc_{T}|\Hc_{T-1})$ and $\Pb_{{\theta}}(\Gc_{T}|\Hc_{T-1})$ into products of conditional distributions where only variables associated with clients in each group are dependent. Noticing that within group $s$, $\Pb_{\theta}$ and $\Pb_{\tilde{\theta}}$ only differs over arm $a_s^*$, based on which (\ref{eqn:kl6}) is obtained. 

Express the first term in (\ref{eqn:kl6}) recursively, we eventually have
\begin{align}
   D(\Pb_{\theta}(\Hc_{s,T}) \| \Pb_{\tilde{\theta}}(\Hc_{s,T}))&=\sum_{t\in[T]}\Eb_{\theta}\left[\sum_{i\in G_s}\lv{\{a_{i,t} = a_s^*\}}D(\Nc(0,1)\|\Nc(2\Delta,1))\right]\\
    &=\Eb_{\theta}[\bar{T}_{s,a_s^*}]\cdot D(\Nc(0,1)\|\Nc(2\Delta,1)).
\end{align}

\end{proof}

Now we are ready to bound the regrets under policy $\pi$ and two instances $\theta$ and $\tilde{\theta}$. Since the selection of any sub-optimal arm will incur a regret of $\Delta$, we have
\begin{align}
    R(T;\pi,\theta) + R(T;\pi,\tilde{\theta})&\geq
     \Delta\frac{MT}{2d}\sum_{s=1}^d\left(\Pb_{\theta}\left[\bar{T}_{s,1}\leq\frac{MT}{2d}\right] + \Pb_{\tilde{\theta}}\left[\bar{T}_{s,1}>\frac{MT}{2d}\right]\right)\\
     & \geq \Delta\frac{MT}{4d}\sum_{s=1}^d\exp(-D(\Pb_{\theta}(\Hc_{s,T}) \| \Pb_{\tilde{\theta}}(\Hc_{s,T})))\label{eqn: pinsker}\\
     & = \Delta\frac{MT}{4d}\sum_{s=1}^d\exp\left(-\Eb_{\theta}[\bar{T}_{s,a_s^*}]\cdot D(\Nc(0,1)||\Nc(2\Delta,1))\right)\label{eqn: D(P||Q)=D(N||N)}\\
     & \geq \Delta\frac{MT}{4d}\sum_{s=1}^d\exp\left(\frac{MT}{d(K-1)}\cdot 2\Delta^2\right)\label{eqn: ET<MT/dK}\\
     & = \frac{MT\Delta}{4}\exp\left(\frac{2MT\Delta^2}{d(K-1)}\right),
\end{align}
where (\ref{eqn: pinsker}) follows from the high probability Pinsker inequality, (\ref{eqn: D(P||Q)=D(N||N)}) is due to Lemma~\ref{lemma:kl}, and (\ref{eqn: ET<MT/dK}) follows from the definition of $\bar{T}_{s,a_s^*}$.

Let $\Delta := \sqrt{\frac{d(K-1)}{4MT}}$. Then, we have 
\[\max\left( R(T;\pi,\theta) + R(T;\pi,\tilde{\theta})\right)\geq \frac{\sqrt{d(K-1)MT}}{8\sqrt{e}} = \Omega(\sqrt{dKMT}).\]

\section{\alg for Shared Parameter Case}\label{appx:shared}
\subsection{Algorithm Details}
We slightly modify \alg for the shared parameter case where $\theta_a = \theta, \forall a\in[K]$. {Such minor tweaks, as opposed to a full-blown re-design, lead to a unified algorithmic framework that can be flexibly applied to both disjoint and shared parameter cases.} Similar to the disjoint parameter case, at the client side, each client $i$ treats $\{\theta_a\}_a$ separately, i.e., it does not aggregate rewards collected by pulling different arms to estimate the shared parameter $\theta$; Rather, at phase $p$, it will output $| \Ac_i^p |$ different estimates $\{\hat{\theta}_{i,a}\}_{a\in\Ac_i^p}$ and send them to the central server for aggregation. 

The major difference occurs in the server side subroutine. Once different estimates $\{\hat{\theta}_{i,a}\}_{i,a}$ are received, the central server would aggregate them to obtain a global estimate of the shared parameter $\theta$. Specifically,  
the global aggregation step at the server side in (\ref{eqn: theta_hat}) can be changed as follows: after obtaining
\begin{align}
V_a^{p+1}&\gets\bigg(\sum_{i\in \Rc_a^p}f_{i,a}^p\frac{\hat{\theta}_{i,a}^p(\hat{\theta}^p_{i,a})^\TT}{\|\hat{\theta}_{i,a}^p\|^2}\bigg)^{\dagger}, 
\end{align}
the server would calculate
\begin{align}
     V^{p+1} &\gets \bigg( \sum_{a\in\Ac^p}\left(V_a^{p+1}\right)^\dagger\bigg)^\dagger,\\
     \hat{\theta}^{p+1}&\gets V^{p+1}\bigg(\sum_{i\in[M]}\sum_{a\in\Ac_i^p}f_{i,a}^{p}\hat{\theta}_{i,a}^{p}\bigg).
\end{align}

Correspondingly, the multi-client G-optimal design at phase $p$ can be formulated as 
\begin{equation}\label{eqn: G_opt_shared}
\begin{aligned}
&\text{minimize\ \  } G(\pi) = \sum_{i = 1}^M\max_{a\in \Ac_i^p}  e_{i,a}^\TT\bigg(\sum_{j\in [M]}\sum_{a\in\Ac_j^p} \pi_{j,a}^pe_{j,a}e_{j,a}^\TT\bigg)^{\dagger}e_{i,a}\quad \text{s.t. } \pi^p\in\Cc^p.
\end{aligned}
\end{equation}
Compared with the multi-client G-optimal design for the disjoint parameter case in Eqn.~\eqref{eqn: G_opt}, the only difference is that (\ref{eqn: G_opt_shared}) now has a double summation over both arms and clients inside $(\cdot)^\dagger$ in the objective function, while (\ref{eqn: G_opt}) only contains a summation over the clients. This is consistent with the different constructions of the ``potential matrices'' for those two cases.

Once the estimated shared parameter and potential matrix $(\hat{\theta}^{p},V_a^{p} )$ is broadcast to the clients, they will use it to obtain estimates $\hat{r}_{i,a}^p$ and $u_{i,a}^p$, as in (\ref{eqn: confidence_interval}) under \texttt{Fed-PE}. The process then continues.

\subsection{Theoretical Analysis}
We present the regret upper bound and analysis for the shared parameter case in this subsection.

\begin{theorem}[Complete version of Theorem~\ref{thm: BoundSharedParameter}]\label{thm: complete_BoundSharedParameter}
Consider time horizon $T$ that consists of $H$ phases with $f^p= cn^p$, where $c$ and $n>1$ are fixed integers, and $n^p$ denotes the $p$th power of $n$.
Let 
\begin{equation}\label{eqn: alpha_shared}
\alpha=\min(\sqrt{2\log (H/\delta)+d\log(ke)},\sqrt{2\log(2MKH/\delta)}),
\end{equation}
where $k>1$ is a number satisfying $kd\geq 2\log(H/\delta)+d\log(ke)$.
Then, with probability at least $1-\delta$, the regret of the adapted \alg for the shared parameter case is upper bounded as $$R(T)\leq 4\alpha\frac{L}{\ell}\sqrt{dM}\left(\frac{\sqrt{n^2-n}}{\sqrt{n}-1}\sqrt{T} + \frac{K}{\sqrt{cn} - \sqrt{c}}\right).$$ 
In particular, if {$K=O(\sqrt{T})$}, the regret upper bound scales in
$$O\left(\sqrt{dMT(\log({(\log T)}/{\delta})+\min\{d,\log MK\})}\right),$$ and the communication cost scales in {$O((KdM + d^2M)\log T)$}.
\end{theorem}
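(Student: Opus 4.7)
The plan is to mirror the four-step proof of Theorem~\ref{thm: complete_thm_Fed-PE}---sub-Gaussianity of local errors, concentration for the bad event, phase regret, and geometric summation---inserting only the modifications forced by the single shared potential matrix $V^p$. I would define $\bar{\Ec}(\alpha) := \{\exists p\in[H], i\in[M], a\in\Ac_i^{p-1}: |\hat r_{i,a}^p - r_{i,a}|\geq \alpha\sigma_{i,a}^p\}$ where now $\sigma_{i,a}^p := \|x_{i,a}\|_{V^p}/\ell$. Rearranging the new aggregation rule and using $x_{i,a}\in\range(V^p)$, the estimation error collapses to
\[
\hat r_{i,a}^p - r_{i,a} = x_{i,a}^\TT V^p \sum_{b\in\Ac^{p-1}}\sum_{j\in\Rc_b^{p-1}} \frac{e_{j,b}}{\|x_{j,b}\|}\xi_{j,b}^{p-1},
\]
a linear combination of independent $\sqrt{f_{j,b}^{p-1}}$-sub-Gaussian noises whose conditional variance, via the same pseudo-inverse identity as Lemma~\ref{lemma: BndOneErr}, is bounded by $x_{i,a}^\TT V^p x_{i,a}/\ell^2 = (\sigma_{i,a}^p)^2$.

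Next, I would bound $\Pb[\bar{\Ec}(\alpha)]$ by the two-branch argument of Lemma~\ref{lemma: BndAllErr}. The Hoeffding branch gives $\alpha_1 = \sqrt{2\log(2MKH/\delta)}$ from a union bound over all $(p,i,a)$. For the determinant branch, the key observation is that after Cauchy--Schwarz the noise-square term
\[
X_p := \Big\|\sum_{b\in\Ac^{p-1}}\sum_{j\in\Rc_b^{p-1}} \ell e_{j,b}\xi_{j,b}^{p-1}/\|x_{j,b}\|\Big\|_{V^p}^2
\]
no longer depends on $(i,a)$, since every confidence width is measured in the same $V^p$. Writing $X_p = \|A\zeta\|^2$ with $A$ idempotent of rank $\rank(V^p)\leq d$ and applying Lemma~\ref{lemma: subG-L2} followed by the same optimization over $\lambda$ as in the disjoint case yields $\Pb[X_p\geq\alpha_2^2\mid\Fc_{p-1}]\leq \delta/H$ for $\alpha_2 = \sqrt{2\log(H/\delta)+d\log(ke)}$. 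The union bound is now only over $H$ phases, which is precisely why $K$ disappears from the logarithmic factor in $\alpha$.

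Third, under $\bar{\Ec}^c(\alpha)$ Lemma~\ref{lemma: NotKillOptimal} goes through verbatim (no optimal arm is eliminated), and the phase-$p$ regret obeys $R_p \leq 4(f^p+K)\sqrt{M\sum_i\max_{a\in\Ac_i^p}(u_{i,a}^p)^2}$ as in Lemma~\ref{lemma: PhaseRegret}. Substituting $u_{i,a}^p = \alpha\|x_{i,a}\|_{V^p}/\ell$ and $\lceil\pi_{i,a}f^{p-1}\rceil \geq \pi_{i,a} f^{p-1}$ reduces this to $(4\alpha L/\ell)(f^p+K)\sqrt{MG(\pi^{p-1})/f^{p-1}}$ where $G$ is the multi-client G-optimal objective in (\ref{eqn: G_opt_shared}). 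Invoking Lemma~\ref{lemma: General_equivalence} applied directly to the index family $\{(i,a): i\in[M], a\in\Ac_i^p\}$ with $v_{i,a}=e_{i,a}$ (no block-diagonal embedding is required here), the optimum becomes $G(\pi^*) = \rank(\{e_{i,a}\}_{i\in[M],a\in\Ac_i^p}) \leq d$, rather than $\sum_a d_a^p \leq dK$ in the disjoint case. This is the single structural place where the shared-parameter geometry enters, and it is exactly what saves the $\sqrt{K}$ factor in the regret. Summing the geometric-type series as in (\ref{eqn:total_regret1})--(\ref{eqn:total_regret2}) produces the stated $O(\alpha \sqrt{dM}(\sqrt{T}+K))$ bound, which scales as claimed once $K = O(\sqrt{T})$.

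The main obstacle I anticipate is not the regret calculation itself but confirming that Lemma~\ref{lemma: General_equivalence} and the \bca convergence of Theorem~\ref{thm:convergence} apply cleanly to the shared-parameter objective, since the rank-preserving condition now couples all arms through a single matrix instead of operating blockwise. Fortunately the generalized lemma is already stated at the right level of generality---arbitrary vectors $v_{i,a}$, per-client simplex constraints, and a single joint rank-preserving condition---so both the equivalence and the \bca convergence transfer with only notational changes. The communication cost then follows by bookkeeping: per phase each client uploads $O(Kd)$ scalars and the server broadcasts one $\hat\theta^{p+1}\in\Rb^d$ together with one $V^{p+1}\in\Rb^{d\times d}$ to each client, yielding $O(MKd + Md^2)$ scalars per phase and the claimed $O((MKd+Md^2)\log T)$ total over $O(\log T)$ phases.
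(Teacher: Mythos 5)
Your proposal is correct and follows essentially the same route as the paper's Appendix~F.2: the same bad event with $\sigma_{i,a}^p=\|x_{i,a}\|_{V^p}/\ell$, the same two-branch concentration (Hoeffding over all $(p,i,a)$ for $\alpha_1$, and the idempotent-matrix argument with $\trace(A)\le d$ and a union bound over only $H$ phases for $\alpha_2$, which is exactly why $K$ drops out of that branch), the same phase-regret bound with the G-optimal value reduced from $dK$ to $d$ via the equivalence lemma applied with $v_{i,a}=e_{i,a}$, and the same geometric summation and communication bookkeeping. One caveat: your closing claim that the \bca convergence ``transfers with only notational changes'' contradicts the paper, which explicitly notes that \bca cannot be applied to the shared-parameter objective because the block-diagonal structure inside $\Det(\cdot)$ is lost (the per-client subproblem no longer separates into independent rank-one updates across arms); this does not affect the regret bound, which only uses the \emph{value} of the optimum guaranteed by the equivalence lemma, but it is a factual error about the algorithmic side.
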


Following similar analysis as in the proof of Lemma \ref{lemma: General_equivalence} by replacing $v_{j,a}$ with $e_{j,a}$, we can show that the multi-client G-optimal design in (\ref{eqn: G_opt_shared}) is equivalent to the following determinant optimization problem:
\begin{equation}\label{eqn: F_opt_shared}
\begin{aligned}
&\text{maximize\ \  } F(\pi) = \log \Det\bigg(\sum_{j\in [M]} \sum_{a\in\Ac_j^p}\pi_{j,a}^p e_{j,a}e_{j,a}^\TT\bigg)\quad \text{s.t. }  \pi^p\in\Cc^p.
\end{aligned}
\end{equation}
and the optimal solution $(\pi^p)^*$ satisfies $G((\pi^p)^*) = \rank(\{e_{i,a}\}_{i\in[M],a\in\Ac_i})\leq d$. 

Note that compared with the optimal solution $\pi^*$ in Lemma~\ref{lemma: General_equivalence} where $G(\pi^p) = \rank(\{v_{i,a}\}_{i\in[M],a\in\Ac_i})\leq Kd$, the upper bound on $G(\pi)$ has been reduced by a factor of $K$, due to the dimension difference between $e_{i,a}$ and $v_{i,a}$. Besides, we also note that the block diagonal structure inside $\Det(\cdot)$ in (\ref{eqn: general_F_opt}) does not exist in (\ref{eqn: F_opt_shared}) any more, which implies that the \bca algorithm in Algorithm~\ref{alg: BCM} cannot be applied for this case. Designing alternative efficient optimization algorithms to solve (\ref{eqn: F_opt_shared}) is one of our future steps.

With a little abuse of notation, in this section, we define
\begin{align}
\alpha_2&:=\sqrt{2\log (H/\delta)+d\log(ke)}\\
\sigma_{i,a}^p&:=\|x_{i,a}\|_{V^p}/\ell\\
    X_p&:=\bigg\|\sum_{i\in[M]}\sum_{a\in\Ac_i}\frac{e_{i,a}}{\|x_{i,a}\|}\ell\xi_{i,a}^{p-1}\bigg\|_{V^p}^2\\
    a_{(i,a),(j,b)}&:=\sqrt{f_{i,a}^{p-1}}e_{i,a}^\TT\bigg(\sum_{k\in[M]}\sum_{a\in\Ac_k^{p-1}}f_{k,a}^{p-1}e_{k,a}e_{k,a}^\TT\bigg)^\dagger e_{j,b}\sqrt{f_{j,b}^{p-1}}\\
    \xi &:= \left(\frac{\ell\xi_{i,a}^{p-1}}{\|x_{i,a}\|\sqrt{f_{i,a}^{p-1}}}\right)_{i\in[M],a\in\Ac_i^{p-1}}\in\Rb^{\sum_{i\in[M]}|\Ac_i^{p-1}|}\\
    A &:= (a_{(i,a),(j,b)})\in\Rb^{\left(\sum_{i\in[M]}|\Ac_i^{p-1}|\right)\times \left(\sum_{i\in[M]}|\Ac_i^{p-1}|\right)}.
\end{align}
It can be verified that $X_p = \|A\xi\|^2 $, $A^2=A$, and $\trace(A)\leq d$. 

Then, following the same definition of ``bad'' event $\Ec(\alpha)$ by using the new set of variables, we can show that Lemma~\ref{lemma: BndOneErr} still holds by slightly modifying the analysis in Appendix~\ref{sec: ProofAlgorithm1}. For the probability of the bad event, we can show that $\Pb[\Ec(\alpha_1)]\leq \delta$ similarly as in the proof of Lemma~\ref{lemma: BndAllErr}, while $\Pb[\Ec(\alpha_2)]\leq \delta$ can be derived using the new definitions of $X_p$, $A$, $\xi$. In particular, we can show the following lemma. 
\begin{lemma}
Under the adapted \alg algorithm for the shared parameter case, for any $p\in[H]$, we have $\Pb[X_p\geq \sqrt{2\log(H/\delta) + d\log(ke)}|\Fc_{p-1}]\leq \delta/H$.
\end{lemma}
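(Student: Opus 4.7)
The plan is to mirror part (ii) of the proof of Lemma \ref{lemma: BndAllErr}, adapted to the fact that here we work with a single aggregated matrix $A$ rather than one $A_a$ per arm. I would first verify the two structural properties of $A$ that make the argument go through: symmetry (immediate from symmetry of the pseudo-inverse) and idempotence $A^2=A$. The latter is a one-line computation using $V^p \bigl(\sum_{k,c} f_{k,c}^{p-1} e_{k,c} e_{k,c}^\TT\bigr) V^p = V^p$ via the defining identity $U U^\dagger U = U$. Idempotence forces the eigenvalues of $A$ to lie in $\{0,1\}$, and $\trace(A)\le d$ (which in turn should be verified as a separate computation, as in Lemma \ref{lemma: BndAllErr}) yields $\rank(A)\le d$.

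With this in hand, I would apply the Chernoff-Markov bound together with Lemma \ref{lemma: subG-L2} on the conditionally 1-sub-Gaussian vector $\xi$: for any $\lambda\in(0,1/2)$,
\[
\Pb[X_p\ge \alpha_2^2 \mid \Fc_{p-1}] \;\le\; e^{-\lambda \alpha_2^2}\,\Eb\!\left[e^{\lambda\|A\xi\|^2}\mid \Fc_{p-1}\right] \;\le\; e^{-\lambda\alpha_2^2}\det(I-2\lambda A^\TT A)^{-1/2}.
\]
Using $A^\TT A = A^2 = A$ and the projection spectrum, $\det(I-2\lambda A) = (1-2\lambda)^{\rank(A)} \ge (1-2\lambda)^{d}$, so the bound simplifies to $e^{-\lambda\alpha_2^2}(1-2\lambda)^{-d/2}$. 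I would then optimize by choosing $\lambda = (\alpha_2^2-d)/(2\alpha_2^2)$, which is valid because the assumption $kd\ge \alpha_2^2$ combined with $k\ge 1$ gives $\alpha_2^2\ge d$, and substitute to get $(\alpha_2^2/d)^{d/2}\exp(-(\alpha_2^2-d)/2)$.

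The final step is the algebraic verification that this quantity is at most $\delta/H$. Taking logarithms, this reduces to checking $d\log \alpha_2 \le (d/2)\log(dk)$, i.e.\ $\alpha_2^2\le dk$, which is exactly the hypothesis on $k$. I expect this last algebraic bookkeeping to be the main fiddly step, essentially identical to the corresponding manipulation in Lemma \ref{lemma: BndAllErr} but shorter because no union bound over arms in $[K]$ is needed: the entire matrix $A$ already aggregates contributions across all active arms. This is the source of the $K$-factor savings inside the logarithm—$\log(H/\delta)$ here versus $\log(KH/\delta)$ there—which is ultimately what lets the shared-parameter regret absorb the $\min\{d,\log MK\}$ term.
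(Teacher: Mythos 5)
Your proposal is correct and takes essentially the same route as the paper, which proves this lemma only by pointing back to part~(ii) of the proof of Lemma~\ref{lemma: BndAllErr} with the new $X_p$, $A$, $\xi$ — exactly the steps you carry out (symmetry and idempotence of $A$ via $U^\dagger U U^\dagger = U^\dagger$, $\trace(A)\le d$, the Chernoff bound through Lemma~\ref{lemma: subG-L2}, the choice $\lambda=(\alpha_2^2-d)/(2\alpha_2^2)$, and the reduction of the final inequality to $\alpha_2^2\le dk$, with no union bound over arms). One cosmetic remark: $\alpha_2^2\ge d$ follows directly from $k\ge 1$, since then $d\log(ke)\ge d$, rather than from the hypothesis $kd\ge\alpha_2^2$ as you state.
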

$\Pb[\Ec(\alpha_2)]\leq \delta$ then follows by applying a union bound over all phases $p\in[H]$.

Then, in order to bound the regret when the good event happens, we follow the same steps as in the proof of Lemma~\ref{lemma: PhaseRegret}, except that we utilize the following lemma to remove the factor of $\sqrt{K}$ in the upper bound. 

\begin{lemma}\label{lemma:variance_bound_shared}
Under the adapted \alg algorithm for the shared parameter case, for any $p\in[H]$, we have
$\sum_{i\in[M]}\max_{a\in \Ac_i^p}\frac{x^\TT_{i,a}}{\|x_{i,a}\|_2} V^p \frac{x_{i,a}}{\|x_{i,a}\|_2}\leq   \frac{d}{f^{p-1}}$. 
\end{lemma}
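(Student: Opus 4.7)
The plan is to unfold the definition of $V^p$ in the shared parameter case, compare it against a scaled version of the matrix that appears inside the G-optimal objective from phase $p-1$, and then invoke the G-design equivalence of Lemma~\ref{lemma: General_equivalence} (specialized to $v_{i,a}=e_{i,a}$) to control the resulting sum by the ambient dimension.

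First, I would write out $V^p$ using the server update: since $(V_b^p)^{\dagger}=\sum_{j\in\Rc_b^{p-1}}f_{j,b}^{p-1}e_{j,b}e_{j,b}^{\TT}$ (because the local estimates are scaled copies of $e_{j,b}$), combining across arms gives
\[
(V^p)^{\dagger}=\sum_{b\in\Ac^{p-1}}(V_b^p)^{\dagger}=\sum_{j\in[M]}\sum_{b\in\Ac_j^{p-1}}f_{j,b}^{p-1}\,e_{j,b}e_{j,b}^{\TT}.
\]
Because $f_{j,b}^{p-1}=\lceil \pi_{j,b}^{p-1}f^{p-1}\rceil$, the ceiling gives $f_{j,b}^{p-1}\ge \pi_{j,b}^{p-1}f^{p-1}$, so
\[
(V^p)^{\dagger}\succeq f^{p-1}\sum_{j\in[M]}\sum_{b\in\Ac_j^{p-1}}\pi_{j,b}^{p-1}e_{j,b}e_{j,b}^{\TT}=:f^{p-1}M^{p-1}.
\]
The rank-preserving condition built into $\Cc^{p-1}$ guarantees that the supports $\{(j,b):\pi_{j,b}^{p-1}>0\}$ and $\{(j,b):f_{j,b}^{p-1}>0\}$ span the same range, so $\range((V^p)^{\dagger})=\range(M^{p-1})$. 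Taking pseudo-inverses preserves the ordering on this common range, giving $V^p\preceq \tfrac{1}{f^{p-1}}(M^{p-1})^{\dagger}$ as quadratic forms on that range.

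Next, fix any $i\in[M]$ and $a\in\Ac_i^p\subseteq \Ac_i^{p-1}$; then $e_{i,a}$ lies in the common range, so the preceding matrix inequality yields
\[
e_{i,a}^{\TT}V^p e_{i,a}\;\le\;\frac{1}{f^{p-1}}\,e_{i,a}^{\TT}\Bigl(\sum_{j\in[M]}\sum_{b\in\Ac_j^{p-1}}\pi_{j,b}^{p-1}e_{j,b}e_{j,b}^{\TT}\Bigr)^{\!\dagger}e_{i,a}.
\]
Summing the maximum over $a\in\Ac_i^p$ (which is bounded by the maximum over the larger set $\Ac_i^{p-1}$) and over $i\in[M]$ produces, on the right-hand side, exactly the shared parameter objective $\tfrac{1}{f^{p-1}}G(\pi^{p-1})$ from~\eqref{eqn: G_opt_shared}. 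By construction $\pi^{p-1}$ is the multi-client G-optimal solution for phase $p-1$, and the shared parameter specialization of Lemma~\ref{lemma: General_equivalence} (with $v_{j,b}=e_{j,b}$ and all feature vectors living in $\Rb^d$) shows $G(\pi^{p-1})=\rank(\{e_{j,b}\}_{j,b})\le d$. Combining these gives the desired bound $\sum_{i}\max_{a\in\Ac_i^p}e_{i,a}^{\TT}V^p e_{i,a}\le d/f^{p-1}$.

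The main obstacle I anticipate is the bookkeeping around pseudo-inverses: the inequality $X\succeq Y\Rightarrow X^{\dagger}\preceq Y^{\dagger}$ only holds when the ranges of $X$ and $Y$ coincide, so the rank-preserving clause in $\Cc^{p-1}$ together with the ceiling function must be used carefully to certify that the supports of $\pi^{p-1}$ and of $f^{p-1}$ generate the same subspace. Everything else is a direct application of the shared parameter G-design equivalence, which is why the factor $K$ present in Corollary~\ref{cor: Bound_sum_sigma} disappears here: the optimal objective is bounded by $d$ rather than by $\sum_a d_a^p\le Kd$.
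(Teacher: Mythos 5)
Your proposal is correct and follows exactly the route the paper intends: the paper dismisses this lemma as a "direct consequence" of the shared-parameter G-optimal design equivalence, and your argument—unfolding $(V^p)^{\dagger}$ as $\sum_{j,b}f_{j,b}^{p-1}e_{j,b}e_{j,b}^{\TT}$, lower-bounding it by $f^{p-1}$ times the fractional design matrix, reversing the ordering via pseudo-inverse monotonicity on the common range guaranteed by the rank-preserving constraint, and invoking $G(\pi^*)=\rank(\{e_{i,a}\})\le d$ from the shared-parameter specialization of Lemma~\ref{lemma: General_equivalence}—is precisely the omitted calculation (mirroring Corollary~\ref{cor: Bound_sum_sigma} with $d$ in place of $dK$). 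Your care about when $X\succeq Y$ implies $X^{\dagger}\preceq Y^{\dagger}$ is in fact more rigorous than the paper's own treatment.
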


Lemma~\ref{lemma:variance_bound_shared} is a direct consequence of the multi-agent G-optimal design for the shared parameter case, as elaborated earlier. Thus, when we put all pieces together, the regret is reduced by a factor of $\sqrt{K}$ compared with Theorem~\ref{thm: BoundAlgorithm1}, while the other factors remain the same.

\section{Experiment Details and Additional Results}\label{appx:experiment}

\textbf{Experiment Environment.} The experiments in both Section~\ref{sec:simulation} and this section are carried out using a MacBook Pro with a 2.3GHz Quad-Core Intel Core i5 CPU and 8GB 2133 MHz LPDDR3 memory. The programming language is Python3 with packages Numpy, Scipy, and Pandas.
All experiments are computationally light -- the running time ranges from less than a minute to slightly more than 1 hour.

\textbf{Regret versus System Parameters.}  In addition to the results reported in the main paper, we also plot the regret with varying $M,K$ and $d$ under \texttt{Enhanced Fed-PE} in Fig.~\ref{fig:syn_regret}. The experimental setup remains the same as the synthetic one in Section~\ref{sec:simulation}, {except that $T$ is set as $2^{16}$}. We see that the per-client regret is proportional to the number of arms and the context dimensions, and inversely proportional to the number of clients, which is reasonable.

\begin{figure}[htbp]	
	\centering  
	\subfigure[]{\includegraphics[width=0.32\textwidth]{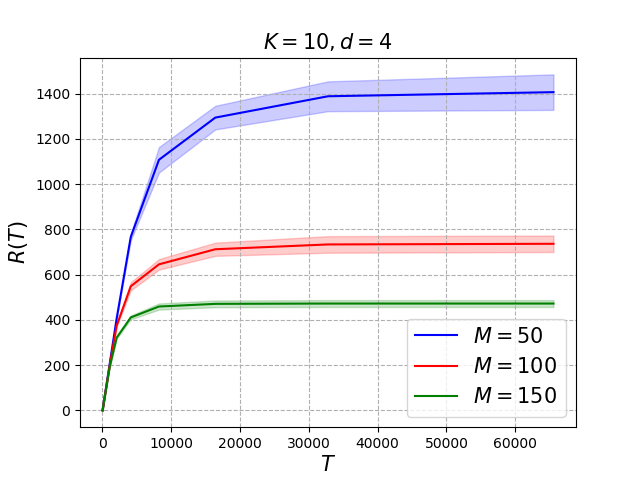}}
	\subfigure[]{\includegraphics[width=0.32\textwidth]{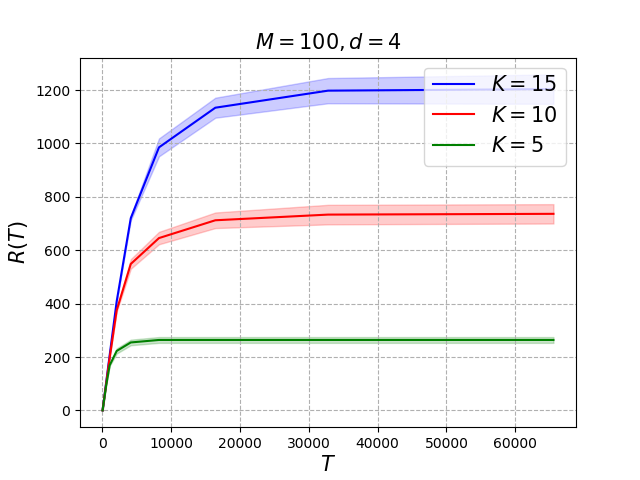}}
	\subfigure[]{\includegraphics[width=0.32\textwidth]{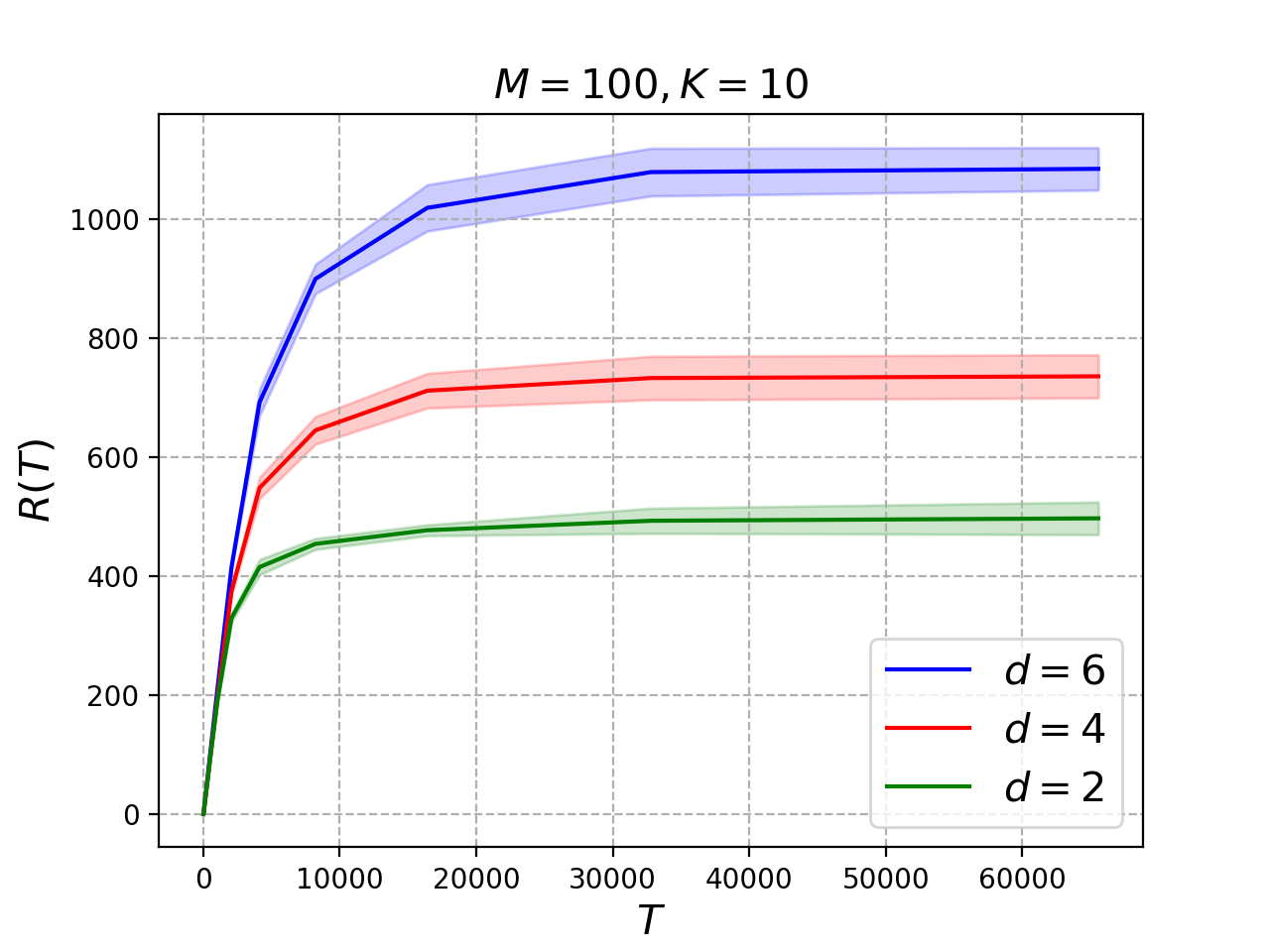}}
	\caption{\small Per-client pseudo-regret over $T$, with varying $M, K$, and $d$. Shaded area indicates the standard deviation.}\label{fig:syn_regret}
\end{figure}

\textbf{Sparsity Level.} A sparse solution $\pi$ of the optimization problem~(\ref{eqn: G_opt}) has many zero entries, which is desirable because it means each client only needs to explore a few arms in one phase even if the active arm set is very large.
Define the \texttt{Sparsity Level} of $\pi$ as:
\[\text{\texttt{Sparsity Level}} = \frac{\#\text{the number of non-zero entries of } \pi}{M}.\]
The smaller the \texttt{Sparsity Level} is, the sparser the solution is. Since there are $M$ linear equality constraints, the number of non-zero entries in $\pi$ should be at least $M$. Thus, the minimum value of \texttt{Sparsity Level} is 1. Moreover, if the elimination procedure works well, we expect to eventually observe only $M$ non-zero entries in $\pi$, and constant regret under a specific realization of the bandits problem. Therefore, \texttt{Sparsity Level} also characterizes how many arms that one client pulls on average within a phase. {Besides, a smaller \texttt{Sparsity Level} indicates} less communication cost per client.

\begin{figure}[htbp]	
	\centering  
	\subfigure[]{\includegraphics[width=0.32\textwidth]{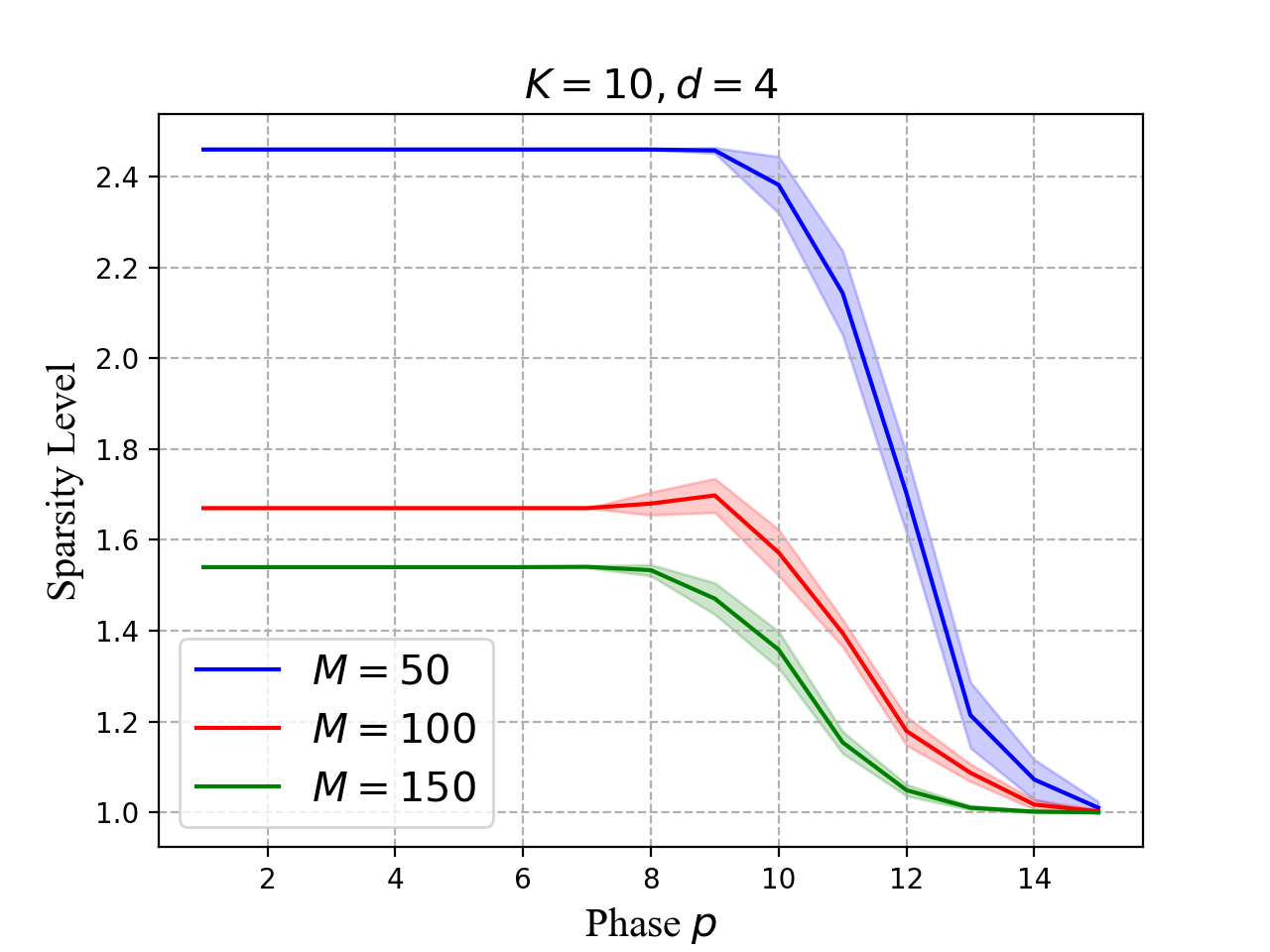}}
	\subfigure[]{\includegraphics[width=0.32\textwidth]{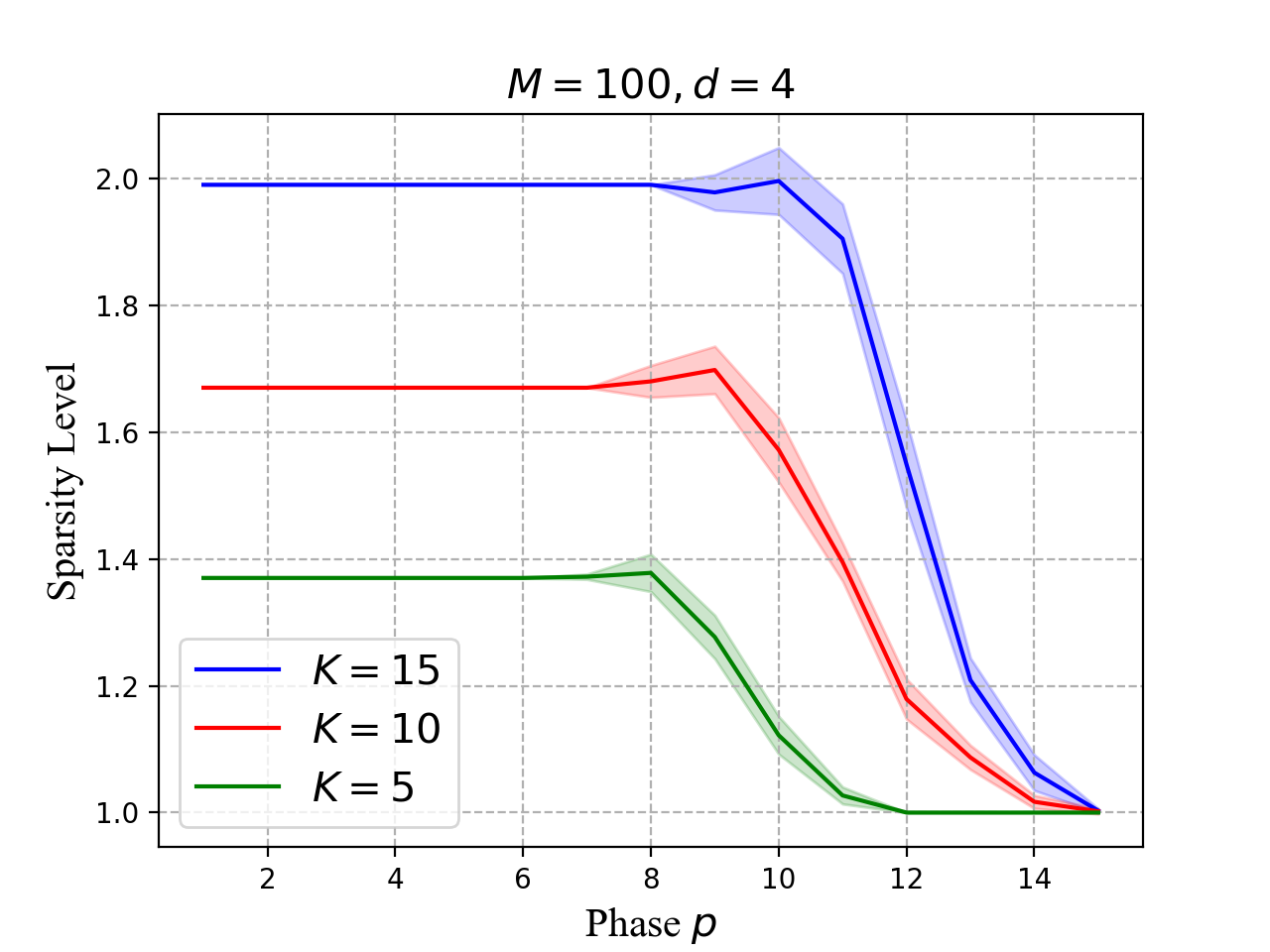}}
	\subfigure[]{\includegraphics[width=0.32\textwidth]{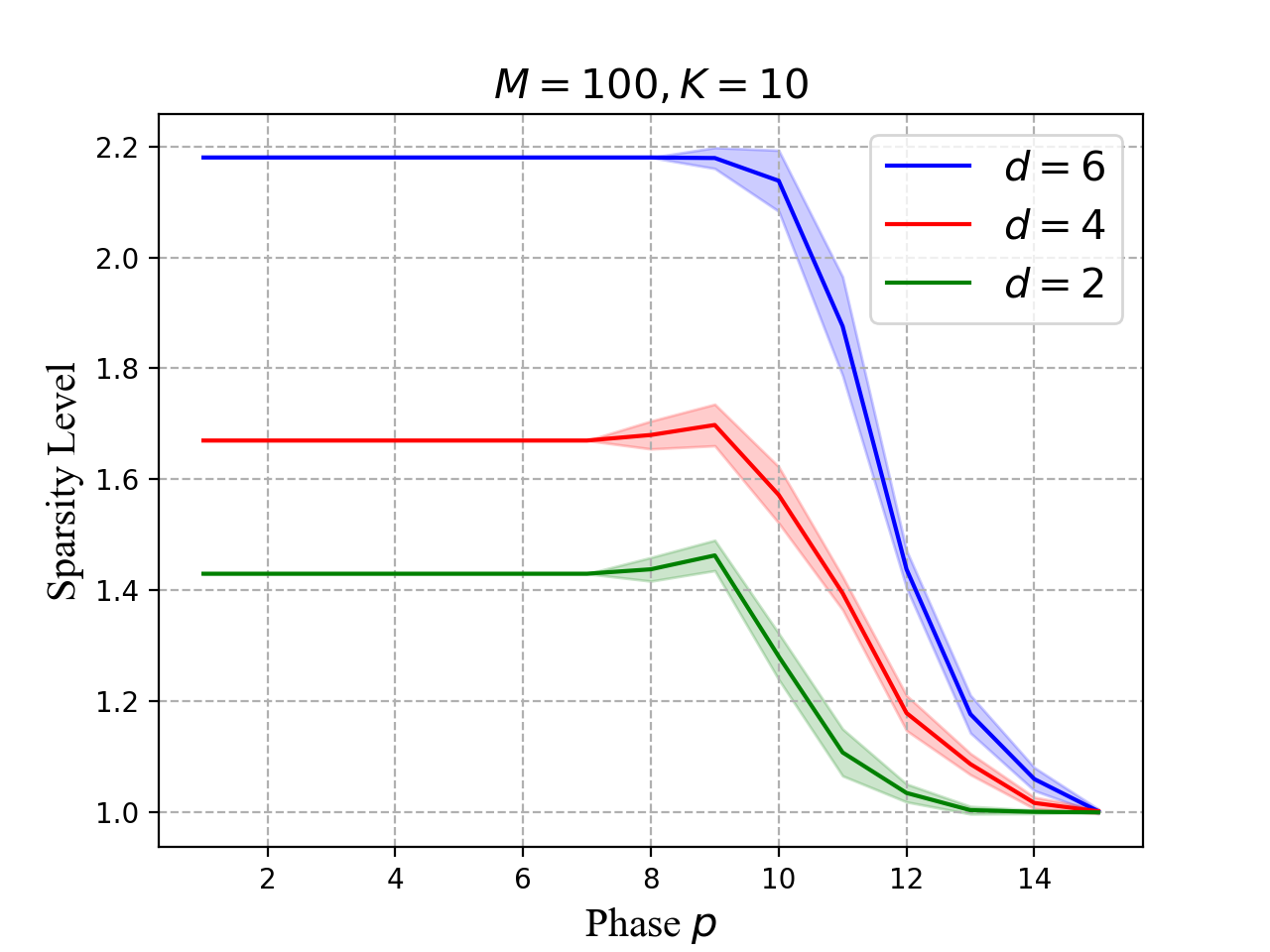}}
	\caption{\small Sparsity level over phase index $p$, with varying $M,K$ and $d$. Shaded area indicates the standard deviation.}\label{fig:syn_sparse}
\end{figure}

In Figure~\ref{fig:syn_sparse}, we plot the \texttt{Sparsity Level} as a function of the phase index $p$. The result suggests that our algorithm assigns each client only approximately 2 arms (on average) to explore. 
We also see that the \texttt{Sparsity Level} and the communication per client are proportional to $K$ and $d$, and inversely proportional to $M$.

\textbf{Number of Iterations.} To measure the efficiency of the \texttt{BCA} algorithm, we define the \texttt{Number of Iterations} as the minimum number $k$ such that $G(\pi^{(k)})\leq d_a^p + \epsilon$, where the notations remain the same as in the Appendix~\ref{appx:CoordinateAscent}. In the experiment, we choose $\epsilon = 0.1$.

In Figure~\ref{fig:syn_iteration}, we observe that the \texttt{Number of Iterations} is inversely proportional to $M$, and proportional to $K$ and $d$, which suggests that increasing the number of clients will not affect the efficiency significantly, although we should bear in mind that one iteration includes $M$ block updates.

\begin{figure}[htbp]	
	\centering  
	\subfigure[]{\includegraphics[width=0.32\textwidth]{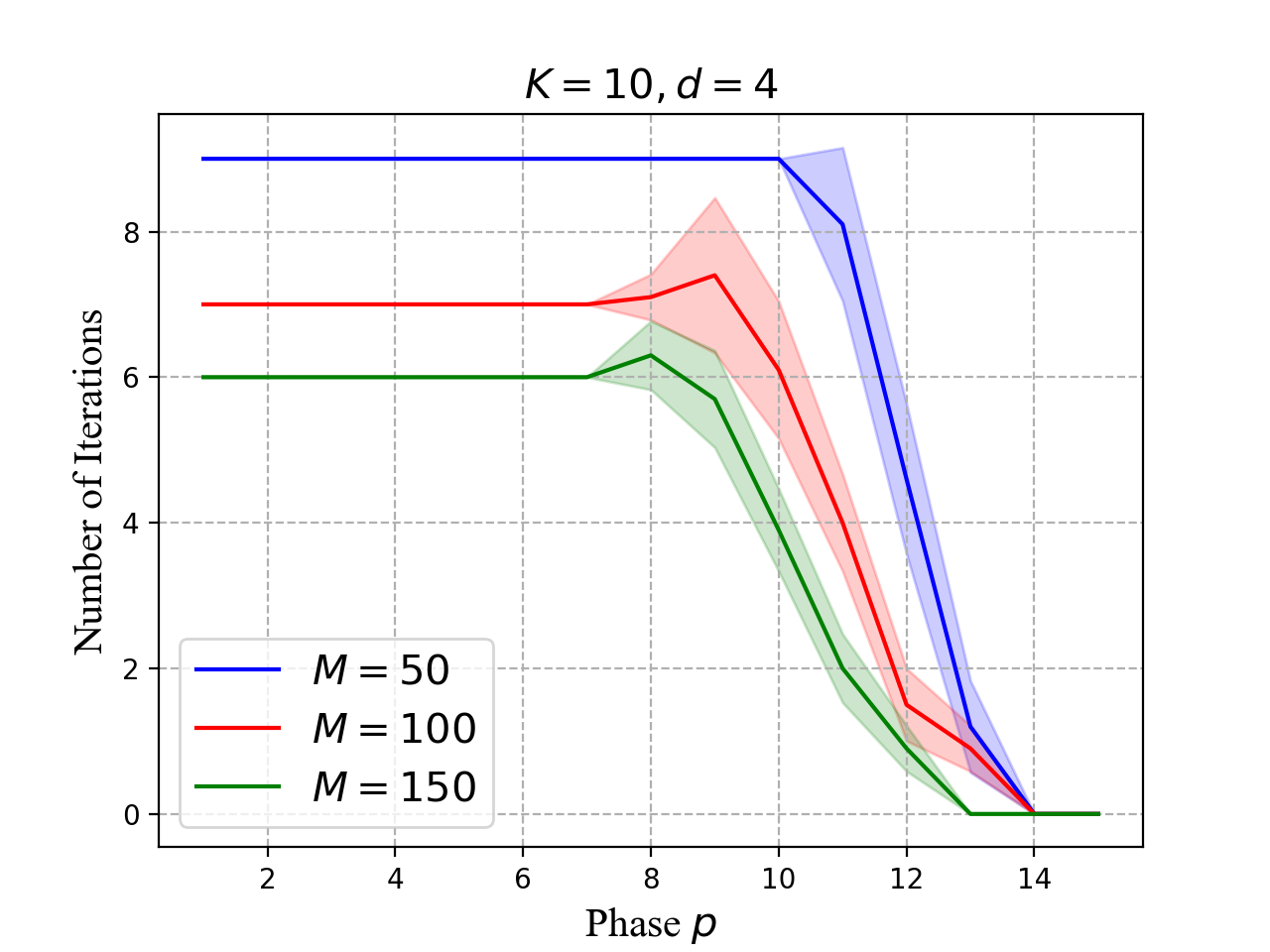}}
	\subfigure[]{\includegraphics[width=0.32\textwidth]{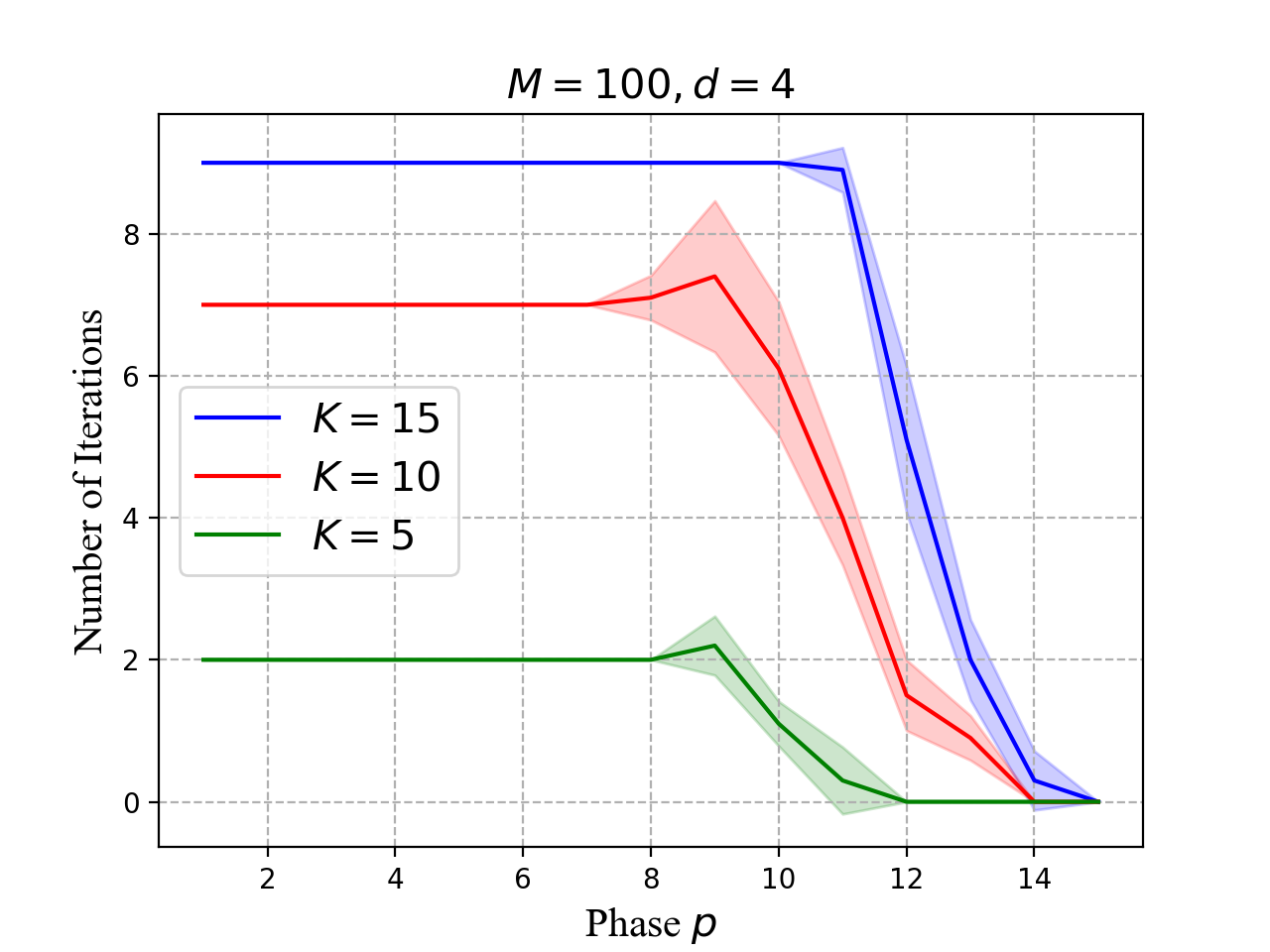}}
	\subfigure[]{\includegraphics[width=0.32\textwidth]{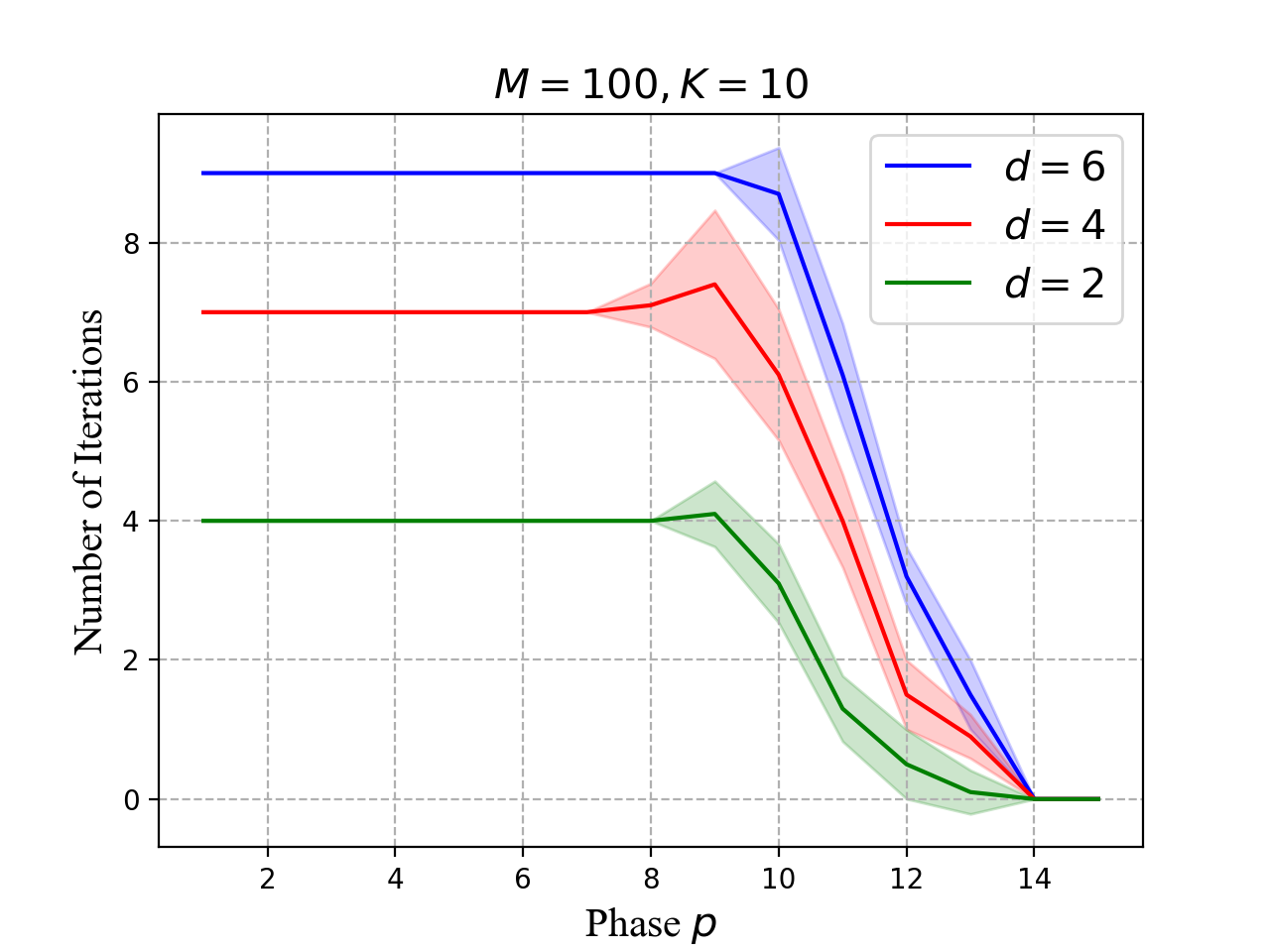}}
	\caption{\small Number of iterations over phase index $p$, with varying $M, K$, and $d$. Shaded area indicates the standard deviation.}\label{fig:syn_iteration}
\end{figure}

\textbf{Different Phase Length Selection.} In Figure~\ref{fig:compare_phase_length}, we plot the regret performances of the three selections of $f^p$ introduced in Appendix~\ref{subsec: different_phase_length}. Specifically, we construct the ``greedy'' phase length the same as in Appendix~\ref{subsec: different_phase_length}. The ``exponential'' selection is $f^p = 2^p$. For the ``uniform'' selection, we set $f^p=100$ rather than $10$ for time-saving. 
Note that the greedy selection leads to $\log_2\log_2 T -1 = 3$ phases, while the exponential selection consists of 15 phases, and uniform has 655 phases. As we observe, the uniform selection results in the lowest regret, while the exponential selection has slightly worse regret performance. The regret under the greedy selection is much worse than those under the other selections. The results suggest that exponential selection has the best tradeoff between regret and communication costs, corroborating the theoretical results in Table~\ref{table:enhanced}.

\begin{figure}[htbp]	
	\centering  
	{\includegraphics[width=0.32\textwidth]{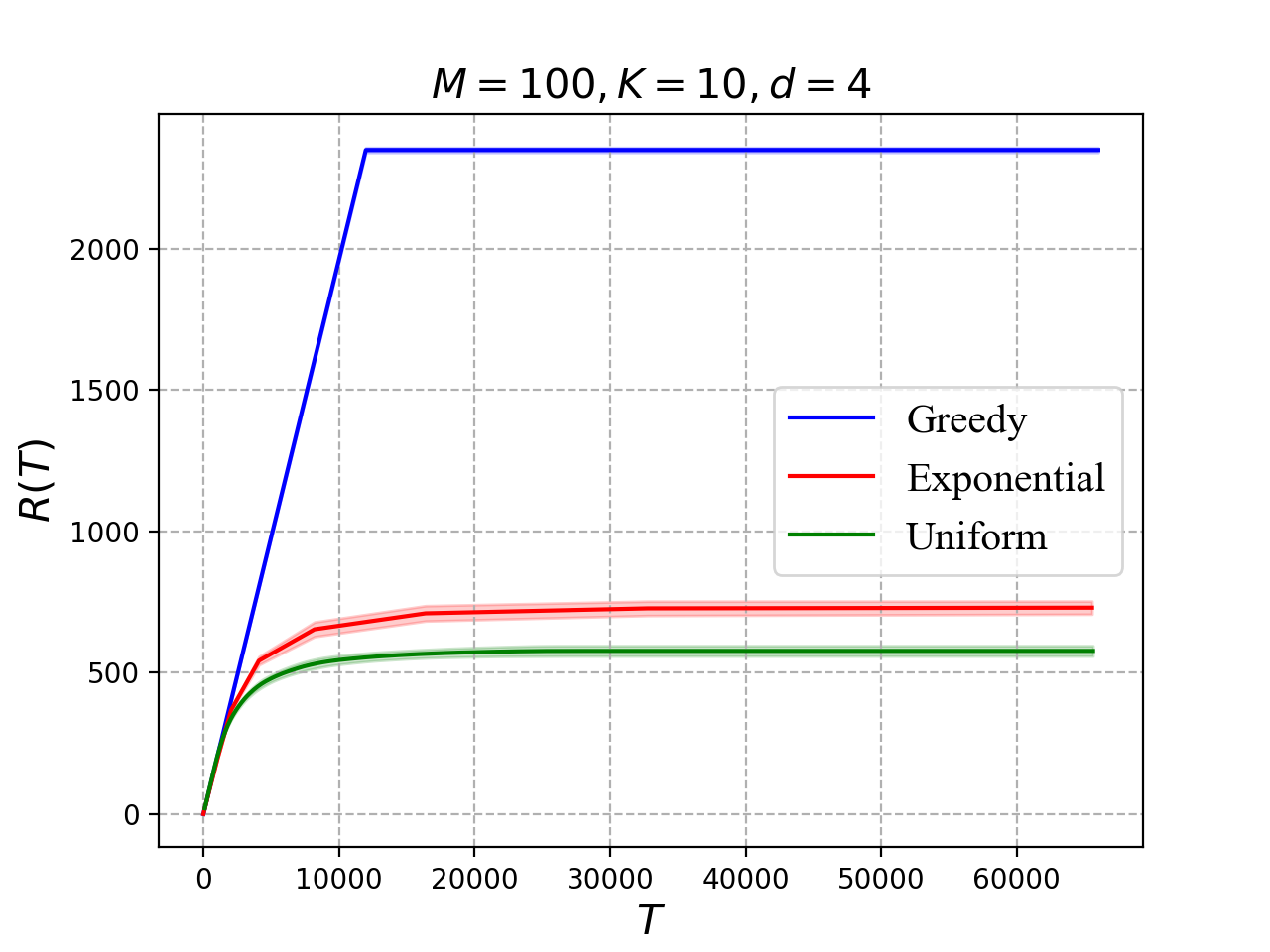}}
	\caption{\small Regret with different selection of $f^p$. Shaded area indicates the standard deviation.}\label{fig:compare_phase_length}
\end{figure}

\bibliography{BanditYang,MPMAB}

\begin{thebibliography}{}

\bibitem[Abbasi-Yadkori et~al., 2011]{Abbasi:2011:IAL}
Abbasi-Yadkori, Y., P\'{a}l, D., and Szepesv\'{a}ri, C. (2011).
\newblock Improved algorithms for linear stochastic bandits.
\newblock In {\em Proceedings of the 24th International Conference on Neural
  Information Processing Systems}, pages 2312--2320.

\bibitem[Agarwal et~al., 2020]{agarwal2020federated}
Agarwal, A., Langford, J., and Wei, C.-Y. (2020).
\newblock Federated residual learning.
\newblock {\em arXiv 2003.12880}.

\bibitem[Agrawal and Goyal, 2012]{agrawal2012analysis}
Agrawal, S. and Goyal, N. (2012).
\newblock Analysis of {Thompson} sampling for the multi-armed bandit problem.
\newblock In {\em Conference on Learning Theory}, pages 39--1.

\bibitem[Agrawal and Goyal, 2013a]{agrawal2013further}
Agrawal, S. and Goyal, N. (2013a).
\newblock Further optimal regret bounds for {Thompson} sampling.
\newblock In {\em Artificial Intelligence and Statistics}, pages 99--107.

\bibitem[Agrawal and Goyal, 2013b]{Agrawal:2013:linear}
Agrawal, S. and Goyal, N. (2013b).
\newblock Thompson sampling for contextual bandits with linear payoffs.
\newblock In {\em Proceedings of the 30th International Conference on
  International Conference on Machine Learning}, pages 1220--1228.

\bibitem[Anandkumar et~al., 2011]{anandkumar2011distributed}
Anandkumar, A., Michael, N., Tang, A.~K., and Swami, A. (2011).
\newblock Distributed algorithms for learning and cognitive medium access with
  logarithmic regret.
\newblock {\em IEEE J. Select. Areas Commun.}, 29(4):731--745.

\bibitem[Auer, 2003]{Auer:2003:UCB}
Auer, P. (2003).
\newblock Using confidence bounds for exploitation-exploration trade-offs.
\newblock {\em J. Mach. Learn. Res.}, 3:397--422.

\bibitem[Auer et~al., 2002]{auer2002finite}
Auer, P., Cesa-Bianchi, N., and Fischer, P. (2002).
\newblock Finite-time analysis of the multiarmed bandit problem.
\newblock {\em Machine learning}, 47(2-3):235--256.

\bibitem[Auer and Ortner, 2010]{Auer:2010}
Auer, P. and Ortner, R. (2010).
\newblock {UCB} revisited: Improved regret bounds for the stochastic
  multi-armed bandit problem.
\newblock {\em Periodica Mathematica Hungarica}, 61(1):55--65.

\bibitem[Avner and Mannor, 2014]{avner2014concurrent}
Avner, O. and Mannor, S. (2014).
\newblock Concurrent bandits and cognitive radio networks.
\newblock In {\em Joint European Conference on Machine Learning and Knowledge
  Discovery in Databases}, pages 66--81. Springer.

\bibitem[Bastani and Bayati, 2015]{Bastani:2015}
Bastani, H. and Bayati, M. (2015).
\newblock Online decision-making with high-dimensional covariates.
\newblock {\em SSRN Electronic Journal}.

\bibitem[Besson and Kaufmann, 2018]{besson2017multi}
Besson, L. and Kaufmann, E. (2018).
\newblock Multi-player bandits revisited.
\newblock In {\em Proceedings of Algorithmic Learning Theory}, pages 56--92.

\bibitem[Bogunovic et~al., 2021]{bogunovic2020stochastic-RobustAttack}
Bogunovic, I., Losalka, A., Krause, A., and Scarlett, J. (2021).
\newblock Stochastic linear bandits robust to adversarial attacks.
\newblock In {\em International Conference on Artificial Intelligence and
  Statistics}, pages 991--999. PMLR.

\bibitem[Boursier and Perchet, 2019]{boursier2018sic}
Boursier, E. and Perchet, V. (2019).
\newblock {SIC-MMAB}: Synchronisation involves communication in multiplayer
  multi-armed bandits.
\newblock In {\em Advances in Neural Information Processing Systems}, pages
  2249--2257.

\bibitem[Boyd and Vandenberghe, 2004]{boyd_vandenberghe_2004}
Boyd, S. and Vandenberghe, L. (2004).
\newblock {\em Convex Optimization}.
\newblock Cambridge University Press.

\bibitem[Bubeck and Cesa-Bianchi, 2012]{bubeck2012regret}
Bubeck, S. and Cesa-Bianchi, N. (2012).
\newblock Regret analysis of stochastic and nonstochastic multi-armed bandit
  problems.
\newblock {\em Foundations and Trends{\textregistered} in Machine Learning},
  5(1):1--122.

\bibitem[Calafiore and {El Ghaoui}, 2014]{Optimizationmodels}
Calafiore, G. and {El Ghaoui}, L. (2014).
\newblock {\em Optimization Models}.
\newblock Control systems and optimization series. Cambridge University Press.

\bibitem[Cesa-Bianchi et~al., 2013]{cesa-bianchi2013nips}
Cesa-Bianchi, N., Gentile, C., and Zappella, G. (2013).
\newblock A gang of bandits.
\newblock In {\em Proceedings of the 26th International Conference on Neural
  Information Processing Systems - Volume 1}, pages 737–--745, Red Hook, NY,
  USA. Curran Associates Inc.

\bibitem[Chan et~al., 2021]{chan2021parallelizing}
Chan, J., Pacchiano, A., Tripuraneni, N., Song, Y.~S., Bartlett, P., and
  Jordan, M.~I. (2021).
\newblock Parallelizing contextual linear bandits.

\bibitem[Chu et~al., 2011]{pmlr-v15-chu11a}
Chu, W., Li, L., Reyzin, L., and Schapire, R. (2011).
\newblock Contextual bandits with linear payoff functions.
\newblock In Gordon, G., Dunson, D., and Dudík, M., editors, {\em Proceedings
  of the Fourteenth International Conference on Artificial Intelligence and
  Statistics}, volume~15 of {\em Proceedings of Machine Learning Research},
  pages 208--214, Fort Lauderdale, FL, USA. PMLR.

\bibitem[Dani et~al., 2008]{DaniHK08}
Dani, V., Hayes, T.~P., and Kakade, S.~M. (2008).
\newblock Stochastic linear optimization under bandit feedback.
\newblock In {\em 21st Annual Conference on Learning Theory - {COLT}}, pages
  355--366.

\bibitem[Dimakopoulou et~al., 2017]{dimakopoulou2018estimation}
Dimakopoulou, M., Zhou, Z., Athey, S., and Imbens, G. (2017).
\newblock Estimation considerations in contextual bandits.
\newblock {\em arXiv preprint arXiv:1711.07077}.

\bibitem[Dubey and Pentland, 2020]{dubey2020differentiallyprivate}
Dubey, A. and Pentland, A. (2020).
\newblock Differentially-private federated linear bandits.
\newblock {\em Advances in Neural Information Processing Systems}, 33.

\bibitem[Gao et~al., 2019]{gao2019batched}
Gao, Z., Han, Y., Ren, Z., and Zhou, Z. (2019).
\newblock Batched multi-armed bandits problem.

\bibitem[Gentile et~al., 2017]{gentile2017context}
Gentile, C., Li, S., Kar, P., Karatzoglou, A., Zappella, G., and Etrue, E.
  (2017).
\newblock On context-dependent clustering of bandits.
\newblock In {\em International Conference on Machine Learning}, pages
  1253--1262. PMLR.

\bibitem[Gentile et~al., 2014]{pmlr-v32-gentile14-online-clustering}
Gentile, C., Li, S., and Zappella, G. (2014).
\newblock Online clustering of bandits.
\newblock In Xing, E.~P. and Jebara, T., editors, {\em Proceedings of the 31st
  International Conference on Machine Learning}, volume~32 of {\em Proceedings
  of Machine Learning Research}, pages 757--765, Bejing, China. PMLR.

\bibitem[Goldenshluger and Zeevi, 2013]{goldenshluger2013}
Goldenshluger, A. and Zeevi, A. (2013).
\newblock A linear response bandit problem.
\newblock {\em Stoch. Syst.}, 3(1):230--261.

\bibitem[Han et~al., 2020]{han2020sequential}
Han, Y., Zhou, Z., Zhou, Z., Blanchet, J., Glynn, P.~W., and Ye, Y. (2020).
\newblock Sequential batch learning in finite-action linear contextual bandits.

\bibitem[Harman et~al., 2014]{harman2014multi_constraints_d_design}
Harman, R., Bachrat{\'a}, A., and Filov{\'a}, L. (2014).
\newblock Heuristic construction of exact experimental designs under multiple
  resource constraints.
\newblock {\em arXiv preprint arXiv:1402.7263}.

\bibitem[Harman and Benkov{\'a}, 2014]{D_optimal_design_additional_cost}
Harman, R. and Benkov{\'a}, E. (2014).
\newblock Approximate {D}-optimal experimental design with simultaneous size
  and cost constraints.
\newblock {\em arXiv preprint arXiv:1408.2698}.

\bibitem[Harper and Konstan, 2015]{harper2016movielens}
Harper, F.~M. and Konstan, J.~A. (2015).
\newblock The {MovieLens Datasets}: History and context.
\newblock {\em ACM Trans. Interact. Intell. Syst.}, 5(4).

\bibitem[Holbrook, 2018]{differentiatingPseudo2018holbrook}
Holbrook, A. (2018).
\newblock Differentiating the pseudo determinant.
\newblock {\em Linear Algebra and its Applications}, 548:293--304.

\bibitem[Hsu et~al., 2012]{hsu2011tail}
Hsu, D., Kakade, S., and Zhang, T. (2012).
\newblock {A tail inequality for quadratic forms of subgaussian random
  vectors}.
\newblock {\em Electronic Communications in Probability}, 17:1 -- 6.

\bibitem[Kairouz et~al., 2021]{kairouz2021advances}
Kairouz, P., McMahan, H.~B., Avent, B., Bellet, A., Bennis, M., Bhagoji, A.~N.,
  Bonawitz, K., Charles, Z., Cormode, G., Cummings, R., D'Oliveira, R. G.~L.,
  Eichner, H., Rouayheb, S.~E., Evans, D., Gardner, J., Garrett, Z., Gascón,
  A., Ghazi, B., Gibbons, P.~B., Gruteser, M., Harchaoui, Z., He, C., He, L.,
  Huo, Z., Hutchinson, B., Hsu, J., Jaggi, M., Javidi, T., Joshi, G., Khodak,
  M., Konečný, J., Korolova, A., Koushanfar, F., Koyejo, S., Lepoint, T.,
  Liu, Y., Mittal, P., Mohri, M., Nock, R., Özgür, A., Pagh, R., Raykova, M.,
  Qi, H., Ramage, D., Raskar, R., Song, D., Song, W., Stich, S.~U., Sun, Z.,
  Suresh, A.~T., Tramèr, F., Vepakomma, P., Wang, J., Xiong, L., Xu, Z., Yang,
  Q., Yu, F.~X., Yu, H., and Zhao, S. (2021).
\newblock Advances and open problems in federated learning.
\newblock {\em Foundations and Trends{\textregistered} in Machine Learning},
  14(1).

\bibitem[Kannan et~al., 2018]{KannanMRWW2018nips}
Kannan, S., Morgenstern, J.~H., Roth, A., Waggoner, B., and Wu, Z.~S. (2018).
\newblock A smoothed analysis of the greedy algorithm for the linear contextual
  bandit problem.
\newblock In {\em Advances in Neural Information Processing Systems}, pages
  2231--2241.

\bibitem[Kiefer and Wolfowitz, 1960]{kiefer_wolfowitz_1960}
Kiefer, J. and Wolfowitz, J. (1960).
\newblock The equivalence of two extremum problems.
\newblock {\em Canadian Journal of Mathematics}, 12:363–366.

\bibitem[Korda et~al., 2016]{pmlr-v48-korda16-distributed-cluster}
Korda, N., Szorenyi, B., and Li, S. (2016).
\newblock Distributed clustering of linear bandits in peer to peer networks.
\newblock In Balcan, M.~F. and Weinberger, K.~Q., editors, {\em Proceedings of
  The 33rd International Conference on Machine Learning}, volume~48 of {\em
  Proceedings of Machine Learning Research}, pages 1301--1309, New York, New
  York, USA. PMLR.

\bibitem[Lai and Robbins, 1985]{lai1985asymptotically}
Lai, T.~L. and Robbins, H. (1985).
\newblock Asymptotically efficient adaptive allocation rules.
\newblock {\em Advances in applied mathematics}, 6(1):4--22.

\bibitem[Landgren et~al., 2016]{landgren2016distributed}
Landgren, P., Srivastava, V., and Leonard, N.~E. (2016).
\newblock On distributed cooperative decision-making in multiarmed bandits.
\newblock In {\em 2016 European Control Conference (ECC)}, pages 243--248.
  IEEE.

\bibitem[Landgren et~al., 2018]{landgren2018social}
Landgren, P., Srivastava, V., and Leonard, N.~E. (2018).
\newblock Social imitation in cooperative multiarmed bandits: partition-based
  algorithms with strictly local information.
\newblock In {\em 2018 IEEE Conference on Decision and Control (CDC)}, pages
  5239--5244. IEEE.

\bibitem[Langford and Zhang, 2008]{Langford:2008}
Langford, J. and Zhang, T. (2008).
\newblock The epoch-greedy algorithm for multi-armed bandits with side
  information.
\newblock In {\em Advances in Neural Information Processing Systems}, pages
  817--824.

\bibitem[Lattimore and Szepesv{\'a}ri, 2020]{LS19bandit-book}
Lattimore, T. and Szepesv{\'a}ri, C. (2020).
\newblock {\em Bandit Algorithms}.
\newblock Cambridge University Press.

\bibitem[Laurent and Massart, 2000]{Laurent-Massart}
Laurent, B. and Massart, P. (2000).
\newblock Adaptive estimation of a quadratic functional by model selection.
\newblock {\em The Annals of Statistics}, 28(5):1302--1338.

\bibitem[Li et~al., 2010]{Li:2010:LinUCB}
Li, L., Chu, W., Langford, J., and Schapire, R.~E. (2010).
\newblock A contextual-bandit approach to personalized news article
  recommendation.
\newblock In {\em Proceedings of the 19th International Conference on World
  Wide Web}, pages 661--670.

\bibitem[{Li} et~al., 2020]{li2020federated2}
{Li}, T., {Song}, L., and {Fragouli}, C. (2020).
\newblock Federated recommendation system via differential privacy.
\newblock In {\em IEEE International Symposium on Information Theory (ISIT)},
  pages 2592--2597.

\bibitem[Liu and Zhao, 2010]{liu2010distributed}
Liu, K. and Zhao, Q. (2010).
\newblock Distributed learning in multi-armed bandit with multiple players.
\newblock {\em IEEE Transactions on Signal Processing}, 58(11):5667--5681.

\bibitem[Magnus and Neudecker, 1999]{Jacobi1999Formula}
Magnus, J.~R. and Neudecker, H. (1999).
\newblock {\em Matrix Differential Calculus with Applications in Statistics and
  Econometrics (Revised ed.)}.
\newblock Wiley. ISBN 0-471-98633-X.

\bibitem[Mart{\'\i}nez-Rubio et~al., 2019]{martinez2019decentralized}
Mart{\'\i}nez-Rubio, D., Kanade, V., and Rebeschini, P. (2019).
\newblock Decentralized cooperative stochastic bandits.
\newblock In {\em Advances in Neural Information Processing Systems}, pages
  4529--4540.

\bibitem[McMahan et~al., 2017]{mcmahan2017communicationefficient}
McMahan, B., Moore, E., Ramage, D., Hampson, S., and y~Arcas, B.~A. (2017).
\newblock Communication-efficient learning of deep networks from decentralized
  data.
\newblock In {\em Proceedings of the 20th International Conference on
  Artificial Intelligence and Statistics (AISTATS)}, pages 1273--1282, Fort
  Lauderdale, FL, USA.

\bibitem[Morabia, 2019]{collaborative-filtering}
Morabia, K. (2019).
\newblock Collaborative-filtering.
\newblock \url{https://github.com/kevalmorabia97/Collaborative-Filtering}.

\bibitem[Perchet et~al., 2016]{Perchet:2016}
Perchet, V., Rigollet, P., Chassang, S., and Snowberg, E. (2016).
\newblock {Batched bandit problems}.
\newblock {\em The Annals of Statistics}, 44(2):660 -- 681.

\bibitem[Raghavan et~al., 2018]{Raghavan2018colt}
Raghavan, M., Slivkins, A., Wortman, J.~V., and Wu, Z.~S. (2018).
\newblock The externalities of exploration and how data diversity helps
  exploitation.
\newblock In Bubeck, S., Perchet, V., and Rigollet, P., editors, {\em
  Proceedings of the 31st Conference On Learning Theory}, volume~75 of {\em
  Proceedings of Machine Learning Research}, pages 1724--1738. PMLR.

\bibitem[Rosenski et~al., 2016]{rosenski2016multi}
Rosenski, J., Shamir, O., and Szlak, L. (2016).
\newblock Multi-player bandits -- a musical chairs approach.
\newblock In {\em International Conference on Machine Learning}, pages
  155--163.

\bibitem[Rusmevichientong and Tsitsiklis, 2010]{Tsitsiklis:2010:LinearBandits}
Rusmevichientong, P. and Tsitsiklis, J.~N. (2010).
\newblock Linearly parameterized bandits.
\newblock {\em Math. Oper. Res.}, 35:395--411.

\bibitem[Shi and Shen, 2021]{shi2021aaai}
Shi, C. and Shen, C. (2021).
\newblock Federated multi-armed bandits.
\newblock In {\em Proceedings of the 35th AAAI Conference on Artificial
  Intelligence (AAAI)}.

\bibitem[Shi et~al., 2021]{shi2021aistats}
Shi, C., Shen, C., and Yang, J. (2021).
\newblock Federated multi-armed bandits with personalization.
\newblock In {\em Proceedings of the 24rd International Conference on
  Artificial Intelligence and Statistics (AISTATS)}.

\bibitem[{Shi} et~al., 2020]{Shi2020aistats}
{Shi}, C., {Xiong}, W., {Shen}, C., and {Yang}, J. (2020).
\newblock Decentralized multi-player multi-armed bandits with no collision
  information.
\newblock In {\em Proceedings of the 23rd International Conference on
  Artificial Intelligence and Statistics (AISTATS)}, Palermo, Sicily, Italy.

\bibitem[Szorenyi et~al., 2013]{szorenyi13}
Szorenyi, B., Busa-Fekete, R., Hegedus, I., Ormandi, R., Jelasity, M., and
  Kegl, B. (2013).
\newblock Gossip-based distributed stochastic bandit algorithms.
\newblock In {\em Proceedings of the 30th International Conference on Machine
  Learning}, pages 19--27.

\bibitem[Todd, 2016]{MinimumVolumnEllipsoid}
Todd, M.~J. (2016).
\newblock {\em Minimum-Volume Ellipsoids}.
\newblock Society for Industrial and Applied Mathematics, Philadelphia, PA.

\bibitem[Valko et~al., 2013]{Valko:2013:Kernel}
Valko, M., Korda, N., Munos, R., Flaounas, I., and Cristianini, N. (2013).
\newblock Finite-time analysis of kernelised contextual bandits.
\newblock In {\em Proceedings of the 29th Conference on Uncertainty in
  Artificial Intelligence}, pages 654--663.

\bibitem[Wang et~al., 2020]{wang2019distributed}
Wang, Y., Hu, J., Chen, X., and Wang, L. (2020).
\newblock Distributed bandit learning: Near-optimal regret with efficient
  communication.
\newblock In {\em International Conference on Learning Representations}.

\bibitem[Wu et~al., 2016]{wu2016contextual}
Wu, Q., Wang, H., Gu, Q., and Wang, H. (2016).
\newblock Contextual bandits in a collaborative environment.
\newblock In {\em Proceedings of the 39th International ACM SIGIR conference on
  Research and Development in Information Retrieval}, pages 529--538.

\bibitem[Zhu et~al., 2021]{Zhu_2021}
Zhu, Z., Zhu, J., Liu, J., and Liu, Y. (2021).
\newblock Federated bandit: A gossiping approach.
\newblock {\em Proceedings of the ACM on Measurement and Analysis of Computing
  Systems}, 5(1):1–29.

\end{thebibliography}
\bibliographystyle{apalike}

\end{document}